\newenvironment{keywords}
{\bgroup\leftskip 27pt\rightskip 27pt \small\noindent{\bfseries
Keywords:} \ignorespaces}%
{\par\egroup\vskip 0.25ex}
\newcommand{\bs}[1]{\boldsymbol{\mathbf{#1}}}
\newcommand{\bbE}{\mathbb{E}}
\newcommand{\bbK}{\mathbb{K}}
\newcommand{\bbN}{\mathbb{N}}
\newcommand{\bbR}{\mathbb{R}}
\newcommand{\bbS}{\mathbb{S}}
\newcommand{\rmd}{\mathrm{d}}
\newcommand{\fracpartial}[2]{\frac{\partial #1}{\partial #2}}
\newcommand{\diag}{\mathop{\mathrm{diag}}}
\newcommand{\tr}{\mathop{\mathrm{tr}}}
\newcommand{\cov}{\mathop{\mathrm{cov}}}
\newcommand{\argmin}{\mathop{\mathrm{argmin}}}
\newcommand{\opt}{\mathrm{opt}}
\newcommand{\sgd}{\mathrm{SGD}}
\newcommand{\defeq}{\stackrel{\text{def}}{=}}
\newcommand{\cB}{\mathcal{B}}
\newcommand{\cI}{\mathcal{I}}
\newcommand{\cO}{\mathcal{O}}
\newcommand{\cQ}{\mathcal{Q}}
\newcommand{\cT}{\mathcal{T}}
\newcommand{\cX}{\mathcal{X}}
\newcommand*\samethanks[1][\value{footnote}]{\footnotemark[#1]}
\theoremstyle{definition}
\newtheorem{definition}{Definition}
\theoremstyle{plain}
\newtheorem{theorem}{Theorem}
\theoremstyle{plain}
\newtheorem{assumption}{Assumption}
\theoremstyle{plain}
\newtheorem{lemma}{Lemma}
\theoremstyle{remark}
\newtheorem*{remark}{Remark}
\theoremstyle{plain}
\newtheorem{example}{Example}
\title{Optimal Algorithms in Linear Regression under Covariate Shift:\\ On the Importance of Precondition}
\author{Yuanshi Liu$^{\dagger}$\thanks{Equal Contribution.}  \quad Haihan Zhang$^{\dagger}$\samethanks \quad Qian Chen$^{\ddagger}$ \quad Cong Fang$^{\dagger}$ \\
\\
        \small $^{\dagger}$Peking University
        \\\\
        \small $^{\ddagger}$Huazhong University of Science and Technology
}
\begin{document}
\date{}
\maketitle

\begin{abstract}%
A common pursuit in modern statistical learning is to attain satisfactory generalization out of the source data distribution (OOD). 
In theory, the challenge remains unsolved even under the canonical setting of covariate shift for the linear model. This paper studies the foundational (high-dimensional) linear regression where the ground truth variables are confined to an ellipse-shape constraint and addresses two fundamental questions in this regime: (i) given the target covariate matrix, what is the min-max \emph{optimal} algorithm under covariate shift? (ii) for what kinds of target classes, the commonly-used SGD-type algorithms achieve optimality? Our analysis starts with establishing a tight lower generalization bound via a Bayesian Cramer-Rao inequality. For (i), we prove that  the optimal estimator can be simply a certain linear transformation of the best estimator for the source distribution. Given the source and target matrices, we show that the transformation can be efficiently computed via a convex program. The min-max optimal analysis for SGD leverages the idea that we recognize both the accumulated  updates of the applied algorithms and
 the ideal transformation as preconditions on the learning variables. We provide sufficient conditions when SGD with its acceleration variants attain optimality. 
\end{abstract}

\begin{keywords}%
  Covariate Shift, Linear Regression, Optimality, SGD with Momentum. %
\end{keywords}

\section{Introduction}

Covariate shift is a fundamental challenge in machine learning, particularly relevant in the modern paradigm where data collection and application scenarios are separated \citep{shimodaira2000improving,sugiyama2012machine}. This underlying mismatch causes problems including out-of-distribution (OOD) generalization \citep{NEURIPS2022_43119db5,9710159}, posing significant challenges in the performance of large language models (LLMs) and fundamental understanding of the recent modern artificial intelligence progress.
Despite its ubiquity, the theoretical understanding of the covariate shift problem remains underexplored. The current research seeks a theoretical study of covariate shift from generalization by exploring the interplay between two key aspects: (1) the statistical bottleneck and (2) the potential implicit effects of prevalent computational methods.

Towards the statistical bottleneck, we ask the question: 
\begin{center}
 \textit{What is the information-theoretic optimal estimator under the covariate shift problem, and what the associated minimax risk?}
\end{center}
\noindent This question is intrinsic to statistical learning yet remains blurred in the covariate shift problem.

Previous analyses have provided valuable insights into optimality. However, gaps persist: existing minimax results are either limited to asymptotic settings (e.g., the pioneering work of \citet{shimodaira2000improving}, which examines the asymptotic optimality of weighted kernel regression) or focus on optimality under a coarse characterization (see e.g., the seminal work \citet{kpotufe2021marginal}, which studies optimality under a given transfer-exponent $\gamma$ that quantifies the local singularity of the target distribution relative to the source).

We focus on the problem of high-dimensional linear regression, which corresponds to the overparameterized paradigm prevalent in modern large-scale machine learning.
As one of the most fundamental statistical models, high-dimensional linear regression is experiencing a rise due to its interpretability of learning phenomena, such as benign overfitting \citep{bartlett2020benign}, and has garnered considerable attention in the covariate shift literature \citep{lei2021near,zhang2022class}.

Concretely, we establish the first information-theoretic minimax optimality result for the high-dimension linear regression problem. Our approach begins with the derivation of an information-theoretic minimax lower bound, which is established under two constraints: (1) the canonical $\|\cdot\|_2$ isotropic ball constraint, and (2) the anisotropic ellipsoid constraint prevalent in the reproducing kernel Hilbert space (RKHS) literature \citep{caponnetto2007optimal}. Technically, we overcome the challenge of establishing the tight minimax lower bound through two novel steps: (1) a geometric framework employing a family of priors supported on all ellipsoid-inscribed parallelohedra; (2) a Bayesian matrix lower bound for estimator covariance under a given prior, serving as a multivariate generalized version of Bayesian Cramer-Rao inequality \citep{bj/1186078362}. Our minimax lower bound emerges as the supremum of Bayesian risk over all priors.

Conversely, the upper bound is achieved by an estimator with a special preconditioning. Its realization incorporates an optimization preprocessing step over the linear preconditioner class to identify the one with minimal risk. 
Surprisingly, our result demonstrates the equivalence of two extrema through duality analysis, thereby establishing the minimax rate. 
There are two notable points about this minimax optimality: first, our result is instance-optimal – it establishes the exact minimax rate for any given source-target pair; second, the preconditioning class is a broad class spanning classical methods (least squares, ridge regression, and even gradient descent implicitly), enabling a unified interpretation of diverse methods through one upper bound.

Beyond characterizing optimal algorithms under covariate shifts, we examine how commonly employed modern machine learning optimization methods shape generalization in the presence of such shifts?
The second question we seek to address is: 
\begin{center}
 \textit{How do the stochastic gradient based methods—the backbone of large-scale training—perform in the covariate shift problem, without preknowing the knowledge of the source domain? } 
\end{center}
\noindent We answer the question by quantifying generalization efficiency and identifying its fundamental limits. We compare the risk with prior optimality criteria, while the risk provides insights into real-world phenomena. 
The analysis is based on a framework of accelerated stochastic gradient descent (ASGD) \citep{jain2018accelerating} that utilizes geometrically decaying step sizes \citep{ge2019step}, which is close to practical implementation. 
This framework of methods can be regarded as special cases of the previously discussed linear preconditioner. We establish an instance-based generalization bound for this framework. Leveraging the previous result, we propose the optimal conditions for ASGD. The proposed optimality criteria include diverse canonical assumptions, including information-divergence constraints and bounded likelihood-ratio conditions as special cases, demonstrating that ASGD can achieve near-optimal performance under high tolerances of shift magnitude and spatial rotation intensity. As a byproduct, we also exemplify when momentum provably benefits generalization in the covariate shift problem, and when the learning curves exhibit emergent phenomena that decrease slowly at first and drop rapidly after hitting a critical sample size.

\section{Related Work}

\textbf{Covariate Shift.} 
There is a vast theoretical literature on the covariate shift problem (See e.g. \citet{ben2010theory, germain2013pac, cortes2010learning, cortes2019adaptation} and the review in \citet{sugiyama2012machine,kouw2019review}). 
In the following, we primarily review the literature related to the optimality analysis of the covariate shift problem.
Pioneering work \citep{shimodaira2000improving} studies weighted maximum likelihood estimation in the covariate shift problem under the misspecified setting. Their optimality result is only under the asymptotic setting. 
More recently, a thread of research considers the optimality of covariate shift problem in the kernel regression setup \citep{ma2023optimally, pathak2022new}. Their results analyze the truncated-reweighted kernel ridge regression. However, their corresponding minimax rate result relies on the likelihood ratio bound and \citet{ge2023maximum} analyze the maximum likelihood optimality under the well-specified and low-dimensional setting. In their setup, vanilla maximum likelihood attains the optimal rate.
Another celebrated work by \citet{kpotufe2021marginal} delves into the nonparametric classification problem and propose a novel characterization of transfer exponent, a local regularity measurement, and show a nearest neighbor based method attain the minimax rate. One advantage of our result established is that we can provide an instance-based optimality in a refined geometric characterization.
When confining the scope to the high-dimensional linear regression, several studies have investigated the minimax rates under general distribution shifts; however, the entanglement of different types of distribution shift leads to suboptimal or inapplicable results when specifically applied to the covariate shift problem \citep{zhang2022class, mousavi2020minimax}. Another attempt of minimax optimality from \citet{lei2021near} derives the essentially same estimator as ours, but their optimality guarantees only compromise to the linear class, which is relatively obvious. General information-theoretic optimality has remained a notable gap and resolving this constitutes one of the major technical advancements central to our contribution.

\noindent\textbf{Stochastic Gradient Methods in Linear Regression.}
Recent advances in stochastic gradient descent (SGD) for linear regression have yielded extensive theoretical analyses that bridge gaps between empirical practice and formal optimization frameworks. These studies validate practical techniques—such as the exponential decay step size schedule, widely adopted in implementations—by establishing their minimax-optimal risk guarantees \citep{ge2019step, pan2021eigencurve}, insights previously unattainable within conventional black-box optimization paradigms. Another notable example involves the accelerated gradient method which retains the efficacy even in regimes with substantial gradient noise given certain noise scheme, as demonstrated by \citet{jain2018accelerating} and subsequent studies \citep{varre2022accelerated}.
In the context of high-dimensional linear regression, a line of research, starting from the concept of benign overfitting under overparametrized model \citet{bartlett2020benign}, studies the provable generalization power of stochastic gradient based optimization \citep{zou2021benign, wu2022last, lirisk,Zhang2024OptimalityAcceleratedSGD}. 
When extending this analysis to scenarios involving covariate shift, recent work by \citet{wu2022power} examines both the capabilities and limitations of stochastic gradient methods under such distributional mismatches.
Our results provide the upper bound for SGD with a geometrically decaying stepsize schedule and Nesterov momentum acceleration, which requires a more complex analysis of the iterative process. 
Another related research line concerns the nonparametric regression problem. \citet{Dieuleveut2016NonparametricStochasticApproximation} study the stochastic gradient method in the setup of reproducing kernel hilbert space, and \citet{cai2006prediction} consider the functional linear regression.
A parallel line centers on SGD on nonparametric regression, exploring the optimality conditions of SGD in the context of reproducing kernel \citep{Dieuleveut2016NonparametricStochasticApproximation}.

\section{Problem Formulation and Minimax Lower Bound}
\textbf{Notations.} We denote the spectral norm, Frobenius norm and nuclear norm of a matrix $\bs A$ by $\left\Vert\bs A\right\Vert$, $\left\Vert\bs A\right\Vert_F$ and $\left\Vert\bs A\right\Vert_*$, respectively. Define the elliptical norm of vector $\bs x$ under positive definite matrix $\bs M$ as $\|\bs x\|_{\bs M}^2 = \bs x^{\top}\bs M\bs x$.
We use $\bs O$ to denote the matrix with all entries equal to zero.
For positive integer $n$, let $[n]=\left\{1,2,\ldots,n\right\}$. The diagonal matrix with sequence $\left\{a_i\right\}_{i=1}^d$ as its diagonal entries is denoted by $\diag\left\{a_i\right\}_{i=1}^d$.

\subsection{Problem Setup}
\subsubsection{Linear Regression under Covariate Shifts}

The regression problem using covariate variable $\bs x\in\mathbb{R}^d$ to predict the response variable $y\in\mathbb{R}$. In the covariate shift problem, there are two distinct data domains on covariate space and the response space: a source domain $\mathcal{S}$ and a target domain $\mathcal{T}$. Let $P_{\bs x \times y}$ denote the joint distribution of $(\mathbf{x},y)$ over domain $\mathcal{S}$  and $Q_{\bs x \times y}$ denote the joint distribution of $(\mathbf{x},y)$ over domain $\mathcal{T}$. 

We assume access to $n$ i.i.d. samples $\left \{ \left ( \mathbf{x}_i,y_i  \right )  \right \} _{i=1}^n$ drawn from $P_{\bs x \times y}$, while the predictor’s performance is evaluated under the generalization risk on the target distribution $Q_{\bs x \times y}$. 
Covariate shift refers to the problems where the covariate marginal distribution $P_{\mathbf{x} }$ in source domain differs from the covariate marginal distribution $Q_{\mathbf{x}}$ in the target domain, while the conditional distribution remains unchanged in both domains.

Denote the covariance of the source and target distribution as
$\bs S=\mathbb{E} _{P_{\mathbf{x} }}\left [ \mathbf{x}  \mathbf{x}^{\top} \right ]$ and $\bs T=\mathbb{E} _{Q_{\mathbf{x} }}\left [ \mathbf{x}  \mathbf{x}^{\top} \right ]$. The eigenvalue decomposition of $\bs{S}$ is given by 
\begin{equation}
  \bs{S}=  \mathbf{U} \mathrm{diag} \left \{ \lambda _1,\dots,\lambda _d \right \} \mathbf{U}^{\top },
\end{equation}
where $\lambda_1\ge \cdots\ge \lambda_d$ are the eigenvalues 
of $\bs{S}$ in non-increasing order. For simplicity, we assume that $\mathbf{U}$ is the standard orthonormal basis in $\mathbb{R}^d$.

In the high-dimensional linear regression problem, we aim to find a regressor $\bs w$ based on the observed data, such that $\bs w^{\top} \bs x$ provides predicts for $y$ on fresh data $(\bs x, y)$.
For any regressor $\mathbf{w}\in \mathbb{R}^d$, the source risk $\mathcal{E} _{\mathcal{S}} \left ( \mathbf{w}  \right )$ and target risk $\mathcal{E} _{\mathcal{T}} \left ( \mathbf{w}  \right )$ are defined as:
\begin{equation}
  \mathcal{E} _{\mathcal{S}} \left ( \mathbf{w}  \right ) =\frac{1}{2} \mathbb{E} _{P_{\bs x \times y}}\left ( y-\left \langle \mathbf{w},\mathbf{x }   \right \rangle  \right )^2 ,\  \   \mathcal{E} _{\mathcal{T}} \left ( \mathbf{w}  \right ) =\frac{1}{2} \mathbb{E} _{Q_{\bs x \times y}}\left ( y-\left \langle \mathbf{w},\mathbf{x }   \right \rangle  \right )^2. 
\end{equation}
We make the following assumption regarding the source and the target risk.
\begin{assumption}\label{asp:minimizer}
    The source risk $\mathcal{E} _{\mathcal{S}}$ and the target risk $\mathcal{E} _{\mathcal{T}}$ admit one same minimizer $\bs w^*$.
\end{assumption}
One particular scenario where Assumption~\ref{asp:minimizer} holds is that the linear model is well-specified, meaning $\mathbb{E}[y|\bs x]$ is a linear function of $\bs x$. And well-specification is a common assumption when considering high-dimensional linear regression problem \citep{bartlett2020benign}. This assumption is related to the powerful expressive capacity of the high-dimensional overparametrized model, which can include a regressor capable of interpolating the true conditional expectation. 

In covariate shift problems, the performance of regressor $\mathbf{w}$ is evaluated by the excess risk on the target distribution 
 $Q_{\bs x \times y}$:
\begin{equation}
     \mathcal{R} _{\mathcal{T}} \left ( \mathbf{w}  \right ) =\frac{1}{2} \left( \mathcal{E} _{\mathcal{T}} \left ( \mathbf{w}  \right )-\min_{\mathbf{w} } \mathcal{E} _{\mathcal{T}} \left ( \mathbf{w}  \right )\right ) =\frac{1}{2}\left \| \mathbf{w}-\mathbf{w}^*   \right \| _{\bs T}^2.
\end{equation}

Our analyses consider the minimax optimality under the candidate set $W$ for $\mathbf{w}^*$ being an ellipsoid, specifically
\begin{equation}\label{eq: constraint}
    W=\left\{\bs w^*\in\bbR^d:\left\Vert\bs w^*\right\Vert_{\bs M}^2\leq 1\right\},
\end{equation}
where $\bs M\in \mathbb{R}^{d\times d}$ is a positive definite matrix. This ground truth candidate set $W$ encompasses the common unit ball space $\left\{\bs w^*:\left \| \mathbf{w}^*  \right \|_2^2 \le1\right\}$ as special case, and is analogous to the interpolation space $[\mathcal{H}]^s$ in the context of the reproducing kernel Hilbert space (RKHS) regression \citep{caponnetto2007optimal}.

\subsubsection{Assumptions}

We then introduce several commonly used assumptions required in our analysis.

\begin{assumption}[Finite Initial Excess Risk]~\label{ass-finite-risk}
    We assume that there exists $c>0$ such that for any $\mathbf{w}^*\in W$, $\left \| \mathbf{w}^* \right \|_{\bs S}^2\le c$.
\end{assumption}
\begin{remark}
    This assumptionis mild and  implies that for any ground truth parameter $\mathbf{w}^* \in W$, the excess risk of $\mathbf{w}_0 = \mathbf{0}$ under the source distribution $P_{\bs x \times y}$ is finite. In other words, the problem is well defined on the source domain. In particular, when $\mathbb{R}^d$ is treated as a special case of RKHS, this assumption corresponds to the commonly used source condition~\citep{caponnetto2007optimal,Dieuleveut2016NonparametricStochasticApproximation,fischer2020sobolev}.  Furthermore, this assumption leads to the bound $ \left \| \bs M^{-1/2}\bs S\bs M^{-1/2} \right \| \le c$.
\end{remark}

\begin{assumption}[Bound of Fourth Moment in the Source Domain]\label{assumption:fourth-moment}
    There exists a constant $\psi\geq 1$, such that for every PSD matrix A, we have
    \begin{equation}
        \bbE_{P_{\bs x}}\left[\bs x\bs x^\top\bs A\bs x\bs x^\top\right]\leq\psi\tr\left(\bs S\bs A\right)\bs S.
    \end{equation}
\end{assumption}
\begin{remark}
Assumption~\ref{assumption:fourth-moment} is assumed to hold for data distributions with bounded kurtosis for the projection of the covariate $\mathbf{x}$ onto any vector $z \in \mathbb{R}^d$. Specifically, there exists a constant $c > 0$ such that for any $z \in \mathbb{R}^d$, the following inequality holds: $z\in\mathbb R^d$, $\mathbb{E}_{P_{\bs x}} \left \langle \mathbf{z} ,\mathbf{x}  \right \rangle ^4\le c\left \langle \mathbf{z},\bs S \mathbf{z}  \right \rangle^2 $. 
For instance, if $\bs{S}^{-\frac{1}{2}}\mathbf{x}$ follows a Gaussian distribution, Assumption~\ref{assumption:fourth-moment} holds with $\psi = 3$. This assumption is widely adopted in works related to linear regression~\citep{bartlett2020benign,tsigler2023benign,zou2021benign,wu2022last,wu2022power}. Indeed, we impose this condition to handle the case where $\left \| \mathbf{w}^* \right \|  _2=\infty $. If $\left \| \mathbf{w}^* \right \|  _2$ is finite, all of our conclusions hold under a weaker assumption $\bbE_{\mathbb{P}_{\mathbf{x} }^\mathcal{S}}\left[\left \| \bs x \right \|^2 \bs x\bs x^\top\right]\leq\psi\bs S$.
\end{remark}

\begin{assumption}[Bound of Noise]\label{assumption:noise}
    Denote $\epsilon=y-\bs x^\top\bs w^*$ as the response noise. We assume that $\bbE\left[\epsilon^2\bs x\bs x^\top\right]\preceq\sigma^2\bs S$.
\end{assumption}
This assumption holds when $y-\bs x^\top\bs w^*$ is sub-Gaussian conditioned on $\mathbf{x}$.
It does not require $\mathbb{E}[\epsilon|\mathbf{x}]=0$, thus encompassing a broader class of problems. This assumption is also adopted by several works for linear regression~\citep{hsu2012random,Dieuleveut2016NonparametricStochasticApproximation,dieuleveut2017harder,jain2018accelerating,zou2021benign,lirisk}. In this paper, we regard both $\sigma^2$ and $c>0$ in Assumption~\ref{ass-finite-risk} as fixed constants.

\subsection{Minimax Lower Bound}\label{sec: main lower}

In this section, we derive the information-theoretic minimax lower bound for the class of problems $\mathcal{P}(W, \mathbf{S}, \mathbf{T})$, which encompasses the problems under consideration, as defined in Definition~\ref{ass-minimax}.
\begin{definition}\label{ass-minimax}
    The problem class $\mathcal{P }(W,\bs S,\bs T)$ consists of all independently joint distributions $P_{\bs x \times y}\times Q_{\bs x \times y}$ such that the corresponding the best estimator $\mathbf{w}^*\in W$, with $\bs S=\mathbb{E} _{P_{\mathbf{x} }}\left [ \mathbf{x}  \mathbf{x}^{\top} \right ]$ and $\bs T=\mathbb{E} _{Q_{\mathbf{x} }}\left [ \mathbf{x}  \mathbf{x}^{\top} \right ]$. And all the distributions in $\mathcal{P }(W,\bs S,\bs T)$ satisfy Assumption~\ref{asp:minimizer},\ref{ass-finite-risk} and~\ref{assumption:noise}.
\end{definition}

\begin{remark}[Prior knowledge of $\bs S$ and $\bs T$]
Note that problem class $\mathcal{P}(W,\bs S,\bs T)$ takes the source covariance matrix $\bs S$ and target covariance matrix $\bs T$ as class parameters, implying that its minimax optimality assumes prior knowledge of these matrices. This parallels the fact that that many classical methods leverage the prior knowledge from the source and the target, such as the importance weighting method \citep{cortes2010learning} which requires two probability densities. On the other hand, it aligns with semi-supervised learning paradigm, where the covariance matrix $\bs T$ and $\bs S$ can be estimated from the large amount of unlabeled data, as discussed in the previous literature \citep{lei2021near}.
\end{remark}

\begin{theorem}[Lower Bound]\label{thm:lower-bound}
Let $W=\left\{\bs w^*\in\bbR^d:\left\Vert\bs w^*\right\Vert_{\bs M}^2\leq 1\right\}$ as in \eqref{eq: constraint} and $\bs S$, $\bs T$ are positive semi-definite matrices. 
Given probability $\tilde{P}\in\mathcal{P}(W,\bs S,\bs T)$, samples $\{(\bs x_i, y_i)\}_{i=1}^n$ are drawn from the source distribution of $\tilde{P}$.
We consider any random estimator $\hat{\bs w}\left(\{(\bs x_i, y_i)\}_{i=1}^n, \xi  \right)$, where $\xi\sim P_{\xi }$ encodes the randomness of $\hat{\bs w}$. 
Then the minimax excess risk on the target domain over $\mathcal{P}(W,\bs S,\bs T)$ can be bounded from below by:
    \begin{equation}\label{eq: mainlower}
        \inf_{\hat{\bs w}}\sup_{\tilde{P} \in \mathcal{P}(W,\bs S,\bs T)}\bbE_{\tilde{P}^{\otimes n}\times P_{\xi }}\left\Vert\hat{\bs w}-\bs w^*\right\Vert_{\bs T}^2\geq\sup_{\substack{\bs F\succeq\bs O\\ \left\Vert\bs F\right\Vert_*\leq 1/\pi^2}}\left\langle\bs T',\left(\bs F^{-1}+\frac{n\bs S'}{\sigma^2}\right)^{-1}\right\rangle,
    \end{equation}
    where $\bs S'=\bs M^{-1/2}\bs S\bs M^{-1/2}$ and $\bs T'=\bs M^{-1/2}\bs T\bs M^{-1/2}$.
\end{theorem}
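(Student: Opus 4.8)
The plan is to lower bound the minimax risk by the Bayes risk of a carefully chosen family of priors on $W$, evaluating each Bayes risk through a matrix-valued van Trees (Bayesian Cramér--Rao) inequality. First I would restrict the supremum to a Gaussian sub-model: let $\bs x$ be centered Gaussian with covariance $\bs S$ (resp.\ $\bs T$) in the source (resp.\ target) domain and let the noise $\epsilon\sim\mathcal N(0,\sigma^2)$ be independent of $\bs x$. Such instances lie in $\mathcal P(W,\bs S,\bs T)$: Assumption~\ref{asp:minimizer} holds by well-specification, Assumption~\ref{assumption:noise} holds since $\bbE[\epsilon^2\bs x\bs x^\top]=\sigma^2\bs S$, and Assumption~\ref{ass-finite-risk} holds because $\|\bs w^*\|_{\bs S}^2=\bs v^\top\bs S'\bs v\le\|\bs S'\|$, which the problem class already constrains. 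Since $\sup_{\tilde P\in\mathcal P(W,\bs S,\bs T)}$ dominates the supremum over this sub-model, it suffices to bound the latter. I would then apply the linear change of variables $\bs v=\bs M^{1/2}\bs w^*$, $\bs z=\bs M^{-1/2}\bs x$: the constraint becomes $\|\bs v\|_2\le 1$, the source/target covariances become $\bs S'$ and $\bs T'$, the model becomes $y=\bs z^\top\bs v+\epsilon$, estimators transform bijectively, and $\|\hat{\bs w}-\bs w^*\|_{\bs T}^2=\|\hat{\bs v}-\bs v\|_{\bs T'}^2$. Thus the goal reduces to showing $\inf_{\hat{\bs v}}\sup_{\|\bs v\|_2\le 1}\bbE\|\hat{\bs v}-\bs v\|_{\bs T'}^2\ge\sup_{\bs F\succeq\bs O,\,\|\bs F\|_*\le 1/\pi^2}\langle\bs T',(\bs F^{-1}+n\bs S'/\sigma^2)^{-1}\rangle$ in the Gaussian model.

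Next, fix an admissible $\bs F$; by continuity of the right-hand side in $\bs F$ it is enough to treat $\bs F\succ\bs O$. Write $\bs F=\bs V\diag\{f_i\}_{i=1}^d\bs V^\top$ with $\bs V$ orthogonal, and set $a_i=\pi\sqrt{f_i}$, so that $\sum_i a_i^2=\pi^2\|\bs F\|_*\le 1$. Take the prior under which, in the basis $\bs V$, the coordinates of $\bs v$ are independent with the $i$-th drawn from the raised-cosine density proportional to $\cos^2(\pi t/(2a_i))$ on $[-a_i,a_i]$. Two properties are crucial: (i) the support is the rotated box $\{\bs V\bs t:|t_i|\le a_i\}$, which lies inside the unit ball since $\|\bs V\bs t\|_2^2=\sum_i t_i^2\le\sum_i a_i^2\le 1$, so this is a legitimate prior on $W$; and (ii) the raised-cosine density on $[-a,a]$ vanishes at the endpoints and has Fisher information exactly $\pi^2/a^2$ — writing the density as $g^2$, its Fisher information is $4\int(g')^2$, minimized over $g$ vanishing at $\pm a$ by the first Dirichlet eigenfunction $g\propto\cos(\pi t/(2a))$, giving $4(\pi/2a)^2=\pi^2/a^2$ — so the prior's information matrix is $\bs V\diag\{\pi^2/a_i^2\}_{i=1}^d\bs V^\top=\bs F^{-1}$. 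The nuclear-norm radius $1/\pi^2$ is precisely what comes out of matching this sharp constant to the box-in-ball condition $\sum_i a_i^2\le 1$.

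Finally, the per-sample Fisher information for $\bs v$ in the Gaussian linear model $y=\bs z^\top\bs v+\epsilon$ equals $\bs S'/\sigma^2$. I would then invoke the matrix van Trees inequality — proved separately as a lemma, being the multivariate, Loewner-order, $n$-sample generalization of the Bayesian Cramér--Rao bound — to obtain $\bbE_{\bs v}\bbE_{\mathrm{data}\mid\bs v}[(\hat{\bs v}-\bs v)(\hat{\bs v}-\bs v)^\top]\succeq(n\bs S'/\sigma^2+\bs F^{-1})^{-1}$; the boundary terms in the underlying integration by parts vanish exactly because the product prior is zero on the boundary of the box. Multiplying on the left by $\bs T'\succeq\bs O$, taking traces (using $\tr(\bs T'\bs A)\ge\tr(\bs T'\bs B)$ whenever $\bs A\succeq\bs B$), and using $\sup_{\bs w^*}(\cdot)\ge\bbE_{\mathrm{prior}}(\cdot)$, yields for every admissible $\bs F$ the bound $\inf_{\hat{\bs v}}\sup_{\|\bs v\|_2\le 1}\bbE\|\hat{\bs v}-\bs v\|_{\bs T'}^2\ge\langle\bs T',(\bs F^{-1}+n\bs S'/\sigma^2)^{-1}\rangle$; taking the supremum over $\bs F$ concludes.

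I expect the main obstacle to be the matrix van Trees inequality itself: lifting the classical scalar Bayesian Cramér--Rao bound to the full estimator covariance in the Loewner order, for $n$ i.i.d.\ samples, under regularity conditions that the raised-cosine product priors do satisfy — combined with establishing the sharp value $\pi^2/a^2$ of the raised-cosine Fisher information, since this is what pins down the radius $1/\pi^2$. A minor loose end is the limiting argument for rank-deficient $\bs F$, which is handled through the continuous extension of $\bs F\mapsto\langle\bs T',(\bs F^{-1}+n\bs S'/\sigma^2)^{-1}\rangle$.
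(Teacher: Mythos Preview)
Your proposal is correct and follows essentially the same approach as the paper: reduce to a Gaussian sub-model, construct product raised-cosine priors supported on rotated boxes inscribed in the constraint set, compute the prior Fisher information to be $\bs F^{-1}$ with $\|\bs F\|_*\le 1/\pi^2$, and apply a matrix (Loewner-order) van Trees inequality. The only cosmetic difference is that you perform the change of variables $\bs v=\bs M^{1/2}\bs w^*$, $\bs z=\bs M^{-1/2}\bs x$ up front to reduce to the unit-ball case with $(\bs S',\bs T')$, whereas the paper keeps the original parametrization and instead scales the prior by $\bs M^{1/2}$ at the end; these are equivalent and your version is arguably slightly cleaner.
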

Theorem~\ref{thm:lower-bound} provides the \emph{first} algorithm-independent, worst-case lower bound over problem class $\mathcal{P }(W,\bs S,\bs T)$ for any instance of $\bs S$ and $\bs T$. It characterizes the information limit for linear regression under covariate shift for any $n$ and $d$.

\begin{remark}
We establish a Bayesian matrix lower bound for the estimator covariance under a given prior as shown in Lemma~\ref{lemma:van-trees}, which serves as a multivariate generalization of Bayesian Cramer-Rao inequality \citep{bj/1186078362}. The minimax lower bound is given by the supremum of the Bayesian risk over a sequence of constructed candidate sets of prior distributions.
\end{remark}

\section{Optimal Algorithm}\label{sec: main optimal alg}
In this section, we analyze a class of estimators, establishing that one member of this class attains the minimax optimality. 

Concretely, we consider the estimator class of
\begin{equation}
    \hat{\bs w}_{\bs A}=\frac{1}{n}\bs M^{-1/2}\bs A\bs M^{1/2}\bs S^{-1}\sum_{i=1}^n\bs x_i y_i,\label{eq:def-w-opt}
\end{equation}
where $\bs A\in\mathbb{R}^{d\times d}$ is a preconditioner for the estimation and different choices of $\bs A$ lead to a broad class of estimators. The estimator has prior knowledge on the covariance matrices $\bs S$ and $\bs T$, which is consistent with our minimax lower bound. This prior knowledge also aligns with classical methods or can be derived from a semi-supervised learning framework, as discussed in Section~\ref{sec: main lower}.


To understand the effect of this preconditioned estimator, consider the case when $\bs A = \bs I$ where the estimator simplifies to $\hat{\bs w}_{\bs A}=\frac{1}{n}\bs S^{-1}\sum_{i=1}^n\bs x_i y_i$ and represents the unbiased least square estimator. In contrast, when $\bs A\prec \bs I$, there is a deliberate trade-off between bias accuracy and reduced variance. Specifically, under the covariate shift scenario, determining the optimal balance between these factors becomes complex and necessitates careful selection, owing to the intricate interaction between regularization and the distributional shift.

We demonstrate that the optimal preconditioner $\bs A$ is determined by minimizing the following the objective function 
\begin{equation}\label{eq: A choice}
    \bs A=\argmin_{\bs A\in\bbR^{d\times d}}\left\Vert(\bs I-\bs A)^\top\bs T'(\bs I-\bs A)\right\Vert+\frac{2\sigma^2+2\psi\left\Vert\bs S'\right\Vert}{n}\left\langle\bs T',\bs A\left(\bs S'\right)^{-1}\bs A^\top\right\rangle,
\end{equation}
where $\bs S'=\bs M^{-1/2}\bs S\bs M^{-1/2}$ and $\bs T'=\bs M^{-1/2}\bs T\bs M^{-1/2}$.
The optimization problem \eqref{eq: A choice} is a convex program in  $\mathbf{A}$, ensuring the existence of a unique global minimizer. This can be efficiently solved using proximal gradient methods or semi-definite programming, even in high-dimensional settings~\citep{boyd2004convex}.
Subsequently, the optimal estimator $\hat{\bs w}^\opt$ is defined according to \eqref{eq:def-w-opt}. This estimator leads to the derivation of the following minimax upper bound.
\begin{theorem}[Upper Bound]\label{thm:optimal-upper-bound}
   Let $W=\left\{\bs w^*\in\bbR^d:\left\Vert\bs w^*\right\Vert_{\bs M}^2\leq 1\right\}$. For any positive semi-definite matrix $\bs S$ and $\bs T$, and  $\tilde{P}\in\mathcal{P}(W,\bs S,\bs T)$. Suppose we get samples $\{(\bs x_i, y_i)\}_{i=1}^n$ drawn from the source distribution of $\tilde{P}$. The excess risk of the optimal estimator $\hat{\bs w}^\opt$ defined in \eqref{eq:def-w-opt} on the target distribution of $\tilde{P}$ can be bounded from the above by:
    \begin{equation}\label{eq: main minimaxupper}
    \begin{aligned}
       &\sup_{\tilde{P} \in \mathcal{P}(W,\bs S,\bs T)}\bbE_{\tilde{P}^{\otimes n}}\left\Vert\hat{\bs w}^\opt-\bs w^*\right\Vert_{\bs T}^2\\ \leq&\min_{\bs A\in\bbR^{d\times d}}\left\Vert(\bs I-\bs A)^\top\bs T'(\bs I-\bs A)\right\Vert+\frac{2\sigma^2+2\psi\left\Vert\bs S'\right\Vert}{n}\left\langle\bs T',\bs A\left(\bs S'\right)^{-1}\bs A^\top\right\rangle,
        \end{aligned}
    \end{equation}
    where $\bs S'=\bs M^{-1/2}\bs S\bs M^{-1/2}$ and $\bs T'=\bs M^{-1/2}\bs T\bs M^{-1/2}$.
\end{theorem}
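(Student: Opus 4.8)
The plan is to reduce to the isotropic case $\bs M=\bs I$ by a linear change of variables, then run a bias--variance decomposition of $\hat{\bs w}_{\bs A}$ and control each piece using Assumptions~\ref{assumption:fourth-moment} and~\ref{assumption:noise}. \textbf{Step 1 (reduction).} Substituting $\bs z=\bs M^{1/2}\bs w$, $\tilde{\bs x}_i=\bs M^{-1/2}\bs x_i$ and $\bs z^*=\bs M^{1/2}\bs w^*$, the constraint $\bs w^*\in W$ becomes $\|\bs z^*\|_2\le 1$, the transformed covariate has second moment $\bbE[\tilde{\bs x}\tilde{\bs x}^\top]=\bs S'$, and since $\langle\bs w^*,\bs x\rangle=\langle\bs z^*,\tilde{\bs x}\rangle$ the noise $\epsilon=y-\langle\bs w^*,\bs x\rangle$ is unchanged, so Assumptions~\ref{assumption:fourth-moment} and~\ref{assumption:noise} transfer verbatim with $\bs S,\bs T$ replaced by $\bs S',\bs T'$. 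Using the identity $\bs M^{1/2}\bs S^{-1}=(\bs S')^{-1}\bs M^{-1/2}$ one checks $\bs M^{1/2}\hat{\bs w}_{\bs A}=\frac1n\bs A(\bs S')^{-1}\sum_i\tilde{\bs x}_iy_i$ and $\|\hat{\bs w}_{\bs A}-\bs w^*\|_{\bs T}^2=\|\bs M^{1/2}\hat{\bs w}_{\bs A}-\bs z^*\|_{\bs T'}^2$. Hence it suffices to prove, for \emph{every} fixed $\bs A$, that $\hat{\bs z}_{\bs A}=\frac1n\bs A\bs S^{-1}\sum_i\bs x_iy_i$ (from now on $\bs S,\bs T$ denote $\bs S',\bs T'$, the $\bs x_i$ are the transformed covariates and $\|\bs w^*\|_2\le 1$) obeys
\[
\bbE\|\hat{\bs z}_{\bs A}-\bs w^*\|_{\bs T}^2\le\|(\bs I-\bs A)^\top\bs T(\bs I-\bs A)\|+\frac{2\sigma^2+2\psi\|\bs S\|}{n}\langle\bs T,\bs A\bs S^{-1}\bs A^\top\rangle ;
\]
the theorem then follows by taking $\bs A$ to be the minimizer in~\eqref{eq: A choice} and then the supremum over the class.

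\textbf{Step 2 (decomposition).} Writing $\bs x_iy_i=\bs x_i\bs x_i^\top\bs w^*+\bs x_i\epsilon_i$, $\hat{\bs S}_n=\frac1n\sum_i\bs x_i\bs x_i^\top$ and $\bs\xi_n=\frac1n\sum_i\bs x_i\epsilon_i$, I decompose
\[
\hat{\bs z}_{\bs A}-\bs w^*=\underbrace{(\bs A-\bs I)\bs w^*}_{T_1}+\underbrace{\bs A\bs S^{-1}(\hat{\bs S}_n-\bs S)\bs w^*}_{T_2}+\underbrace{\bs A\bs S^{-1}\bs\xi_n}_{T_3}.
\]
Assumption~\ref{asp:minimizer} makes $\bs w^*$ the unconstrained minimizer of the source risk, i.e. the normal equation $\bbE[\bs x\epsilon]=0$ holds; with $\bbE[\hat{\bs S}_n]=\bs S$ this gives $\bbE[T_2]=\bbE[T_3]=0$ while $T_1$ is deterministic, so the cross terms of $T_1$ with $T_2,T_3$ vanish and $\bbE\|\hat{\bs z}_{\bs A}-\bs w^*\|_{\bs T}^2=\|T_1\|_{\bs T}^2+\bbE\|T_2+T_3\|_{\bs T}^2$. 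Since Assumption~\ref{assumption:noise} permits $\bbE[\epsilon\mid\bs x]\ne 0$, the $T_2$--$T_3$ cross term need not vanish, so I simply use $\bbE\|T_2+T_3\|_{\bs T}^2\le 2\bbE\|T_2\|_{\bs T}^2+2\bbE\|T_3\|_{\bs T}^2$; this is where the factor $2$ in the stated coefficient comes from.

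\textbf{Step 3 (term bounds and conclusion).} Put $\bs C=(\bs A\bs S^{-1})^\top\bs T(\bs A\bs S^{-1})\succeq\bs O$, and note $\tr(\bs S\bs C)=\langle\bs T,\bs A\bs S^{-1}\bs A^\top\rangle$. For $T_1$, $\|T_1\|_{\bs T}^2=\bs w^{*\top}(\bs I-\bs A)^\top\bs T(\bs I-\bs A)\bs w^*\le\|(\bs I-\bs A)^\top\bs T(\bs I-\bs A)\|$ by $\|\bs w^*\|_2\le1$. For $T_3$, independence and $\bbE[\bs x\epsilon]=0$ give $\bbE[\bs\xi_n\bs\xi_n^\top]=\frac1n\bbE[\epsilon^2\bs x\bs x^\top]\preceq\frac{\sigma^2}{n}\bs S$ (Assumption~\ref{assumption:noise}), whence $\bbE\|T_3\|_{\bs T}^2=\tr(\bs C\,\bbE[\bs\xi_n\bs\xi_n^\top])\le\frac{\sigma^2}{n}\tr(\bs S\bs C)$. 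For $T_2$, independence kills the off-diagonal contributions and $\bbE[(\hat{\bs S}_n-\bs S)\bs C(\hat{\bs S}_n-\bs S)]=\frac1n\big(\bbE[\bs x\bs x^\top\bs C\bs x\bs x^\top]-\bs S\bs C\bs S\big)\preceq\frac{\psi}{n}\tr(\bs S\bs C)\bs S$ (Assumption~\ref{assumption:fourth-moment}); conjugating by $\bs w^*$ and using $\bs w^{*\top}\bs S\bs w^*\le\|\bs S\|$ yields $\bbE\|T_2\|_{\bs T}^2\le\frac{\psi\|\bs S\|}{n}\tr(\bs S\bs C)$. Summing the three estimates gives the per-$\bs A$ inequality of Step 1, and the theorem follows after undoing the change of variables. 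I expect the main obstacle to be the moment bookkeeping for $T_2$ in Step 3 --- isolating the diagonal term $\bbE[(\bs x\bs x^\top-\bs S)\bs C(\bs x\bs x^\top-\bs S)]$ and bounding it through Assumption~\ref{assumption:fourth-moment} (instead of a concentration argument for $\hat{\bs S}_n$) --- together with the subtlety in Step 2 that the $T_2$--$T_3$ cross term survives and must be absorbed via $\|a+b\|^2\le 2\|a\|^2+2\|b\|^2$.
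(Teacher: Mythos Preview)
Your proposal is correct and follows essentially the same route as the paper: a bias--variance split of $\hat{\bs w}_{\bs A}$, the inequality $(a+b)^2\le 2a^2+2b^2$ to separate the noise and fourth-moment contributions, and the bounds from Assumptions~\ref{assumption:fourth-moment} and~\ref{assumption:noise}. The only cosmetic differences are that the paper works directly in the original coordinates (keeping $\bs M$ explicit rather than reducing to $\bs M=\bs I$) and applies the $2a^2+2b^2$ trick to $y^2=(\bs x^\top\bs w^*+\epsilon)^2$ inside $\cov\hat{\bs w}$ rather than to your already-separated $T_2+T_3$; the resulting inequalities are identical.
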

There exists a equivalence between the objective in \eqref{eq: A choice} and the objective in the minimax upper bound presented in \eqref{eq: main minimaxupper}. As demonstrated in the proof of Theorem~\ref{thm:optimal-upper-bound}, this objective corresponds to an excess risk upper bound for the estimator $\hat{\bs w}_{\bs A}$. Consequently, the estimator $\hat{\bs w}_{\bs A}$ is obtained by minimizing this excess risk upper bound over the class of estimators described in \eqref{eq:def-w-opt}.

Furthermore, Theorem~\ref{thm:lower-upper-match} demonstrates that, $\hat{\bs w}_{\bs A}$ not only minimizes the excess risk over the estimator class defined in \eqref{eq:def-w-opt}, but also attains the information-theoretic optimality.
\begin{theorem}[Matching Bounds]\label{thm:lower-upper-match}
For any positive definite matrix $\bs S$, $\bs M$ and positive semi-definite matrix $\bs T$, the following equation holds:
    \begin{equation}
        \sup_{\substack{\bs F\succeq\bs O\\ \left\Vert\bs F\right\Vert_*\leq 1/\pi^2}}\left\langle\bs T',\left(\bs F^{-1}+\frac{n\bs S'}{\sigma^2}\right)^{-1}\right\rangle=\min_{\bs A\in\bbR^{d\times d}}\frac{1}{\pi^2}\left\Vert(\bs I-\bs A)^\top\bs T'(\bs I-\bs A)\right\Vert+\frac{\sigma^2}{n}\left\langle\bs T',\bs A\left(\bs S'\right)^{-1}\bs A^\top\right\rangle,\label{eq:lower-upper-match}\notag
    \end{equation}
    where $\bs S'=\bs M^{-1/2}\bs S\bs M^{-1/2}$ and $\bs T'=\bs M^{-1/2}\bs T\bs M^{-1/2}$. 
\end{theorem}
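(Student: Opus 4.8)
The plan is to establish the identity in \eqref{eq:lower-upper-match} as a minimax (saddle-point) duality statement, viewing the left side as a maximization over the PSD matrix variable $\bs F$ (subject to the nuclear-norm constraint) and the right side as a minimization over the preconditioner $\bs A$. First I would rescale: since $\bs S'$ is positive definite, substitute $\bs A = \bs B(\bs S')^{1/2}$ or, more conveniently, work directly with the change of variables that turns $\bs A(\bs S')^{-1}\bs A^\top$ into $\bs B\bs B^\top$, so that the right-hand objective becomes $\frac{1}{\pi^2}\|(\bs I - \bs A)^\top\bs T'(\bs I-\bs A)\| + \frac{\sigma^2}{n}\langle \bs T', \bs B\bs B^\top\rangle$ with $\bs B = \bs A(\bs S')^{-1/2}$. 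The spectral-norm term $\|(\bs I-\bs A)^\top\bs T'(\bs I-\bs A)\|$ I would rewrite using its variational (dual) form $\|\bs N\| = \sup_{\bs F\succeq\bs O,\ \tr\bs F\le 1}\langle\bs F,\bs N\rangle$ for symmetric PSD $\bs N$; this is precisely where the nuclear-norm/trace constraint $\|\bs F\|_*\le 1/\pi^2$ on the left-hand side originates. Substituting this in, the right-hand side becomes a min over $\bs A$ of a max over $\bs F$ of a function that is convex in $\bs A$ (for each fixed $\bs F\succeq\bs O$, the map $\bs A\mapsto\langle\bs F,(\bs I-\bs A)^\top\bs T'(\bs I-\bs A)\rangle$ is convex quadratic) and linear — hence concave — in $\bs F$.

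The second step is to apply a minimax theorem (Sion's, after checking the constraint set for $\bs F$ is convex and compact and the set for $\bs A$ is convex) to swap $\min_{\bs A}$ and $\max_{\bs F}$. After the swap, for each fixed $\bs F\succeq\bs O$ I would solve the inner unconstrained quadratic minimization over $\bs A$ exactly: minimize $\langle\bs F,(\bs I-\bs A)^\top\bs T'(\bs I-\bs A)\rangle + \frac{\sigma^2}{n}\langle\bs T',\bs A(\bs S')^{-1}\bs A^\top\rangle$ in $\bs A$. Setting the gradient to zero gives a Sylvester-type linear equation for the optimal $\bs A^\star(\bs F)$; plugging $\bs A^\star$ back should collapse the expression, after simplification, into $\langle\bs T', (\bs F^{-1} + \tfrac{n\bs S'}{\sigma^2})^{-1}\rangle$ — matching the left-hand summand. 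Concretely I expect the optimal $\bs A$ to satisfy something like $\bs A^\star = (\bs F^{-1} + \tfrac{n\bs S'}{\sigma^2})^{-1}\bs F^{-1}$ (in the appropriate basis where $\bs T'$, $\bs F$ commute after reduction), so that $\bs I - \bs A^\star = (\bs F^{-1}+\tfrac{n\bs S'}{\sigma^2})^{-1}\tfrac{n\bs S'}{\sigma^2}$ and the two quadratic terms telescope via the identity $\bs X(\bs X+\bs Y)^{-1}\bs Y = \bs Y(\bs X+\bs Y)^{-1}\bs X = (\bs X^{-1}+\bs Y^{-1})^{-1}$.

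The main obstacle I anticipate is the interplay between the spectral norm (a $\max$ over a hidden auxiliary $\bs F$) and the fact that, once $\bs F$ is introduced, $\bs F$ and $\bs T'$ need not commute, so the inner minimization over $\bs A$ is a genuine matrix Sylvester equation rather than a scalar-per-eigendirection computation. I would handle this by keeping the computation basis-free: write the first-order stationarity condition as $\bs T'(\bs A-\bs I)\bs F + \tfrac{\sigma^2}{n}\bs T'\bs A(\bs S')^{-1} = \bs O$, i.e. $\bs A(\bs F + \tfrac{\sigma^2}{n}(\bs S')^{-1}) = \bs F$ (after left-cancelling the invertible $\bs T'$ — or, if $\bs T'$ is only PSD, restricting to its range and arguing the complementary directions contribute zero to both sides), solve $\bs A^\star = \bs F(\bs F + \tfrac{\sigma^2}{n}(\bs S')^{-1})^{-1}$, and substitute. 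Then $\langle\bs F, (\bs I-\bs A^\star)^\top\bs T'(\bs I-\bs A^\star)\rangle + \tfrac{\sigma^2}{n}\langle\bs T',\bs A^\star(\bs S')^{-1}(\bs A^\star)^\top\rangle$ should simplify — using the resolvent identity above with $\bs X = \bs F$, $\bs Y = \tfrac{\sigma^2}{n}(\bs S')^{-1}$ and cyclicity of trace — to $\langle\bs T', (\bs F^{-1} + \tfrac{n}{\sigma^2}\bs S')^{-1}\rangle$, completing the match after taking $\max_{\bs F}$ over $\{\bs F\succeq\bs O:\ \tr\bs F\le 1/\pi^2\}$ (equivalently $\|\bs F\|_*\le 1/\pi^2$ since $\bs F\succeq\bs O$). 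A secondary technical point is justifying that the $\min$ on the right is attained (coercivity of the quadratic in $\bs A$, which holds because $\bs S'\succ\bs O$ so the variance term is positive definite in $\bs A$) and that the PSD constraint $\bs A\succeq\bs O$ is not needed — the unconstrained minimizer is automatically the relevant one on both sides.
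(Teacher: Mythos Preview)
Your approach is correct and takes a genuinely different route from the paper. The paper starts from the supremum side: it introduces a Lagrange multiplier $\bs\Delta$ for an auxiliary equality constraint, swaps $\sup$ and $\inf$ by concavity, solves the inner maximization over $\bs F$ in closed form, and after the change of variables $\bs A=\bs I-(\bs T')^{-1/2}\bs\Delta^{1/2}(\bs T')^{-1/2}\bs S'$ arrives at the minimization over $\bs A$ --- but with the extra constraint $\bs S'(\bs I-\bs A)\in\bbS_+^{d\times d}$, which it then removes by a separate polar-decomposition argument (the lemma $\tr(\bs U\bs Z)\le\tr\bs Z$). It also handles singular $\bs T'$ by a limiting step $\bs T'\to\bs T'+\epsilon\bs I$. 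Your route --- dualize the spectral norm via $\|\bs N\|=\sup_{\bs F\succeq\bs O,\ \tr\bs F\le 1}\langle\bs F,\bs N\rangle$, apply Sion, then solve the unconstrained quadratic in $\bs A$ for each fixed $\bs F$ --- is shorter: the minimizer $\bs A^\star=\bs F\bigl(\bs F+\tfrac{\sigma^2}{n}(\bs S')^{-1}\bigr)^{-1}$ is well-defined for every $\bs F\succeq\bs O$ (since $\bs S'\succ\bs O$), the objective at $\bs A^\star$ collapses via your telescoping identity to $\tr\bigl(\bs T'\bs F(\bs F+\bs Y)^{-1}\bs Y\bigr)$ with $\bs Y=\tfrac{\sigma^2}{n}(\bs S')^{-1}$, and this equals $\langle\bs T',(\bs F^{-1}+\tfrac{n}{\sigma^2}\bs S')^{-1}\rangle$ for invertible $\bs F$ and by continuity otherwise. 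In particular you avoid both the polar-decomposition step and the separate limit for singular $\bs T'$, because $\bs A^\star$ satisfies the stronger stationarity condition $(\bs A-\bs I)\bs F+\tfrac{\sigma^2}{n}\bs A(\bs S')^{-1}=\bs O$ regardless of the rank of $\bs T'$. What the paper's route buys is that it makes the structure of the optimal $\bs A$ in terms of the dual variable $\bs\Delta$ explicit, which is the natural object coming from the Bayesian Cram\'er--Rao lower bound; your route buys brevity and sidesteps the auxiliary symmetry constraint on $\bs A$ altogether.
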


\section{Optimality of ASGD}\label{sec:asgd-optimality}
Modern large-scale pretrained models, particularly LLMs, are typically trained using SGD-type methods on source distributions. These models exhibit remarkable adaptability when transferred to downstream tasks with distinct target distributions. 
In this section, we go further from characterizing the optimal algorithms under covariate shifts and consider the performance of the stochastic gradient descent and its accelerated  variants, the commonly used modern machine learning optimization methods. We provide sufficient conditions
when SGD with its acceleration variants attain optimality. Specifically, established excess risk in Section~\ref{sec: asgd upper} demonstrates that: these methods result in a specific ``greedily" preconditioned estimator, a particular case within the broader class of preconditioned estimators as in \eqref{eq:def-w-opt}. Therefore, we can leverage the previous optimality results to analyze their optimality conditions in Section~\ref{sec: sub asgd optiamlity}.
Our results demonstrate that, even in the absence of prior knowledge on $\mathbf{S}$ and $\mathbf{T}$, ASGD can achieve the optimality established in Theorem~\ref{thm:lower-bound} within a broad class of covariate shift problems.

\subsection{ASGD Upper Bound}\label{sec: asgd upper}

Our analysis is based on an integrated framework of geometrically decaying stepsizes \citep{ge2019step,wu2022last} and the standard accelerated stochastic gradient descent for linear regression \citep{jain2018accelerating,lirisk}, as presented in Algorithm~\ref{alg:asgd}. (When $\gamma=\delta$, Algorithm~\ref{alg:asgd} reduces to the vanilla stochastic gradient descent method with geometrically decaying step size.) 
Theorem~\ref{thm:asgd-upper-bound} establishes an excess risk upper bound of ASGD on the target distribution.

\begin{algorithm}[t]
    \caption{Accelerated Stochastic Gradient Descent (ASGD) with exponentially decaying step size}\label{alg:asgd}
    \begin{algorithmic}
    \Require {Initial weight $\bs w_0$, initial step size $\delta$, $\gamma$, momentum parameters $\alpha$, $\beta$, Dataset $\mathcal{D}=\left\{\left(\bs x_i,y_i\right)\right\}_{i=1}^n$}
    \State $\bs v_0\gets\bs w_0, t\gets 1$
    \For{$\ell=1,2,\ldots,\log_2 n$}
        \State $\delta_{(\ell)}\gets\delta_0/4^{\ell-1}$, $\gamma_{(\ell)}\gets\gamma_0/4^{\ell-1}$
        \For{$t=1,2,\ldots,\frac{n}{\log_2 n}$}
            \State $\bs u_{i-1}\gets\alpha\bs w_{i-1}+(1-\alpha)\bs v_{i-1}$
            \State $\bs g_i\gets\left(\bs x_i^\top\bs u_{i-1}-y_i\right)\bs x_i$
            \State $\bs w_i\gets\bs u_{i-1}-\delta_{(\ell)}\bs g_i$
            \State $\bs v_i\gets\beta \bs u_{i-1}+(1-\beta)\bs v_{i-1}-\gamma_{(\ell)}\bs g_i$
            \State $i\gets i+1$
        \EndFor
    \EndFor
    \Ensure $\bs w_n$

    \end{algorithmic}
\end{algorithm}
\begin{theorem}[Upper Bound of Accelerated SGD]\label{thm:asgd-upper-bound}
Let $W=\left\{\bs w^*\in\bbR^d:\left\Vert\bs w^*\right\Vert_{\bs M}^2\leq 1\right\}$. For any positive semi-definite matrix $\bs S$ and $\bs T$, and  $\tilde{P}\in\mathcal{P}(W,\bs S,\bs T)$. Suppose we get samples $\{(\bs x_i, y_i)\}_{i=1}^n$ drawn from the source distribution of $\tilde{P}$. 
 When $n\geq16$, we choose the hyperparameters according to Section~\ref{sec:param-choice}. Denote the output of Algorithm~\ref{alg:asgd} as $\bs w_n^\sgd$, the excess risk of $\bs w_n^\sgd$ on the target distribution of $\tilde{P}$ can be bounded from the above by
    \begin{equation}\label{eq: asgdupper}
    \begin{aligned}        
        \bbE_{\tilde{P}^{\otimes n}}\left\Vert\bs w_n^\sgd-\bs w^*\right\Vert_{\bs T}^2\leq&\underbrace{\left(\sigma^2+2c\right)\cdot\left[\sum_{i=1}^{k^*}{\frac{2t_{ii}}{K\lambda_i}}+\frac{128}{15}K\left(\gamma+\delta\right)^2\sum_{i=k^*+1}^{d}\lambda_i t_{ii}\right]}_\text{Effective Variance}\\
        &+\underbrace{\frac{\left\Vert\bs T'_{0:k^*}\right\Vert}{8n^2(\log_2 n)^4}+4\left\Vert\bs T'_{k^*:\infty}\right\Vert}_\text{Effective Bias}, \\
    \end{aligned}
    \end{equation}
    where $k^*=\max\left\{k:\lambda_k>\frac{32\ln n}{(\gamma+\delta)K}\right\}$, often referred to as the effective dimension~\citep{bartlett2020benign, zou2021benign}, $K=\frac{n}{\log_2 n}$, $\bs S'=\bs M^{-1/2}\bs S\bs M^{-1/2}$, $\bs T_{0:k^*}'=\bs M^{-1/2}\bs T_{0:k^*}\bs M^{-1/2}$ and $\bs T_{k^*:\infty}'=\bs M^{-1/2}\bs T_{k^*:\infty}\bs M^{-1/2}$.
\end{theorem}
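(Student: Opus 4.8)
\textbf{Proof proposal for Theorem~\ref{thm:asgd-upper-bound}.}

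The plan is to analyze the ASGD iteration by decomposing the error $\bs w_n^\sgd - \bs w^*$ into a bias component (the error from initialization that would persist in the noiseless setting) and a variance component (the error driven by the response noise $\epsilon_i$), following the standard bias--variance split for SGD on least squares \citep{jain2018accelerating,zou2021benign,lirisk}. Since Algorithm~\ref{alg:asgd} is linear in the observations, the iterates obey a recursion of the form $\binom{\bs w_i - \bs w^*}{\bs v_i - \bs w^*} = \bs B_i \binom{\bs w_{i-1} - \bs w^*}{\bs v_{i-1} - \bs w^*} + \text{noise}_i$, where $\bs B_i$ is a random $2d \times 2d$ contraction operator depending on $\bs x_i$, $\delta_{(\ell)}$, $\gamma_{(\ell)}$, $\alpha$, $\beta$. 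I would work in the eigenbasis of $\bs S$ (which by assumption is the standard basis), so the expected operator dynamics decouple coordinatewise; the key quantity is the ``precondition matrix'' $\bs P$ capturing the accumulated effect of all iterations on each coordinate, which should behave like $1$ (full signal recovery) for the head coordinates $i \le k^*$ and like a small quantity proportional to $K(\gamma+\delta)^2\lambda_i$ for the tail coordinates $i > k^*$ — this is exactly why $k^*$ is the effective dimension. The target risk is then $\tfrac12\bbE\|\bs w_n^\sgd - \bs w^*\|_{\bs T}^2 = \tfrac12 \langle \bs T, \bbE[(\bs w_n^\sgd - \bs w^*)(\bs w_n^\sgd - \bs w^*)^\top]\rangle$, and I would bound the covariance matrix entrywise in this basis, then pair against $\bs T$ (equivalently against $\bs T'$ after conjugating by $\bs M^{-1/2}$).

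For the bias term, I would track how the geometrically decaying stepsize schedule shrinks the initialization error: over $\log_2 n$ stages each of length $K$, with stepsize $\delta_0/4^{\ell-1}$, the head coordinates get contracted by a factor like $(1 - c\,\delta\lambda_i)^{K}$ per early stage, and since $\lambda_i > 32\ln n/((\gamma+\delta)K)$ for $i \le k^*$ this drives the head-bias down to the $n^{-2}(\log_2 n)^{-4}$ scale in \eqref{eq: asgdupper}; the tail coordinates $i > k^*$ are barely moved, so their contribution to the bias is controlled by $\|\bs w^*\|$-type quantities evaluated against $\bs T'_{k^*:\infty}$, giving the $4\|\bs T'_{k^*:\infty}\|$ term (here Assumption~\ref{ass-finite-risk} supplies the needed bound $\|\bs w^*\|_{\bs S}^2 \le c$ and hence control of $\|\bs M^{-1/2}\bs S\bs M^{-1/2}\| \le c$). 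For the variance term, the noise injected at step $i$ has second moment controlled by $\sigma^2 \bs S$ via Assumption~\ref{assumption:noise}, and one propagates it forward through the remaining (contracting) operators; the fourth-moment condition Assumption~\ref{assumption:fourth-moment} is what lets me replace the random operator $\bs B_i^\top(\cdot)\bs B_i$ by its expectation up to the factor $\psi$ and keep the cross terms from accelerating — this yields the head sum $\sum_{i\le k^*} 2t_{ii}/(K\lambda_i)$ (each head coordinate contributes a $1/(K\lambda_i)$ variance, classical for averaged/last-iterate SGD) and the tail sum $\tfrac{128}{15}K(\gamma+\delta)^2\sum_{i>k^*}\lambda_i t_{ii}$ (each tail coordinate accumulates small but nonvanishing noise). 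The extra $2c$ added to $\sigma^2$ accounts for the ``model noise'' $\bs x^\top\bs w^* $ fluctuations that appear because we do not assume $\bbE[\epsilon|\bs x]=0$.

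The main obstacle I anticipate is the rigorous control of the \emph{accelerated} (momentum) dynamics under heavy gradient noise: unlike plain SGD, the $2d\times 2d$ operator $\bbE[\bs B_i^{\otimes 2}]$ is not symmetric and its stability requires a careful choice of $\alpha,\beta,\gamma,\delta$ (deferred to Section~\ref{sec:param-choice}) together with a potential/Lyapunov function argument showing the relevant operator norm stays below $1$ on every coordinate and every stage; getting the constants to line up across the $\log_2 n$ stage boundaries — where the stepsize drops by a factor of $4$ and the ``warm start'' error from the previous stage must be re-absorbed — is the delicate bookkeeping step. A secondary difficulty is handling the tail coordinates $i > k^*$ for which $\lambda_i$ may be arbitrarily small (or the sum over them infinite in spirit, cf.\ the $\|\bs w^*\|_2 = \infty$ remark after Assumption~\ref{assumption:fourth-moment}): here one cannot use coordinatewise contraction and must instead argue that the accumulated operator acts as a near-identity on the signal and a near-zero on the noise, quantified through $\tr(\bs S_{k^*:\infty})$-type traces paired against $\bs T'$, which is exactly where Assumption~\ref{assumption:fourth-moment} in its PSD-matrix form does the heavy lifting. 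Once both the bias and variance covariance bounds are in hand in the $\bs S$-eigenbasis, pairing with $\bs T$ and rewriting via $\bs T' = \bs M^{-1/2}\bs T\bs M^{-1/2}$, $\bs T'_{0:k^*}$, $\bs T'_{k^*:\infty}$ is routine and yields \eqref{eq: asgdupper}.
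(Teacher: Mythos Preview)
Your proposal is correct and follows essentially the same route as the paper: bias--variance decomposition of the $2d\times 2d$ momentum recursion, coordinatewise analysis in the eigenbasis of $\bs S$, head/tail split at the effective dimension $k^*$, and stage-wise bookkeeping for the geometrically decaying stepsizes. The paper's execution of the step you describe as ``replace the random operator by its expectation up to the factor $\psi$'' is a specific \emph{semi-stochastic} comparison: it runs a parallel recursion with the deterministic operator $\bs A_t=\bbE\widehat{\bs A}_t$ and shows the fourth-moment discrepancy is \emph{self-governed} (bounded by one half of the stochastic recursion itself), and the Lyapunov control you anticipate is realized via an explicit $2\times 2$ similarity $\bs P^{-1}\bs A(\lambda)\bs P$ together with a three-regime spectral-radius analysis of $\bs A(\lambda)$. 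One small correction: the $2c$ attached to $\sigma^2$ does not come from the lack of $\bbE[\epsilon\mid\bs x]=0$; it arises because the \emph{bias} recursion $\bs\eta_t^{\mathrm{bias}}=\widehat{\bs A}_t\bs\eta_{t-1}^{\mathrm{bias}}$ itself has multiplicative noise through $\bs x_t\bs x_t^\top$, and the fourth-moment term applied to $\tilde{\bs B}_{t-1}$ injects a variance-like contribution scaled by $\left\Vert\bs w_0-\bs w^*\right\Vert_{\bs S}^2\le c$.
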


\begin{remark} The upper bound \eqref{eq: asgdupper} decomposes into effective bias and effective variance. The effective variance arises from the randomness of the samples, while the effective bias corresponds to the excess risk of the result at the population level optimization.
\end{remark}
 One viewpoint on considering the convergence of SGD is that it solves along different eigen directions of the source covariance, with varying convergence behavior depending on the corresponding eigenvalue sizes. To illustrate this, the excess risk in \eqref{eq: asgdupper} appears as a summation over each eigenvalue dimension in the source eigenspace. ASGD demonstrates distinct behaviors in two subspaces separated by effective dimension $k^*$. In the larger eigenvalue space indexed by $k \leq k^*$, the population convergence of ASGD approaches $\mathbf{w}^*$ well, with statistical error being the dominant factor. In the remaining subspace, ASGD’s estimation remains close to zero, meaning the excess risk bound in this space depends heavily on prior information. Increasing the momentum can enlarge $k^*$, which enables the optimization of more challenging $\mathbf{w}^*$.

\subsection{Optimality under the Diagonal Dominant Condition}\label{sec: sub asgd optiamlity}

The eigendimension-wise dynamics of ASGD can be interpreted as greedily prioritizing estimation along the eigendirections of the source covariance matrix $\bs S$ in descending order of its eigenvalues. This behavior serves as a specialized preconditioner, represented by $\bs A = \bs U\diag\{1,\ldots,1,0,\ldots,0\}$ as in \eqref{eq:def-w-opt} ), which selectively focuses on the large eigenspaces of $\bs S$. Given this connection between ASGD and the preconditioned estimator, a natural question arises: for which target distribution classes can ASGD achieve optimality? To characterize the rate-optimality of ASGD with respect to the sample size $n$, we adopt the following power-law anisotropic covariance structures and best estimator constraint assumptions:
\begin{assumption}\label{assumption:poly-decay-source}
We make the following assumption for analyzing the optimality of ASGD:
\begin{itemize}
    \item We assume $\lambda_i\eqsim i^{-a}$ with $a>1$.
    \item We assume $\bs M$ commutes with $\bs S$, and its diagonal entries $m_i\eqsim\lambda_i^{1-s}$.
\end{itemize}
\end{assumption}
These assumptions 
align with the canonical capacity condition and source condition in RKHS framework~\citep{caponnetto2007optimal} and widely used in studying the optimality in linear regression \cite{Dieuleveut2016NonparametricStochasticApproximation,10.1214/17-EJS1258,JMLR:v25:23-0383}. Under these assumptions, the excess risk of ASGD is shown to decay polynomially with the sample size $n$ in the case of no covariate shift.
 To investigate the optimality of ASGD within a specific class of target distributions, previous studies have primarily focused on the bounded likelihood ratio target distribution class, which essentially corresponds to a shift to a distribution similar in shape to the source distribution. To extend this function class and provide a comprehensive analysis of how power-law anisotropic structures influence statistical optimality, we consider the following $r$-smooth target distribution class:

\begin{definition}\label{def:r-smooth-class}
    Let $r\in\bbR$. A class $\cQ$ of covariate marginal distributions $Q_{\mathbf{x}}$ in the target domain is called a $r$-smooth class if $\cQ$ satisfies the following conditions:
    \begin{enumerate}
        \item There exists a constant $C>0$, such that for any  $Q_{\mathbf{x}}\in\cQ$,  $\bs S^{-\frac{1}{2}}\bbE_{\bs x\sim Q_{\mathbf{x}}}\left[\bs x\bs x^\top\right]\bs S^{-\frac{1}{2}}\preceq C\bs S^{r}$.
        \item There exist a constant $c>0$ and  $Q_{\mathbf{x}_0}\in\cQ$, such that $\bs S^{-\frac{1}{2}}\bbE_{\bs x\sim Q_{\mathbf{x}_0}}\left[\bs x\bs x^\top\right]\bs S^{-\frac{1}{2}}\succeq c\bs S^{r}$.
    \end{enumerate}
\end{definition}
The $r$-smooth distribution class includes the following two commonly encountered distribution classes.

\begin{example}[Density Ratio Bounded Class]
    Let $B>0$ be a constant and $\cQ=\left\{Q:\rmd Q_{\bs x }/\rmd P_{\bs x }\leq B\right\}$ be the distribution class with density ratio bounded by $B$. Then, $\cQ$ is $0$-smooth class. 
\end{example}

\begin{example}[Gaussian $D_\mathrm{KL}$ Bounded Class]\label{example:gaussian}
    Suppose the source distribution $P_{\mathbf{x}}$ is Gaussian. Let $\epsilon>0$ and $\cQ$ be the Gaussian distribution class with bounded Kullback–Leibler divergence, namely $\cQ=\left\{Q_{\bs x }:D_\mathrm{KL}(Q_{\bs x }\Vert P_{\bs x })<\epsilon, Q_{\bs x } \text{ is Gaussian}\right\}$. Then we have $\cQ$ is a $0$-smooth class. 
\end{example}

\begin{remark}
    In Definition~\ref{def:r-smooth-class}, the mismatch between the source and target distributions is measured through the similarity matrix $\bs S^{-1/2}\bbE_{\bs x\sim Q_{\mathbf{x}}}\left[\bs x\bs x^\top\right]\bs S^{-1/2}$. Intuitively, this definition measures the alignment between the source and target distributions, and the magnitude is characterized by the source and smoothness coefficient $r$. This allows us to analyze a broader class of distributions than previously considered for $r=0$, particularly for $r<0$, which corresponds to inherently more difficult problems.
\end{remark}
We then examine the optimality region of ASGD for $r$-smooth class $\cQ$ under Assumption~\ref{def:r-smooth-class}.
\begin{theorem}\label{thm:asgd-poly-class-opt}
    Suppose Assumptions~\ref{assumption:poly-decay} holds. Let $\cQ$ be a $r$-smooth class with $r>\max\{\frac{1}{a}-2,-s\}$. With properly chosen parameters, SGD achieves the following minimax excess rate:
    \begin{equation}
    \begin{aligned}
        &\sup_{Q_{\mathbf{x}}\in\cQ,\ \tilde{P} \in \mathcal{P}(W,\bs S,\bs T_{Q_{\mathbf{x}}})}\bbE_{\tilde{P}^{\otimes n}}\left\Vert\bs w_n^\sgd-\bs w^*\right\Vert_{\bs T_{Q_{\mathbf{x}}}}^2 \\
        \eqsim&\begin{cases}
            (1+\sigma^2)\ln n/n,&r>1/a; \\
            (1+\sigma^2)(\ln n)^3/n,&r=1/a; \\
            (1+\sigma^2)(1/n)^\frac{(r+s)a}{sa+1}(\ln n)^\frac{3(1+r)a-3}{a},&1/a>r>\max\{1/a-2,-s\}. \\
        \end{cases}        
    \end{aligned}
    \end{equation}
    For the region $s\geq 1-\frac{1}{a}$, vanilla SGD achieves 
 optimality up to logarithmic factors; for the broader region $1-\frac{1}{a}>s>\frac{(a-1)^2}{a(2a-1)}$, ASGD achieves optimality up to logarithmic factors.
\end{theorem}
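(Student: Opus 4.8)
The plan is to combine the instance-optimal upper bound for ASGD (Theorem~\ref{thm:asgd-upper-bound}) with the matching minimax lower bound (Theorems~\ref{thm:lower-bound} and~\ref{thm:lower-upper-match}), specialized to the power-law structure of Assumption~\ref{assumption:poly-decay-source} and the worst-case target covariance in the $r$-smooth class $\cQ$. First I would identify the worst-case target: by Definition~\ref{def:r-smooth-class}, every admissible $\bs T' = \bs M^{-1/2}\bs T\bs M^{-1/2}$ satisfies $\bs T' \preceq C\,\bs M^{-1/2}\bs S\bs M^{-1/2}\bs S^{r}\bs M^{-1/2} \eqsim \diag\{\lambda_i^{s}\lambda_i^{r}\}=\diag\{\lambda_i^{s+r}\}$ (using that $\bs M$ commutes with $\bs S$ and $m_i\eqsim\lambda_i^{1-s}$), and condition~(2) guarantees some $Q_{\bs x_0}$ essentially attains this bound. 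So the problem reduces to evaluating both bounds for $\bs T'=\diag\{\lambda_i^{r+s}\}$, $\bs S'=\diag\{\lambda_i^{s}\}$ with $\lambda_i\eqsim i^{-a}$.

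Next I would plug $\bs T'=\diag\{t_{ii}\}$ with $t_{ii}\eqsim\lambda_i^{r+s}$ into the ASGD upper bound \eqref{eq: asgdupper}. The effective dimension satisfies $k^*\eqsim\big((\gamma+\delta)K/\ln n\big)^{1/a}$; treating $\gamma+\delta$ and the decay parameters as tuning knobs, the ``effective variance'' term becomes (up to constants and logs) $\sum_{i\le k^*}\lambda_i^{r+s-1}/K + K(\gamma+\delta)^2\sum_{i>k^*}\lambda_i^{r+s+1}$, and the ``effective bias'' term is dominated by $\big\Vert\bs T'_{k^*:\infty}\big\Vert\eqsim\lambda_{k^*}^{r+s}\eqsim(k^*)^{-a(r+s)}$. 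Each sum is a power series in $i^{-a}$ whose growth/convergence is governed by the sign of the exponent; the condition $r>\max\{1/a-2,-s\}$ is exactly what makes the relevant tail sums converge and the head sums behave correctly. I would then optimize over $\gamma+\delta$ (equivalently over $k^*$ and the stepsize schedule) to balance variance against bias; this balancing yields the three regimes. The threshold $r=1/a$ is where the ``effective variance'' head sum transitions between $O(1/K)$-type and $k^*$-dependent behavior, producing the extra $(\ln n)$-factors at the boundary; for $r>1/a$ the $\sigma^2\ln n/n$ rate dominates.

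For the matching lower bound I would evaluate the right-hand side of \eqref{eq:lower-upper-match} with the diagonal $\bs S',\bs T'$ above. Since everything is simultaneously diagonalizable, the inner minimization over $\bs A$ decouples: the spectral-norm term $\Vert(\bs I-\bs A)^\top\bs T'(\bs I-\bs A)\Vert$ is the max over coordinates of $(1-a_i)^2 t_{ii}$, while the trace term is $\sum_i t_{ii} a_i^2/\lambda_i^{s}$. A near-optimal choice is the ``soft-threshold'' $a_i\approx 1$ for $i\le k$ and $a_i\approx 0$ for $i>k$, which reproduces precisely the ASGD preconditioner structure; optimizing over the cutoff $k$ gives the same three-regime rate, so the lower bound matches the upper bound up to logarithmic factors. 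Finally, to deduce optimality of \emph{vanilla} SGD versus its accelerated variant, I would read off from Theorem~\ref{thm:asgd-upper-bound} the constraint linking $\gamma+\delta$ (hence the attainable $k^*$) to the stability of the iteration: vanilla SGD forces $\gamma=\delta$ and thus a smaller reachable $k^*$, which suffices only when $s\ge 1-1/a$ (the target is ``smooth enough'' that the needed cutoff is small), whereas momentum enlarges $k^*$ and extends optimality down to $s>(a-1)^2/(a(2a-1))$. The main obstacle I anticipate is the bookkeeping in the three-way balancing of head-sum, tail-sum and bias terms while carrying the logarithmic factors correctly — in particular pinning down the exact exponent $\frac{3(1+r)a-3}{a}$ on $\ln n$ in the intermediate regime, which requires tracking how many of the $\log_2 n$ stages contribute and how the geometric stepsize decay interacts with the $k^*$-truncation; the matching of the lower bound's cutoff optimization with ASGD's implicit cutoff is conceptually the cleanest part once the diagonal reduction is in place.
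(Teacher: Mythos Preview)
Your plan is essentially the paper's: specialize to the diagonal hard instance furnished by Definition~\ref{def:r-smooth-class}, evaluate the ASGD bound of Theorem~\ref{thm:asgd-upper-bound} under the power-law structure, and match it against the lower bound by exploiting the simultaneous diagonalizability of $\bs S',\bs T'$ so that the min over $\bs A$ reduces to a one-dimensional cutoff/threshold problem (the paper packages this as Lemma~\ref{lemma:lower-bound-diagonal} and Lemma~\ref{lemma:poly-class-lower-bound}). The SGD-vs-ASGD split is likewise obtained, as you say, from the admissible range of $(\gamma,\delta)$ in the parameter choice; the paper makes this explicit by giving the concrete choices $\gamma\eqsim n^{(a-b+\nu-1)/(b-\nu+1)}/\ln n$ etc.\ in Lemmas~\ref{lemma:poly-decay-asgd-upper-bound}--\ref{lemma:poly-decay-sgd-upper-bound} rather than leaving it as an abstract balancing.

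One substantive slip to fix before the bookkeeping will come out right: in the effective-variance term of \eqref{eq: asgdupper} the symbols $t_{ii}$ are the diagonal entries of $\bs T$, \emph{not} of $\bs T'=\bs M^{-1/2}\bs T\bs M^{-1/2}$. Under Definition~\ref{def:r-smooth-class} and Assumption~\ref{assumption:poly-decay-source} this gives $t_{ii}\lesssim\lambda_i^{1+r}\eqsim i^{-(1+r)a}$, so the head sum is $\sum_{i\le k^*} i^{-ra}/K$ and the tail sum involves $i^{-(2+r)a}$---not the exponents $a(r+s-1)$ and $a(r+s+1)$ you wrote. (The effective-bias term, by contrast, does use $\bs T'_{k^*:\infty}$, and your expression $\|\bs T'_{k^*:\infty}\|\eqsim (k^*)^{-a(r+s)}$ is correct.) With the corrected exponents the tail-convergence condition reads $(2+r)a>1$, i.e.\ $r>1/a-2$, and the head-sum/bias balance at $k^*\eqsim n^{1/(sa+1)}$ produces exactly $(1/n)^{(r+s)a/(sa+1)}$; with your substituted exponents the rates would not match the statement.
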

This theorem characterizes the optimality region of ASGD for the $r$-smooth class $\mathcal{Q}$, demonstrating that the inclusion of momentum leads to a broader optimal region. This is because a smaller $s$ in Assumption~\ref{thm:asgd-poly-class-opt} leads to a more challenging initial bias. SGD, constrained by its optimization capacity, can only achieve optimality within region $s\geq 1-\frac{1}{a}$. However, by adding momentum, the effective dimension is increased, enhancing its ability to optimize the bias, thereby achieving optimality over a broader region $1-\frac{1}{a}>s>\frac{(a-1)^2}{a(2a-1)}$. 
This theorem demonstrates that, even in the absence of prior knowledge about $\mathbf{S}$ and $\mathbf{T}$, ASGD can achieve optimality across a wide class of covariate shift problems with appropriately tuned step size and momentum.

\subsection{Further Discussions}
\subsubsection{Optimality Beyond the Diagonal Dominant Condition}
While the previous subsection
discusses the optimality of ASGD within the $r$-smooth distribution class, this condition is not necessary for ASGD to achieve optimality. For any given source distribution and target distribution, the optimality gap of ASGD fundamentally reduces to comparing its achievable upper bound in~\eqref{eq: asgdupper} against the information-theoretic lower bound for the covariate shift problem~\eqref{eq: mainlower}. 
We demonstrate a more general target distribution optimality condition.
\begin{assumption}\label{assumption:poly-decay}
    Let $a,s,r,\nu\in\bbR$ be constants. Given the target distribution $Q$, let $\bs T=\bbE_{\bs x\sim Q}\bs x\bs x^\top$. We make the following assumptions on the source and target covariance matrix:
    \begin{enumerate}
        \item Let $t_{ii}$ denote the $i$-th diagonal element of $\bs T$. We assume $\lambda_i\eqsim i^{-a}$ and $t_{ii}\eqsim i^{-(1+r)a}$.
        \item We assume that $\left\Vert\bs M^{-1/2}\bs T_{i:\infty}\bs M^{-1/2}\right\Vert\eqsim i^{-(r+s)a+\nu}$. 
    \end{enumerate}
\end{assumption}
    Since $\max_{k>i} t_{kk}/m_{kk}\leq\left\Vert\bs M^{-1/2}\bs T_{i:\infty}\bs M^{-1/2}\right\Vert\leq\tr\left(\bs M^{-1/2}\bs T_{i:\infty}\bs M^{-1/2}\right)$, we have $0\leq\nu\leq1$. The case where $\bs T$ is diagonal corresponds to $\nu=0$, and the case where $T$ has a rank-$1$ structure corresponds to $\nu=0$. For instance, we provide the optimality analysis of ASGD when $T$ has a rank-$1$ structure $\bs T=\mathbf{w}\mathbf{w}^{\top}$, where the excess risk on the target distribution is typically interpreted as the prediction error on $\mathbf{w}$.
\begin{theorem}[Rank-$1$ Structure]\label{thm:rank-one}
     Consider $\lambda_i\eqsim i^{-a}$ and the target covariance matrix $\bs T=\mathbf{w}\mathbf{w}^{\top}$ where $\mathbf{w}\in \mathbb{R}^d$ and $\bs w_i\eqsim i^{-(1+r)a/2}$.  For region $s\geq 1$, vanilla SGD achieves optimality up to logarithmic factors; for region $1>s>\frac{a}{2a-1}$, ASGD achieves optimality up to logarithmic factors.
\end{theorem}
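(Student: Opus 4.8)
The plan is to instantiate the general machinery already in place: the ASGD upper bound of Theorem~\ref{thm:asgd-upper-bound}, the information-theoretic lower bound of Theorem~\ref{thm:lower-bound} (together with its reformulation via Theorem~\ref{thm:lower-upper-match}), and then compare the two rates under the rank-$1$ specialization $\bs T=\mathbf w\mathbf w^\top$ with $\bs w_i\eqsim i^{-(1+r)a/2}$ and $\lambda_i\eqsim i^{-a}$. The first step is to observe that this rank-$1$ instance is a special case of Assumption~\ref{assumption:poly-decay}: since $t_{ii}=\bs w_i^2\eqsim i^{-(1+r)a}$ the first bullet holds, and because $\bs M$ commutes with $\bs S$ with $m_i\eqsim\lambda_i^{1-s}=i^{-(1-s)a}$, one computes $\left\Vert\bs M^{-1/2}\bs T_{i:\infty}\bs M^{-1/2}\right\Vert=\sum_{k> i}\bs w_k^2/m_k\eqsim\sum_{k>i}k^{-(1+r)a+(1-s)a}=\sum_{k>i}k^{-(r+s)a}$, which (when $(r+s)a>1$) is $\eqsim i^{-(r+s)a+1}$, i.e. $\nu=1$; when $(r+s)a\le 1$ the tail diverges and we are outside the optimality window, so effectively $\nu=1$ is the relevant case here. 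This reduces Theorem~\ref{thm:rank-one} to the $\nu=1$ slice of whatever general rate theorem is proved under Assumption~\ref{assumption:poly-decay}; if that theorem is not stated in closed form, I would instead carry out the comparison directly.

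For the direct route, the second step is to plug the power-law profiles into the ASGD bound~\eqref{eq: asgdupper}. With geometrically decaying step sizes one has $\gamma+\delta\eqsim 1/\lambda_1$ up to constants and $K=n/\log_2 n$, so $k^*\eqsim\max\{k:\lambda_k\gtrsim \ln n/(\,(\gamma+\delta)K)\}\eqsim (n/\ln n)^{1/a}$ for vanilla SGD, while for ASGD with aggressive momentum the effective "condition-number" improvement pushes $k^*$ up to roughly $(n/\ln n)^{1/a}$ raised to a larger exponent (the $\sqrt{\kappa}$-type acceleration); the precise exponent is what separates the $s$-regions. The effective-variance term becomes $\sum_{i\le k^*} t_{ii}/(K\lambda_i)+K(\gamma+\delta)^2\sum_{i>k^*}\lambda_i t_{ii}\eqsim \frac1K\sum_{i\le k^*} i^{-(1+r)a+a}+\frac{K}{\lambda_1^2}\sum_{i>k^*} i^{-a-(1+r)a}$, and the effective-bias term is dominated by $\left\Vert\bs T'_{k^*:\infty}\right\Vert\eqsim (k^*)^{-(r+s)a+1}$. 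Evaluating these geometric/polynomial sums and optimizing over the free parameter governing $k^*$ gives the ASGD rate; I expect it to match the $1/a>r$ branch $n^{-(r+s)a/(sa+1)}$ (times logs) exactly when $k^*$ is allowed to reach $n^{1/(sa+1)}$, which requires enough optimization power, i.e. $s$ above the stated threshold.

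The third step is the matching lower bound: apply Theorem~\ref{thm:lower-bound} and~\ref{thm:lower-upper-match}, so the target rate is $\min_{\bs A}\big\{\frac1{\pi^2}\Vert(\bs I-\bs A)^\top\bs T'(\bs I-\bs A)\Vert+\frac{\sigma^2}{n}\langle\bs T',\bs A(\bs S')^{-1}\bs A^\top\rangle\big\}$. For the diagonal/rank-$1$ power-law profile one restricts to diagonal $\bs A=\diag\{a_i\}$, reducing this to $\min_{\{a_i\}}\max_i(1-a_i)^2 t'_{ii}$-type balancing against $\frac{\sigma^2}{n}\sum_i a_i^2 t'_{ii}/\lambda'_i$, whose optimum over the threshold location $i$ reproduces exactly the three-branch rate of Theorem~\ref{thm:asgd-poly-class-opt} with $t'_{ii}\eqsim i^{-(r+s)a}$ playing the role there; matching this with the ASGD upper bound over the claimed $s$-range finishes the proof. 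The main obstacle is the bias-side bookkeeping for ASGD: controlling $\left\Vert\bs T'_{k^*:\infty}\right\Vert$ sharply (this is exactly where the $\nu=1$ tail behavior enters) and verifying that the momentum-enlarged $k^*$ is large enough to make the bias $\lesssim$ the variance rate precisely when $s>\frac{a}{2a-1}$ (and not merely $s>\frac{(a-1)^2}{a(2a-1)}$ as in the diagonal case) — i.e. pinning down why the rank-$1$ threshold differs from the diagonal one, which traces back to the different $\nu$ and hence a different exponent in the bias term.
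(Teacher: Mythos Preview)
Your primary plan---identify the rank-$1$ instance as the $\nu=1$ case of Assumption~\ref{assumption:poly-decay} and invoke the general rate result under that assumption---is exactly what the paper does, and your computation of $\nu$ via $\left\Vert\bs M^{-1/2}\bs T_{i:\infty}\bs M^{-1/2}\right\Vert=\sum_{k>i}\bs w_k^2/m_k\eqsim i^{-(r+s)a+1}$ is correct. Your reading of the threshold shift (from $\frac{(a-1)^2}{a(2a-1)}$ to $\frac{a}{2a-1}$, and from $s\ge 1-\frac1a$ to $s\ge1$) as the effect of substituting $\nu=1$ into the general conditions is also right.

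There is, however, one substantive piece you are missing. The paper's general theorem under Assumption~\ref{assumption:poly-decay} (Theorem~\ref{thm:asgd-poly-decay-opt-complete}) carries, for the lower bound in the regime $r<1/a$, an extra hypothesis~\eqref{eq:eigenvalue-mean-lower-bound}: the average of the top $\Delta k$ singular values of $\bs M^{-1/2}\bs T_{k^*:\infty}\bs M^{-1/2}$ must be comparable to its operator norm, where $k^*\eqsim(\sigma^2/n)^{-1/(sa-\nu+1)}$ and $\Delta k=\lfloor\frac{n}{\pi^2\sigma^2}\Vert\bs S'_{k^*:\infty}\Vert\rfloor$. The entire content of the paper's proof of Theorem~\ref{thm:rank-one} is the verification of this condition: with $\nu=1$ one gets $\Delta k\eqsim(n/\sigma^2)(k^*)^{-sa}\eqsim 1/\sigma^2=\cO(1)$, and since $\bs T'_{k^*:\infty}$ is rank-$1$ its only nonzero singular value is the operator norm, so the condition holds trivially once $\Delta k\ge 1$. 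You should include this check.

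Your fallback direct route for the lower bound has a genuine gap. Restricting to diagonal $\bs A$ in the variational formula from Theorem~\ref{thm:lower-upper-match} yields an \emph{upper} bound on $\min_{\bs A}\{\cdot\}$, which is the wrong direction for a minimax lower bound; the reflection-symmetry argument that makes diagonal $\bs A$ optimal (Lemma~\ref{lemma:lower-bound-diagonal}) requires $\bs T'$ itself to be diagonal, and this fails for rank-$1$ $\bs T$. Relatedly, for rank-$1$ $\bs T'=\bs v\bs v^\top$ and diagonal $\bs A$ one has $\left\Vert(\bs I-\bs A)^\top\bs T'(\bs I-\bs A)\right\Vert=\sum_i(1-a_i)^2 v_i^2$, a sum rather than the ``$\max_i$'' you wrote, so the balancing you sketch does not reduce to the diagonal case. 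The paper's lower bound instead goes through Lemma~\ref{lemma:poly-decay-lower-bound-hard}, constructing an explicit $\bs F$ concentrated on the top eigendirection of $\bs T'_{k^*:\infty}$; either use that construction directly, or route through Theorem~\ref{thm:asgd-poly-decay-opt-complete} together with the $\Delta k=\cO(1)$ verification above.
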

    This theorem characterizes the optimality region of ASGD when the target covariance matrix $\bs T$ has a rank-$1$ structure.
    Once again, it highlights that the inclusion of momentum leads to a broader optimal region.  Due to space limit, a more general optimality analysis for distributions satisfying Assumption~\ref{assumption:poly-decay} is shown in Appendix~\ref{sec:general-version-poly-decay}.

\subsubsection{When is Emergence Possible?}
When scaling up the training of large language models, models may suddenly perform much better on downstream tasks after hitting a critical sample size—an amazing phenomenon often referred to as emergence~\citep{wei2022emergent}. In covariate shift, emergence is possible when downstream tasks 
require high-quality estimations for parameters at  local regions, even though source excess risk decays regularly. When this happens, the excess risk on the target distribution will decrease slowly until the sample size guarantees a good estimation for the regions.
We provide a simple example to illustrate the emergent behavior.
 
\begin{example}\label{example:emergence}
    We suppose $\bs S=\diag\{i^{-a}\}_{i=1}^d$ and $\bs M=\bs I$. Let $d_0\in[d]$, we consider target covariance matrix $\bs T=\diag\left\{\left(\max\{i,d_0\}\right)^{-(1+r)a}\right\}_{i=1}^d$, where $-1<r<1/a$. With properly chosen hyperparameters, SGD achieves optimality up to logarithmic factors with the following upper bound:
    \begin{equation}
        \bbE_{\tilde{P}^{\otimes n}}\left\Vert\bs w_n^\sgd-\bs w^*\right\Vert_{\bs T}^2\leq\begin{cases}
            \tilde{\mathcal{O}}\left(d_0^{-(1+r)a}\right),&n\lesssim d_0^{a+1}; \\
            \tilde{\mathcal{O}}\left(n^{-\frac{(1+r)a}{1+a}}\right),&n\gtrsim d_0^{a+1}.
        \end{cases}
    \end{equation}
\end{example}
The target task requires precise estimation of the first $d_0$ parameters, while only the first $\mathcal{O}\left(n^\frac{1}{a+1}\right)$ parameters can be estimated precisely given $n$ source samples. SGD returns nearly zero for the remaining parameters, resulting in bias in estimation. When $n\lesssim d_0^{a+1}$, SGD yields biased estimation at least for the $d_0$-th parameter, indicating no decrease in excess risk. Once sample size exceeds $d_0^{a+1}$, SGD exploits information from samples to eliminate the bias of the first $d_0$ parameters, thereby rapidly reducing risk. In a word, the emergent phenomenon arises from the transition to a regime with sufficient samples, where SGD succeeds in estimating the important parameters of the target task.

\section{Conclusion}
This work establishes the minimax information-theoretic optimality for high-dimensional linear regression under covariate shift. Further, we  analyze the convergence behavior of stochastic gradient descent (SGD) and its accelerated variant (ASGD) within this optimality framework. 
This work focuses on characterizing covariate shift problem through a theoretical lens, whereas an interesting yet challenging extension of this research lies in translating these theoretical guarantees into novel practical methodologies, such as partially preconditioned methods which reduce the computation. We leave this study as a future work.

\newpage

\newpage

\bibliographystyle{plainnat}
\bibliography{refs}

\newpage

\appendix

\section{Proof of Theorem~\ref{thm:lower-bound}}
This section provides the proof of the lower bound. For any $\bs w\in W$, we construct the probability distribution $P_{\bs w}$ of $(\bs x,y)$ such that
\begin{equation}
    \bs x\sim\mathcal{N}(\bs 0,\bs S), \quad y=\bs x^\top\bs w+\epsilon,
\end{equation}
where $\epsilon\sim\mathcal{N}(\bs 0,\sigma^2)$ and $\epsilon$ and $\bs x$ are independent. $P_{\bs w}$ satisfies Assumptions~\ref{assumption:fourth-moment} and~\ref{assumption:noise}. Let $\mathcal{G}(W,\bs S,\bs T)=\{P_{\bs w}:\bs w\in W\}$ denotes the Gaussian problem class, then we have $\mathcal{G}(W,\bs S,\bs T)\subseteq\mathcal{P}(W,\bs S,\bs T)$.

The first step is to reduce the minimax risk to Bayesian risk and show that the randomness of the estimator $\hat{\bs w}$ does not help to achieve better performance. We denote an estimator which only depends on samples $\{(\bs x_i,y_i)\}_{i=1}^n$ as $\hat{\bs w}^\mathrm{det}$. We have the following lemma.
\begin{lemma}\label{lemma:random-estimator}
    Suppose $\pi$ is any probability distribution supported on $W$, we have
    \begin{equation}        
        \inf_{\hat{\bs w}}\sup_{\tilde{P} \in \mathcal{P}(W,\bs S,\bs T)}\bbE_{\tilde{P}^{\otimes n}\times P_{\xi }}\left\Vert\hat{\bs w}-\bs w^*\right\Vert_{\bs T}^2\geq\inf_{\hat{\bs w}^\mathrm{det}}\bbE_{\bs w^*\sim\pi}\bbE_{P_{\bs w^*}^{\otimes n}}\left\Vert\hat{\bs w}-\bs w^*\right\Vert_{\bs T}^2.
    \end{equation}
\end{lemma}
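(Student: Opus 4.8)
\textbf{Proof proposal for Lemma~\ref{lemma:random-estimator}.}

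The plan is to bound the minimax risk below by two successive relaxations: first discard the extra randomness $\xi$, then replace the worst case over the problem class by an average over a prior. For the first step, observe that for any fixed randomized estimator $\hat{\bs w}(\{(\bs x_i,y_i)\}_{i=1}^n,\xi)$ and any fixed $\bs w^*\in W$, the map $\xi\mapsto\bbE_{\tilde P^{\otimes n}}\left\Vert\hat{\bs w}-\bs w^*\right\Vert_{\bs T}^2$ is a nonnegative function, so there exists a value $\xi_0$ of the auxiliary randomness (depending in principle on $\bs w^*$) achieving at most the $\xi$-average; to get a single deterministic estimator, I would instead argue via Fubini/Markov that the $\xi$-randomization is a mixture of deterministic estimators, hence its worst-case (or Bayes) risk is at least the infimum over the deterministic members of the mixture. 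Concretely, for any prior $\pi$ supported on $W$,
\begin{equation}
    \sup_{\tilde P\in\mathcal P(W,\bs S,\bs T)}\bbE_{\tilde P^{\otimes n}\times P_\xi}\left\Vert\hat{\bs w}-\bs w^*\right\Vert_{\bs T}^2
    \ge \bbE_{\bs w^*\sim\pi}\,\bbE_{P_{\bs w^*}^{\otimes n}\times P_\xi}\left\Vert\hat{\bs w}-\bs w^*\right\Vert_{\bs T}^2,
\end{equation}
which uses only the inclusion $\mathcal G(W,\bs S,\bs T)\subseteq\mathcal P(W,\bs S,\bs T)$ (so that each $P_{\bs w^*}$ with $\bs w^*$ in the support of $\pi$ is an admissible competitor in the supremum) together with the elementary fact that a supremum over a set dominates an average over any probability measure on that set.

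Next I would swap the expectation over $\xi$ to the outside: by Fubini (all integrands are nonnegative),
\begin{equation}
    \bbE_{\bs w^*\sim\pi}\,\bbE_{P_{\bs w^*}^{\otimes n}\times P_\xi}\left\Vert\hat{\bs w}-\bs w^*\right\Vert_{\bs T}^2
    = \bbE_{\xi\sim P_\xi}\left[\bbE_{\bs w^*\sim\pi}\,\bbE_{P_{\bs w^*}^{\otimes n}}\left\Vert\hat{\bs w}(\cdot,\xi)-\bs w^*\right\Vert_{\bs T}^2\right].
\end{equation}
For each fixed realization $\xi$, the estimator $\hat{\bs w}(\cdot,\xi)$ depends only on the samples, hence is a legitimate choice of $\hat{\bs w}^{\mathrm{det}}$, so the inner bracket is at least $\inf_{\hat{\bs w}^{\mathrm{det}}}\bbE_{\bs w^*\sim\pi}\bbE_{P_{\bs w^*}^{\otimes n}}\left\Vert\hat{\bs w}^{\mathrm{det}}-\bs w^*\right\Vert_{\bs T}^2$, a quantity that no longer depends on $\xi$. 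Taking the outer expectation over $\xi$ of this constant lower bound, and then taking the infimum over all randomized $\hat{\bs w}$ on the left, yields exactly the claimed inequality.

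I do not anticipate a genuine obstacle here; the only point requiring a little care is the measurability/Fubini justification for interchanging $\bbE_\xi$ with the other expectations, which is routine since every integrand is nonnegative and measurable, so Tonelli applies unconditionally. The conceptual content is simply the standard two-step reduction ``minimax $\ge$ Bayes'' plus ``randomized $\ge$ deterministic by conditioning on the randomness,'' specialized to the Gaussian subfamily $\mathcal G(W,\bs S,\bs T)$ that is rich enough to realize any prior $\pi$ on $W$. The subsequent work in the paper --- choosing the family of priors supported on ellipsoid-inscribed parallelohedra and invoking the Bayesian matrix Cram\'er--Rao bound of Lemma~\ref{lemma:van-trees} --- is where the real difficulty lies, but that is beyond the scope of this lemma.
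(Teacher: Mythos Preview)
Your proposal is correct and follows essentially the same route as the paper's proof: restrict the supremum to the Gaussian subclass $\mathcal G(W,\bs S,\bs T)\subseteq\mathcal P(W,\bs S,\bs T)$, replace the supremum by an average over the prior $\pi$, use Fubini/Tonelli to pull the $\xi$-expectation outside, and then bound the inner Bayes risk by the infimum over deterministic estimators. The paper packages these steps under the label ``Yao's minimax principle'' and writes $\inf_\xi$ rather than $\bbE_\xi$ in the penultimate step, but the substance is identical.
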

\begin{proof}
    From Yao's minimax principle~\citep{yao1977probabilistic}, we have
    \begin{equation}
    \begin{aligned}        
        \inf_{\hat{\bs w}}\sup_{\tilde{P} \in \mathcal{P}(W,\bs S,\bs T)}\bbE_{\tilde{P}^{\otimes n}\times P_\xi}\left\Vert\hat{\bs w}-\bs w^*\right\Vert_{\bs T}^2\geq&\inf_{\hat{\bs w}}\sup_{P_{\bs w^*} \in \mathcal{G}(W,\bs S,\bs T)}\bbE_{P_{\bs w^*}^{\otimes n}\times P_{\xi }}\left\Vert\hat{\bs w}-\bs w^*\right\Vert_{\bs T}^2 \\
        \geq&\inf_{\hat{\bs w}}\bbE_{\bs w^*\sim\pi}\bbE_{P_{\bs w^*}^{\otimes n}\times P_{\xi }}\left\Vert\hat{\bs w}-\bs w^*\right\Vert_{\bs T}^2 \\
        \geq&\inf_{\xi}\inf_{\hat{\bs w}}\bbE_{\bs w^*\sim\pi}\bbE_{P_{\bs w^*}^{\otimes n}}\left\Vert\hat{\bs w}(\cdot,\xi)-\bs w^*\right\Vert_{\bs T}^2 \\
        \geq&\inf_{\hat{\bs w}^\mathrm{det}}\bbE_{\bs w^*\sim\pi}\bbE_{P_{\bs w^*}^{\otimes n}}\left\Vert\hat{\bs w}^\mathrm{det}-\bs w^*\right\Vert_{\bs T}^2.
    \end{aligned}
    \end{equation}
\end{proof}

We prove a multivariate generalization of Bayesian Cramer-Rao inequality in~\citet{bj/1186078362}.
\begin{lemma}\label{lemma:van-trees}
    We denote the density function of $P_{\bs w}^{\otimes n}$ as $f_{\bs w}$. Given data $X=\left\{(\bs x_i,y_i)\right\}_{i=1}^n\sim P_{\bs w}^{\otimes n}$, let $\hat{\bs w}^\mathrm{det}=\hat{\bs w}^\mathrm{det}(X)$ be an estimator of $\bs w$. The Fisher information matrix of $P_{\bs w}^{\otimes n}$ be defined as
    \begin{equation}\label{eq:information-theta}
        \mathcal{I}(\bs w)=\int_{\cX}{\left(\nabla_{\bs w}\ln f_{\bs w}(x)\right)\left(\nabla_{\bs w}\ln f_{\bs w}(x)\right)^\top f_{\bs w}(x)\rmd x}.
    \end{equation} 
    Consider a prior probability measure $\pi$ with density function $\pi(\bs w)$ that is supported on a compact set $W\subseteq\bbR^d$ and $\pi(\bs w)=0$ on the boundary of $W$. We define the information matrix of $\pi$ as
    \begin{equation}\label{eq:information-lambda}
        \mathcal{I}(\pi)=\int_{\bbR^d}{\left(\nabla\ln\pi(\bs w)\right)\left(\nabla\ln\pi(\bs w)\right)^\top\pi(\bs w)\rmd\bs w}.
    \end{equation}
    Then we have
    \begin{equation}
        \bbE_{\bs w\sim\pi}\bbE_{X\sim P_{\bs w}^{\otimes n}}\left(\hat{\bs w}-\bs w\right)\left(\hat{\bs w}-\bs w\right)^\top\succeq\left(\bbE_{\bs w\sim\pi}\mathcal{I}(\bs w)+\mathcal{I}(\pi)\right)^{-1}.
    \end{equation}
\end{lemma}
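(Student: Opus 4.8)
The plan is to establish the multivariate Bayesian Cramér–Rao bound (the matrix van Trees inequality) by the classical covariance-inequality argument, adapted to vector-valued parameters and tested against an arbitrary fixed direction. Fix $\bs c\in\bbR^d$. The idea is to find a scalar random variable whose squared correlation with $\bs c^\top(\hat{\bs w}-\bs w)$ produces the desired quadratic form. The natural candidate is the joint score $\bs c^\top$ applied after projecting $s(\bs w,X):=\nabla_{\bs w}\ln\big(f_{\bs w}(X)\pi(\bs w)\big)=\nabla_{\bs w}\ln f_{\bs w}(X)+\nabla\ln\pi(\bs w)$, i.e. the random vector $s(\bs w,X)\in\bbR^d$ whose covariance under the joint law $\bs w\sim\pi$, $X\sim P_{\bs w}^{\otimes n}$ is exactly $\bbE_{\bs w\sim\pi}\cI(\bs w)+\cI(\pi)$ (the cross term vanishes because $\bbE_{X\sim P_{\bs w}^{\otimes n}}\nabla_{\bs w}\ln f_{\bs w}(X)=0$ for each fixed $\bs w$).

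The key computation is the ``unbiasedness under the joint law'' identity: for every fixed vector $\bs b\in\bbR^d$,
\begin{equation}
\bbE_{\bs w\sim\pi}\bbE_{X\sim P_{\bs w}^{\otimes n}}\big[(\bs b^\top(\hat{\bs w}-\bs w))\, s(\bs w,X)\big]=\bs b.\notag
\end{equation}
This follows by writing the left side as an integral against $f_{\bs w}(X)\pi(\bs w)$, noting $s(\bs w,X)f_{\bs w}(X)\pi(\bs w)=\nabla_{\bs w}\big(f_{\bs w}(X)\pi(\bs w)\big)$, and integrating by parts in $\bs w$ over $W$: the boundary term vanishes because $\pi(\bs w)=0$ on $\partial W$, and the interior term is $-\int\nabla_{\bs w}\big(\bs b^\top(\hat{\bs w}-\bs w)\big)f_{\bs w}\pi=\bs b\int f_{\bs w}\pi=\bs b$, using that $\hat{\bs w}=\hat{\bs w}(X)$ does not depend on $\bs w$ and $\int f_{\bs w}(X)\,\rmd X=1$. (One needs mild regularity — differentiation under the integral sign, integrability of the scores — which holds here since $P_{\bs w}$ is the Gaussian family of Lemma's construction and $W$ is compact.)

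Given this identity, I would finish by a Cauchy–Schwarz / Schur-complement argument. Let $\bs\Sigma:=\bbE_{\bs w\sim\pi}\bbE_{X}(\hat{\bs w}-\bs w)(\hat{\bs w}-\bs w)^\top$ and $\bs J:=\bbE_{\bs w\sim\pi}\cI(\bs w)+\cI(\pi)=\bbE[s s^\top]$. The identity says $\bbE\big[(\hat{\bs w}-\bs w)\,s^\top\big]=\bs I$. Hence the $2d\times 2d$ block matrix $\begin{pmatrix}\bs\Sigma & \bs I\\ \bs I & \bs J\end{pmatrix}$ is PSD (it is the covariance matrix of the stacked vector $(\hat{\bs w}-\bs w,\ s)$), and taking the Schur complement with respect to the $\bs J$ block (which is positive definite, assuming $\bs J\succ\bs O$; otherwise pass to a limit or restrict to its range) yields $\bs\Sigma\succeq\bs J^{-1}$, which is exactly the claim. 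Equivalently, for each $\bs c$ one applies scalar Cauchy–Schwarz to $\bs c^\top(\hat{\bs w}-\bs w)$ and $(\bs J^{-1}\bs c)^\top s$ to get $\bs c^\top\bs\Sigma\bs c\cdot \bs c^\top\bs J^{-1}\bs c\ge(\bs c^\top\bs J^{-1}\bs c)^2$, i.e. $\bs c^\top\bs\Sigma\bs c\ge\bs c^\top\bs J^{-1}\bs c$.

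The main obstacle is the integration-by-parts step: one must justify moving $\nabla_{\bs w}$ outside the integral and argue the boundary contribution vanishes. This is where the hypotheses $\pi$ compactly supported on $W$ and $\pi=0$ on $\partial W$ are used, together with smoothness of $\bs w\mapsto f_{\bs w}(X)$; for the Gaussian construction these are routine but must be checked (e.g. the score $\nabla_{\bs w}\ln f_{\bs w}(X)=\tfrac1{\sigma^2}\sum_i \bs x_i(y_i-\bs x_i^\top\bs w)$ has all needed moments). A secondary point is handling the possibility that $\bs J$ is singular or that $\cI(\pi)$ is infinite for a given $\pi$ — in the latter case the bound is vacuous and there is nothing to prove; in the former, replace $\bs J^{-1}$ by the Moore–Penrose pseudoinverse on $\mathrm{range}(\bs J)$ and note $\bs I-\bs J\bs J^{+}$ annihilates the relevant directions, or simply add $\epsilon\bs I$ to $\cI(\pi)$ via a perturbed prior and let $\epsilon\to 0$.
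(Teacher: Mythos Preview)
Your proposal is correct and follows essentially the same approach as the paper: define the joint score $s=\nabla_{\bs w}\ln(f_{\bs w}(X)\pi(\bs w))$, establish $\bbE[(\hat{\bs w}-\bs w)s^\top]=\bs I$ via integration by parts (using $\pi=0$ on $\partial W$), compute $\bbE[ss^\top]=\bbE_{\bs w}\cI(\bs w)+\cI(\pi)$, and conclude by Cauchy--Schwarz with test vector $\bs J^{-1}\bs c$ (equivalently, your Schur-complement formulation). The only cosmetic differences are that the paper writes out the cross-term vanishing by explicitly swapping $\int$ and $\partial_{\bs w_i}$ rather than invoking $\bbE_X\nabla_{\bs w}\ln f_{\bs w}=0$ directly, and it does not discuss the singular-$\bs J$ edge case you mention.
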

\begin{proof}
    We begin by defining two random variables as
    \begin{equation}
        \bs\xi=\hat{\bs w}^\mathrm{det}(X)-\bs w, \quad\bs\eta=\nabla_{\bs w}\ln\left(f_{\bs w}(X)\pi(\bs w)\right).
    \end{equation}    
    We denote $\bbE_{\bs w\sim\pi}\bbE_{X\sim P_{\bs w}^{\otimes n}}$ by $\bbE$ for simplicity. For any vector $\bs u, \bs v\in\bbR^d$, by Cauchy-Schwarz inequality, we have
    \begin{equation}
        \bbE\left(\bs u^\top\bs\xi\bs\xi^\top\bs u\right)\bbE\left(\bs v^\top\bs\eta\bs\eta^\top\bs v\right)\geq\left[\bbE\left(\bs u^\top\bs\xi\right)\left(\bs v^\top\bs\eta\right)\right]^2.
    \end{equation}
    We will show that
    \begin{equation}
        \bbE\bs\eta\bs\eta^{\top}=\bbE\mathcal{I}(\bs w)+\mathcal{I}(\pi), \quad\bbE\bs\xi\bs\eta^{\top}=\bs I.\label{eq:van-trees-calculation}
    \end{equation}
    Note that once we have established \eqref{eq:van-trees-calculation}, we have
    \begin{equation}
        \left[\bs u^\top\bbE\left(\hat{\bs w}^\mathrm{det}-\bs w\right)\left(\hat{\bs w}^\mathrm{det}-\bs w\right)^\top\bs u\right]\left[\bs v^\top\left(\bbE\mathcal{I}(\bs\theta)+\mathcal{I}(\lambda)\right)\bs v\right]\geq\left(\bs u^\top\bs v\right)^2.
    \end{equation}
    Let $\bs v=\left(\bbE\mathcal{I}(\bs\theta)+\mathcal{I}(\lambda)\right)^{-1}\bs u$, we get
    \begin{equation}
        \bs u^\top\bbE\left(\hat{\bs w}^\mathrm{det}-\bs w\right)\left(\hat{\bs w}^\mathrm{det}-\bs w\right)^\top\bs u\geq\bs u^\top\left(\bbE\mathcal{I}(\bs\theta)+\mathcal{I}(\lambda)\right)^{-1}\bs u.
    \end{equation}
    Since $\bs u$ is arbitrary, we get the desired result.
    
    Now, we prove \eqref{eq:van-trees-calculation} by direct calculation. Consider the $ij$-th entry of $\bbE\bs\eta\bs\eta^{\top}$, which is
    \begin{equation}
    \begin{aligned}        
        \bbE\bs\eta_i\bs\eta_j=&\bbE\fracpartial{\ln\left(f_{\bs w}(X)\pi(\bs w)\right)}{\bs w_i}\fracpartial{\ln\left(f_{\bs w}(X)\pi(\bs w)\right)}{\bs w_j} \\
        =&\bbE\left(\fracpartial{\ln f_{\bs w}(X)}{\bs w_i}+\fracpartial{\ln \pi(\bs w)}{\bs w_i}\right)\left(\fracpartial{\ln f_{\bs w}(X)}{\bs w_j}+\fracpartial{\ln \pi(\bs w)}{\bs w_j}\right) \\
        =&\bbE\fracpartial{\ln f_{\bs w}(X)}{\bs w_i}\fracpartial{\ln f_{\bs w}(X)}{\bs w_j}+\bbE\fracpartial{\ln \pi(\bs w)}{\bs w_i}\fracpartial{\ln \pi(\bs w)}{\bs w_j} \\
        &+\bbE\fracpartial{\ln f_{\bs w}(X)}{\bs w_i}\fracpartial{\ln \pi(\bs w)}{\bs w_j}+\bbE\fracpartial{\ln f_{\bs w}(X)}{\bs w_j}\fracpartial{\ln \pi(\bs w)}{\bs w_i} \\
        \stackrel{a}{=}&\bbE\mathcal{I}_{ij}(\bs w)+\mathcal{I}_{ij}(\pi)+\bbE\fracpartial{\ln f_{\bs w}(X)}{\bs w_i}\fracpartial{\ln \pi(\bs w)}{\bs w_j}+\bbE\fracpartial{\ln f_{\bs w}(X)}{\bs w_j}\fracpartial{\ln \pi(\bs w)}{\bs w_i},
    \end{aligned}
    \end{equation}
    where $\stackrel{a}{=}$ uses the definition of $\mathcal{I}(\bs w)$ and $\mathcal{I}(\pi)$. We need to show that
    \begin{equation}
        \bbE\fracpartial{\ln f_{\bs w}(X)}{\bs w_i}\fracpartial{\ln \pi(\bs w)}{\bs w_j}=\bbE\fracpartial{\ln f_{\bs w}(X)}{\bs w_j}\fracpartial{\ln \pi(\bs w)}{\bs w_i}=0.
    \end{equation}
    For simplicity, let $\mathcal{X}=\left(\bbR^{d}\times\bbR\right)^{n}$ be the range of $X$, then we have
    \begin{equation}
    \begin{aligned}        
        \bbE\fracpartial{\ln f_{\bs w}(X)}{\bs w_i}\fracpartial{\ln \pi(\bs w)}{\bs w_j}=&\int_{\cX\times\bbR^d}{\fracpartial{\ln f_{\bs w}(x)}{\bs w_i}\fracpartial{\ln \pi(\bs w)}{\bs w_j}f_{\bs w}(x)\pi(\bs w)\rmd x\rmd\bs w} \\
        =&\int_{\cX\times\bbR^d}{\fracpartial{f_{\bs w}(x)}{\bs w_i}\fracpartial{\pi(\bs w)}{\bs w_j}\rmd x\rmd\bs w} \\
        \stackrel{a}{=}&\int_{\bbR^d}{\left(\fracpartial{}{\bs w_i}\int_{\cX}{f_{\bs w}(x)\rmd x}\right)\fracpartial{\pi(\bs w)}{\bs w_j}\rmd\bs w} \\
        \stackrel{b}{=}&0,
    \end{aligned}
    \end{equation}
    where $\stackrel{a}{=}$ exchanges $\int_{\cX}$ and $\fracpartial{}{\bs w_i}$, and $\stackrel{b}{=}$ uses $\int_{\cX}{f_{\bs w}(x)\rmd x}\equiv 1$ and the derivative of a constant is $0$. Thus, $\bbE\bs\eta\bs\eta^{\top}=\bbE\mathcal{I}(\bs w)+\mathcal{I}(\pi)$.

    Consider the $ij$-th entry of $\bbE\bs\xi\bs\eta^{\top}$, which is
    \begin{equation}
    \begin{aligned}
        \bbE\bs\xi_i\bs\eta_j=&\bbE\left(\hat{\bs w}_i^\mathrm{det}(X)-\bs w_i\right)\fracpartial{\ln\left(f_{\bs w}(X)\pi(\bs w)\right)}{\bs w_j} \\
        =&\int_{\cX\times\bbR^d}{\left(\hat{\bs w}_i^\mathrm{det}(x)-\bs w_i\right)\fracpartial{\ln\left(f_{\bs w}(x)\pi(\bs w)\right)}{\bs w_j}f_{\bs w}(x)\pi(\bs w)\rmd x\rmd\bs w} \\
        =&\int_{\cX\times\bbR^d}{\left(\hat{\bs w}_i^\mathrm{det}(x)-\bs w_i\right)\fracpartial{\left(f_{\bs w}(x)\pi(\bs w)\right)}{\bs w_j}\rmd x\rmd\bs w} \\
        \stackrel{a}{=}&\int_{\cX\times\bbR^d}{\fracpartial{\left[\left(\hat{\bs w}^\mathrm{det}_i(x)-\bs w_i\right)f_{\bs w}(x)\pi(\bs w)\right]}{\bs w_j}\rmd x\rmd\bs w} \\
        &-\int_{\cX\times\bbR^d}{\fracpartial{\left(\hat{\bs w}^\mathrm{det}_i(x)-\bs w_i\right)}{\bs w_j}f_{\bs w}(x)\pi(\bs w)\rmd x\rmd\bs w} \\
        \stackrel{b}{=}&\int_{\cX\times\bbR^{d-1}}{\left[\left(\hat{\bs w}^\mathrm{det}_i(x)-\bs w_i\right)f_{\bs w}(x)\pi(\bs w)\right]\bigr\vert_{\bs w_j=-\infty}^{\bs w_j=+\infty}\rmd x\prod_{k\neq j}{\rmd \bs w_k}}-\bbE\fracpartial{\left(-\bs w_i\right)}{\bs w_j} \\
        \stackrel{c}{=}&\delta_{ij},
    \end{aligned}
    \end{equation}
    where $\stackrel{a}{=}$ uses integration by parts, $\stackrel{b}{=}$ integrates with respect to $\bs w_j$, and $\stackrel{c}{=}$ is from the fact that $W$ is compact, so $\lambda(\bs w)=0$ when $\bs w_j$ is sufficiently large, and $\delta_{ij}$ denotes the kronecker delta, which equals to the $ij$-th entry of identity matrix $\bs I$. Therefore, $\bbE\bs\eta\bs\eta^\top=\bs I$. This completes the proof of \eqref{eq:van-trees-calculation}.
\end{proof}

The above lemma provides a Bayesian Cramer-Rao inequality, which enables us to derive the lower bound in Theorem~\ref{thm:lower-bound}.

\begin{proof}[Proof of Theorem~\ref{thm:lower-bound}]
    We apply Lemma~\ref{lemma:van-trees}. In our case, let data $X=\left\{(\bs x_i,y_i)\right\}_{i=1}^n\sim P_{\bs w^*}^{\otimes n}$. By direct calculation, we have
    \begin{equation}
        \cI(\bs w^*)=\frac{n\bs S}{\sigma^2}.
    \end{equation}
    Thus, given any prior distribution $\pi$ with support included in $W=\left\{\bs w^*\in\bbR^d:\left\Vert\bs w^*\right\Vert_{\bs M}^2\leq 1\right\}$, by Lemma~\ref{lemma:random-estimator} we have
    \begin{equation}
        \inf_{\hat{\bs w}}\sup_{\tilde{P} \in \mathcal{P}(W,\bs S,\bs T)}\bbE_{\tilde{P}^{\otimes n}\times P_\xi}\left\Vert\hat{\bs w}-\bs w^*\right\Vert_{\bs T}^2\geq\left\langle\bs T,\left(\mathcal{I}(\pi)+\frac{n\bs S}{\sigma^2}\right)^{-1}\right\rangle.
    \end{equation}
    
    The rest of the proof is to construct the prior distribution. To build intuition, we first consider the case $\bs M=\bs I$. We construct the prior distribution $\pi$ as follows. Given any orthogonal matrix $\bs U$ and vector $\bs g$ with $\left\Vert\bs g\right\Vert\leq 1$, we define the prior density $\pi(\bs w;\bs U,\bs g)$, whose support is included in unit ball, as follows:
    \begin{equation}
        \pi(\bs w;\bs U,\bs g)=\prod_{i=1}^{d}\cos^2\left(\frac{\pi(\bs U^\top\bs w)_i}{2\bs g_i}\right)\mathds{1}_{\left\vert(\bs U^\top\bs w)_i\right\vert\leq\left\vert\bs g_i\right\vert},\label{eq:def-prior}
    \end{equation}
    where $\mathds{1}$ is the indicator function. Note that  $\pi(\bs w;\bs U,\bs g)$ has support is included in unit ball. Direct calculation shows that the information of $\pi$ is
    \begin{equation}
        \cI(\pi(\cdot;\bs U,\bs g))=\pi^2\bs U\diag\left\{\frac{1}{\bs g_1^2},\frac{1}{\bs g_2^2},\ldots,\frac{1}{\bs g_d^2}\right\}\bs U^\top.\label{eq:prior-ball-information-matrix}
    \end{equation}

    For a general positive definite matrix $\bs M$, we define a prior as follows:
    \begin{equation}
        \pi(\bs w;\bs U,\bs g,\bs M)=\left(\det\bs M^{1/2}\right)\pi(\bs M^{1/2}\bs w;\bs U;\bs g).
    \end{equation}
    Geometrically speaking, $\pi(\bs w;\bs U,\bs g,\bs M)$ is obtained by scaling $\pi(\bs w;\bs U,\bs g)$ along the eigenvector of $\bs M$, such that unit circle is transformed into the ellipse $\bs x^\top\bs M\bs x=1$, and then normalize it by the factor $\det\bs M^{1/2}$. Note that the support of $\pi(\bs w;\bs U,\bs g,\bs M)$ is included in $W=\left\{\bs w^*\in\bbR^d:\left\Vert\bs w^*\right\Vert_{\bs M}^2\leq 1\right\}$. Then, we calculate the information matrix of $\pi(\bs w;\bs U,\bs g,\bs M)$. Let $\bs s(\bs w)=\nabla\ln\pi(\bs w;\bs U;\bs g)$, we have $\nabla\ln\pi(\bs w;\bs U,\bs g,\bs M)=\bs M^{1/2}\bs s(\bs M^{1/2}\bs w)$. Therefore,
    \begin{equation}
    \begin{aligned}        
        &\mathcal{I}(\pi(\cdot;\bs U,\bs g,\bs M)) \\
        =&\int_{\bbR^d}{\left(\bs M^{1/2}\bs s(\bs M^{1/2}\bs w)\right)\left(\bs M^{1/2}\bs s(\bs M^{1/2}\bs w)\right)^\top\left(\det\bs M^{1/2}\right)\pi(\bs M^{1/2}\bs w;\bs U;\bs g)\rmd\bs w} \\
        =&\bs M^{1/2}\left[\int_{\bbR^d}\bs s(\bs v)\bs s(\bs v)^\top\pi(\bs v;\bs U;\bs g)\rmd\bs v\right]\bs M^{1/2}\quad\left(\bs v=\bs M^{1/2}\bs w\right)\\
        \stackrel{a}{=}&\pi^2\bs M^{1/2}\bs U\diag\left\{\frac{1}{\bs g_1^2},\frac{1}{\bs g_2^2},\ldots,\frac{1}{\bs g_d^2}\right\}\bs U^\top\bs M^{1/2},
    \end{aligned}
    \end{equation}
    where $\stackrel{a}{=}$ uses the result of the information matrix of $\pi(\bs w;\bs U;\bs g)$ in \eqref{eq:prior-ball-information-matrix}. Therefore, all the information matrices constitute the set $\left\{\bs M^{1/2}\bs F^{-1}\bs M^{1/2}:\bs F\in\mathbb{S}_{++}^{d\times d},\left\Vert\bs F\right\Vert_*\leq 1/\pi^2\right\}$. By applying Lemma~\ref{lemma:van-trees}, we have
    \begin{equation}
    \begin{aligned}               
        \inf_{\hat{\bs w}}\sup_{\tilde{P} \in \mathcal{P}(W,\bs S,\bs T)}\bbE_{\tilde{P}^{\otimes n}\times P_\xi}\left\Vert\hat{\bs w}-\bs w^*\right\Vert_{\bs T}^2\geq&\sup_{\substack{\bs F\succeq\bs O\\ \left\Vert\bs F\right\Vert_*\leq 1/\pi^2}}\left\langle\bs T,\left(\bs M^{1/2}\bs F^{-1}\bs M^{1/2}+\frac{n\bs S}{\sigma^2}\right)^{-1}\right\rangle \\
        =&\sup_{\substack{\bs F\succeq\bs O\\ \left\Vert\bs F\right\Vert_*\leq 1/\pi^2}}\left\langle\bs T',\left(\bs F^{-1}+\frac{n\bs S'}{\sigma^2}\right)^{-1}\right\rangle.
    \end{aligned}
    \end{equation}
    This completes the proof.
\end{proof}


\section{Proof of Optimal Algorithm in Section~\ref{sec: main optimal alg}}
\subsection{Proof of Theorem~\ref{thm:optimal-upper-bound}}
\begin{proof}[Proof of Theorem~\ref{thm:optimal-upper-bound}]
    Let $\hat{\bs w}=\frac{1}{n}\bs S^{-1}\sum_{i=1}^n\bs x_i y_i$. Then we have $\hat{\bs w}^\opt=\bs A\hat{\bs w}$. We first show that
    \begin{equation}
        \bbE\hat{\bs w}=\bs w^*, \quad\cov\hat{\bs w}\preceq\frac{2\sigma^2+2\psi\left\Vert\bs w^*\right\Vert_{\bs S}^2}{n}\bs S^{-1}.
    \end{equation}
    Denote $\epsilon_i=y_i-\bs x_i^\top\bs w^*$ as the response noise. Since $\bs w^*$ is an optimal parameter, we have $\bbE\epsilon_i\bs x_i=0$. Recall that $\bs S=\bbE\bs x\bs x^\top$, and $\left\{(\bs x_i,y_i)\right\}_{i=1}^{n}$ are i.i.d, we have
    \begin{equation}
        \bbE\hat{\bs w}=\frac{1}{n}\bs S^{-1}\sum_{i=1}^n\bbE\left[\bs x_i\left(\bs x_i^\top\bs w^*+\epsilon_i\right)\right]=\bs S^{-1}\bbE_{P_{\bs x}}\left[\bs x\bs x^\top\right]\bs w^*=\bs w^*.
    \end{equation}
    Furthermore,
    \begin{equation}
    \begin{aligned}        
        \cov\hat{\bs w}\stackrel{a}{\preceq}&\frac{1}{n}\bs S^{-1}\bbE_{P_{\bs x\times y}}\left[y^2\bs x\bs x^\top\right]\bs S^{-1}=\frac{1}{n}\bs S^{-1}\bbE_{P_{\bs x\times y}}\left[\left(\bs x^\top\bs w^*+\epsilon\right)^2\bs x\bs x^\top\right]\bs S^{-1} \\
        \stackrel{b}{\preceq}&\frac{2}{n}\bs S^{-1}\left(\bbE_{P_{\bs x}}\left[\left(\bs x^\top\bs w^*\right)^2\bs x\bs x^\top\right]+\bbE_{P_{\bs x\times y}}\left[\epsilon^2\bs x\bs x^\top\right]\right)\bs S^{-1} \\
        \stackrel{c}{\preceq}&\frac{2}{n}\bs S^{-1}\left(\psi\left\Vert\bs w^*\right\Vert_{\bs S}^2\bs S+\sigma^2\bs S\right)\bs S^{-1} \\
        =&\frac{2\sigma^2+2\psi\left\Vert\bs w^*\right\Vert_{\bs S}^2}{n}\bs S^{-1},
    \end{aligned}
    \end{equation}
    where $\stackrel{a}{=}$ applies $\cov\hat{\bs w}\preceq\bbE\left[\hat{\bs w}\hat{\bs w}^\top\right]$ and $\hat{\bs w}$ is the average of $n$ independent random variable, $\stackrel{b}{\preceq}$ uses the inequality $(a+b)^2\leq 2a^2+2b^2$, and $\stackrel{c}{\preceq}$ uses Assumption~\ref{assumption:fourth-moment} and Assumption~\ref{assumption:noise}.

    Since $\hat{\bs w}^\opt=\bs M^{-1/2}\bs A\bs M^{1/2}\hat{\bs w}$, we have
    \begin{equation}
        \bbE\hat{\bs w}^\opt=\bs M^{-1/2}\bs A\bs M^{1/2}\bs w^*, 
    \end{equation}
    \begin{equation}            
        \cov\hat{\bs w}^\opt\preceq\frac{2\sigma^2+2\psi\left\Vert\bs w^*\right\Vert_{\bs S}^2}{n}\left(\bs M^{-1/2}\bs A\bs M^{1/2}\right)\bs S^{-1}\left(\bs M^{-1/2}\bs A\bs M^{1/2}\right)^\top.
    \end{equation}
    Apply the bias-variance decomposition to $\bbE\left\Vert\hat{\bs w}^\opt-\bs w^*\right\Vert_{\bs T}^2$, we obtain
    \begin{equation}
        \bbE\left\Vert\hat{\bs w}^\opt-\bs w^*\right\Vert_{\bs T}^2 =\left\Vert\bbE\hat{\bs w}^\opt-\bs w^*\right\Vert_{\bs T}^2+\left\langle\bs T,\cov\hat{\bs w}^\opt\right\rangle.
    \end{equation}
    Recall that $\bs S'=\bs M^{-1/2}\bs S\bs M^{-1/2}$ and $\bs T'=\bs M^{-1/2}\bs T\bs M^{-1/2}$. For the bias term, we have
    \begin{equation}
    \begin{aligned}
        \left\Vert\bbE\hat{\bs w}^\opt-\bs w^*\right\Vert_{\bs T}^2=&\left\Vert\left(\bs I-\bs M^{-1/2}\bs A\bs M^{1/2}\right)\bs w^*\right\Vert_{\bs T}^2=\left\Vert\bs M^{-1/2}\left(\bs I-\bs A\right)\bs M^{1/2}\bs w^*\right\Vert_{\bs T}^2 \\
        =&\left\Vert(\bs I-\bs A)\bs M^{1/2}\bs w^*\right\Vert_{\bs T'}^2.
    \end{aligned}
    \end{equation}
    For the variance term, we have
    \begin{equation}
    \begin{aligned}
        \left\langle\bs T,\cov\hat{\bs w}^\opt\right\rangle\leq&\frac{2\sigma^2+2\psi\left\Vert\bs w^*\right\Vert_{\bs S}^2}{n}\left\langle\bs M^{-1/2}\bs T\bs M^{-1/2},\bs A\left(\bs M^{-1/2}\bs S\bs M^{-1/2}\right)^{-1}\bs A^\top\right\rangle \\
        \leq&\frac{2\sigma^2+2\psi\left\Vert\bs M^{1/2}\bs w^*\right\Vert_{\bs S'}^2}{n}\left\langle\bs T',\bs A\left(\bs S'\right)^{-1}\bs A^\top\right\rangle.
    \end{aligned}
    \end{equation}
    Take the supremum with respect to $\bs w^*\in W=\left\{\bs w^*\in\bbR^d:\left\Vert\bs w^*\right\Vert_{\bs M}^2\leq 1\right\}$, and note that 
    \begin{equation}
        \sup_{\bs w^*\in W}\left\Vert(\bs I-\bs A)\bs M^{1/2}\bs w^*\right\Vert_{\bs T'}^2=\left\Vert(\bs I-\bs A)^\top\bs T'(\bs I-\bs A)\right\Vert, \quad\sup_{\bs w^*\in W}\left\Vert\bs M^{1/2}\bs w^*\right\Vert_{\bs S'}^2=\left\Vert\bs S'\right\Vert.
    \end{equation}
    Thus, we obtain
    \begin{equation}
        \sup_{\tilde{P}\in\mathcal{P}(W,\bs S,\bs T)}\bbE\left\Vert\hat{\bs w}^\opt-\bs w^*\right\Vert_{\bs T}^2\leq\left\Vert(\bs I-\bs A)^\top\bs T'(\bs I-\bs A)\right\Vert+\frac{2\sigma^2+2\psi\left\Vert\bs S'\right\Vert}{n}\left\langle\bs T',\bs A\left(\bs S'\right)^{-1}\bs A^\top\right\rangle.
    \end{equation}
    Minimizing the RHS with respect to $\bs A$ completes the proof.
\end{proof}

\subsection{Proof of Theorem~\ref{thm:lower-upper-match}}
The proof of Theorem~\ref{thm:lower-upper-match} is divided into two parts. First, we assume $\bs T'$ is invertible, and solves the optimization problem in Theorem~\ref{thm:lower-bound} to derive the result. For the second part, we replace $\bs T'$ by $\bs T'+\epsilon\bs I$, which is invertible, and take $\epsilon\to 0$ to complete the proof.
\begin{proof}[Proof of Theorem~\ref{thm:lower-upper-match}]
    For simplicity, let
    \begin{align}
        L(\bs S',\bs T')=&\sup_{\substack{\bs F\succeq\bs O\\ \left\Vert\bs F\right\Vert_*\leq 1/\pi^2}}\left\langle\bs T',\left(\bs F^{-1}+\frac{n\bs S'}{\sigma^2}\right)^{-1}\right\rangle, \\
        U(\bs S',\bs T')=&\inf_{\bs A\in\bbR^{d\times d}}\frac{1}{\pi^2}\left\Vert(\bs I-\bs A)^\top\bs T'(\bs I-\bs A)\right\Vert+\frac{\sigma^2}{n}\left\langle\bs T',\bs A\left(\bs S'\right)^{-1}\bs A^\top\right\rangle.\label{eq:upper-lower-match-def-U}
    \end{align}
    For the first part of the proof, we assume $\bs T'$ is invertible. We solve the optimization problem in Theorem~\ref{thm:lower-bound}. Note that the objective function $\left\langle\bs T',\left(\bs F^{-1}+\frac{n\bs S'}{\sigma^2}\right)^{-1}\right\rangle$ is concave with respect to $\bs F$ and the feasible set $\left\{\bs F\in\mathbb{S}_{++}^{d\times d}:\left\Vert\bs F\right\Vert_*\leq 1/\pi^2\right\}$ is a convex set. Therefore, we can introduce a Lagrange multiplier $\bs\Delta\in\bbS^{d\times d}$ and obtain
    \begin{equation}
    \begin{aligned}
        L(\bs S',\bs T')=&\sup_{\substack{\bs F\in\bbS^{d\times d}\\\bs B\succeq\bs O\\ \left\Vert\bs B\right\Vert_*\leq 1/\pi^2}}\inf_{\bs\Delta\in\bbS^{d\times d}}\left\langle\bs T',\left(\bs F^{-1}+\frac{n\bs S'}{\sigma^2}\right)^{-1}\right\rangle+\left\langle\bs S'\left(\bs T'\right)^{-1/2}\bs\Delta\left(\bs T'\right)^{-1/2}\bs S',\bs B-\bs F\right\rangle \\
        \stackrel{a}{=}&\inf_{\bs\Delta\in\bbS^{d\times d}}\sup_{\bs F\in\bbS^{d\times d}}\left[\left\langle\bs T',\left(\bs F^{-1}+\frac{n\bs S'}{\sigma^2}\right)^{-1}\right\rangle-\left\langle\bs S'\left(\bs T'\right)^{-1/2}\bs\Delta\left(\bs T'\right)^{-1/2}\bs S',\bs F\right\rangle\right]\\
        &\phantom{\inf_{\bs\Delta\in\bbS^{d\times d}}}+\sup_{\substack{\\\bs B\succeq\bs O\\ \left\Vert\bs B\right\Vert_*\leq 1/\pi^2}}\left\langle\bs S'\left(\bs T'\right)^{-1/2}\bs\Delta\left(\bs T'\right)^{-1/2}\bs S',\bs B\right\rangle \\
        \stackrel{b}{=}&\inf_{\bs\Delta\in\bbS^{d\times d}}\underbrace{\sup_{\bs F\in\bbS^{d\times d}}\left[\left\langle\bs T',\left(\bs F^{-1}+\frac{n\bs S'}{\sigma^2}\right)^{-1}\right\rangle-\left\langle\bs S'\left(\bs T'\right)^{-1/2}\bs\Delta\left(\bs T'\right)^{-1/2}\bs S',\bs F\right\rangle\right]}_{(a)} \\
        &\phantom{\inf_{\bs\Delta\in\bbS^{d\times d}}}+\frac{1}{\pi^2}\left\Vert\bs S'\left(\bs T'\right)^{-1/2}\bs\Delta\left(\bs T'\right)^{-1/2}\bs S'\right\Vert,
    \end{aligned}
    \end{equation}
    where $\stackrel{a}{=}$ follows from the concavity of $\left\langle\bs T',\left(\bs F^{-1}+\frac{n\bs S'}{\sigma^2}\right)^{-1}\right\rangle$ with respect to $\bs F$ and the convexity of feasible set $\left\{\bs F\in\mathbb{S}_{++}^{d\times d}:\left\Vert\bs F\right\Vert_*\leq 1/\pi^2\right\}$, and $\stackrel{b}{=}$ follows from the fact that the dual norm of nuclear norm $\Vert\cdot\Vert_*$ is 2-norm $\Vert\cdot\Vert$.
    To solve $(a)$, let the derivative of $(a)$ with respect to $\bs F$ be equal to $\bs O$, we get
    \begin{equation}
        \left(\bs I+\frac{n\bs S'\bs F}{\sigma^2}\right)^{-1}\bs T' \left(\bs I+\frac{n\bs F\bs S'}{\sigma^2}\right)^{-1}-\bs S'\left(\bs T'\right)^{-1/2}\bs\Delta\left(\bs T'\right)^{-1/2}\bs S'=\bs O.
    \end{equation}
    Note that if $\bs\Delta$ is not a PSD matrix, then $(a)=+\infty$. Thus, $\bs\Delta\succeq\bs O$. Let $\left(\bs I+\frac{n\bs S'\bs F}{\sigma^2}\right)^{-1}\left(\bs T'\right)^{1/2}=\bs S'\left(\bs T'\right)^{-1/2}\bs\Delta^{1/2}$. Solve the equation yields
    \begin{equation}
        \bs F=\frac{\sigma^2}{n}\left[\left(\bs S'\right)^{-1}\left(\bs T'\right)^{1/2}\bs\Delta^{-1/2}\left(\bs T'\right)^{1/2}\left(\bs S'\right)^{-1}-\left(\bs S'\right)^{-1}\right],
    \end{equation}
    which meets the requirement that $\bs F$ is a PSD matrix. Plugging the solution into $(a)$, we have
    \begin{equation}
    \begin{aligned}
        (a)=&\frac{\sigma^2}{n}\left[\left\langle\bs S',\left(\bs T'\right)^{-1/2}\bs\Delta\left(\bs T'\right)^{-1/2}\right\rangle-2\tr\bs\Delta^{1/2}+\tr\bs T'\left(\bs S'\right)^{-1}\right] \\
        =&\frac{\sigma^2}{n}\left\langle\bs T', \left[\left(\bs T'\right)^{-1/2}\bs\Delta^{1/2}\left(\bs T'\right)^{-1/2}\bs S'-\bs I\right]\left(\bs S'\right)^{-1}\left[\bs S'\left(\bs T'\right)^{-1/2}\bs\Delta^{1/2}\left(\bs T'\right)^{-1/2}-\bs I\right]\right\rangle.\notag
    \end{aligned}
    \end{equation}
    Let $\bs A=\bs I-\left(\bs T'\right)^{-1/2}\bs\Delta^{1/2}\left(\bs T'\right)^{-1/2}\bs S'$, we obtain
    \begin{equation}
        L(\bs S',\bs T')
        =\inf_{\bs S'(\bs I-\bs A)\in\bbS_+^{d\times d}}\frac{1}{\pi^2}\left\Vert(\bs I-\bs A)^\top\bs T'(\bs I-\bs A)\right\Vert+\frac{\sigma^2}{n}\left\langle\bs T',\bs A\left(\bs S'\right)^{-1}\bs A^\top\right\rangle.\label{eq:upper-lower-bound}
    \end{equation}
    
    Note that the definition of $U(\bs S',\bs T')$ in \eqref{eq:upper-lower-match-def-U} imposes no constraint $\bs A$. Now we show that the constraint $\bs S'(\bs I-\bs A)\in\bbS_+^{d\times d}$ in \eqref{eq:upper-lower-bound} can be relaxed to $\bs A \in \mathbb{R}^{d\times d}$. For any $\bs A\in\bbR^{d\times d}$, we denote the polar decomposition of $\left(\bs T'\right)^{1/2}(\bs I-\bs A)\left(\bs S'\right)^{-1}\left(\bs T'\right)^{1/2}$ as: 
    \begin{equation}
        \bs U\bs Z=\left(\bs T'\right)^{1/2}(\bs I-\bs A)\left(\bs S'\right)^{-1}\left(\bs T'\right)^{1/2}\label{eq:lower-upper-match-A-reparam},
    \end{equation}
    where $\bs U$ is an orthogonal matrix and $\bs Z$ is a PSD matrix. Substitute \eqref{eq:lower-upper-match-A-reparam} into the objective function of $U(\bs S',\bs T')$ shown in~\eqref{eq:lower-bound-param-A}, we have
    \begin{align}        
        &\frac{\sigma^2}{n}\left\langle\bs T',\bs A\left(\bs S'\right)^{-1}\bs A^\top\right\rangle+\frac{1}{\pi^2}\left\Vert(\bs I-\bs A)^\top\bs T'(\bs I-\bs A)\right\Vert\label{eq:lower-bound-param-A} \\
        \stackrel{a}{=}&\frac{\sigma^2}{n}\left\langle\bs T',\left(\bs I-\left(\bs T'\right)^{-1/2}\bs U\bs Z\left(\bs T'\right)^{-1/2}\bs S'\right)\left(\bs S'\right)^{-1}\left(\bs I-\bs S'\left(\bs T'\right)^{-1/2}\bs Z^\top\bs U^\top\left(\bs T'\right)^{-1/2}\right)\right\rangle\notag \\
        &+\frac{1}{\pi^2}\left\Vert\bs S'\left(\bs T'\right)^{-1/2}\bs Z^\top\bs U^\top\bs U\bs Z\left(\bs T'\right)^{-1/2}\bs S'\right\Vert\notag \\
        =&\frac{\sigma^2}{n}\left[\tr\left(\bs U\bs Z\left(\bs T'\right)^{-1/2}\bs S'\left(\bs T'\right)^{-1/2}\bs Z^\top\bs U^\top\right)-\tr\left(\bs U\bs Z+\bs Z^\top\bs U^\top\right)+\tr\left(\bs T'\left(\bs S'\right)^{-1}\right)\right]\notag \\
        &+\frac{1}{\pi^2}\left\Vert\bs S'\left(\bs T'\right)^{-1/2}\bs Z^\top\bs U^\top\bs U\bs Z\left(\bs T'\right)^{-1/2}\bs S'\right\Vert\notag \\
        \stackrel{a}{=}&\frac{\sigma^2}{n}\left[\left\langle\bs S',\left(\bs T'\right)^{-1/2}\bs Z^2\left(\bs T'\right)^{-1/2}\right\rangle-\tr\left(\bs U\bs Z+\bs Z^\top\bs U^\top\right)+\tr\left(\bs T'\left(\bs S'\right)^{-1}\right)\right]\label{eq:lower-bound-param-Z} \\
        &+\frac{1}{\pi^2}\left\Vert\bs S'\left(\bs T'\right)^{-1/2}\bs Z^2\left(\bs T'\right)^{-1/2}\bs S'\right\Vert\notag,
    \end{align}
    where $\stackrel{a}{=}$ uses $\bs A=\bs I-\left(\bs T'\right)^{-1/2}\bs U\bs Z\left(\bs T'\right)^{-1/2}\bs S'$, and $\stackrel{b}{=}$ uses $\bs U\bs U^\top=\bs I$ and $\bs Z$ is a PSD matrix. We first minimize \eqref{eq:lower-bound-param-Z} with respect to $\bs U$. By Lemma~\ref{lemma:sup-tr-orth-psd}, $-\tr\left(\bs U\bs Z+\bs Z^\top\bs U^\top\right)$ is minimized when $\bs U=\bs I$, which implies $\bs S'(\bs I-\bs A)=\bs S'\left(\bs T'\right)^{-1/2}\bs U\bs Z\left(\bs T'\right)^{-1/2}\bs S'\in\bbS_+^{d\times d}$. Therefore, we have
    \begin{equation}
        \inf_{\bs A\in\bbR^{d\times d}}\eqref{eq:lower-bound-param-A}=\inf_{\bs Z\in\bbS_{+}^{d\times d}}\inf_{\substack{\bs U\in\bbR^{d\times d} \\ \bs U\bs U^\top=\bs I}}\eqref{eq:lower-bound-param-Z}=\inf_{\bs S'(\bs I-\bs A)\in\bbS_+^{d\times d}}\eqref{eq:lower-bound-param-A},
    \end{equation}
    We complete the first part by noting that $U(\bs S',\bs T')=\inf_{\bs A\in\bbR^{d\times d}}\eqref{eq:lower-bound-param-A}$ by definition and $L(\bs S',\bs T')=\inf_{\bs S'(\bs I-\bs A)\in\bbS_+^{d\times d}}\eqref{eq:lower-bound-param-A}$ which is shown in~\eqref{eq:upper-lower-bound}.

    For the second part of the proof, we consider the case where $\bs T'$ is any PSD matrix, \textit{i.e.} $\bs T'$ is possibly singular. Let $\epsilon>0$ be arbitrary. Since $L(\bs S',\bs T')$ is linear in $\bs T'$, we have
    \begin{equation}
        L(\bs S',\bs T')\leq L(\bs S',\bs T'+\epsilon\bs I)\leq L(\bs S',\bs T')+\epsilon L(\bs S',\bs I)
    \end{equation}
    Note that
    \begin{equation}
        \sup_{\substack{\bs F\succeq\bs O\\ \left\Vert\bs F\right\Vert_*\leq 1/\pi^2}}\left\langle\bs I,\left(\bs F^{-1}+\frac{n\bs S'}{\sigma^2}\right)^{-1}\right\rangle\leq\sup_{\substack{\bs F\succeq\bs O\\ \left\Vert\bs F\right\Vert_*\leq 1/\pi^2}}\tr\bs F\leq \frac{1}{\pi^2}.
    \end{equation}
    Therefore, we have
    \begin{equation}
        L(\bs S',\bs T')
        =\lim_{\epsilon\to0^+}L(\bs S',\bs T'+\epsilon\bs I).\label{eq:lower-lim-eq}
    \end{equation}
    Similarly, we have    
    \begin{equation}
    \begin{aligned}
        &\inf_{\bs A\in\bbR^{d\times d}}\frac{1}{\pi^2}\left\Vert(\bs I-\bs A)^\top\bs T'(\bs I-\bs A)\right\Vert+\frac{\sigma^2}{n}\left\langle\bs T',\bs A\left(\bs S'\right)^{-1}\bs A^\top\right\rangle \\
        \leq&\inf_{\bs A\in\bbR^{d\times d}}\frac{1}{\pi^2}\left\Vert(\bs I-\bs A)^\top\bs (\bs T'+\epsilon\bs I)(\bs I-\bs A)\right\Vert+\frac{\sigma^2}{n}\left\langle\bs T'+\epsilon\bs I,\bs A\left(\bs S'\right)^{-1}\bs A^\top\right\rangle \\
        \leq&\frac{1}{\pi^2}\left\Vert(\bs I-\bs A_0)^\top\bs T'(\bs I-\bs A_0)\right\Vert+\frac{\sigma^2}{n}\left\langle\bs T',\bs A_0\left(\bs S'\right)^{-1}\bs A_0^\top\right\rangle \\
        &+\epsilon\left[\frac{1}{\pi^2}\left\Vert(\bs I-\bs A_0)^\top(\bs I-\bs A_0)\right\Vert+\frac{\sigma^2}{n}\tr\left(\bs A_0\left(\bs S'\right)^{-1}\bs A_0^\top\right)\right], \\
    \end{aligned}
    \end{equation}
    where $\bs A_0$ is a minimizer of $\frac{1}{\pi^2}\left\Vert(\bs I-\bs A)^\top\bs T'(\bs I-\bs A)\right\Vert+\frac{\sigma^2}{n}\left\langle\bs T',\bs A\left(\bs S'\right)^{-1}\bs A^\top\right\rangle$. Thus, we have
    \begin{equation}
        U(\bs S',\bs T')=\lim_{\epsilon\to0^+}U(\bs S',\bs T'+\epsilon\bs I).\label{eq:upper-lim-eq}
    \end{equation}
    Finally, combine \eqref{eq:lower-lim-eq}, \eqref{eq:upper-lim-eq} and the first part of the proof, we obtain
    \begin{equation}
        L(\bs S',\bs T')
        =\lim_{\epsilon\to0^+}L(\bs S',\bs T'+\epsilon\bs I)=\lim_{\epsilon\to0^+}U(\bs S',\bs T'+\epsilon\bs I)=U(\bs S',\bs T').
    \end{equation}
    This completes the proof for any PSD matrix $\bs T'$.
\end{proof}

\begin{lemma}\label{lemma:sup-tr-orth-psd}
Let $\bs Z$ be a PSD matrix and $\bs U$ be a orthogonal matrix. Then $\tr(\bs U\bs Z)\leq\tr\bs Z$.    
\end{lemma}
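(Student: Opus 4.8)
The plan is to reduce the claim to the Cauchy--Schwarz inequality for the Frobenius inner product $\langle\bs A,\bs B\rangle_F=\tr(\bs A^\top\bs B)$. Since $\bs Z\succeq\bs O$ it admits a symmetric PSD square root $\bs Z^{1/2}$; by cyclicity of the trace and symmetry of $\bs Z^{1/2}$,
\[
\tr(\bs U\bs Z)=\tr\!\left(\bs Z^{1/2}\bs U\bs Z^{1/2}\right)=\left\langle\bs Z^{1/2},\bs U\bs Z^{1/2}\right\rangle_F\le\left\Vert\bs Z^{1/2}\right\Vert_F\left\Vert\bs U\bs Z^{1/2}\right\Vert_F,
\]
the last step being Cauchy--Schwarz. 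It then remains only to note that left multiplication by an orthogonal matrix preserves the Frobenius norm: $\left\Vert\bs U\bs Z^{1/2}\right\Vert_F^2=\tr\!\left((\bs Z^{1/2})^\top\bs U^\top\bs U\bs Z^{1/2}\right)=\tr(\bs Z)=\left\Vert\bs Z^{1/2}\right\Vert_F^2$. Substituting this back gives $\tr(\bs U\bs Z)\le\left\Vert\bs Z^{1/2}\right\Vert_F^2=\tr(\bs Z)$, which is the assertion.

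An equivalent route, if one prefers to avoid the square root, is to diagonalize $\bs Z=\sum_i\lambda_i\bs v_i\bs v_i^\top$ with $\lambda_i\ge 0$ and $\{\bs v_i\}$ orthonormal. Then $\tr(\bs U\bs Z)=\sum_i\lambda_i\,\bs v_i^\top\bs U\bs v_i\le\sum_i\lambda_i\left\Vert\bs v_i\right\Vert_2\left\Vert\bs U\bs v_i\right\Vert_2=\sum_i\lambda_i=\tr(\bs Z)$, where we used Cauchy--Schwarz on each term and $\left\Vert\bs U\bs v_i\right\Vert_2=\left\Vert\bs v_i\right\Vert_2$.

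There is no real obstacle here; the statement is elementary. The only points worth stating carefully are that the argument genuinely uses $\bs Z\succeq\bs O$ — the nonnegativity of the eigenvalues $\lambda_i$ (equivalently, the existence of a symmetric square root) is exactly what makes the termwise bound go in the correct direction — and that $\bs U$ need only be orthogonal (the norm-preservation $\bs U^\top\bs U=\bs I$ is all that is used), not a proper rotation. This lemma is precisely what is invoked in the proof of Theorem~\ref{thm:lower-upper-match} to conclude that $-\tr(\bs U\bs Z+\bs Z^\top\bs U^\top)$ is minimized at $\bs U=\bs I$, since $\tr(\bs Z^\top\bs U^\top)=\tr(\bs U\bs Z)$ for symmetric $\bs Z$.
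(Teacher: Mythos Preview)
Your proposal is correct, and your second route is essentially the paper's own proof: the paper diagonalizes $\bs Z$ (so $\bs v_i=\bs e_i$) and then bounds each term by $u_{ii}\le 1$, which is exactly your Cauchy--Schwarz step $\bs v_i^\top\bs U\bs v_i\le\Vert\bs v_i\Vert_2\Vert\bs U\bs v_i\Vert_2=1$. Your first route via the Frobenius Cauchy--Schwarz on $\bs Z^{1/2}$ is a slightly slicker packaging that avoids diagonalization altogether, but for a statement this elementary the difference is purely cosmetic.
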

\begin{proof}
    Without loss of generality, we assume $\bs Z=\diag\{z_1,z_2\ldots,z_d\}$. Let $u_{ij}$ denote the $ij$-th entry of $\bs U$. Note that $\bs U$ is orthogonal implies $\left\vert u_{ij}\right\vert\leq 1$, so
    \begin{equation}
        \tr(\bs U\bs Z)=\sum_{i=1}^d u_{ii}z_i\leq\sum_{i=1}^d z_i=\tr\bs Z,
    \end{equation}
    where $=$ holds when $\bs U=\bs I$.
\end{proof}

\section{Proof of ASGD Upper Bound in Section~\ref{sec: asgd upper}}
In this section, we provide the analysis of ASGD upper bound. The organization of this section is as follows:
\begin{itemize}
    \item In Section~\ref{sec:asgd-pre}, we present the tools for analyzing ASGD. Bias-variance decomposition is used to decompose the excess risk to the bias part and variance part. The definition of linear operators on matrices allows us to write the matrix form of the iteration of bias and variance.
    \item In Section~\ref{sec:asgd-proof}, we summarize the proof. We begin by defining semi-stochastic versions of variance and bias iteration following~\citet{Dieuleveut2016NonparametricStochasticApproximation}, and further bounds the difference. Some proofs are deferred to Section~\ref{sec:asgd-variance-ub} and Section~\ref{sec:asgd-bias-ub}.
    \item In Section~\ref{sec:prop-momentum-matrix}, we establish the bounds of the momentum matrix. The bounds are based on the spectral radius of the momentum matrix. 
    \item Section~\ref{sec:asgd-variance-ub} and Section~\ref{sec:asgd-bias-ub} provide bounds of semi-stochastic iterations in terms of algorithmic parameters, and covariance matrix of source and target distribution.
\end{itemize}

\subsection{Preliminaries}\label{sec:asgd-pre}
\subsubsection{Bias-Variance Decomposition}
Given a sequence of data $\left\{\left(\bs x_i,y_i\right)\right\}_{i=1}^n$, ASGD starts from initial weight $\bs w_0=\bs v_0$ and recursively calculates
\begin{equation}
    \bs u_{t-1}\gets\alpha\bs w_{t-1}+(1-\alpha)\bs v_{t-1},
\end{equation}
\begin{equation}
    \bs w_t\gets\bs u_{t-1}-\delta_t\left(\bs x_t^\top\bs u_{t-1}-y_t\right)\bs x_t,
\end{equation}
\begin{equation}
    \bs v_t\gets\beta \bs u_{t-1}+(1-\beta)\bs v_{t-1}-\gamma_t\left(\bs x_t^\top\bs u_{t-1}-y_t\right)\bs x_t,\label{eq:vt-iteration}
\end{equation}
where $\delta_t$ and $\gamma_t$ are step sizes at iteration $t$. We consider the exponentially decaying step-size schedule
\begin{equation}
    \delta_t=\delta/4^{\ell-1},\gamma_t=\gamma/4^{\ell-1}, \text{if } K(\ell-1)+1\leq t\leq K\ell,\label{eq:step-size}
\end{equation}
where $n$ is the number of observations and $K=n/\log_2 n$. For theoretical analysis, we define $\bs\eta_t=\begin{bmatrix} \bs w_t-\bs w^* \\ \bs u_t-\bs w^* \end{bmatrix}$, where $\bs w^*$ is the ground-truth weight. Let $c=\alpha(1-\beta)$, $q=\alpha\delta+(1-\alpha)\gamma$, and $q_t=\alpha\delta_t+(1-\alpha)\gamma_t$, by eliminating $\bs v_t$ in \eqref{eq:vt-iteration}, ASGD iteration can be written in the following compact form,
\begin{equation}
    \bs\eta_t=\widehat{\bs A}_t\bs\eta_{t-1}+\bs\zeta_t, \text{\enspace where } \widehat{\bs A}_t=\begin{bmatrix}
        \bs O & \bs I-\delta_t\bs x_t\bs x_t^\top \\
        -c\bs I & (1+c)\bs I-q_t\bs x_t\bs x_t^\top
    \end{bmatrix}, \bs\zeta_t=\begin{bmatrix}
        \delta_t\epsilon_t\bs x_t \\        
        q_t\epsilon_t\bs x_t
    \end{bmatrix},
\end{equation}
where $\epsilon_t$ is defined in Assumption~\ref{assumption:noise}.

Following the standard bias-variance decomposition technique~\citep{Jain2017MarkovChainTheory, wu2022last,lirisk}, we decompose the iteration $\bs\eta_t$ into the bias component $\bs\eta_t^\mathrm{bias}$ and the variance component $\bs\eta_t^\mathrm{var}$,
\begin{equation}
    \bs\eta_t^\mathrm{bias}=\widehat{\bs A}_t\bs\eta_{t-1}^\mathrm{bias},\quad\bs\eta_0^\mathrm{bias}=\bs\eta_0;
\end{equation}
\begin{equation}
    \bs\eta_t^\mathrm{var}=\widehat{\bs A}_t\bs\eta_{t-1}^\mathrm{var}+\bs\zeta_t,\quad\bs\eta_0^\mathrm{var}=\bs0.
\end{equation}
The decomposition of $\bs\eta_t$ induces the decomposition of excess risk:
\begin{equation}
\begin{aligned}
    \bbE\left\Vert\bs w_n-\bs w^*\right\Vert_{\bs T}^2=&\left\langle\begin{bmatrix}
        \bs T & \bs O \\
        \bs O & \bs O    \end{bmatrix},\bbE\left[\bs\eta_n\bs\eta_n^\top\right]\right\rangle \\
    \leq&2\cdot\underbrace{\left\langle\begin{bmatrix}
        \bs T & \bs O \\
        \bs O & \bs O    \end{bmatrix},\bbE\left[\bs\eta_n^\text{bias}\left(\bs\eta_n^\text{bias}\right)^\top\right]\right\rangle}_\mathrm{Bias} + 2\cdot\underbrace{\left\langle\begin{bmatrix}
        \bs T & \bs O \\
        \bs O & \bs O    \end{bmatrix},\bbE\left[\bs\eta_n^\text{var}\left(\bs\eta_n^\text{var}\right)^\top\right]\right\rangle}_\mathrm{Variance}.\label{eq:bias-variance-decomposition}
\end{aligned}
\end{equation}

\subsubsection{Linear Operators}\label{sec:asgd-operators}
We introduce the following linear operators on matrices to analyze the recursion of $\bbE\left[\bs\eta_n^\text{bias}\left(\bs\eta_n^\text{bias}\right)^\top\right]$ and $\bbE\left[\bs\eta_n^\text{var}\left(\bs\eta_n^\text{var}\right)^\top\right]$.
\begin{equation}
    \mathcal{I}=\bs I\otimes\bs I, \quad
    \cB_t=\bbE\left[\widehat{\bs A}_t\otimes\widehat{\bs A}_t\right].
\end{equation}
\begin{equation}
\end{equation}
Let $\bs A_t=\bbE\widehat{\bs A}_t$ be the deterministic version of $\widehat{\bs A}_t$, and define
\begin{equation}
    \tilde\cB_t=\bs A_t\otimes\bs A_t.
\end{equation}
We decompose $\widehat{\bs A}_t$ into two components:
\begin{equation}
    \bs V=\begin{bmatrix}
        \bs O & \bs I \\
        -c\bs I & (1+c)\bs I
    \end{bmatrix}, \quad\widehat{\bs G}_t=\begin{bmatrix}
        \bs O & \delta_t\bs x_t\bs x_t^\top \\
        \bs O & q_t\bs x_t\bs x_t^\top \\
    \end{bmatrix}.
\end{equation}
The deterministic version of $\widehat{\bs G}_t$ is defined as
\begin{equation}
    \bs G_t=\bbE\widehat{\bs G}_t.
\end{equation}
Therefore, $\widehat{\bs A}_t=\bs V-\widehat{\bs G}_t$ and $\bs A_t=\bs V-\bs G_t$.

The following lemma provides properties of the linear operators.
\begin{lemma}\label{lemma:op-prop}
    The above operators have the following properties:
    \begin{enumerate}
        \item $\cB_t\preceq\tilde{\cB}_t+\bbE\left[\widehat{\bs G}_t\otimes\widehat{\bs G}_t\right]$.
        
        \item Suppose Assumption~\ref{assumption:fourth-moment} holds. For any PSD matrix $\bs M$, we have
        \begin{equation}
            \bbE\left[\widehat{\bs G}_t\otimes\widehat{\bs G}_t\right]\circ\bs M\preceq\psi\left\langle\begin{bmatrix}
                \bs O & \bs O \\
                \bs O & \bs S
            \end{bmatrix},\bs M\right\rangle\begin{bmatrix}
            \delta_t^2\bs S & \delta_t q_t\bs S \\
            \delta_t q_t\bs S & q_t^2\bs S
            \end{bmatrix}.
        \end{equation}
    \end{enumerate}
\end{lemma}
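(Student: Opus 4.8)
The plan is to prove both parts by direct computation, fixing the convention that a linear operator $\bs A\otimes\bs B$ acts on a matrix $\bs X$ by $(\bs A\otimes\bs B)\circ\bs X=\bs A\bs X\bs B^\top$, and that an operator inequality $\mathcal{O}_1\preceq\mathcal{O}_2$ means that $\mathcal{O}_2-\mathcal{O}_1$ maps PSD matrices to PSD matrices (the usual order used for these second-moment operators).

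For the first claim, I would start from the additive decomposition $\widehat{\bs A}_t=\bs V-\widehat{\bs G}_t$, together with $\bs A_t=\bbE\widehat{\bs A}_t=\bs V-\bs G_t$ and $\bs G_t=\bbE\widehat{\bs G}_t$, and expand the Kronecker square using bilinearity of $\otimes$ and the fact that $\bs V$ is deterministic, so that $\bbE\left[\bs V\otimes\widehat{\bs G}_t\right]=\bs V\otimes\bs G_t$. This yields $\cB_t=\bs V\otimes\bs V-\bs V\otimes\bs G_t-\bs G_t\otimes\bs V+\bbE\left[\widehat{\bs G}_t\otimes\widehat{\bs G}_t\right]$ and $\tilde\cB_t=\bs V\otimes\bs V-\bs V\otimes\bs G_t-\bs G_t\otimes\bs V+\bs G_t\otimes\bs G_t$, hence $\cB_t-\tilde\cB_t=\bbE\left[\widehat{\bs G}_t\otimes\widehat{\bs G}_t\right]-\bs G_t\otimes\bs G_t$. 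The final step is to note that $\bs G_t\otimes\bs G_t\succeq\bs O$ as an operator, since it sends $\bs X\succeq\bs O$ to $\bs G_t\bs X\bs G_t^\top\succeq\bs O$; therefore $\cB_t=\tilde\cB_t+\bbE\left[\widehat{\bs G}_t\otimes\widehat{\bs G}_t\right]-\bs G_t\otimes\bs G_t\preceq\tilde\cB_t+\bbE\left[\widehat{\bs G}_t\otimes\widehat{\bs G}_t\right]$. (Equivalently, $\bbE\left[\widehat{\bs G}_t\otimes\widehat{\bs G}_t\right]-\bs G_t\otimes\bs G_t$ is the covariance operator $\bs X\mapsto\bbE\left[(\widehat{\bs G}_t-\bs G_t)\bs X(\widehat{\bs G}_t-\bs G_t)^\top\right]$, which is PSD-preserving; either phrasing works.)

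For the second claim, I would write $\bs M$ in $d\times d$ blocks as $\bs M=\begin{bmatrix}\bs M_{11}&\bs M_{12}\\\bs M_{21}&\bs M_{22}\end{bmatrix}$; since $\bs M\succeq\bs O$, the diagonal block $\bs M_{22}\succeq\bs O$. Because $\widehat{\bs G}_t$ has a zero first block-column, block multiplication collapses everything onto $\bs M_{22}$ and the rank-one factor $\bs x_t\bs x_t^\top$: a short calculation gives $\widehat{\bs G}_t\bs M\widehat{\bs G}_t^\top=\bs B_t\left(\bs x_t\bs x_t^\top\bs M_{22}\bs x_t\bs x_t^\top\right)\bs B_t^\top$ with $\bs B_t=\begin{bmatrix}\delta_t\bs I\\q_t\bs I\end{bmatrix}$, so that $\bbE\left[\widehat{\bs G}_t\otimes\widehat{\bs G}_t\right]\circ\bs M=\bs B_t\,\bbE\left[\bs x_t\bs x_t^\top\bs M_{22}\bs x_t\bs x_t^\top\right]\bs B_t^\top$. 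Applying Assumption~\ref{assumption:fourth-moment} with the PSD matrix $\bs M_{22}$ gives $\bbE\left[\bs x_t\bs x_t^\top\bs M_{22}\bs x_t\bs x_t^\top\right]\preceq\psi\tr(\bs S\bs M_{22})\bs S$, and since $\bs Y\mapsto\bs B_t\bs Y\bs B_t^\top$ is monotone, we obtain $\bbE\left[\widehat{\bs G}_t\otimes\widehat{\bs G}_t\right]\circ\bs M\preceq\psi\tr(\bs S\bs M_{22})\,\bs B_t\bs S\bs B_t^\top=\psi\tr(\bs S\bs M_{22})\begin{bmatrix}\delta_t^2\bs S&\delta_t q_t\bs S\\\delta_t q_t\bs S&q_t^2\bs S\end{bmatrix}$. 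Rewriting $\tr(\bs S\bs M_{22})=\left\langle\begin{bmatrix}\bs O&\bs O\\\bs O&\bs S\end{bmatrix},\bs M\right\rangle$ then gives exactly the stated bound.

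Neither part is expected to be a genuine obstacle; the only care required is (i) pinning down the operator conventions so that the appearances of $\preceq$ are unambiguous, and (ii) recording that the object fed into Assumption~\ref{assumption:fourth-moment}, namely the $(2,2)$ block $\bs M_{22}$, is PSD, which is automatic from $\bs M\succeq\bs O$. The one mild point worth isolating is that the zero first block-column of $\widehat{\bs G}_t$ forces the whole computation to factor through $\bs x_t\bs x_t^\top$ and $\bs M_{22}$, which is precisely what lets the fourth-moment assumption be applied cleanly.
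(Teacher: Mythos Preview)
Your proof is correct and follows essentially the same approach as the paper: for Item~1 you expand $\cB_t=\tilde\cB_t+\bbE[\widehat{\bs G}_t\otimes\widehat{\bs G}_t]-\bs G_t\otimes\bs G_t$ and drop the nonnegative $\bs G_t\otimes\bs G_t$, and for Item~2 you reduce to $\bs M_{22}$ via the zero first block-column of $\widehat{\bs G}_t$ and apply Assumption~\ref{assumption:fourth-moment}, exactly as the paper does. The only cosmetic difference is that the paper phrases Item~2 via the Kronecker product $\begin{bmatrix}\delta_t^2&\delta_tq_t\\\delta_tq_t&q_t^2\end{bmatrix}\odot\bbE[\bs x\bs x^\top\bs M_{22}\bs x\bs x^\top]$ and its monotonicity, whereas your factorization through $\bs B_t=\begin{bmatrix}\delta_t\bs I\\q_t\bs I\end{bmatrix}$ accomplishes the same thing.
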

\begin{proof}
    \begin{enumerate}
        \item From the definiton of $\cB_t$, we have
        \begin{equation}
        \begin{aligned}            
            \cB_t=&\bbE\left[\left(\bs V-\widehat{\bs G}_t\right)\otimes\left(\bs V-\widehat{\bs G}_t\right)\right] \\
            \stackrel{a}{\preceq}&\left(\bs V-\bs G_t\right)\otimes\left(\bs V-\bs G_t\right)
            -\bs G_t\otimes\bs G_t+\bbE\left[\widehat{\bs G}_t\otimes\widehat{\bs G}_t\right] \\
            =&\tilde{\cB}_t+\bbE\left[\widehat{\bs G}_t\otimes\widehat{\bs G}_t\right],
        \end{aligned}
        \end{equation}
        where $\stackrel{a}{\preceq}$ uses $\bbE\left[\bs V\otimes\widehat{\bs G}_t\right]=\bs V\otimes\bs G_t$ and $\stackrel{a}{\preceq}$ uses $\bbE\left[\widehat{\bs G}_t\otimes\bs V\right]=\bs G_t\otimes\bs V$.
        \item Apply the partition of $\widehat{\bs G}_t$ to $\bs M=\begin{bmatrix}
                \bs M_{11} & \bs M_{12} \\
                \bs M_{21} & \bs M_{22}
            \end{bmatrix}$, we have
        \begin{equation}
        \begin{aligned}
            \bbE\left[\widehat{\bs G}_t\otimes\widehat{\bs G}_t\right]\circ\bs M=&\bbE\begin{bmatrix}
                \delta_t^2\bs x\bs x^\top\bs M_{22}\bs x\bs x^\top & \delta_t q_t\bs x\bs x^\top\bs M_{22}\bs x\bs x^\top \\
                \delta_t q_t\bs x\bs x^\top\bs M_{22}\bs x\bs x^\top & q_t^2\bs x\bs x^\top\bs M_{22}\bs x\bs x^\top
            \end{bmatrix} \\
            =&\begin{bmatrix}
                \delta_t^2 & \delta_t q_t \\
                \delta_t q_t & q_t^2
            \end{bmatrix}\odot\bbE\left[\bs x\bs x^\top\bs M_{22}\bs x\bs x^\top\right] \\
            \stackrel{a}{\preceq}&\begin{bmatrix}
                \delta_t^2 & \delta_t q_t \\
                \delta_t q_t & q_t^2
            \end{bmatrix}\odot\left[\psi\left\langle\begin{bmatrix}
                \bs O & \bs O \\
                \bs O & \bs S
            \end{bmatrix},\bs M\right\rangle\bs S\right] \\
            \preceq&\psi\left\langle\begin{bmatrix}
                \bs O & \bs O \\
                \bs O & \bs S
            \end{bmatrix},\bs M\right\rangle\begin{bmatrix}
            \delta_t^2\bs S & \delta_t q_t\bs S \\
            \delta_t q_t\bs S & q_t^2\bs S
            \end{bmatrix},
        \end{aligned}
        \end{equation}
        where $\odot$ denotes Kronecker product, and $\stackrel{a}{\preceq}$ holds for Assumption~\ref{assumption:fourth-moment} and property of Kronecker product, which is, for any PSD matrices $\bs A$, $\bs B\preceq\bs C$, we have $\bs A\odot\bs B\preceq\bs A\odot\bs C$.
    \end{enumerate}
\end{proof}

\subsubsection{Parameter Choice}\label{sec:param-choice}
We select the parameters of ASGD using the following procedure:
\begin{enumerate}
    \item Select a positive integer $\tilde{\kappa}<d$;
    \item Select auxiliary step sizes $\delta'$ and $\gamma'$ as
    \begin{equation}
        \delta'\leq\frac{1}{\psi\tr\bs S}, \quad\gamma'\in\left[\delta',\frac{1}{\psi\sum_{i>\tilde{\kappa}}{\lambda_i}}\right];
    \end{equation}
    \item Select the total number of ASGD iterations $n$ and momentum parameters $\alpha$ and $\beta$ as
    \begin{equation}
        \beta=\frac{\delta'}{4376\psi\tilde{\kappa}\gamma'\ln n}, \quad\alpha=\frac{1}{1+\beta}, \quad\frac{n\left[1-\alpha(1-\beta)\right]}{\log_2 n\ln n}\geq 16;\label{eq:para-choice-req}
    \end{equation}
    \item Select step sizes $\delta$ and $\gamma$ as
    \begin{equation}
        \delta=\frac{\delta'}{2188\ln n},\quad \gamma=\frac{\gamma'}{2188\ln n}.
    \end{equation}
\end{enumerate}
From the above procedure, we have
\begin{equation}
    \delta\leq\frac{1}{2188\psi\ln n\tr\bs S}, \quad\gamma\in\left[\delta,\frac{1}{2188\psi\ln n\sum_{i>\tilde{\kappa}}{\lambda_i}}\right], \quad\beta=\frac{\delta}{4376\psi\tilde{\kappa}\gamma\ln n}.
\end{equation}

\begin{lemma}\label{lemma:param-prop}
    Recall that $c=\alpha(1-\beta)$, $q=\alpha\delta+(1-\alpha)\gamma$ and $K=n/\log_2 n$. We have the following properties of the parameter choice.
    \begin{enumerate}
        \item We have
        \begin{equation}
            \dfrac{q-\delta}{1-c}=\dfrac{\gamma-\delta}{2}, \quad\dfrac{q-c\delta}{1-c}=\dfrac{\gamma+\delta}{2}.
        \end{equation}
        \item For $i\in [d]$, we have
        \begin{equation}
            \delta\lambda_i\leq\dfrac{1}{2188\psi\ln n}\leq1,\quad q\lambda_i\leq 1+c.
        \end{equation}
    \end{enumerate}
\end{lemma}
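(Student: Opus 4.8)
The plan is to reduce everything to elementary scalar algebra in the parameters $\alpha,\beta,\delta,\gamma$, using only the definitions $c=\alpha(1-\beta)$, $q=\alpha\delta+(1-\alpha)\gamma$, $\alpha=1/(1+\beta)$, together with the inequalities $\delta\le\gamma$, $\psi\ge1$, $\tilde\kappa\ge1$, $n\ge16$, and the identity $\beta=\delta/(4376\psi\tilde\kappa\gamma\ln n)$ from the parameter choice in Section~\ref{sec:param-choice}.

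For part (1), the first step is to rewrite the common denominator: since $\alpha=1/(1+\beta)$ we have $1-\alpha=\beta/(1+\beta)$, hence $1-c=1-\alpha(1-\beta)=1-(1-\beta)/(1+\beta)=2\beta/(1+\beta)$. Next I would expand the two numerators. For the first, $q-\delta=(\alpha-1)\delta+(1-\alpha)\gamma=(1-\alpha)(\gamma-\delta)=\tfrac{\beta}{1+\beta}(\gamma-\delta)$. For the second, $q-c\delta=\alpha\delta(1-(1-\beta))+(1-\alpha)\gamma=\alpha\beta\delta+(1-\alpha)\gamma=\tfrac{\beta}{1+\beta}(\delta+\gamma)$, where the last equality uses $\alpha\beta=\beta/(1+\beta)=1-\alpha$. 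Dividing each numerator by $1-c=2\beta/(1+\beta)$ cancels the common factor $\beta/(1+\beta)$ and yields $\tfrac{q-\delta}{1-c}=\tfrac{\gamma-\delta}{2}$ and $\tfrac{q-c\delta}{1-c}=\tfrac{\gamma+\delta}{2}$.

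For part (2), the bound on $\delta\lambda_i$ is immediate: $\tr\bs S=\sum_j\lambda_j\ge\lambda_i$ (all eigenvalues are nonnegative), so $\delta\le 1/(2188\psi\ln n\,\tr\bs S)$ gives $\delta\lambda_i\le\lambda_i/(2188\psi\ln n\,\tr\bs S)\le 1/(2188\psi\ln n)$, and since $\psi\ge1$ and $\ln n>1$ we get $2188\psi\ln n>1$, so $1/(2188\psi\ln n)\le1$. For $q\lambda_i\le1+c$ the key observation is that $q$ is essentially $\delta$: write $q=\alpha\delta+(1-\alpha)\gamma\le\delta+(1-\alpha)\gamma$, and since $1-\alpha=\beta/(1+\beta)\le\beta$ and $\beta\gamma=\delta/(4376\psi\tilde\kappa\ln n)\le\delta$ (using $4376\psi\tilde\kappa\ln n\ge1$), we get $q\le2\delta$, hence $q\lambda_i\le2\delta\lambda_i\le 2/(2188\psi\ln n)<1$. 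Finally $c=\alpha(1-\beta)>0$ because $\alpha>0$ and $\beta=\delta/(4376\psi\tilde\kappa\gamma\ln n)\le1/(4376\psi\tilde\kappa\ln n)<1$ (again using $\delta\le\gamma$), so $1+c>1>q\lambda_i$.

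There is essentially no obstacle here; the only step requiring a moment's care is the bound $q\lambda_i\le 1+c$, where one must avoid trying to control $\gamma\lambda_i$ directly — the parameter choice only bounds $\gamma$ against $\sum_{j>\tilde\kappa}\lambda_j$, which says nothing about small $i$ — and instead exploit the convex-combination structure of $q$ and the definition of $\beta$ to absorb the $\gamma$-contribution into an $O(\delta)$ term. Everything else is bookkeeping with $\alpha=1/(1+\beta)$.
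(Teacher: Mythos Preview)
Your proposal is correct and follows essentially the same approach as the paper. The only cosmetic differences are that the paper writes $1-c=2(1-\alpha)$ directly and obtains the second identity in part~(1) via $\tfrac{q-c\delta}{1-c}=\tfrac{q-\delta}{1-c}+\delta$, and for $q\lambda_i\le 1+c$ it bounds $q\le(1+c)\delta$ (using $1-\alpha=\alpha\beta$ and $2\alpha=1+c$) rather than your $q\le 2\delta$; both rely on the same inequality $\beta\gamma\le\delta$.
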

\begin{proof}
    \begin{enumerate}
        \item Note that $1-c=2(1-\alpha)$. Thus, we have
        \begin{equation}
            \frac{q-\delta}{1-c}=\frac{(1-\alpha)(\gamma-\delta)}{1-c}=\frac{\gamma-\delta}{2}, \quad\frac{q-c\delta}{1-c}=\frac{q-\delta}{1-c}+\delta=\frac{\gamma+\delta}{2}.
        \end{equation}
        \item Since $\lambda_i\leq\tr\bs S$, we have
        \begin{equation}
            \delta\lambda_i\leq\dfrac{\lambda_i}{2188\psi\ln n\tr\bs S}\leq\dfrac{1}{2188\psi\ln n}\leq 1.
        \end{equation}
        Note that $1-\alpha=\alpha\beta$ and $2\alpha=1+c$ , we have
        \begin{equation}
            q=\alpha\delta+(1-\alpha)\gamma=\alpha\delta+\alpha\beta\gamma\leq2\alpha\delta=(1+c)\delta.
        \end{equation}
        Therefore, $q\lambda_i\leq(1+c)\delta\lambda_i\leq 1+c$.
    \end{enumerate}
\end{proof}

\subsection{Proof Outline}\label{sec:asgd-proof}
We express the recursions of $\bbE\left[\bs\eta_t^\text{bias}\left(\bs\eta_t^\text{bias}\right)^\top\right]$ and $\bbE\left[\bs\eta_t^\text{var}\left(\bs\eta_t^\text{var}\right)^\top\right]$ using the operators:
\begin{equation}
    \bbE\left[\bs\eta_t^\text{bias}\left(\bs\eta_t^\text{bias}\right)^\top\right]=\cB_t\circ\bbE\left[\bs\eta_{t-1}^\text{bias}\left(\bs\eta_{t-1}^\text{bias}\right)^\top\right], \quad \bbE\left[\bs\eta_0^\text{bias}\left(\bs\eta_0^\text{bias}\right)^\top\right]=\bs\eta_0\bs\eta_0^\top\label{eq:bias-recursion};
\end{equation}
\begin{equation}
    \bbE\left[\bs\eta_t^\text{var}\left(\bs\eta_t^\text{var}\right)^\top\right]=\cB_t\circ\bbE\left[\bs\eta_{t-1}^\text{var}\left(\bs\eta_{t-1}^\text{var}\right)^\top\right]+\bbE\left[\bs\zeta_t\bs\zeta_t^\top\right], \quad \bbE\left[\bs\eta_0^\text{var}\left(\bs\eta_0^\text{var}\right)^\top\right]=\bs O\label{eq:var-recursion}.
\end{equation}
Then we construct two recursions similar to the above update rule:
\begin{equation}
    \bs B_t=\cB\circ\bs B_{t-1}, \quad\bs B_0=\bs\eta_0\bs\eta_0^\top,
\end{equation}
\begin{equation}
    \bs C_t=\cB_t\circ\bs C_{t-1}+\sigma^2\begin{bmatrix}
        \delta_t^2\bs S & \delta_t q_t\bs S \\
        \delta_t q_t\bs S & q_t^2\bs S
        \end{bmatrix}, \quad\bs C_0=\bs O.
\end{equation}
The following lemma characterizes $\bbE\left[\bs\eta_t^\text{bias}\left(\bs\eta_t^\text{bias}\right)^\top\right]$ and $\bbE\left[\bs\eta_t^\text{var}\left(\bs\eta_t^\text{var}\right)^\top\right]$ by $\bs B_t$ and $\bs C_t$, respectively.
\begin{lemma}\label{lemma:recursion}
    For $0\leq t\leq n$, $\bbE\left[\bs\eta_t^\text{bias}\left(\bs\eta_t^\text{bias}\right)^\top\right]=\bs B_t$. Furthermore, under Assumption~\ref{assumption:noise}, we have $\bbE\left[\bs\eta_t^\text{var}\left(\bs\eta_t^\text{var}\right)^\top\right]\preceq\bs C_t$.
\end{lemma}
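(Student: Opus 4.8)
The plan is to prove both claims by induction on $t$, taking the recursions \eqref{eq:bias-recursion} and \eqref{eq:var-recursion} as the starting point (these themselves follow by conditioning on the $\sigma$-algebra of the first $t-1$ samples, using the independence of the fresh sample $(\bs x_t,y_t)$ from that $\sigma$-algebra, together with the conditional-mean-zero property of the stochastic-gradient noise $\bs\zeta_t$, which in turn rests on the first-order optimality $\bbE[\epsilon_t\bs x_t]=\bs0$ of $\bs w^*$). The two ingredients I need about the operator $\cB_t$ are that it is linear and that it is monotone for the Loewner order, i.e.\ $\bs M_1\preceq\bs M_2$ implies $\cB_t\circ\bs M_1\preceq\cB_t\circ\bs M_2$; this holds because $\cB_t\circ\bs M=\bbE[\widehat{\bs A}_t\bs M\widehat{\bs A}_t^{\top}]$ is an average of congruence maps $\bs M\mapsto\widehat{\bs A}_t\bs M\widehat{\bs A}_t^{\top}$, each of which preserves both positive semidefiniteness and the order $\preceq$.

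For the bias part, \eqref{eq:bias-recursion} states $\bbE[\bs\eta_t^{\mathrm{bias}}(\bs\eta_t^{\mathrm{bias}})^{\top}]=\cB_t\circ\bbE[\bs\eta_{t-1}^{\mathrm{bias}}(\bs\eta_{t-1}^{\mathrm{bias}})^{\top}]$, which has exactly the same form as the defining recursion for $\bs B_t$. Since the initial conditions agree, $\bbE[\bs\eta_0^{\mathrm{bias}}(\bs\eta_0^{\mathrm{bias}})^{\top}]=\bs\eta_0\bs\eta_0^{\top}=\bs B_0$ (here $\bs\eta_0$ is deterministic), a one-line induction gives $\bbE[\bs\eta_t^{\mathrm{bias}}(\bs\eta_t^{\mathrm{bias}})^{\top}]=\bs B_t$ for all $0\le t\le n$.

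For the variance part I would first bound the noise covariance. Since $\bs\zeta_t=[\delta_t\epsilon_t\bs x_t^{\top},\ q_t\epsilon_t\bs x_t^{\top}]^{\top}$, the matrix $\bbE[\bs\zeta_t\bs\zeta_t^{\top}]$ is the Kronecker product of the rank-one PSD matrix $\bigl[\begin{smallmatrix}\delta_t^2 & \delta_tq_t\\ \delta_tq_t & q_t^2\end{smallmatrix}\bigr]$ with $\bbE[\epsilon_t^2\bs x_t\bs x_t^{\top}]$; Assumption~\ref{assumption:noise} gives $\bbE[\epsilon_t^2\bs x_t\bs x_t^{\top}]\preceq\sigma^2\bs S$, and Kronecker multiplication by a fixed PSD matrix preserves $\preceq$ (the same fact used in Lemma~\ref{lemma:op-prop}), so
\[
\bbE\left[\bs\zeta_t\bs\zeta_t^{\top}\right]\preceq\sigma^2\begin{bmatrix}\delta_t^2\bs S & \delta_tq_t\bs S\\ \delta_tq_t\bs S & q_t^2\bs S\end{bmatrix}.
\]
Then I would induct: the base case is $\bbE[\bs\eta_0^{\mathrm{var}}(\bs\eta_0^{\mathrm{var}})^{\top}]=\bs O=\bs C_0$; assuming $\bbE[\bs\eta_{t-1}^{\mathrm{var}}(\bs\eta_{t-1}^{\mathrm{var}})^{\top}]\preceq\bs C_{t-1}$, apply \eqref{eq:var-recursion}, then the Loewner-monotonicity of $\cB_t$ to pass to $\cB_t\circ\bs C_{t-1}$, and finally the displayed estimate on $\bbE[\bs\zeta_t\bs\zeta_t^{\top}]$, yielding $\bbE[\bs\eta_t^{\mathrm{var}}(\bs\eta_t^{\mathrm{var}})^{\top}]\preceq\cB_t\circ\bs C_{t-1}+\sigma^2\bigl[\begin{smallmatrix}\delta_t^2\bs S & \delta_tq_t\bs S\\ \delta_tq_t\bs S & q_t^2\bs S\end{smallmatrix}\bigr]=\bs C_t$.

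The computation is essentially matrix bookkeeping; the only substantive points are (i) that the cross terms in the variance recursion \eqref{eq:var-recursion} vanish, which uses the conditional-mean-zero property of $\bs\zeta_t$, and (ii) the Loewner-monotonicity of $\cB_t$, which is what allows the single estimate $\bbE[\bs\zeta_t\bs\zeta_t^{\top}]\preceq\sigma^2[\cdots]$ to be propagated through all $n$ iterations without accumulating slack. I expect (ii) to be the key observation to record carefully, since it is precisely what makes the deterministic upper-bound recursion $\bs C_t$ dominate the exact variance recursion at every step.
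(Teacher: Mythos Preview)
Your proposal is correct and follows essentially the same approach as the paper: induction on $t$, with the bias case being immediate since the two recursions coincide, and the variance case using the Loewner monotonicity of $\cB_t$ together with the Kronecker-product bound $\bbE[\bs\zeta_t\bs\zeta_t^\top]\preceq\sigma^2\bigl[\begin{smallmatrix}\delta_t^2\bs S & \delta_tq_t\bs S\\ \delta_tq_t\bs S & q_t^2\bs S\end{smallmatrix}\bigr]$ coming from Assumption~\ref{assumption:noise}. The only difference is that you make the monotonicity of $\cB_t$ explicit (it is an average of congruences), whereas the paper invokes it silently in the line $\cB\circ\bbE[\bs\eta_{t-1}^{\text{var}}(\bs\eta_{t-1}^{\text{var}})^\top]\preceq\cB\circ\bs C_{t-1}$.
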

\begin{proof}    
    From \eqref{eq:bias-recursion}, the recursion of $\bs B_t$ is identical to the recursion of $\bbE\left[\bs\eta_t^\text{bias}\left(\bs\eta_t^\text{bias}\right)^\top\right]$. This proves the first part of the lemma. For the second part, from \eqref{eq:var-recursion}, we know the conclusion holds for $t=0$. We assume that $\bbE\left[\bs\eta_{t-1}^\text{var}\left(\bs\eta_{t-1}^\text{var}\right)^\top\right]\preceq\bs C_{t-1}$, then
    \begin{equation}
    \begin{aligned}
        \bbE\left[\bs\eta_t^\text{var}\left(\bs\eta_t^\text{var}\right)^\top\right]=&\cB\circ\bbE\left[\bs\eta_{t-1}^\text{var}\left(\bs\eta_{t-1}^\text{var}\right)^\top\right]+\bbE\left[\bs\zeta_t\bs\zeta_t^\top\right] \\
        \preceq&\cB\circ\bs C_{t-1}+\bbE\left[\bs\zeta_t\bs\zeta_t^\top\right] \\
        \stackrel{a}{\preceq}&\cB\circ\bs C_{t-1}+\sigma^2\begin{bmatrix}
        \delta_t^2\bs S & \delta_t q_t\bs S \\
        \delta_t q_t\bs S & q_t^2\bs S
        \end{bmatrix} \\
        =&\bs C_t,
    \end{aligned}
    \end{equation}
    where $\stackrel{a}{\preceq}$ holds because Assumption~\ref{assumption:noise} implies $\bbE\left[\epsilon_t^2\bs x_t\bs x_t^\top\right]\preceq\sigma^2\bs S$, and
    \begin{equation}
    \begin{aligned}        
        \bbE\left[\bs\zeta_t\bs\zeta_t^\top\right]=&\bbE\left[\begin{bmatrix}
            \delta_t^2\epsilon_t^2\bs x_t\bs x_t^\top & \delta_t q_t\epsilon_t^2\bs x_t\bs x_t^\top \\
            \delta_t q_t\epsilon_t^2\bs x_t\bs x_t^\top & q_t^2\epsilon_t^2\bs x_t\bs x_t^\top
        \end{bmatrix}\right]=\begin{bmatrix}
            \delta_t^2 & \delta_t q_t \\
            \delta_t q_t & q_t^2
        \end{bmatrix}\odot\bbE\left[\epsilon_t^2\bs x_t\bs x_t^\top\right] \\
        \stackrel{a}{\preceq}&\begin{bmatrix}
            \delta_t^2 & \delta_t q_t \\
            \delta_t q_t & q_t^2
        \end{bmatrix}\odot\sigma^2\bs S=\sigma^2\begin{bmatrix}
        \delta_t^2\bs S & \delta_t q_t\bs S \\
        \delta_t q_t\bs S & q_t^2\bs S
        \end{bmatrix},
    \end{aligned}
    \end{equation}
    where $\odot$ denotes Kronecker product, and $\stackrel{a}{\preceq}$ holds because for any PSD matrices $\bs A$, $\bs B\preceq\bs C$, we have $\bs A\odot\bs B\preceq\bs A\odot\bs C$.
\end{proof}

With Lemma~\ref{lemma:recursion}, we have $\bbE\left[\bs\eta_n^\text{bias}\left(\bs\eta_n^\text{bias}\right)^\top\right]=\bs B_n$ and $\bbE\left[\bs\eta_n^\text{var}\left(\bs\eta_n^\text{var}\right)^\top\right]\preceq\bs C_n$. Thus,
\begin{equation}
    \mathrm{Bias}\leq\left\langle\tilde{\bs T},\bs B_n\right\rangle, \quad\mathrm{Variance}\leq\left\langle\tilde{\bs T},\bs C_n\right\rangle,
\end{equation}
where $\tilde{\bs T}=\begin{bmatrix}
    \bs T & \bs O \\
    \bs O & \bs O
\end{bmatrix}$.

The main technical challenge to directly bound $\bs B_n$ and $\bs C_n$ originates from the effect of the fourth moment (\textit{i.e.} $\cB\neq\tilde\cB$), which prevents us from analyzing $\bs B_t$ in each eigenspace of $\bs S$. Our proof defines the semi-stochasitc iteration $\tilde{\bs\eta}_t^\text{bias}$ and $\tilde{\bs\eta}_t^\text{var}$ following~\cite{Dieuleveut2016NonparametricStochasticApproximation}. We analyzes two new recursions $\tilde{\bs B}_t$ and $\tilde{\bs C}_t$ induced by $\tilde{\bs\eta}_t^\text{bias}$ and $\tilde{\bs\eta}_t^\text{var}$. For the variance component, we establish a uniform bound on $\tilde{\bs C}_t$ to show that the effect of the fourth moment is actually ``self-governed". Specifically, the fourth moment amplifies the excess risk up to a constant. For the bias component, $\bs B_t$ is decomposed into $\tilde{\bs B}_t$ and a new term $\bs B_t^{(1)}$ which resembles $\bs C_t$. The bound of $\bs B_t^{(1)}$ is established by applying the bound of 
$\bs C_t$.

\subsubsection{Variance Upper Bound}
We start with the construction of $\tilde{\bs\eta}_t$ by replacing $\widehat{\bs A}_t$ by $\bs A_t$:
\begin{equation}
    \tilde{\bs\eta}_t^\mathrm{var}=\bs A_t\tilde{\bs\eta}_{t-1}^\mathrm{var}+\bs\zeta_t,\quad\tilde{\bs\eta}_0^\mathrm{var}=\bs 0.
\end{equation}
From this definition, we have $\bbE\left[\tilde{\bs\eta}_0^\text{var}\left(\tilde{\bs\eta}_0^\text{var}\right)^\top\right]=\bs O$ and
\begin{equation}
\begin{aligned}
    \bbE\left[\tilde{\bs\eta}_t^\text{var}\left(\tilde{\bs\eta}_t^\text{var}\right)^\top\right]=\tilde{\cB}_t\circ\bbE\left[\tilde{\bs\eta}_{t-1}^\text{var}\left(\tilde{\bs\eta}_{t-1}^\text{var}\right)^\top\right]+\bbE\left[\bs\zeta_t\bs\zeta_t^\top\right] \\
    \preceq\tilde{\cB}_t\circ\bbE\left[\tilde{\bs\eta}_{t-1}^\text{var}\left(\tilde{\bs\eta}_{t-1}^\text{var}\right)^\top\right]+\sigma^2\begin{bmatrix}
        \delta_t^2\bs S & \delta_t q_t\bs S \\
        \delta_t q_t\bs S & q_t^2\bs S
        \end{bmatrix}.
\end{aligned}
\end{equation}
Therefore, we define $\tilde{\bs C}_t$ as
\begin{equation}
    \tilde{\bs C}_t=\tilde\cB_t\circ\tilde{\bs C}_{t-1}+\sigma^2\begin{bmatrix}
        \delta_t^2\bs S & \delta_t q_t\bs S \\
        \delta_t q_t\bs S & q_t^2\bs S
        \end{bmatrix}, \quad\tilde{\bs C}_0=\bs O.\label{eq:def-tilde-var}
\end{equation}
By induction, we have $\bbE\left[\tilde{\bs\eta}_t^\text{var}\left(\tilde{\bs\eta}_t^\text{var}\right)^\top\right]\preceq\tilde{\bs C}_t$.

The following lemma characterizes $\left\langle\tilde{\bs T},\tilde{\bs C}_n\right\rangle$, which is the first step of our proof.
\begin{lemma}\label{lemma:tilde-var-ub}
    We have
    \begin{equation}
        \left\langle\tilde{\bs T},\tilde{\bs C}_n\right\rangle\leq\sigma^2\left[\sum_{i=1}^{k^*}{\frac{t_{ii}}{2K\lambda_i}}+\frac{128}{15}K\left(\frac{q-c\delta}{1-c}\right)^2\sum_{i=k^*+1}^{d}\lambda_i t_{ii}\right].
    \end{equation}
\end{lemma}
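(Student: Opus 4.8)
The plan is to exploit that, unlike the true recursion for $\bbE[\bs\eta_t^\mathrm{var}(\bs\eta_t^\mathrm{var})^\top]$, the semi-stochastic recursion \eqref{eq:def-tilde-var} is driven by the \emph{deterministic} operator $\tilde\cB_t=\bs A_t\otimes\bs A_t$, and each $\bs A_t$ is a $2\times2$ block matrix whose blocks are affine in $\bs S$. Writing $\bs S=\diag\{\lambda_i\}$ in the standard basis, the noise term $\sigma^2\left(\begin{smallmatrix}\delta_t^2\bs S&\delta_tq_t\bs S\\\delta_tq_t\bs S&q_t^2\bs S\end{smallmatrix}\right)$ is block-diagonal in $\bs S$ and $\bs A_t(\cdot)\bs A_t^\top$ preserves this, so an induction on $t$ shows each of the four $d\times d$ blocks of $\tilde{\bs C}_t$ is diagonal; hence $\tilde{\bs C}_t$ splits into $2\times2$ blocks $\bs C_t^{(i)}$ obeying
\[
  \bs C_t^{(i)}=\bs A_t^{(i)}\bs C_{t-1}^{(i)}\bigl(\bs A_t^{(i)}\bigr)^\top+\sigma^2\lambda_i\,\bs d_t\bs d_t^\top,\qquad \bs C_0^{(i)}=\bs O,
\]
with $\bs A_t^{(i)}=\left(\begin{smallmatrix}0 & 1-\delta_t\lambda_i\\ -c & (1+c)-q_t\lambda_i\end{smallmatrix}\right)$ and $\bs d_t=(\delta_t,q_t)^\top$. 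Since $\tilde{\bs T}$ carries $\bs T$ only in its top-left block and $\tilde{\bs C}_n$ is block-diagonal in $\bs S$, only the diagonal entries of $\bs T$ contribute and $\langle\tilde{\bs T},\tilde{\bs C}_n\rangle=\sum_{i=1}^d t_{ii}\,[\bs C_n^{(i)}]_{11}$; it thus suffices to bound the scalar $[\bs C_n^{(i)}]_{11}$ for each $i$, the ranges $i\le k^*$ and $i>k^*$ producing the two summands in the claim.

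Next I would unroll the per-coordinate recursion to
\[
  [\bs C_n^{(i)}]_{11}=\sigma^2\lambda_i\sum_{t=1}^n\bigl(\bs e_1^\top\bs P_{t,n}^{(i)}\bs d_t\bigr)^2,\qquad \bs P_{t,n}^{(i)}:=\bs A_n^{(i)}\bs A_{n-1}^{(i)}\cdots\bs A_{t+1}^{(i)},
\]
where $\bs e_1=(1,0)^\top$ and the empty product is $\bs I$. Because the step sizes are piecewise constant, $\delta_t=\delta_{(\ell)}=4^{-(\ell-1)}\delta$ and $q_t=q_{(\ell)}=4^{-(\ell-1)}q$ on phase $\ell$, each $\bs P_{t,n}^{(i)}$ factors into a partial power of the single-phase matrix of step $t$'s phase times full powers of the later phases' matrices, so everything reduces to controlling $(\bs A_{(\ell)}^{(i)})^k\bs d_{(\ell)}$. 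These estimates are exactly the content of the momentum-matrix analysis in Section~\ref{sec:prop-momentum-matrix}: since $\bs A_{(\ell)}^{(i)}$ is in general non-normal (complex or coincident eigenvalues), one cannot use $\|(\bs A_{(\ell)}^{(i)})^k\|\le\rho^k$, so that section supplies a geometric bound in the spectral radius with a controlled non-normality factor together with the summed estimates that $\sum_{k\ge0}\bigl(\bs e_1^\top(\bs A_{(\ell)}^{(i)})^k\bs d_{(\ell)}\bigr)^2$ is controlled by $\delta_{(\ell)}/\bigl((1-c)\lambda_i\bigr)$ when $\lambda_i$ is large relative to the phase and by a constant times $K\|\bs d_{(\ell)}\|^2$ otherwise. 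I would also invoke Lemma~\ref{lemma:param-prop} to substitute $(q-c\delta)/(1-c)=(\gamma+\delta)/2$, to use $q\le\gamma$ (hence $\|\bs d_{(\ell)}\|^2=\delta_{(\ell)}^2+q_{(\ell)}^2\le 8\cdot16^{-(\ell-1)}\bigl((q-c\delta)/(1-c)\bigr)^2$), and to use $\delta\lambda_i\le1$, $q\lambda_i\le1+c$.

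Then I would split on $i$. For $i\le k^*$, i.e. $(\gamma+\delta)K\lambda_i>32\ln n$, the first phase already contracts direction $i$ by a factor $n^{-\Theta(1)}$, so the initial condition and all noise injected before the last phase $\ell_i$ in which direction $i$ still contracts by a constant factor per phase are negligible; by the large-$\lambda_i$ summed estimate the phase-$\ell_i$ noise contributes $\lesssim\sigma^2\delta_{(\ell_i)}/(1-c)$, and the non-mixing tail phases $\ell>\ell_i$ add only $\sigma^2\lambda_i K\sum_{\ell>\ell_i}\delta_{(\ell)}^2\le\sigma^2\lambda_i K\delta_{(\ell_i)}^2/15$; since maximality of $\ell_i$ forces $\delta_{(\ell_i)}\lambda_i K$ to be of constant order, tracking the explicit constants of Section~\ref{sec:param-choice} collapses both pieces to $[\bs C_n^{(i)}]_{11}\le\sigma^2/(2K\lambda_i)$. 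For $i>k^*$ the per-phase step-size contraction is too weak to forget the injected noise, so within phase $\ell$ I bound the sum of the $K$ injected terms crudely by a constant times $K\|\bs d_{(\ell)}\|^2$ (via the uniform propagation bound of Section~\ref{sec:prop-momentum-matrix}); summing the geometric series $\sum_{\ell\ge1}16^{-(\ell-1)}=16/15$ and using $\|\bs d_{(\ell)}\|^2\le8\cdot16^{-(\ell-1)}\bigl((q-c\delta)/(1-c)\bigr)^2$ gives $[\bs C_n^{(i)}]_{11}\le\tfrac{128}{15}\sigma^2K\bigl((q-c\delta)/(1-c)\bigr)^2\lambda_i$. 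Substituting both bounds into $\sum_i t_{ii}[\bs C_n^{(i)}]_{11}$ yields the lemma. The main obstacle is the middle step: establishing the non-normal $2\times2$ momentum-matrix estimates with sharp enough constants and pinning down the critical phase $\ell_i$, so that the phase-wise geometric sums collapse to precisely the prefactors $\tfrac12$ and $\tfrac{128}{15}$; this is the purpose of the dedicated Section~\ref{sec:prop-momentum-matrix}, whose estimates I would simply quote while carrying out the phase bookkeeping above.
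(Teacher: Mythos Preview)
Your overall strategy coincides with the paper's: block-diagonalize along the $\bs S$-eigenbasis, unroll the per-coordinate recursion to $\sigma^2\lambda_i\sum_t(\bs e_1^\top\bs P_{t,n}^{(i)}\bs d_t)^2$, split at $k^*$, and for $i>k^*$ sum the phases geometrically via the propagation estimate on $\bs P_{t,n}^{(i)}\bs d_t$ (the paper does exactly this using Lemma~\ref{lemma:tf-prod-A-noise-bound}, giving the $128/15$).

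The gap is in your treatment of $i\le k^*$. In the ASGD regime $\gamma\gg\delta$, your two key claims are false: the within-phase sum $\sum_{k\ge0}(\bs e_1^\top(\bs A_{(\ell)}^{(i)})^k\bs d_{(\ell)})^2$ is \emph{not} $O(\delta_{(\ell)}/((1-c)\lambda_i))$ but is governed by a quantity that also involves $(q_{(\ell)}-\delta_{(\ell)})/(1-c)$; and the maximality of $\ell_i^*$ pins down $(\gamma+\delta)K\lambda_i/4^{\ell_i^*-1}$ to be of order $\ln n$, so $\delta_{(\ell_i^*)}\lambda_i K$ is \emph{not} of constant order when $\gamma\gg\delta$. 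The paper closes this by introducing the stationary Lyapunov matrices $\tilde{\bs Q}_{(\ell),i}$ and $\bs U_{(\ell),i}$ (eqs.~\eqref{eq:def-Q}--\eqref{eq:def-U}) and computing $(\bs U_{(\ell),i})_{22}$ in closed form (Lemmas~\ref{lemma:calc-S}--\ref{lemma:U-S-prop}): the noise injected in any contracting phase $m$ is bounded by $\tilde{\bs Q}_{(m),i}\preceq\tfrac{8}{3}(\bs U_{(m),i})_{22}\bs I$, then propagated through the remaining contracting phases via Lemma~\ref{lemma:2-norm-bound} and through the non-contracting tail via Lemma~\ref{lemma:tf-prod-A-bound}; the tail noise is handled by Lemma~\ref{lemma:tf-prod-A-noise-bound} and related back to $(\bs U_{(\ell_i^*+1),i})_{22}$. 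The final constant $1/2$ comes from the parameter-choice constraint $\sum_j(\bs U_{(1),j})_{22}\lambda_j\le1/(1094\ln n)$, which absorbs the stray $\ln n$ produced by the tail. Your per-coordinate route would still work, since each $(\bs U_{(1),i})_{22}\lambda_i$ satisfies the same bound individually, but you need the explicit Lyapunov calculation of Section~\ref{sec:asgd-variance-ub} rather than a bound phrased in terms of $\delta_{(\ell)}$ alone.
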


The second step is to understand the effect of the fourth moment on the variance component. We first construct an auxiliary recursion $\bs C_t^{(1)}$ as
\begin{equation}
    \bs C_t^{(1)}=\cB_t\circ\bs C_{t-1}^{(1)}+\bbE\left[\widehat{\bs G}_t\otimes\widehat{\bs G}_t\right]\circ\tilde{\bs C}_{t-1}, \quad\bs C_0^{(1)}=\bs O.
\end{equation}
The following lemma bounds $\bs C_t$ from above.
\begin{lemma}\label{lemma:var-ub-abs}
    For $0\leq t\leq n$, we have $\bs C_t\preceq\tilde{\bs C}_t+\bs C_t^{(1)}$.
\end{lemma}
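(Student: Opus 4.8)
The plan is a routine induction on $t$, resting on two ingredients: the monotonicity of the matrix operators appearing in the recursions, and the operator inequality $\cB_t\preceq\tilde{\cB}_t+\bbE\left[\widehat{\bs G}_t\otimes\widehat{\bs G}_t\right]$ furnished by Lemma~\ref{lemma:op-prop}(1).

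First I would record two preliminary facts. (i) Each of $\cB_t$, $\tilde{\cB}_t$ and $\bbE\left[\widehat{\bs G}_t\otimes\widehat{\bs G}_t\right]$ is monotone with respect to the PSD order: for a fixed realization the operator $\widehat{\bs A}_t\otimes\widehat{\bs A}_t$ acts on a symmetric matrix $\bs M$ by $\bs M\mapsto\widehat{\bs A}_t\bs M\widehat{\bs A}_t^\top$, which preserves $\preceq$, and taking expectations preserves it again; the same argument applies to $\tilde{\cB}_t$ and to $\bbE\left[\widehat{\bs G}_t\otimes\widehat{\bs G}_t\right]$. (ii) All three iterates $\bs C_t$, $\tilde{\bs C}_t$, $\bs C_t^{(1)}$ are PSD. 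This follows by a parallel induction: they all start at $\bs O$, and each step applies a monotone operator to a PSD matrix and adds a PSD matrix, since $\sigma^2\begin{bmatrix}\delta_t^2\bs S & \delta_t q_t\bs S\\\delta_t q_t\bs S & q_t^2\bs S\end{bmatrix}=\sigma^2\begin{bmatrix}\delta_t^2 & \delta_t q_t\\\delta_t q_t & q_t^2\end{bmatrix}\odot\bs S\succeq\bs O$ and $\bbE\left[\widehat{\bs G}_t\otimes\widehat{\bs G}_t\right]\circ\tilde{\bs C}_{t-1}\succeq\bs O$ by (i). Having (ii) in hand legitimizes applying Lemma~\ref{lemma:op-prop}(1), which is stated for PSD inputs, at every step of the main induction.

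The main induction is then immediate. The base case $t=0$ is $\bs C_0=\bs O=\tilde{\bs C}_0+\bs C_0^{(1)}$. For the inductive step, assume $\bs C_{t-1}\preceq\tilde{\bs C}_{t-1}+\bs C_{t-1}^{(1)}$. Monotonicity of $\cB_t$ gives $\cB_t\circ\bs C_{t-1}\preceq\cB_t\circ\tilde{\bs C}_{t-1}+\cB_t\circ\bs C_{t-1}^{(1)}$, and Lemma~\ref{lemma:op-prop}(1) applied to the PSD matrix $\tilde{\bs C}_{t-1}$ gives $\cB_t\circ\tilde{\bs C}_{t-1}\preceq\tilde{\cB}_t\circ\tilde{\bs C}_{t-1}+\bbE\left[\widehat{\bs G}_t\otimes\widehat{\bs G}_t\right]\circ\tilde{\bs C}_{t-1}$. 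Adding the common forcing term $\sigma^2\begin{bmatrix}\delta_t^2\bs S & \delta_t q_t\bs S\\\delta_t q_t\bs S & q_t^2\bs S\end{bmatrix}$ to both sides of the recursion for $\bs C_t$ and regrouping, the pieces $\tilde{\cB}_t\circ\tilde{\bs C}_{t-1}$ together with the forcing term reconstitute exactly $\tilde{\bs C}_t$, while $\cB_t\circ\bs C_{t-1}^{(1)}$ together with $\bbE\left[\widehat{\bs G}_t\otimes\widehat{\bs G}_t\right]\circ\tilde{\bs C}_{t-1}$ reconstitute exactly $\bs C_t^{(1)}$, yielding $\bs C_t\preceq\tilde{\bs C}_t+\bs C_t^{(1)}$.

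There is no genuine difficulty here — it is bookkeeping — but the one point deserving care is ensuring that the operator inequality of Lemma~\ref{lemma:op-prop}(1) is invoked only on honestly PSD arguments, which is precisely why the auxiliary PSD-ness induction (ii) should be spelled out; the final regrouping must likewise be carried out cleanly so that no term is double-counted between $\tilde{\bs C}_t$ and $\bs C_t^{(1)}$.
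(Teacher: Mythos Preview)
Your proposal is correct and follows essentially the same induction as the paper's proof: apply $\cB_t$ to the inductive hypothesis, split $\cB_t\circ\tilde{\bs C}_{t-1}$ via Lemma~\ref{lemma:op-prop}(1), and regroup into $\tilde{\bs C}_t$ and $\bs C_t^{(1)}$. Your explicit verification of monotonicity and of the PSD-ness of the iterates is more careful than the paper, which tacitly uses both; this extra care is welcome but does not change the argument.
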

\begin{proof}
    We prove the conclusion by induction. By definition, we have $\bs C_0=\tilde{\bs C}_0=\bs C_t^{(1)}=\bs O$. Therefore, the conclusion holds for $t=0$. We assume $\bs C_{t-1}\preceq\tilde{\bs C}_{t-1}+\bs C_{t-1}^{(1)}$. Note that
    \begin{equation}
    \begin{aligned}        
        \bs C_t=&\cB_t\circ\bs C_{t-1}+\sigma^2\begin{bmatrix}
        \delta_t^2\bs S & \delta_t q_t\bs S \\
        \delta_t q_t\bs S & q_t^2\bs S
        \end{bmatrix} \\
        \preceq&\cB_t\circ\left(\tilde{\bs C}_{t-1}+\bs C_{t-1}^{(1)}\right)+\sigma^2\begin{bmatrix}
        \delta_t^2\bs S & \delta_t q_t\bs S \\
        \delta_t q_t\bs S & q_t^2\bs S
        \end{bmatrix} \\
        =&\cB_t\circ\tilde{\bs C}_{t-1}+\cB_t\circ\bs C_{t-1}^{(1)}+\sigma^2\begin{bmatrix}
        \delta_t^2\bs S & \delta_t q_t\bs S \\
        \delta_t q_t\bs S & q_t^2\bs S
        \end{bmatrix} \\
        \stackrel{a}{\preceq}&\tilde{\cB}_t\circ\tilde{\bs C}_{t-1}+\bbE\left[\widehat{\bs G}_t\otimes\widehat{\bs G}_t\right]\circ\tilde{\bs C}_{t-1}+\cB_t\circ\bs C_{t-1}^{(1)}+\sigma^2\begin{bmatrix}
        \delta_t^2\bs S & \delta_t q_t\bs S \\
        \delta_t q_t\bs S & q_t^2\bs S
        \end{bmatrix} \\
        =&\tilde{\cB}_t\circ\tilde{\bs C}_{t-1}+\sigma^2\begin{bmatrix}
        \delta_t^2\bs S & \delta_t q_t\bs S \\
        \delta_t q_t\bs S & q_t^2\bs S
        \end{bmatrix}+\cB_t\circ\bs C_{t-1}^{(1)}+\bbE\left[\widehat{\bs G}_t\otimes\widehat{\bs G}_t\right]\circ\tilde{\bs C}_{t-1} \\
        =&\tilde{\bs C}_t+\bs C_{t-1}^{(1)},
    \end{aligned}
    \end{equation}
    where $\stackrel{a}{\preceq}$ uses $\cB_t\preceq\tilde{\cB}_t+\bbE\left[\widehat{\bs G}_t\otimes\widehat{\bs G}_t\right]$ in Lemma~\ref{lemma:op-prop}.
\end{proof}
The following lemma characterizes the noise term $\bbE\left[\widehat{\bs G}_t\otimes\widehat{\bs G}_t\right]\circ\tilde{\bs C}_{t-1}$.
\begin{lemma}\label{lemma:tilde-var-uniform-ub}
    Suppose Assumption~\ref{assumption:fourth-moment} holds. Then for $1\leq t\leq n$ we have
    \begin{equation}
        \bbE\left[\widehat{\bs G}_t\otimes\widehat{\bs G}_t\right]\circ\tilde{\bs C}_{t-1}\preceq\frac{1}{2}\sigma^2\begin{bmatrix}
        \delta_t^2\bs S & \delta_t q_t\bs S \\
        \delta_t q_t\bs S & q_t^2\bs S
        \end{bmatrix}.
    \end{equation}
\end{lemma}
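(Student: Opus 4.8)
The plan is to strip off the fourth-moment operator with Lemma~\ref{lemma:op-prop}(2) and then establish a uniform-in-$t$ bound on a weighted trace of the idealized variance iterate $\tilde{\bs C}_{t-1}$. Note first that $\tilde{\bs C}_{t-1}$ is PSD: unrolling \eqref{eq:def-tilde-var} writes it as a sum of terms $\bigl(\bs A_{t-1}\cdots\bs A_{s+1}\bigr)\bigl(\sigma^2(\delta_s,q_s)^\top(\delta_s,q_s)\otimes\bs S\bigr)\bigl(\bs A_{t-1}\cdots\bs A_{s+1}\bigr)^\top$, each of which is PSD. Hence Lemma~\ref{lemma:op-prop}(2) applies with $\bs M=\tilde{\bs C}_{t-1}$ and gives
\[
\bbE\bigl[\widehat{\bs G}_t\otimes\widehat{\bs G}_t\bigr]\circ\tilde{\bs C}_{t-1}\preceq\psi\left\langle\begin{bmatrix}\bs O&\bs O\\\bs O&\bs S\end{bmatrix},\tilde{\bs C}_{t-1}\right\rangle\begin{bmatrix}\delta_t^2\bs S&\delta_t q_t\bs S\\\delta_t q_t\bs S&q_t^2\bs S\end{bmatrix},
\]
so since the trailing matrix is PSD it suffices to prove the scalar inequality $\psi\bigl\langle\begin{bmatrix}\bs O&\bs O\\\bs O&\bs S\end{bmatrix},\tilde{\bs C}_{t-1}\bigr\rangle\le\tfrac12\sigma^2$ for every $1\le t\le n$; equivalently $\psi\,\tr\bigl(\bs S\,(\tilde{\bs C}_{t-1})_{22}\bigr)\le\tfrac12\sigma^2$, where $(\tilde{\bs C}_{t-1})_{22}$ denotes the lower-right $d\times d$ block. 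The case $t=1$ is trivial because $\tilde{\bs C}_0=\bs O$.

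For $t\ge 2$ I would exploit the eigenstructure of $\bs S$. Since $\tilde\cB_j\circ\bs M=\bs A_j\bs M\bs A_j^\top$ with $\bs A_j=\bs V-\bs G_j$, and both $\bs V$ and $\bs G_j$ respect the $2\times 2$ block pattern induced by the eigenbasis of $\bs S$ (with $\bs G_j$ only bringing in the $\lambda_i$'s), the whole $2d$-dimensional recursion decouples into $d$ independent two-dimensional ones: in direction $i$ the transition matrix is $\bs A_j^{(i)}=\begin{bmatrix}0&1-\delta_j\lambda_i\\-c&1+c-q_j\lambda_i\end{bmatrix}$ and the driving covariance is $\sigma^2\lambda_i\,(\delta_j,q_j)^\top(\delta_j,q_j)$. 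Writing $\bs\Gamma^{(i)}_{s,m}=\bs A^{(i)}_m\cdots\bs A^{(i)}_{s+1}$ (empty product $\bs I$), unrolling \eqref{eq:def-tilde-var} and contracting against $\begin{bmatrix}\bs O&\bs O\\\bs O&\bs S\end{bmatrix}$ yields
\[
\left\langle\begin{bmatrix}\bs O&\bs O\\\bs O&\bs S\end{bmatrix},\tilde{\bs C}_{t-1}\right\rangle=\sigma^2\sum_{i=1}^{d}\lambda_i^2\sum_{s=1}^{t-1}\Bigl(\bigl[\,\bs\Gamma^{(i)}_{s,t-1}(\delta_s,q_s)^\top\bigr]_2\Bigr)^2 ,
\]
with $[\,\cdot\,]_2$ the second coordinate, so the goal becomes showing this double sum is at most $1/(2\psi)$.

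The third step is the per-direction estimate, which invokes the momentum-matrix bounds of Section~\ref{sec:prop-momentum-matrix}. By Lemma~\ref{lemma:param-prop}(1) the matrix $\bs A_j^{(i)}$ has spectral radius essentially $1-\tfrac{\gamma_j+\delta_j}{2}\lambda_i$; the step sizes are constant ($\delta/4^{\ell-1}$, $q/4^{\ell-1}$) on each phase $\ell$ of length $K$, so the inner sum over $s$ within a phase is a geometric series controlled by this radius and the cross-phase contributions shrink like $4^{-\ell}$. I would split the directions at the tuning index $\tilde\kappa$: for $i\le\tilde\kappa$ the per-step contraction is strong, the $i$-th $2\times2$ system behaves like SGD, its stationary noise level is of order $\delta_\ell\sigma^2$, and summing $\lambda_i^2\cdot(\text{level})/\sigma^2$ against $\sum_{i\le\tilde\kappa}\lambda_i\le\tr\bs S$ and $\delta\le 1/(2188\psi\ln n\,\tr\bs S)$ keeps this block below a small multiple of $\sigma^2/(\psi\ln n)$; for $i>\tilde\kappa$ the system is weakly damped, the relevant sum is governed by $\tfrac{\gamma}{1-c}\sum_{i>\tilde\kappa}\lambda_i$, and the choices $\gamma'\le 1/(\psi\sum_{i>\tilde\kappa}\lambda_i)$ together with $\beta=\delta/(4376\psi\tilde\kappa\gamma\ln n)$ are calibrated precisely to absorb it. Inserting the remaining parameter bounds of Section~\ref{sec:param-choice} drives the double sum below $1/(2\psi)$, as required.

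The hard part is this third step. The momentum matrix $\bs A_j^{(i)}$ is non-normal, and its discriminant vanishes at critical damping $q_j\lambda_i=1-c$, where its two eigenvalues coalesce to $c$ and $\|(\bs A_j^{(i)})^k\|$ picks up a factor of order $k$; since $c\approx 1-2\beta$ is close to $1$, a careless bound would blow up by powers of $\beta^{-1}$. The saving fact is that the noise direction $(\delta_j,q_j)$ is, to leading order, exactly the (non-generalized) eigenvector $(1,1)^\top$, so this $k$-factor does not appear in the leading term; the small component $q_j-\delta_j=\alpha\beta\gamma_j$ along the generalized eigenvector produces the $\beta$-dependent sub-leading contribution that the parameter choices are designed to dominate. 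Making this precise is exactly the content of the momentum-matrix lemmas in Section~\ref{sec:prop-momentum-matrix}; granting those, the remainder is careful but routine constant-tracking against Section~\ref{sec:param-choice}.
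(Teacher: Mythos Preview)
Your first two steps---applying Lemma~\ref{lemma:op-prop}(2) to reduce to the scalar bound $\psi\bigl\langle\bigl[\begin{smallmatrix}\bs O&\bs O\\\bs O&\bs S\end{smallmatrix}\bigr],\tilde{\bs C}_{t-1}\bigr\rangle\le\tfrac12\sigma^2$ and then decoupling into $d$ independent two-dimensional systems indexed by the eigenvalues of $\bs S$---exactly match the paper's opening moves. The divergence is in step~3. The paper does \emph{not} split the directions at $\tilde\kappa$; instead, for each $i$ it splits the sum over $s$ temporally, according to whether the current stage $\ell$ lies below or above $\ell_i^*=\max\{\ell:\lambda_i/4^{\ell-1}>\tfrac{16(1-c)\ln n}{(q-c\delta)K}\}$. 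The early-stage partial sums are controlled not by a bare geometric series but through the stationary-covariance matrices $\bs Q_{(\ell),i}$ and their companions $\bs U_{(\ell),i}$ (Lemmas~\ref{lemma:calc-S} and~\ref{lemma:U-S-prop}), together with the two-consecutive-stages bound of Lemma~\ref{lemma:tilde-var-noise-bound-two-stages}; the late-stage tail is handled via Lemma~\ref{lemma:tf-prod-A-noise-bound}. Everything then collapses to the single scalar inequality $\sum_i(\bs U_{(1),i})_{22}\lambda_i\le\tfrac{1}{1094\ln n}$, and it is only in verifying \emph{that} sum that the $\tilde\kappa$-split and the step-size choices of Section~\ref{sec:param-choice} finally enter. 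So your $\tilde\kappa$-based intuition does appear in the paper, but one layer deeper than you place it; the momentum-matrix lemmas of Section~\ref{sec:prop-momentum-matrix} alone do not close the argument---the stationary-state machinery $(\bs Q_{(\ell),i},\bs U_{(\ell),i})$ and the two-stage lemma are the missing tools you would need to make ``careful but routine constant-tracking'' actually routine.
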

Lemma~\ref{lemma:tilde-var-uniform-ub} shows that the noise term in the recursion of $\bs C_t^{(1)}$ is uniformly less than that of $\bs C_t$, Therefore, we can show that $\bs C_t^{(1)}\preceq\frac{1}{2}\bs C_t$ for $0\leq t\leq n$, which is the following lemma.
\begin{lemma}\label{lemma:var-sg}
    Suppose Assumption~\ref{assumption:fourth-moment} holds. Then for $1\leq t\leq n$ we have
    \begin{equation}
        \bs C_t^{(1)}\preceq\frac{1}{2}\bs C_t.
    \end{equation}
\end{lemma}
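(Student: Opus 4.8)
The plan is a short induction on $t$, resting on two ingredients: the monotonicity of the operator $\cB_t$ with respect to the PSD order, and Lemma~\ref{lemma:tilde-var-uniform-ub}, which has already absorbed the genuine difficulty. First I would record that $\cB_t$ is monotone: since $\cB_t=\bbE[\widehat{\bs A}_t\otimes\widehat{\bs A}_t]$ acts on a matrix by $\cB_t\circ\bs M=\bbE[\widehat{\bs A}_t\bs M\widehat{\bs A}_t^\top]$, for any symmetric $\bs M\preceq\bs M'$ one has $\widehat{\bs A}_t(\bs M'-\bs M)\widehat{\bs A}_t^\top\succeq\bs O$ for every realization, and taking expectation gives $\cB_t\circ\bs M\preceq\cB_t\circ\bs M'$.

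Then I would run the induction. For the base case $t=0$ we have $\bs C_0^{(1)}=\bs C_0=\bs O$, so the claim holds. For the inductive step, assume $\bs C_{t-1}^{(1)}\preceq\tfrac12\bs C_{t-1}$. Expanding the recursion for $\bs C_t^{(1)}$, applying monotonicity of $\cB_t$ together with the inductive hypothesis to the term $\cB_t\circ\bs C_{t-1}^{(1)}$, and applying Lemma~\ref{lemma:tilde-var-uniform-ub} to the term $\bbE[\widehat{\bs G}_t\otimes\widehat{\bs G}_t]\circ\tilde{\bs C}_{t-1}$, I obtain
\[
\bs C_t^{(1)}\preceq\tfrac12\,\cB_t\circ\bs C_{t-1}+\tfrac12\,\sigma^2\begin{bmatrix}\delta_t^2\bs S & \delta_t q_t\bs S \\ \delta_t q_t\bs S & q_t^2\bs S\end{bmatrix}=\tfrac12\,\bs C_t,
\]
where the final equality is exactly the definition of $\bs C_t$. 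This closes the induction for all $0\le t\le n$, hence in particular for $1\le t\le n$.

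The hard part is not in this lemma at all — it has been front-loaded into Lemma~\ref{lemma:tilde-var-uniform-ub}, whose content is that the fourth-moment ``feedback'' term generated by the semi-stochastic covariance $\tilde{\bs C}_{t-1}$ is dominated, by a factor $1/2$, by the fresh noise injection $\sigma^2\begin{bmatrix}\delta_t^2\bs S & \delta_t q_t\bs S \\ \delta_t q_t\bs S & q_t^2\bs S\end{bmatrix}$ appearing in the recursion for $\bs C_t$. Granted that uniform comparison, the two coupled recursions are ``self-governing'' and the factor-$1/2$ bound propagates mechanically; the only point needing care here is the monotonicity of $\cB_t$, which is immediate from the conjugation form of its action.
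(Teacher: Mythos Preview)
Your proof is correct and follows essentially the same induction as the paper's own argument: base case $\bs C_0^{(1)}=\bs C_0=\bs O$, then apply Lemma~\ref{lemma:tilde-var-uniform-ub} to the fourth-moment term and the inductive hypothesis (via monotonicity of $\cB_t$) to the other term. Your explicit remark that $\cB_t$ preserves the PSD order is a welcome addition that the paper leaves implicit.
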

\begin{proof}
    We proceed by induction. For $t=0$, the conclusion holds by the initial value of $\bs C_t$ and $\bs C_t^{(1)}$. We assume that $\bs C_{t-1}^{(1)}\preceq\frac{1}{2}\bs C_{t-1}$. By Lemma~\ref{lemma:tilde-var-uniform-ub}, we have
    \begin{equation}
    \begin{aligned}
        \bs C_t^{(1)}=&\cB\circ\bs C_{t-1}^{(1)}+\bbE\left[\widehat{\bs G}_t\otimes\widehat{\bs G}_t\right]\circ\tilde{\bs C}_{t-1} \\        
        \preceq&\cB\circ\bs C_{t-1}^{(1)}+\frac{1}{2}\sigma^2\begin{bmatrix}
        \delta_t^2\bs S & \delta_t q_t\bs S \\
        \delta_t q_t\bs S & q_t^2\bs S
        \end{bmatrix} \\        
        \preceq&\cB\circ\left(\frac{1}{2}\bs C_{t-1}\right)+\frac{1}{2}\sigma^2\begin{bmatrix}
        \delta_t^2\bs S & \delta_t q_t\bs S \\
        \delta_t q_t\bs S & q_t^2\bs S
        \end{bmatrix} \\
        =&\frac{1}{2}\left(\bs C_{t-1}+\sigma^2\begin{bmatrix}
        \delta_t^2\bs S & \delta_t q_t\bs S \\
        \delta_t q_t\bs S & q_t^2\bs S
        \end{bmatrix}\right)=\frac{1}{2}\bs C_t.
    \end{aligned}
    \end{equation}
    This completes the proof.
\end{proof}
Finally, we show that $\bs C_t$ is ``self-governed" and obtain the upper bound of variance.
\begin{lemma}\label{lemma:var-ub}
    Suppose Assumptions~\ref{assumption:fourth-moment} and~\ref{assumption:noise} hold. Then we have $\bs C_n\preceq2\tilde{\bs C}_n$ and
    \begin{equation}
    \begin{aligned}
        \mathrm{Variance}\leq\sigma^2\left[\sum_{i=1}^{k^*}{\frac{t_{ii}}{K\lambda_i}}+\frac{256}{15}K\left(\frac{q-c\delta}{1-c}\right)^2\sum_{i=k^*+1}^{d}\lambda_i t_{ii}\right],
    \end{aligned}
    \end{equation}
    where $k^*=\max\left\{k:\lambda_k>\frac{16(1-c)\ln n}{(q-c\delta)K}\right\}$.
\end{lemma}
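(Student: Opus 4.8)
The plan is to obtain Lemma~\ref{lemma:var-ub} as an essentially mechanical consequence of the three preceding lemmas, with no new estimate needed.

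First I would establish the operator inequality $\bs C_n\preceq 2\tilde{\bs C}_n$. Chaining Lemma~\ref{lemma:var-ub-abs}, which gives $\bs C_t\preceq\tilde{\bs C}_t+\bs C_t^{(1)}$, with Lemma~\ref{lemma:var-sg}, which gives $\bs C_t^{(1)}\preceq\tfrac12\bs C_t$, yields $\bs C_t\preceq\tilde{\bs C}_t+\tfrac12\bs C_t$ for every $0\le t\le n$. Rearranging this PSD inequality gives $\tfrac12\bs C_t\preceq\tilde{\bs C}_t$, i.e. $\bs C_t\preceq 2\tilde{\bs C}_t$; specializing to $t=n$ is the first claim. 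The one point worth flagging is that the apparent circularity between Lemmas~\ref{lemma:var-ub-abs} and~\ref{lemma:var-sg} is harmless: both are genuine inequalities (the latter proved by its own induction, using the uniform bound of Lemma~\ref{lemma:tilde-var-uniform-ub}), so they can be composed without issue.

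Second I would convert the matrix bound into the scalar risk bound. From the proof outline we already have $\mathrm{Variance}\le\langle\tilde{\bs T},\bs C_n\rangle$ with $\tilde{\bs T}=\begin{bmatrix}\bs T&\bs O\\ \bs O&\bs O\end{bmatrix}\succeq\bs O$; since the trace inner product is monotone against a PSD test matrix, $\bs C_n\preceq 2\tilde{\bs C}_n$ gives $\mathrm{Variance}\le 2\langle\tilde{\bs T},\tilde{\bs C}_n\rangle$. Plugging in Lemma~\ref{lemma:tilde-var-ub}, the factor $2$ turns $\tfrac{t_{ii}}{2K\lambda_i}$ into $\tfrac{t_{ii}}{K\lambda_i}$ and $\tfrac{128}{15}$ into $\tfrac{256}{15}$, which is exactly the stated bound, with $k^*=\max\{k:\lambda_k>\tfrac{16(1-c)\ln n}{(q-c\delta)K}\}$ inherited verbatim from Lemma~\ref{lemma:tilde-var-ub}.

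Since every ingredient is already in hand, there is no genuine obstacle in this lemma itself; the technical weight sits upstream — the self-governing estimate of Lemma~\ref{lemma:tilde-var-uniform-ub} (hence Lemma~\ref{lemma:var-sg}) obtained from the fourth-moment control in Lemma~\ref{lemma:op-prop}, and the eigenspace-wise evaluation of $\langle\tilde{\bs T},\tilde{\bs C}_n\rangle$ in Lemma~\ref{lemma:tilde-var-ub} that tracks the deterministic operator $\tilde\cB_t$ across the geometrically decaying step-size phases and splits at the effective dimension $k^*$. The only care needed here is constant bookkeeping: confirming the factor $2$ propagates correctly and that the $k^*$ threshold is consistent with the one used in Lemma~\ref{lemma:tilde-var-ub}.
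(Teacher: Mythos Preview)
Your proposal is correct and matches the paper's own proof essentially line for line: combine Lemma~\ref{lemma:var-ub-abs} with Lemma~\ref{lemma:var-sg} to get $\bs C_t\preceq\tilde{\bs C}_t+\tfrac12\bs C_t$, rearrange to $\bs C_n\preceq 2\tilde{\bs C}_n$, then pair with $\tilde{\bs T}$ and invoke Lemma~\ref{lemma:tilde-var-ub}, doubling the constants. There is nothing to add.
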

\begin{proof}
    We apply Lemma~\ref{lemma:var-ub-abs} and Lemma~\ref{lemma:var-sg}. For $0\leq t\leq n$,
    \begin{equation}
        \bs C_t\preceq\tilde{\bs C}_t+\bs C_t^{(1)}\preceq\tilde{\bs C}_t+\frac{1}{2}\bs C_t.
    \end{equation}
    Therefore, $\bs C_n\preceq 2\tilde{\bs C}_n$. By Lemma~\ref{lemma:tilde-var-ub}, taking the inner product with $\tilde{\bs T}$ yields
    \begin{equation}
    \begin{aligned}
        \mathrm{Variance}\leq&\left\langle\tilde{\bs T},\bs C_n\right\rangle\leq2\left\langle\tilde{\bs T},\tilde{\bs C}_n\right\rangle \\
        \leq&\sigma^2\left[\sum_{i=1}^{k^*}{\frac{t_{ii}}{K\lambda_i}}+\frac{256}{15}K\left(\frac{q-c\delta}{1-c}\right)^2\sum_{i=k^*+1}^{d}\lambda_i t_{ii}\right].
    \end{aligned}
    \end{equation}
\end{proof}

\subsubsection{Bias Upper Bound}\label{sec:intro-bias-ub}
We follow the similar approach to construct $\tilde{\bs\eta}_t^\text{bias}$:
\begin{equation}
    \tilde{\bs\eta}_t^\mathrm{bias}=\bs A_t\tilde{\bs\eta}_{t-1}^\mathrm{bias},\quad\tilde{\bs\eta}_0^\mathrm{bias}=\bs\eta_0.
\end{equation}
Then we define $\tilde{\bs B}_t=\tilde{\cB}_t\circ\tilde{\bs B}_{t-1}$. Therefore,
\begin{equation}
    \tilde{\bs B}_t=\tilde{\cB}_t\circ\tilde{\bs B}_{t-1}, \quad\tilde{\bs B}_0=\bs\eta_0\bs\eta_0^\top,\label{eq:def-tilde-bias}
\end{equation}

The first step is to characterize $\tilde{\bs B}_t$. The following lemma bound $\left\langle\tilde{\bs T},\tilde{\bs B}_n\right\rangle$ from above.

\begin{lemma}\label{lemma:tilde-bias-ub}
    With $\tilde{\bs B}_t$ defined in \eqref{eq:def-tilde-bias}, we have
    \begin{equation}
        \left\langle\tilde{\bs T},\tilde{\bs B}_n\right\rangle\leq\max_{\bs w\in S(\bs w_0-\bs w^*)}\frac{\left\Vert\bs w\right\Vert_{\bs T_{0:k^*}}^2}{8n^2(\log_2 n)^4}+4\left\Vert\bs w\right\Vert_{\bs T_{k^*:\infty}}^2,
    \end{equation}    
    where $k^*=\max\left\{k:\lambda_k>\frac{16(1-c)\ln n}{(q-c\delta)K}\right\}$ and $S(\bs w_0-\bs w^*)=\left\{\bs w\in\bbR^d:\left\vert\bs w_i\right\vert\leq\left\vert(\bs w_0-\bs w^*)_i\right\vert\right\}$.
\end{lemma}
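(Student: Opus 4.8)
The plan is to exploit that, unlike the genuine bias recursion, the semi-stochastic recursion $\tilde{\bs B}_t=\tilde\cB_t\circ\tilde{\bs B}_{t-1}$ with $\tilde\cB_t=\bs A_t\otimes\bs A_t$ is \emph{deterministic}. Unrolling it gives $\tilde{\bs B}_n=\bs\Phi_n\bs\eta_0\bs\eta_0^\top\bs\Phi_n^\top$, where $\bs\Phi_n=\bs A_n\bs A_{n-1}\cdots\bs A_1$ is the product of the deterministic momentum matrices $\bs A_t=\bs V-\bs G_t$, and $\bs\eta_0=\left[\begin{smallmatrix}\bs w_0-\bs w^*\\ \bs w_0-\bs w^*\end{smallmatrix}\right]$ since $\bs v_0=\bs w_0$ forces $\bs u_0=\bs w_0$. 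Writing $\bs\Phi_n$ in $d\times d$ blocks $\bs\Phi_{jk}$ and using $\tilde{\bs T}=\left[\begin{smallmatrix}\bs T&\bs O\\ \bs O&\bs O\end{smallmatrix}\right]$, a direct computation gives $\langle\tilde{\bs T},\tilde{\bs B}_n\rangle=\bs a^\top\bs T\bs a$ with $\bs a=(\bs\Phi_{11}+\bs\Phi_{12})(\bs w_0-\bs w^*)$.

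Next I would decouple coordinate-wise. Because $\bs S$ is diagonal and every block of $\bs A_t=\bs V-\bs G_t$ is a scalar multiple of $\bs I$ or of $\bs S$, all blocks $\bs\Phi_{jk}$ are simultaneously diagonal; letting $\bs A_t^{(i)}=\left[\begin{smallmatrix}0 & 1-\delta_t\lambda_i\\ -c & 1+c-q_t\lambda_i\end{smallmatrix}\right]$ be the $i$-th $2\times2$ block and $\bs\Phi^{(i)}=\bs A_n^{(i)}\cdots\bs A_1^{(i)}$, one has $a_i=\phi_i\,(\bs w_0-\bs w^*)_i$ with the scalar $\phi_i=(\bs\Phi^{(i)})_{11}+(\bs\Phi^{(i)})_{12}$. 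It thus suffices to bound $|\phi_i|$ in the two regimes separated by $k^*=\max\{k:\lambda_k>\frac{16(1-c)\ln n}{(q-c\delta)K}\}$.

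For the tail directions $i>k^*$ I would show $|\phi_i|\le C_1$ for an absolute constant $C_1$: each $\bs A_t^{(i)}$ is a perturbation of $\bs V^{(i)}$, whose eigenvalues $\{1,c\}$ lie in the closed unit disk, and the spectral-radius based bounds for products of the momentum matrix established in Section~\ref{sec:prop-momentum-matrix}—together with $q\lambda_i\le 1+c$ and $\delta\lambda_i\le1$ from Lemma~\ref{lemma:param-prop}—control $\bs\Phi^{(i)}$, hence $\phi_i$, uniformly in $t\le n$. For the head directions $i\le k^*$ I would show $|\phi_i|\le\frac{C_2}{n(\log_2 n)^2}$: here the geometrically decaying schedule $\delta_t=\delta/4^{\ell-1},\gamma_t=\gamma/4^{\ell-1}$ is essential. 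Using $(q-c\delta)/(1-c)=(\gamma+\delta)/2$ from Lemma~\ref{lemma:param-prop}, the per-epoch contraction of the accelerated dynamics in a head direction is governed by a rate which, when $\lambda_i$ exceeds the threshold, is strong enough that the first epoch alone contracts $\phi_i$ by a factor polynomially small in $n$, while each of the remaining $\log_2 n-1$ epochs (smaller step sizes, still non-resonant) multiplies it by at most a constant; tracking the $\log_2 n$ factors yields the claimed order $n^{-1}(\log_2 n)^{-2}$. Finally I assemble: with $\bs a=\bs a_h+\bs a_t$ the head/tail parts, $\bs a\bs a^\top\preceq 2\bs a_h\bs a_h^\top+2\bs a_t\bs a_t^\top$, hence $\langle\tilde{\bs T},\tilde{\bs B}_n\rangle=\bs a^\top\bs T\bs a\le 2\|\bs a_h\|_{\bs T_{0:k^*}}^2+2\|\bs a_t\|_{\bs T_{k^*:\infty}}^2$; since $|\phi_i|$ is tiny on the head and of order one on the tail, $\bs a_h$ (resp.\ $\bs a_t$) lies in a $\frac{C_2}{n(\log_2 n)^2}$- (resp.\ $C_1$-) dilation of the box $S(\bs w_0-\bs w^*)$, and bounding each quadratic form by its maximum over $S(\bs w_0-\bs w^*)$ yields the stated inequality (with $C_1,C_2$ absorbed into the numerical factors $\tfrac14$ and $4$).

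The main obstacle is the tail estimate: $\bs V^{(i)}$ is non-normal and degenerates to a Jordan block as $c\to1$, so the crude bound $|\phi_i|\le\prod_{t}\|\bs A_t^{(i)}\|$ is useless because it blows up uniformly in $n$; one must instead pass to the eigenbasis of $\bs V^{(i)}$ (or an appropriate $c$-dependent weighted norm) and argue that the accumulated effect of the perturbations $\bs G_t^{(i)}$ over all $n$ steps does not destabilize the product—this is precisely the delicate bookkeeping carried out through the spectral radius of the momentum matrix in Section~\ref{sec:prop-momentum-matrix}, which I would cite here as a black box. A secondary difficulty is making the epoch-by-epoch argument of the head estimate uniform over all $i\le k^*$ and over the $\log_2 n$ epochs without losing more than a constant per epoch, and matching the momentum contraction rate (which scales with the square root of $\lambda_i$ times the step size rather than linearly) to the stated threshold on $\lambda_k$.
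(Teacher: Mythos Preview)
Your overall strategy is exactly the paper's: unroll to a rank-one matrix $\tilde{\bs B}_n=\bs\Phi_n\bs\eta_0\bs\eta_0^\top\bs\Phi_n^\top$, exploit the block-diagonal structure of $\bs A_t$ to reduce to scalar factors $\phi_i$, split at $k^*$, and bound head and tail separately. For the tail you are right that the key is to pass to the eigenbasis of $\bs V$; this is the matrix $\bs P$ of \eqref{eq:def-P}, and because $\bs P^{-1}\left[\begin{smallmatrix}1\\1\end{smallmatrix}\right]=\left[\begin{smallmatrix}1\\0\end{smallmatrix}\right]$ one gets the clean bound $2$ of Lemma~\ref{lemma:tf-prod-A-bias-bound} rather than the generic $\frac{4}{1-c}$.

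The gap is in your head estimate. You propose that after the first epoch's polynomially-small contraction, ``each of the remaining $\log_2 n-1$ epochs \dots\ multiplies it by at most a constant''. Taken literally this fails: a constant $C>1$ per epoch over $\log_2 n$ epochs gives $C^{\log_2 n}=n^{\log_2 C}$, which wipes out the $n^{-1}$ you earned, and you cannot get $C\le1$ epoch-wise in the Euclidean norm---once $\lambda_i/4^{\ell-1}$ falls below the threshold the only crude bound available is $\|\bs A^K(\lambda)\|\le\sqrt 6 K$ (Lemma~\ref{lemma:2-norm-uniform-bound}). The fix is precisely the trick you already identified for the tail but did not carry over: once the effective eigenvalue drops below $\frac{(1-c)^2}{q-c\delta}$ at some epoch $\ell_i^*\ge1$, switch to the $\bs P$-conjugated norm and \emph{stay in it for the entire remaining product}, incurring a single factor $\|\bs P\|\,\|\bs P^{-1}\|=\frac{4}{1-c}$ rather than one per epoch. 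This is Lemma~\ref{lemma:tf-prod-A-bound}; combining it with the per-epoch contraction Lemma~\ref{lemma:2-norm-bound} on the first $\ell_i^*$ epochs and the parameter constraint $K(1-c)\ge16\ln n$ yields the stated $\frac{1}{8n^2(\log_2 n)^4}$.
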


The second step is to bound $\bs B_t$ by $\tilde{\bs B}_t$. Define a new recursion $\bs B_t^{(1)}$ as follows:
\begin{equation}
    \bs B_t^{(1)}=\cB_t\circ\bs B_{t-1}^{(1)}+\bbE\left[\widehat{\bs G}_t\otimes\widehat{\bs G}_t\right]\circ\tilde{\bs B}_{t-1}, \quad\bs B_0^{(1)}=\bs O.\label{eq:def-bias-1}
\end{equation}
The following lemma bounds $\bs B_t$ from above.
\begin{lemma}\label{lemma:bias-ub-abs}
    For $0\leq t\leq n$, we have $\bs B_t\preceq\tilde{\bs B}_t+\bs B_t^{(1)}$.
\end{lemma}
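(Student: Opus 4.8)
The plan is to mirror almost verbatim the proof of Lemma~\ref{lemma:var-ub-abs}: run an induction on $t$ and propagate the operator inequality $\cB_t\preceq\tilde{\cB}_t+\bbE\left[\widehat{\bs G}_t\otimes\widehat{\bs G}_t\right]$ from Lemma~\ref{lemma:op-prop}(1) through the recursions. Before the induction I would record two bookkeeping facts. First, $\cB_t=\bbE\left[\widehat{\bs A}_t\otimes\widehat{\bs A}_t\right]$ and $\tilde{\cB}_t=\bs A_t\otimes\bs A_t$ are \emph{positive} operators: each realization acts as $\bs M\mapsto\widehat{\bs A}_t\bs M\widehat{\bs A}_t^\top$, which is monotone with respect to the Loewner order, and taking expectation preserves monotonicity; the same holds for $\tilde{\cB}_t$. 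Second, $\tilde{\bs B}_t\succeq\bs O$ and $\bs B_t^{(1)}\succeq\bs O$ for all $t$: this follows by an immediate induction from $\tilde{\bs B}_0=\bs\eta_0\bs\eta_0^\top\succeq\bs O$, $\bs B_0^{(1)}=\bs O$, the positivity of $\cB_t,\tilde{\cB}_t$, and the fact that $\bbE\left[\widehat{\bs G}_t\otimes\widehat{\bs G}_t\right]\circ\tilde{\bs B}_{t-1}\succeq\bs O$ (the operator $\bbE\left[\widehat{\bs G}_t\otimes\widehat{\bs G}_t\right]$ is positive for the same reason).

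Then I carry out the induction on $t$. The base case $t=0$ is $\bs B_0=\bs\eta_0\bs\eta_0^\top=\tilde{\bs B}_0+\bs O=\tilde{\bs B}_0+\bs B_0^{(1)}$. For the inductive step, assume $\bs B_{t-1}\preceq\tilde{\bs B}_{t-1}+\bs B_{t-1}^{(1)}$. Applying the monotone operator $\cB_t$ gives
\[
\bs B_t=\cB_t\circ\bs B_{t-1}\preceq\cB_t\circ\tilde{\bs B}_{t-1}+\cB_t\circ\bs B_{t-1}^{(1)}.
\]
Since $\tilde{\bs B}_{t-1}\succeq\bs O$, Lemma~\ref{lemma:op-prop}(1) applies to it, so
\[
\cB_t\circ\tilde{\bs B}_{t-1}\preceq\tilde{\cB}_t\circ\tilde{\bs B}_{t-1}+\bbE\left[\widehat{\bs G}_t\otimes\widehat{\bs G}_t\right]\circ\tilde{\bs B}_{t-1}=\tilde{\bs B}_t+\bbE\left[\widehat{\bs G}_t\otimes\widehat{\bs G}_t\right]\circ\tilde{\bs B}_{t-1}.
\]
Collecting terms,
\[
\bs B_t\preceq\tilde{\bs B}_t+\Bigl(\cB_t\circ\bs B_{t-1}^{(1)}+\bbE\left[\widehat{\bs G}_t\otimes\widehat{\bs G}_t\right]\circ\tilde{\bs B}_{t-1}\Bigr)=\tilde{\bs B}_t+\bs B_t^{(1)},
\]
using the defining recursion \eqref{eq:def-bias-1} for $\bs B_t^{(1)}$. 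This closes the induction.

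I do not expect a serious obstacle: the statement is the bias-side analogue of Lemma~\ref{lemma:var-ub-abs} and the argument is identical once the two structural facts (positivity/monotonicity of $\cB_t,\tilde{\cB}_t$ and PSD-ness of $\tilde{\bs B}_{t-1},\bs B_{t-1}^{(1)}$) are in place — these are what license pushing the inductive hypothesis through $\cB_t$ and applying Lemma~\ref{lemma:op-prop} to $\tilde{\bs B}_{t-1}$. The genuinely substantive analysis — the explicit bound on $\left\langle\tilde{\bs T},\tilde{\bs B}_n\right\rangle$ in Lemma~\ref{lemma:tilde-bias-ub} and the control of the self-governed term $\bs B_n^{(1)}$ via the variance bounds — is handled in the following lemmas and is not needed for this decomposition step.
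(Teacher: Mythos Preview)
Your proof is correct and follows essentially the same induction as the paper's own proof, with the added care of explicitly justifying the monotonicity of $\cB_t$ and the PSD-ness of $\tilde{\bs B}_{t-1}$ needed to apply Lemma~\ref{lemma:op-prop}(1). The paper's argument is identical in structure but leaves these positivity observations implicit.
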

\begin{proof}
    We prove the conclusion by induction. By definition, we have $\bs B_0=\tilde{\bs B}_0=\bs\eta_0\bs\eta_0^\top$ and $\bs B_t^{(1)}=\bs O$. Therefore, the conclusion holds for $t=0$. We assume $\bs B_{t-1}\preceq\tilde{\bs B}_{t-1}+\bs B_{t-1}^{(1)}$. Note that
    \begin{equation}
    \begin{aligned}        
        \bs B_t=&\cB_t\circ\bs B_{t-1}\preceq\cB_t\circ\left(\tilde{\bs B}_{t-1}+\bs B_{t-1}^{(1)}\right)\\
        \stackrel{a}{\preceq}&\tilde{\cB}_t\circ\tilde{\bs B}_{t-1}+\bbE\left[\widehat{\bs G}_t\otimes\widehat{\bs G}_t\right]\circ\tilde{\bs B}_{t-1}+\cB_t\circ\bs B_{t-1}^{(1)} \\
        =&\tilde{\bs B}_t+\bs B_t^{(1)},
    \end{aligned}
    \end{equation}
    where $\stackrel{a}{\preceq}$ uses that $\cB_t\preceq\tilde{\cB}_t+\bbE\left[\widehat{\bs G}_t\otimes\widehat{\bs G}_t\right]$ in Lemma~\ref{lemma:op-prop}.
\end{proof}

The following step parallels Appendix~\ref{sec:intro-bias-ub}, if we replace $\bs C_t$ with $\bs B_t^{(1)}$. We include detailed proofs for completeness.

\begin{lemma}\label{lemma:bias-1-ub}
    Suppose Assumptions~\ref{assumption:fourth-moment} and~\ref{assumption:noise} hold. With $\bs B_t^{(1)}$ defined in \eqref{eq:def-bias-1}, we have
    \begin{equation}
        \left\langle\tilde{\bs T},\bs B_n^{(1)}\right\rangle\leq\left\Vert\bs w_0-\bs w^*\right\Vert_{\bs S}^2\cdot\left[\sum_{i=1}^{k^*}{\frac{2t_{ii}}{K\lambda_i}}+\frac{512}{15}K\left(\frac{q-c\delta}{1-c}\right)^2\sum_{i=k^*+1}^{d}\lambda_i t_{ii}\right].
    \end{equation}
\end{lemma}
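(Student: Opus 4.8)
The plan is to recognize that the recursion \eqref{eq:def-bias-1} for $\bs B_t^{(1)}$ has exactly the same form as the recursion defining $\bs C_t$ in the variance analysis, with the stochastic noise injection $\sigma^2\bs N_t$ (writing $\bs N_t=\begin{bmatrix}\delta_t^2\bs S&\delta_tq_t\bs S\\\delta_tq_t\bs S&q_t^2\bs S\end{bmatrix}$) replaced by $\bbE[\widehat{\bs G}_t\otimes\widehat{\bs G}_t]\circ\tilde{\bs B}_{t-1}$. So the whole estimate should follow by controlling this replacement noise and then reusing the variance bound. By Lemma~\ref{lemma:op-prop}(2), $\bbE[\widehat{\bs G}_t\otimes\widehat{\bs G}_t]\circ\tilde{\bs B}_{t-1}\preceq\psi\,\langle\bs P,\tilde{\bs B}_{t-1}\rangle\,\bs N_t$ with $\bs P=\begin{bmatrix}\bs O&\bs O\\\bs O&\bs S\end{bmatrix}$, so the task collapses to a uniform-in-$t$ bound on the scalar $\langle\bs P,\tilde{\bs B}_{t-1}\rangle$, which is the source-domain population excess risk of the purely deterministic iterate $\tilde{\bs\eta}_{t-1}^{\mathrm{bias}}=\bs A_{t-1}\cdots\bs A_1\bs\eta_0$ (the ``$\bs u$''-block).

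The key step is to show $\langle\bs P,\tilde{\bs B}_t\rangle\le C_0\,\|\bs w_0-\bs w^*\|_{\bs S}^2$ for all $0\le t\le n$, with $C_0$ an absolute constant. Since $\bs A_s=\bs V-\bs G_s$ is block-diagonal in the eigenbasis of $\bs S$, $\tilde{\bs\eta}_t^{\mathrm{bias}}$ decouples into $d$ two-dimensional trajectories $(w_{i,t},u_{i,t})$ evolving under the momentum matrices $\begin{bmatrix}0&1-\delta_t\lambda_i\\-c&(1+c)-q_t\lambda_i\end{bmatrix}$ from the common initial value $w_{i,0}=u_{i,0}=(\bs w_0-\bs w^*)_i$, and $\langle\bs P,\tilde{\bs B}_t\rangle=\sum_i\lambda_i u_{i,t}^2$. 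It therefore suffices to show $|u_{i,t}|\le\sqrt{C_0}\,|(\bs w_0-\bs w^*)_i|$ for every $i$ and every $t\le n$; this is where the momentum-matrix spectral-radius estimates of Appendix~\ref{sec:prop-momentum-matrix}, together with Lemma~\ref{lemma:param-prop} for the chosen step sizes, are invoked: the momentum matrix has spectral radius at most $1$, and its product over each geometrically-decaying-stepsize phase contracts, with only a bounded transient overshoot.

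With the uniform bound $\langle\bs P,\tilde{\bs B}_{t-1}\rangle\le C_0\|\bs w_0-\bs w^*\|_{\bs S}^2$ in hand, the noise injection of $\bs B_t^{(1)}$ is at most $\psi C_0\|\bs w_0-\bs w^*\|_{\bs S}^2\,\bs N_t$, i.e.\ a constant multiple of $\tfrac{\|\bs w_0-\bs w^*\|_{\bs S}^2}{\sigma^2}$ times the noise injection of $\bs C_t$; by monotonicity of the common recursion operator $\cB_t$ this gives $\bs B_n^{(1)}\preceq\tfrac{\psi C_0\|\bs w_0-\bs w^*\|_{\bs S}^2}{\sigma^2}\bs C_n$ by a one-line induction. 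Pairing with $\tilde{\bs T}$ and invoking the variance bound of Lemma~\ref{lemma:var-ub} (which already handled the fourth-moment ``self-governing'' via Lemmas~\ref{lemma:var-ub-abs} and~\ref{lemma:var-sg}, giving $\bs C_n\preceq2\tilde{\bs C}_n$) together with Lemma~\ref{lemma:tilde-var-ub} then yields the claimed estimate, the factor-$4$ inflation over the constants of Lemma~\ref{lemma:tilde-var-ub} coming from the self-governing factor $2$ and the overshoot constant. Alternatively, one mirrors the chain Lemmas~\ref{lemma:var-ub-abs}–\ref{lemma:var-sg} directly for $\bs B_t^{(1)}$, introducing its deterministic-operator version and a second correction term and using that the chosen step sizes are small enough for the fourth-moment correction to be self-governed; the uniform bound above is then the input that plays the role of $\sigma^2$.

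The hardest part is the uniform control of $|u_{i,t}|$ along the whole accelerated trajectory: because ASGD is not a descent method, the deterministic source risk can transiently grow, and one must bound this overshoot by an absolute constant uniformly over all eigendirections $\lambda_i$ — in particular those near the critical scale $\lambda_i\approx\frac{\ln n}{(\gamma+\delta)K}$, where the momentum matrix has complex eigenvalues of modulus close to $1$ — and over all stepsize phases. This is precisely the content of the momentum-matrix lemmas of Appendix~\ref{sec:prop-momentum-matrix}, which I would invoke rather than reprove.
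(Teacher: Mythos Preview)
Your proposal is correct and matches the paper's approach: the paper likewise uses Lemma~\ref{lemma:op-prop}(2) together with the uniform coordinate-wise overshoot bound $\big|(\bs A^k(\lambda)[1,1]^\top)_2\big|\le 2$ (Lemmas~\ref{lemma:prod-A-bias-bound}, \ref{lemma:tf-prod-A-bias-bound}, packaged as Lemma~\ref{lemma:tilde-bias-uniform-ub}) to show $\langle\bs P,\tilde{\bs B}_{t-1}\rangle\le 4\|\bs w_0-\bs w^*\|_{\bs S}^2$, then observes that the resulting recursion for $\bs B_t^{(1)}$ coincides with that of $\bs C_t$ with $\sigma^2$ replaced by a multiple of $\|\bs w_0-\bs w^*\|_{\bs S}^2$, and finishes via the variance bound. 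Your identification of the uniform overshoot control as the crux, and your reduction to the momentum-matrix lemmas of Appendix~\ref{sec:prop-momentum-matrix}, are exactly what the paper does.
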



Finally, we bound $\left\langle\tilde{\bs T},\bs B_n\right\rangle$ and obtain the upper bound of bias.
\begin{lemma}\label{lemma:bias-ub}
    Suppose Assumptions~\ref{assumption:fourth-moment} and~\ref{assumption:noise} hold. Then we have
    \begin{equation}
    \begin{aligned}
        \mathrm{Bias}\leq&\left\Vert\bs w_0-\bs w^*\right\Vert_{\bs S}^2\cdot\left[\sum_{i=1}^{k^*}{\frac{2t_{ii}}{K\lambda_i}}+\frac{512}{15}K\left(\frac{q-c\delta}{1-c}\right)^2\sum_{i=k^*+1}^{d}\lambda_i t_{ii}\right] \\
        &+\max_{\bs w\in S(\bs w_0-\bs w^*)}\frac{\left\Vert\bs w\right\Vert_{\bs T_{0:k^*}}^2}{8n^2(\log_2 n)^4}+4\left\Vert\bs w\right\Vert_{\bs T_{k^*:\infty}}^2,
    \end{aligned}
    \end{equation}
    where $k^*=\max\left\{k:\lambda_k>\frac{16(1-c)\ln n}{(q-c\delta)K}\right\}$ and $S(\bs w_0-\bs w^*)=\left\{\bs w\in\bbR^d:\left\vert\bs w_i\right\vert\leq\left\vert(\bs w_0-\bs w^*)_i\right\vert\right\}$.
\end{lemma}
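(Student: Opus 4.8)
The plan is to obtain Lemma~\ref{lemma:bias-ub} by combining the three ingredients already assembled in this section; once those are in place the argument is pure bookkeeping. First I would recall from the bias--variance decomposition \eqref{eq:bias-variance-decomposition} together with Lemma~\ref{lemma:recursion} that $\mathrm{Bias}\le\langle\tilde{\bs T},\bs B_n\rangle$, where $\tilde{\bs T}\succeq\bs O$ is the block matrix with $\bs T$ in the top-left corner and zeros elsewhere. Then Lemma~\ref{lemma:bias-ub-abs} gives $\bs B_n\preceq\tilde{\bs B}_n+\bs B_n^{(1)}$; since the Frobenius pairing against a PSD matrix is monotone in the Loewner order, this yields $\mathrm{Bias}\le\langle\tilde{\bs T},\tilde{\bs B}_n\rangle+\langle\tilde{\bs T},\bs B_n^{(1)}\rangle$.

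It then remains only to substitute the two bounds. For the first term I would invoke Lemma~\ref{lemma:tilde-bias-ub}, which produces exactly the ``effective bias'' contribution $\max_{\bs w\in S(\bs w_0-\bs w^*)}\tfrac{\|\bs w\|_{\bs T_{0:k^*}}^2}{8n^2(\log_2 n)^4}+4\|\bs w\|_{\bs T_{k^*:\infty}}^2$. For the second I would invoke Lemma~\ref{lemma:bias-1-ub}, which produces the term $\|\bs w_0-\bs w^*\|_{\bs S}^2\bigl[\sum_{i=1}^{k^*}\tfrac{2t_{ii}}{K\lambda_i}+\tfrac{512}{15}K\bigl(\tfrac{q-c\delta}{1-c}\bigr)^2\sum_{i=k^*+1}^{d}\lambda_i t_{ii}\bigr]$. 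Summing the two displays gives precisely the stated bound, which completes the proof; Assumptions~\ref{assumption:fourth-moment} and~\ref{assumption:noise} are inherited solely because Lemma~\ref{lemma:bias-1-ub} uses them.

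Consequently the real work lies not in this lemma but in its two inputs, and that is where I expect the main obstacle. Lemma~\ref{lemma:tilde-bias-ub} requires analyzing the deterministic recursion $\tilde{\bs B}_t=\tilde{\cB}_t\circ\tilde{\bs B}_{t-1}$, which decouples across the eigenspaces of $\bs S$; the effort there is to track per-coordinate powers of the momentum matrix $\bs A_t=\bs V-\bs G_t$ (using the spectral-radius estimates of Section~\ref{sec:prop-momentum-matrix}), split the sum at the effective dimension $k^*$, and exhibit geometric contraction on the head coordinates $i\le k^*$ against essentially no contraction on the tail, with the geometrically decaying step sizes carrying the head terms down to the $n^{-2}(\log_2 n)^{-4}$ level. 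Lemma~\ref{lemma:bias-1-ub} then mirrors the ``self-bounding'' argument already used for the variance (Lemma~\ref{lemma:var-ub-abs} through Lemma~\ref{lemma:var-ub}): using Lemma~\ref{lemma:op-prop} and Assumption~\ref{assumption:fourth-moment} one shows the forcing term $\bbE[\widehat{\bs G}_t\otimes\widehat{\bs G}_t]\circ\tilde{\bs B}_{t-1}$ is dominated, up to a constant, by a noise matrix of the same block shape as the one driving the recursion for $\bs C_t$ but with $\sigma^2$ replaced by $\psi\|\bs w_0-\bs w^*\|_{\bs S}^2$, so that the variance estimate transfers almost verbatim and yields the ``effective variance''-type expression in Lemma~\ref{lemma:bias-1-ub}. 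These two analyses carry all the technical weight; the final lemma is the one-line combination above.
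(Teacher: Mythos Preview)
Your proposal is correct and matches the paper's proof essentially line for line: the paper also writes $\mathrm{Bias}\le\langle\tilde{\bs T},\bs B_n\rangle\le\langle\tilde{\bs T},\tilde{\bs B}_n\rangle+\langle\tilde{\bs T},\bs B_n^{(1)}\rangle$ via Lemma~\ref{lemma:bias-ub-abs} and then plugs in Lemmas~\ref{lemma:tilde-bias-ub} and~\ref{lemma:bias-1-ub}. Your diagnosis that the technical content sits entirely in those two input lemmas, and your sketch of how each is handled (eigenspace decoupling plus momentum-matrix contraction for the first, a variance-style self-bounding argument with $\sigma^2$ replaced by a multiple of $\|\bs w_0-\bs w^*\|_{\bs S}^2$ for the second), is also accurate.
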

\begin{proof}
    From Lemma~\ref{lemma:bias-ub-abs}, we have $\bs B_n\preceq\tilde{\bs B}_n+\bs B_n^{(1)}$. Taking the inner product with $\tilde{\bs T}$, we get
    \begin{equation}
        \mathrm{Bias}\leq\left\langle\tilde{\bs T},\bs B_n\right\rangle\leq\left\langle\tilde{\bs T},\tilde{\bs B}_n\right\rangle+\left\langle\tilde{\bs T},\bs B_n^{(1)}\right\rangle.
    \end{equation}
    Recall the definition of $\tilde{\bs B}_n$ and $\tilde{\cB}_t$, we have
    \begin{equation}
        \tilde{\bs B}_n=\tilde{\cB}_n\circ\tilde{\cB}_{n-1}\circ\cdots\circ\tilde{\cB}_1\circ\bs B_0=\left(\prod_{t=1}^n\bs A_t\begin{bmatrix}
            \bs w_0-\bs w^* \\\bs w_0-\bs w^*
        \end{bmatrix}\right)\left(\prod_{t=1}^n\bs A_t\begin{bmatrix}
            \bs w_0-\bs w^* \\\bs w_0-\bs w^*
        \end{bmatrix}\right)^\top.
    \end{equation}
    We apply Lemma~\ref{lemma:bias-1-ub} and Lemma~\ref{lemma:tilde-bias-ub} to obtain
    \begin{equation}
    \begin{aligned}
        &\mathrm{Bias}\leq\left\langle\tilde{\bs T},\tilde{\bs B}_n\right\rangle+\left\langle\tilde{\bs T},\bs B_n^{(1)}\right\rangle \\
        \leq&\left\Vert\bs w_0-\bs w^*\right\Vert\cdot\left[\sum_{i=1}^{k^*}{\frac{2t_{ii}}{K\lambda_i}}+\frac{512}{15}K\left(\frac{q-c\delta}{1-c}\right)^2\sum_{i=k^*+1}^{d}\lambda_i t_{ii}\right] \\
        &+\max_{\bs w\in S(\bs w_0-\bs w^*)}\frac{\left\Vert\bs w\right\Vert_{\bs T_{0:k^*}}^2}{8n^2(\log_2 n)^4}+4\left\Vert\bs w\right\Vert_{\bs T_{k^*:\infty}}^2.
    \end{aligned}
    \end{equation}
    This completes the proof.
\end{proof}

\subsection{Proof of Theorem~\ref{thm:asgd-upper-bound}}
\begin{proof}[Proof of Theorem~\ref{thm:asgd-upper-bound}]
    Following the bias-variance decomposition, \eqref{eq:bias-variance-decomposition} shows that
    \begin{equation}
        \bbE\left\Vert\bs w_n-\bs w^*\right\Vert_{\bs T}^2\leq2\cdot\mathrm{Bias}+2\cdot\mathrm{Variance}.
    \end{equation}
    Lemma~\ref{lemma:bias-ub} provides the following upper bound on the bias term:
    \begin{equation}
    \begin{aligned}
        \mathrm{Bias}\leq&\left\Vert\bs w_0-\bs w^*\right\Vert_{\bs S}^2\cdot\left[\sum_{i=1}^{k^*}{\frac{2t_{ii}}{K\lambda_i}}+\frac{512}{15}K\left(\frac{q-c\delta}{1-c}\right)^2\sum_{i=k^*+1}^{d}\lambda_i t_{ii}\right] \\
        &+\max_{\bs w\in S(\bs w_0-\bs w^*)}\frac{\left\Vert\bs w\right\Vert_{\bs T_{0:k^*}}^2}{8n^2(\log_2 n)^4}+4\left\Vert\bs w\right\Vert_{\bs T_{k^*:\infty}}^2,
    \end{aligned}
    \end{equation}
    where $S(\bs w_0-\bs w^*)=\left\{\bs w\in\bbR^d:\left\vert\bs w_i\right\vert\leq\left\vert(\bs w_0-\bs w^*)_i\right\vert\right\}$. Recall that we set $\bs w_0=\bs 0$ and $\bs w^*\in W$ implies $\left\Vert\bs w^*\right\Vert_{\bs M}^2\leq 1$, so
    \begin{equation}
        \max_{\bs w\in S(\bs w_0-\bs w^*)}\left\Vert\bs w\right\Vert_{\bs T_{0:k^*}}^2\leq\max_{\bs w^*\in W}\left\Vert\bs w\right\Vert_{\bs T_{0:k^*}}^2=\max_{\bs w^*\in W}\left\Vert\bs M^{1/2}\bs w\right\Vert_{\bs T'_{0:k^*}}^2=\left\Vert\bs T'_{0:k^*}\right\Vert,
    \end{equation}
    Similarly, we have $\max_{\bs w\in S(\bs w_0-\bs w^*)}\left\Vert\bs w\right\Vert_{\bs T_{k^*:\infty}}^2\leq\left\Vert\bs T'_{k^*:\infty}\right\Vert$ and $\left\Vert\bs w_0-\bs w^*\right\Vert_{\bs S}^2\leq\left\Vert\bs S'\right\Vert\leq c$. Furthermore, we apply Lemma~\ref{lemma:var-ub},
    \begin{equation}
    \begin{aligned}        
        \bbE\left\Vert\bs w_n-\bs w^*\right\Vert_{\bs T}^2\leq&\left(\sigma^2+2c\right)\cdot\left[\sum_{i=1}^{k^*}{\frac{2t_{ii}}{K\lambda_i}}+\frac{512}{15}K\left(\frac{q-c\delta}{1-c}\right)^2\sum_{i=k^*+1}^{d}\lambda_i t_{ii}\right]\\
        &+\frac{\left\Vert\bs T'_{0:k^*}\right\Vert}{8n^2(\log_2 n)^4}+4\left\Vert\bs T'_{k^*:\infty}\right\Vert. \\
    \end{aligned}
    \end{equation}
    where $k^*=\max\left\{k:\lambda_k>\frac{16(1-c)\ln n}{(q-c\delta)K}\right\}$.
\end{proof}

\subsection{Properties of Momentum Matrix}\label{sec:prop-momentum-matrix}
\subsubsection{Bound of Spectral Radius}
Recall that the definition of $\bs A_t$ is
\begin{equation}
    \bs A_t=\bbE\widehat{\bs A}_t=\begin{bmatrix}
        \bs O & \bs I-\delta_t\bs S \\
        -c\bs I & (1+c)\bs I-q_t\bs S
    \end{bmatrix}.
\end{equation}
Note that $\bs S$ is diagonal and $\bs A_t$ is block-diagonal in the eigenspace of $\bs S$. Let $\bs A_{t,i}$ denotes the $i$-th block corresponding to $\lambda_i$, the $i$-th largest eigenvalue. Therefore,
\begin{equation}
    \bs A_{t,i}=\begin{bmatrix}
        0 & 1-\delta_t\lambda_i \\
        -c & 1+c-q_t\lambda_i
    \end{bmatrix}.
\end{equation}
For convenience, we also define $\ell$-th stage version
\begin{equation}
    \bs A_{(\ell)}=\begin{bmatrix}
        \bs O & \bs I-\delta_{(\ell)}\bs S \\
        -c\bs I & (1+c)\bs I-q_{(\ell)}\bs S
    \end{bmatrix},\quad\bs A_{(\ell),i}=\begin{bmatrix}
        0 & 1-\delta_{(\ell)}\lambda_i \\
        -c & 1+c-q_{(\ell)}\lambda_i
    \end{bmatrix}.
\end{equation}
Note that only the product of step size and eigenvalue appears in $\bs A_{t,i}$, we further define
\begin{equation}
    \bs A(\lambda)=\begin{bmatrix}
        0 & 1-\delta\lambda \\
        -c & 1+c-q\lambda
    \end{bmatrix}.\label{eq:def-A-lambda}
\end{equation}
Recall the exponential decaying step size schedule \eqref{eq:step-size}, we have 
\begin{equation}
    \bs A_{t,i}=\bs A_{(\ell),i}=\bs A\left(\frac{\lambda_i}{4^{\ell-1}}\right), \quad \text{if } K(\ell-1)+1\leq t\leq K\ell.
\end{equation}

The eigenvalues of $\bs A(\lambda)$ are
\begin{equation}
    x_1=\frac{1+c-q\lambda}{2}-\frac{\sqrt{\left(1+c-q\lambda\right)^2-4c\left(1-\delta\lambda\right)}}{2},
\end{equation}
\begin{equation}
    x_2=\frac{1+c-q\lambda}{2}+\frac{\sqrt{\left(1+c-q\lambda\right)^2-4c\left(1-\delta\lambda\right)}}{2}.
\end{equation}
Solving $\left(1+c-q\lambda\right)^2-4c\left(1-\delta\lambda\right)\leq 0$ yields
\begin{equation}
   \underbrace{\frac{\left(1-c\right)^2}{\left(\sqrt{q-c\delta}+\sqrt{c\left(q-\delta\right)}\right)^2}}_{\lambda^\dag}<\lambda<\underbrace{\frac{\left(1-c\right)^2}{\left(\sqrt{q-c\delta}-\sqrt{c\left(q-\delta\right)}\right)^2}}_{\lambda^\ddag}.
\end{equation}
We define three intervals
\begin{equation}
    I_1=\left[0,\lambda^\dag\right], \quad I_2=\left(\lambda^\dag,\lambda^\ddag\right), \quad I_3=\left[\lambda^\ddag,+\infty\right).
\end{equation}

Note that the spectral radius $\rho(\bs A(\lambda))=\vert x_2\vert$. We adopt Lemma E.2 from \cite{lirisk}, which characterizes $x_1$ and $x_2$.
\begin{lemma}\label{lemma:radius-bound}
    Let $\lambda\geq 0$.
    \begin{itemize}
        \item If $\lambda\in I_1$, then $x_1$ and $x_2$ are real, and
        \begin{equation}
            x_1\leq x_2\leq 1-\frac{q-c\delta}{1-c}\lambda;
        \end{equation}
        \item If $\lambda\in I_2$, then $x_1$ and $x_2$ are complex, and
        \begin{equation}
            \vert x_1\vert=\vert x_2\vert=\sqrt{c\left(1-\delta\lambda\right)};
        \end{equation}
        \item If $\lambda\in I_3$, then $x_1$ and $x_2$ are real, and
        \begin{equation}
            x_1\leq x_2\leq\frac{c\delta}{q}.
        \end{equation}
    \end{itemize}
\end{lemma}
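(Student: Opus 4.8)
The plan is to work directly with the characteristic polynomial of $\bs A(\lambda)$. Write $T\defeq\tr\bs A(\lambda)=1+c-q\lambda$ and $P\defeq\det\bs A(\lambda)=c(1-\delta\lambda)$, so that $x_1,x_2$ are the roots of $p(x)=x^2-Tx+P$ with discriminant
\[
    D(\lambda)=T^2-4P=q^2\lambda^2-\bigl(2q(1+c)-4c\delta\bigr)\lambda+(1-c)^2 .
\]
First I would record that $D$ is an upward parabola in $\lambda$ whose two roots are exactly $\lambda^\dag$ and $\lambda^\ddag$: this follows by checking that their product is $(1-c)^2/q^2$ and their sum is $(2q(1+c)-4c\delta)/q^2$, using the identity $\bigl(\sqrt{q-c\delta}\pm\sqrt{c(q-\delta)}\bigr)^2=q(1+c)-2c\delta\pm2\sqrt{c(q-c\delta)(q-\delta)}$; moreover $\lambda^\dag<\lambda^\ddag$ since $c<1$. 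Hence $D(\lambda)\ge0$ on $I_1\cup I_3$ (real eigenvalues) and $D(\lambda)<0$ on $I_2$ (complex conjugate eigenvalues). Throughout I use that the parameter choice in Section~\ref{sec:param-choice} gives $c=\alpha(1-\beta)\in(0,1)$, $q\ge\delta>0$ (because $q=\alpha\delta+(1-\alpha)\gamma$ with $\gamma\ge\delta$), and $q-c\delta=\alpha\beta\delta+(1-\alpha)\gamma>0$.

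The case $\lambda\in I_2$ is immediate: $x_1,x_2$ are complex conjugates, so $\vert x_1\vert^2=\vert x_2\vert^2=x_1\bar x_1=x_1x_2=P=c(1-\delta\lambda)$. For the two real cases I would invoke the elementary fact that, for an upward parabola $p$ with real roots $x_1\le x_2$ and vertex at $T/2$, one has $x_2\le b$ if and only if $p(b)\ge0$ and $b\ge T/2$ (if $b\ge T/2$ then automatically $b\ge x_1$, so $p(b)\ge0$ forces $b\ge x_2$; the converse is clear). In case $\lambda\in I_1$ take $b=1-\frac{q-c\delta}{1-c}\lambda$. Expanding $p(b)$, the constant term is $1-(1+c)+c=0$ and, crucially, the term linear in $\lambda$ also vanishes because $(1-c)\cdot\frac{q-c\delta}{1-c}=q-c\delta$; what survives is $p(b)=\frac{(q-c\delta)\,c\,(q-\delta)}{(1-c)^2}\lambda^2\ge0$. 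The condition $b\ge T/2$ reduces to $\lambda\le\frac{(1-c)^2}{q(1+c)-2c\delta}$, and since $\bigl(\sqrt{q-c\delta}+\sqrt{c(q-\delta)}\bigr)^2\ge q(1+c)-2c\delta$ we get $\lambda^\dag\le\frac{(1-c)^2}{q(1+c)-2c\delta}$, so the inequality holds for every $\lambda\in[0,\lambda^\dag]$. In case $\lambda\in I_3$ take $b=\frac{c\delta}{q}$; here the $\lambda$-dependence of $p(b)$ cancels outright and $p(b)=\frac{c}{q^2}(q-\delta)(q-c\delta)\ge0$, while $b\ge T/2$ reduces to $\lambda\ge\frac{q(1+c)-2c\delta}{q^2}=\frac{\lambda^\dag+\lambda^\ddag}{2}$, which holds since $\lambda\ge\lambda^\ddag$. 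In both cases $x_1\le x_2$ is trivial, and the stated bounds follow.

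As for difficulty, this is essentially bookkeeping: the only step requiring a moment's care is the "larger root $\le b$" criterion when $P=c(1-\delta\lambda)<0$ (which occurs for large $\lambda$ in $I_3$, so that $0$ lies strictly between $x_1$ and $x_2$) — here one notes that $b=c\delta/q>0>x_1$, so $p(b)\ge0$ together with $b$ lying at or right of the vertex still forces $b\ge x_2$. Everything else collapses cleanly once the parameter identities ($(1-c)^2$ in the constant term of $D$, $(1-c)\mu=q-c\delta$ for $\mu=\frac{q-c\delta}{1-c}$) are exploited. An alternative route is to substitute $\lambda$ into the explicit radical formulas for $x_2$, but the polynomial-sign argument above avoids having to estimate the square roots and is the one I would write up.
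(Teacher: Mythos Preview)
Your argument is correct. The paper does not actually prove this lemma; it simply states ``We adopt Lemma~E.2 from \cite{lirisk}'' and imports the result wholesale. Your self-contained proof via the sign of the characteristic polynomial $p(b)=b^2-Tb+P$ at the target bound $b$, combined with the vertex condition $b\ge T/2$, is clean and verifies all three regimes without appealing to the external reference. The computations you outline check out: in particular $p\bigl(1-\tfrac{q-c\delta}{1-c}\lambda\bigr)=\tfrac{c(q-c\delta)(q-\delta)}{(1-c)^2}\lambda^2$ and $p\bigl(\tfrac{c\delta}{q}\bigr)=\tfrac{c}{q^2}(q-\delta)(q-c\delta)$, and the vertex conditions reduce to $\lambda\le\tfrac{(1-c)^2}{q(1+c)-2c\delta}$ and $\lambda\ge\tfrac{\lambda^\dag+\lambda^\ddag}{2}$ respectively, both of which follow from the interval membership. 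One cosmetic point: the claim ``$\lambda^\dag<\lambda^\ddag$ since $c<1$'' really uses $q>\delta$ (i.e.\ $\gamma>\delta$) as well, so that $\sqrt{c(q-\delta)}>0$; when $\gamma=\delta$ the two thresholds coincide and $I_2=\emptyset$, which is harmless but worth stating. Otherwise the write-up is ready to go and, since the paper offers no proof of its own, there is nothing further to compare.
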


\subsubsection{Bound of Product of Momentum Matrix}
In this section, we provide bounds of $\bs A^k(\lambda)$. The following lemma provides upper bound of $\left\Vert\bs A^k(\lambda)\right\Vert$.
\begin{lemma}\label{lemma:general-2-norm-bound}
    Given $\bs A(\lambda)$ that are defined in \eqref{eq:def-A-lambda}, we have
    \begin{equation}
        \left\Vert\bs A^k(\lambda)\right\Vert\leq\left\Vert\bs A^k(\lambda)\right\Vert_F\leq\sqrt{6}k\left[\rho(\bs A(\lambda))\right]^{k-1}.
    \end{equation}
\end{lemma}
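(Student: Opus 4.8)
The first inequality is just $\|\bs B\|\le\|\bs B\|_F$, valid for every matrix, so it suffices to bound the Frobenius norm; write $\rho=\rho(\bs A(\lambda))$ for brevity. The plan is to exploit that $\bs A(\lambda)$ is $2\times2$ and pass to its complex Schur form, which trivializes powers while preserving the Frobenius norm. Write $\bs A(\lambda)=\bs Q\bs R\bs Q^{*}$ with $\bs Q$ unitary and $\bs R=\left(\begin{smallmatrix}x_1&\tau\\0&x_2\end{smallmatrix}\right)$ upper triangular, where $x_1,x_2$ are the eigenvalues of $\bs A(\lambda)$ computed in Section~\ref{sec:prop-momentum-matrix}, so $\rho=\max\{|x_1|,|x_2|\}$, and where unitary invariance gives $|\tau|^2=\|\bs A(\lambda)\|_F^2-|x_1|^2-|x_2|^2\ge0$. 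A one-line induction yields $\bs R^{k}=\left(\begin{smallmatrix}x_1^{k}&\tau a_k\\0&x_2^{k}\end{smallmatrix}\right)$ with $a_k=\sum_{i=0}^{k-1}x_1^{i}x_2^{k-1-i}$ (read as $k x^{k-1}$ when $x_1=x_2=x$), hence $\|\bs A^{k}(\lambda)\|_F^2=\|\bs R^{k}\|_F^2=|x_1|^{2k}+|x_2|^{2k}+|\tau|^2|a_k|^2$.

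Then I would bound each term by a multiple of $\rho^{2(k-1)}$: the sum formula gives $|a_k|\le k\rho^{k-1}$, and $|x_j|^{2k}=|x_j|^2|x_j|^{2(k-1)}\le|x_j|^2\rho^{2(k-1)}$ since $|x_j|\le\rho$. The key point is to combine the two eigenvalue terms with the departure-from-normality term on the same footing rather than bounding them separately: using only $k\ge1$,
\[
\|\bs A^{k}(\lambda)\|_F^2\le\bigl(|x_1|^2+|x_2|^2+|\tau|^2k^2\bigr)\rho^{2(k-1)}\le k^2\bigl(|x_1|^2+|x_2|^2+|\tau|^2\bigr)\rho^{2(k-1)}=k^2\|\bs A(\lambda)\|_F^2\,\rho^{2(k-1)}.
\]
Finally, $\|\bs A(\lambda)\|_F^2=(1-\delta\lambda)^2+c^2+(1+c-q\lambda)^2\le1+1+4=6$, which follows from Lemma~\ref{lemma:param-prop} ($0\le\delta\lambda\le1$, $0\le c<1$, and $0\le1+c-q\lambda\le1+c<2$, at the arguments $\lambda=\lambda_i/4^{\ell-1}$ where the lemma is invoked). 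Taking square roots gives $\|\bs A^{k}(\lambda)\|\le\|\bs A^{k}(\lambda)\|_F\le\sqrt6\,k\,\rho^{k-1}$.

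There is no genuinely hard step once the Schur-form viewpoint is adopted; the two things to be careful about are (i) that the complex Schur decomposition and the unitary invariance of $\|\cdot\|_F$ apply even though $\bs A(\lambda)$ is real with possibly complex or repeated eigenvalues, so the single computation covers $\lambda\in I_1\cup I_2\cup I_3$ and the boundary cases uniformly, and (ii) the bookkeeping in the displayed inequality, where absorbing $|x_1|^2+|x_2|^2$ into $k^2(|x_1|^2+|x_2|^2)$ is precisely what recovers the sharp constant $\sqrt6$ — a direct triangle-inequality split $\bs A^{k}(\lambda)=a_k\bs A(\lambda)+b_k\bs I$ would leak spurious additive constants. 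I do not expect to need the refined spectral-radius estimates of Lemma~\ref{lemma:radius-bound} for this lemma, only the entrywise bounds of Lemma~\ref{lemma:param-prop}.
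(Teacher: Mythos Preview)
Your proof is correct and takes a genuinely different route from the paper. The paper computes $\bs A^{k}(\lambda)$ entrywise via the recurrence $a_k=(x_2^k-x_1^k)/(x_2-x_1)$, obtaining
\[
\bs A^{k}(\lambda)=\begin{bmatrix}-c(1-\delta\lambda)a_{k-1} & (1-\delta\lambda)a_k\\ -ca_k & a_{k+1}\end{bmatrix},
\]
and then bounds each of the four entries separately using $|a_k|\le k|x_2|^{k-1}$, $|c(1-\delta\lambda)|=|x_1x_2|\le|x_2|^2$, $c\le1$, and $|1-\delta\lambda|\le1$; summing squares gives $(4k^2+2)|x_2|^{2(k-1)}\le6k^2\rho^{2(k-1)}$. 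That argument tacitly uses $\rho\le1$ when replacing $|x_2|^{2k}$ by $|x_2|^{2(k-1)}$. Your Schur-form approach avoids the explicit entry formula and the $\rho\le1$ step entirely: by packaging the departure-from-normality into $|\tau|^2=\|\bs A(\lambda)\|_F^2-|x_1|^2-|x_2|^2$ and factoring $\|\bs A^{k}(\lambda)\|_F^2\le k^2\|\bs A(\lambda)\|_F^2\rho^{2(k-1)}$, you reduce the constant $6$ to the one-line entrywise bound $\|\bs A(\lambda)\|_F^2\le6$ from Lemma~\ref{lemma:param-prop}. Both routes use the same core estimate $|a_k|\le k\rho^{k-1}$; yours is a bit cleaner and slightly more general, while the paper's explicit formula for $\bs A^{k}(\lambda)$ is reused elsewhere (e.g.\ in Lemma~\ref{lemma:prod-A-bias-bound}), so computing it is not wasted effort in context.
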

\begin{proof}
    Define
    \begin{equation}
        a_k=\frac{x_2^k-x_1^k}{x_2-x_1},
    \end{equation}
    we have $a_k\in\bbR$ and
    \begin{equation}\label{eq:explicit-Ak}
        \bs A^k(\lambda)=\begin{bmatrix}
            -c\left(1-\delta\lambda\right)a_{k-1} & \left(1-\delta\lambda\right)a_k \\
            -ca_k & a_{k+1}
        \end{bmatrix}.
    \end{equation}
    Note that for any $\lambda\geq 0$, we have $\vert x_1\vert\leq\vert x_2\vert$, and
    \begin{equation}
    \begin{aligned}
        \vert a_k\vert=&\left\vert\frac{x_2^k-x_1^k}{x_2-x_1}\right\vert=\left\vert\sum_{i=0}^{k-1}{x_1^ix_2^{k-1-i}}\right\vert \\
        \stackrel{a}{\leq}&\sum_{i=0}^{k-1}{\vert x_1\vert^i\vert x_2\vert ^{k-1-i}}\stackrel{b}{\leq}\sum_{i=0}^{k-1}{\vert x_2\vert ^{k-1}} \\
        =&k\vert x_2\vert^{k-1},
    \end{aligned}
    \end{equation}
    where $\stackrel{a}{\leq}$ uses the triangular inequality for complex number, and $\stackrel{b}{\leq}$ uses $\vert x_1\vert\leq\vert x_2\vert$. Direct calculation yields $x_1x_2=c\left(1-\delta\lambda\right)$, so $\vert c\left(1-\delta\lambda\right)\vert\leq\vert x_2\vert^2$. We bound $\left\Vert\bs A^k(\lambda)\right\Vert_F^2$ by
    \begin{equation}
    \begin{aligned}        
        \left\Vert\bs A^k(\lambda)\right\Vert_F^2=&
            \left[-c\left(1-\delta\lambda\right)a_{k-1}\right]^2+\left[\left(1-\delta\lambda\right)a_k\right]^2+\left(-ca_k\right)^2+a_{k+1}^2 \\
        \leq&(k-1)^2\vert x_2\vert^{2k}+k^2\vert x_2\vert^{2(k-1)}+k^2\vert x_2\vert^{2(k-1)}+(k+1)^2\vert x_2\vert^{2k} \\
        \leq&\left[(k-1)^2+k^2+k^2+(k+1)^2\right]\vert x_2\vert^{2(k-1)} \\
        =&\left(4k^2+2\right)\vert x_2\vert^{2(k-1)}. \\
        \leq&6k^2\vert x_2\vert^{2(k-1)}.
    \end{aligned}
    \end{equation}
    Therefore, $\left\Vert\bs A^k(\lambda)\right\Vert\leq\left\Vert\bs A^k(\lambda)\right\Vert_F\leq\sqrt{6}k\vert x_2\vert^{2(k-1)}=\sqrt{6}k\left[\rho(\bs A(\lambda))\right]^{k-1}$.
\end{proof}

For $k\leq K$, the following lemma bounds $\left\Vert\bs A^k(\lambda)\right\Vert$ from above uniformly.
\begin{lemma}\label{lemma:2-norm-uniform-bound}
    For $\lambda\leq\lambda_1$, we have
    \begin{equation}
        \left\Vert\bs A^k(\lambda)\right\Vert\leq\sqrt 6 K.
    \end{equation}
\end{lemma}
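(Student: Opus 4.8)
The plan is to combine the per-step bound from Lemma~\ref{lemma:general-2-norm-bound} with a uniform bound $\rho(\bs A(\lambda))\le 1$ on the spectral radius valid for all $\lambda\in[0,\lambda_1]$. First I would apply Lemma~\ref{lemma:general-2-norm-bound} to get $\left\Vert\bs A^k(\lambda)\right\Vert\le\sqrt{6}\,k\,[\rho(\bs A(\lambda))]^{k-1}$. Since $k\le K$ by the standing assumption of this subsection, it then suffices to prove $\rho(\bs A(\lambda))\le 1$: that gives $[\rho(\bs A(\lambda))]^{k-1}\le 1$ and hence $\left\Vert\bs A^k(\lambda)\right\Vert\le\sqrt{6}\,k\le\sqrt{6}\,K$.

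The bulk of the argument is the spectral-radius bound, which I would obtain by the case split over $I_1,I_2,I_3$ of Lemma~\ref{lemma:radius-bound} (these three intervals cover $[0,\infty)$, so every $\lambda\le\lambda_1$ lands in exactly one). Recall $\rho(\bs A(\lambda))=\vert x_2\vert$. Throughout I use Lemma~\ref{lemma:param-prop}: for $\lambda\le\lambda_1$ one has $\delta\lambda\le 1$ and $q\lambda\le 1+c$, and $\frac{q-c\delta}{1-c}=\frac{\gamma+\delta}{2}\ge 0$; I also use $c=\alpha(1-\beta)\in(0,1)$ and $q\ge\alpha\delta\ge c\delta$. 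On $I_1$, Lemma~\ref{lemma:radius-bound} gives $x_2\le 1-\frac{q-c\delta}{1-c}\lambda\le 1$; moreover the two (real) eigenvalues satisfy $x_1x_2=c(1-\delta\lambda)\ge 0$ and $x_1+x_2=1+c-q\lambda\ge 0$, which force $0\le x_1\le x_2$, so $\vert x_2\vert=x_2\le 1$. On $I_2$, $\vert x_2\vert=\sqrt{c(1-\delta\lambda)}\le\sqrt{c}<1$ because $0\le 1-\delta\lambda\le 1$. On $I_3$, the same sum/product identities again give $x_1,x_2\ge 0$, and Lemma~\ref{lemma:radius-bound} gives $x_2\le c\delta/q\le 1$, so once more $\vert x_2\vert\le 1$.

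Combining the two pieces proves the lemma. I do not expect a genuine obstacle here: the proof is essentially bookkeeping over the three regimes. The one point requiring mild care is verifying that $x_2\ge 0$ on $I_1$ and $I_3$ (so that the spectral radius $\vert x_2\vert$ is genuinely bounded by $1$, not merely by the signed quantity $x_2$ from Lemma~\ref{lemma:radius-bound}); this is exactly where the parameter bounds $\delta\lambda_1\le 1$ and $q\lambda_1\le 1+c$ enter, via nonnegativity of the trace and determinant of $\bs A(\lambda)$.
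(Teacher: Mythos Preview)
Your proposal is correct and follows essentially the same route as the paper: apply Lemma~\ref{lemma:general-2-norm-bound}, use the implicit restriction $k\le K$, and invoke the spectral-radius bound $\rho(\bs A(\lambda))\le 1$ for $\lambda\le\lambda_1$. The paper's proof simply asserts $\rho(\bs A(\lambda))\le 1$ without spelling out the case split over $I_1,I_2,I_3$; your explicit verification via the trace/determinant sign argument is a welcome elaboration that the paper omits.
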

\begin{proof}
    For $k=0$, the conclusion is obvious. If $k\geq 1$, for $\lambda\leq\lambda_1$, we have $\rho(\bs A(\lambda)\leq 1$. Thus, by Lemma~\ref{lemma:general-2-norm-bound},
    \begin{equation}
        \left\Vert\bs A^k(\lambda)\right\Vert\leq\sqrt 6 K\left[\rho(\bs A(\lambda)\right]^{k-1}.
    \end{equation}
\end{proof}

The following lemma bounds $\left\Vert\bs A^K(\lambda)\right\Vert$ from above.
\begin{lemma}\label{lemma:2-norm-bound}
    For $\lambda\geq\frac{4(1-c)\ln n}{(q-c\delta)K}$ and $n\geq 16$, we have
    \begin{equation}
        \left\Vert\bs A^K(\lambda)\right\Vert\leq\frac{\sqrt 6}{n^2\log_2 n}\leq 1.
    \end{equation}
\end{lemma}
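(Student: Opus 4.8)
The plan is to invoke Lemma~\ref{lemma:general-2-norm-bound} at $k=K$, which gives $\left\Vert\bs A^K(\lambda)\right\Vert\le\sqrt 6\,K\,[\rho(\bs A(\lambda))]^{K-1}$, and then supply a sufficiently strong decay estimate for the spectral radius $\rho(\bs A(\lambda))$ valid for every $\lambda\ge\lambda_0:=\frac{4(1-c)\ln n}{(q-c\delta)K}$. Since $K=n/\log_2 n$, we have $\sqrt 6\,K\cdot n^{-3}=\frac{\sqrt 6}{n^2\log_2 n}$, so it suffices to show $[\rho(\bs A(\lambda))]^{K-1}\le n^{-3}$; the bound $\frac{\sqrt 6}{n^2\log_2 n}\le 1$ is then immediate since $n^2\log_2 n\ge 256>\sqrt 6$ for $n\ge 16$.

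I would split according to which of $I_1,I_2,I_3$ contains $\lambda$ and apply Lemma~\ref{lemma:radius-bound}. If $\lambda\in I_2$ then $\rho(\bs A(\lambda))=\sqrt{c(1-\delta\lambda)}\le\sqrt c$, and if $\lambda\in I_3$ then $\rho(\bs A(\lambda))\le c\delta/q\le c$ (using $q\ge\delta$, immediate from $\gamma\ge\delta$); in both cases $\rho(\bs A(\lambda))\le\sqrt c\le\tfrac{1+c}{2}=1-\tfrac{1-c}{2}$ by AM--GM. The case $\lambda\in I_1$ is the delicate one, because Lemma~\ref{lemma:radius-bound} only bounds $x_2$ from above while $\rho(\bs A(\lambda))=|x_2|$. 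Here I would first establish $x_2>0$: from $\lambda^\dag\le(1-c)^2/(q-c\delta)$ together with $q-c\delta\ge q(1-c)$ (equivalently $c(q-\delta)\ge 0$) one gets $q\lambda\le q\lambda^\dag\le 1-c$ on $I_1$, hence $\det\bs A(\lambda)=c(1-\delta\lambda)>0$ and $\tr\bs A(\lambda)=1+c-q\lambda>0$, so both eigenvalues are positive and $\rho(\bs A(\lambda))=x_2\le 1-\frac{q-c\delta}{1-c}\lambda\le 1-\frac{q-c\delta}{1-c}\lambda_0=1-\frac{4\ln n}{K}$; note this last quantity lies in $(0,1)$ since it dominates $x_2>0$.

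The final step raises these estimates to the power $K-1$ via $1-x\le e^{-x}$, using two consequences of the parameter choice in Section~\ref{sec:param-choice}: $K\ge 4$ whenever $n\ge 16$ (the map $n\mapsto n/\log_2 n$ is increasing on $[16,\infty)$ and equals $4$ at $n=16$), and $K(1-c)\ge 16\ln n$ (this is precisely \eqref{eq:para-choice-req}, rewritten using $1-c=1-\alpha(1-\beta)$ and $K=n/\log_2 n$). On $I_1$ this gives $[\rho(\bs A(\lambda))]^{K-1}\le e^{-4\ln n\,(1-1/K)}=n^{-4(1-1/K)}\le n^{-3}$ since $1-1/K\ge 3/4$. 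On $I_2\cup I_3$, using $(1-c)(K-1)\ge 16\ln n-(1-c)\ge 15\ln n$ (as $1-c\le 1\le\ln n$), one gets $[\rho(\bs A(\lambda))]^{K-1}\le e^{-(1-c)(K-1)/2}\le n^{-15/2}\le n^{-3}$. Substituting back into Lemma~\ref{lemma:general-2-norm-bound} yields $\left\Vert\bs A^K(\lambda)\right\Vert\le\sqrt 6\,K\,n^{-3}=\frac{\sqrt 6}{n^2\log_2 n}$, which completes the proof. I expect the verification that $x_2\ge 0$ on $I_1$ — turning the one-sided estimate of Lemma~\ref{lemma:radius-bound} into an honest bound on the spectral radius — to be the only non-routine point; everything else is direct substitution of the parameter choices.
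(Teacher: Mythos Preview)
Your proof is correct and follows essentially the same route as the paper: apply Lemma~\ref{lemma:general-2-norm-bound}, case-split on $I_1,I_2,I_3$ via Lemma~\ref{lemma:radius-bound}, and use $K\ge 4$ and $K(1-c)\ge 16\ln n$ to convert the spectral-radius bounds into $n^{-3}$. Your treatment is in fact slightly more careful than the paper's, since you explicitly verify $x_2>0$ on $I_1$ (so that the one-sided bound $x_2\le 1-\frac{q-c\delta}{1-c}\lambda$ genuinely controls $\rho=|x_2|$) and you unify $I_2$ and $I_3$ via $\sqrt c\le\tfrac{1+c}{2}$, whereas the paper handles the three intervals separately and tacitly assumes positivity on $I_1$.
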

\begin{proof}
    We bound $\left\Vert\bs A^K(\lambda)\right\Vert$ for $\lambda\in I_1, I_2, I_3$, respectively.
    \begin{enumerate}
        \item If $\lambda\in I_1$, by Lemma~\ref{lemma:radius-bound} and assumption,
        \begin{equation}
            \rho(\bs A(\lambda))=\left\vert x_2\right\vert\leq 1-\frac{q-c\delta}{1-c}\lambda\leq 1-\frac{4\ln n}{4K}.
        \end{equation}
        Thus, by Lemma~\ref{lemma:general-2-norm-bound},
        \begin{equation}
        \begin{aligned}
            \left\Vert\bs A^K(\lambda)\right\Vert\leq&\sqrt{6}K\left[\rho(\bs A(\lambda))\right]^{K-1}\leq\sqrt{6}K\left(1-4\ln n\right)^{K-1} \\
            =&\sqrt{6}K\exp\left[(K-1)\ln\left(1-4\ln n\right)\right] \\
            \stackrel{a}{\leq}&\sqrt{6}K\exp\left[-\frac{4(K-1)\ln n}{K}\right]\stackrel{b}{\leq}\sqrt{6}K\exp\left(-3\ln n\right) \\
            =&\frac{\sqrt 6}{n^2\log_2 n},
        \end{aligned}
        \end{equation}
        where $\stackrel{a}{\leq}$ uses $\ln x\leq x-1, \forall x\in\bbR$, and $\stackrel{b}{\leq}$ holds for $n\geq 16\implies K\geq 4\implies\frac{K-1}{K}\geq\frac{3}{4}$.

        \item If $\lambda\in I_2$, by Lemma~\ref{lemma:radius-bound} and assumption,
        \begin{equation}
            \rho(\bs A(\lambda))=\left\vert x_2\right\vert=\sqrt{c(1-\delta\lambda)}\leq\sqrt c.
        \end{equation}
        Thus, by Lemma~\ref{lemma:general-2-norm-bound},
        \begin{equation}
        \begin{aligned}
            \left\Vert\bs A^K(\lambda)\right\Vert\leq&\sqrt{6}K\left[\rho(\bs A(\lambda))\right]^{K-1}\leq\sqrt{6}K\left(\sqrt c\right)^{K-1} \\
            =&\sqrt{6}K\exp\left[-\frac{(K-1)\ln c}{2}\right] \\
            \stackrel{a}{\leq}&\sqrt{6}K\exp\left[-\frac{(K-1)(1-c)}{2}\right]\stackrel{b}{\leq}\sqrt{6}K\exp\left[-\frac{8(K-1)\ln n}{K}\right] \\
            \leq&\sqrt{6}K\exp\left(-6\ln n\right)=\frac{\sqrt 6}{n^5\log_2 n}\leq\frac{\sqrt 6}{n^2\log_2 n},
        \end{aligned}
        \end{equation}
        where $\stackrel{a}{\leq}$ uses $\ln x\leq x-1, \forall x\in\bbR$, and $\stackrel{b}{\leq}$ holds for $K(1-c)\geq 16\ln n$ in \eqref{eq:para-choice-req} and $\frac{K-1}{K}\geq\frac{3}{4}$.

        \item If $\lambda\in I_2$, by Lemma~\ref{lemma:radius-bound} and assumption,
        \begin{equation}
            \rho(\bs A(\lambda))=\left\vert x_2\right\vert\leq\frac{c\delta}{q}\leq c.
        \end{equation}
        Thus, by Lemma~\ref{lemma:general-2-norm-bound},
        \begin{equation}
        \begin{aligned}
            \left\Vert\bs A^K(\lambda)\right\Vert\leq&\sqrt{6}K\left[\rho(\bs A(\lambda))\right]^{K-1}\leq\sqrt{6}Kc^{K-1} \\
            =&\sqrt{6}K\exp\left[-(K-1)\ln c\right] \\
            \stackrel{a}{\leq}&\sqrt{6}K\exp\left[-(K-1)(1-c)\right]\stackrel{b}{\leq}\sqrt{6}K\exp\left[-\frac{16(K-1)\ln n}{K}\right] \\
            \leq&\sqrt{6}K\exp\left(-12\ln n\right)=\frac{\sqrt 6}{n^{11}\log_2 n}\leq\frac{\sqrt 6}{n^2\log_2 n},
        \end{aligned}
        \end{equation}
        where $\stackrel{a}{\leq}$ uses $\ln x\leq x-1, \forall x\in\bbR$, and $\stackrel{b}{\leq}$ holds for $\frac{K-1}{K}\geq\frac{3}{4}$.
    \end{enumerate}
\end{proof}

For $k\in\bbN$, we have a uniform bound of $\left\vert\left(\bs A^k(\lambda)\begin{bmatrix}
    1 \\ 1
\end{bmatrix}\right)_2\right\vert$, which is tighter than Lemma~\ref{lemma:2-norm-uniform-bound}.
\begin{lemma}\label{lemma:prod-A-bias-bound}
    For $\lambda\leq\lambda_1$ and $k\in\bbN$, we have
    \begin{equation}
        \left\vert\left(\bs A^k(\lambda)\begin{bmatrix}
            1 \\ 1
        \end{bmatrix}\right)_2\right\vert\leq\begin{cases}
            1, & \lambda\in I_1, I_3; \\
            2, & \lambda\in I_2.
        \end{cases}.
    \end{equation}
\end{lemma}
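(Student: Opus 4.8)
The plan is to reduce to an estimate on the scalar sequence $r_k:=\bigl(\bs A^k(\lambda)[1,1]^{\top}\bigr)_2$ and bound it region by region, according to where the eigenvalues $x_1,x_2$ of $\bs A(\lambda)$ sit; when $x_1=x_2$ the formulas are read in the confluent sense. By \eqref{eq:explicit-Ak}, $\bs A^k(\lambda)[1,1]^{\top}=\bigl[(1-\delta\lambda)(a_k-ca_{k-1}),\ a_{k+1}-ca_k\bigr]^{\top}$ with $a_k=\tfrac{x_2^k-x_1^k}{x_2-x_1}$, so $r_k=a_{k+1}-ca_k$; a partial-fraction computation gives $r_k=Ax_1^k+Bx_2^k$ with $A=\tfrac{x_1-c}{x_1-x_2}$, $B=\tfrac{x_2-c}{x_2-x_1}$ and $A+B=1$, and equivalently $r_k$ solves $r_k=(1+c-q\lambda)r_{k-1}-c(1-\delta\lambda)r_{k-2}$, $r_0=1$, $r_1=1-q\lambda$. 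I also record $a_k-ca_{k-1}=r_{k-1}$, the rewritings $r_k=x_1^k+(x_2-c)a_k=x_2^k-(c-x_1)a_k$, and the telescoping identity $r_k-r_{k-1}=-\lambda(qa_k-c\delta a_{k-1})=-\lambda\bigl(qr_{k-1}+c(q-\delta)a_{k-1}\bigr)$. Since $\lambda\le\lambda_1$, Lemma~\ref{lemma:param-prop} gives $\delta\lambda\le1$ and $q\lambda\le1+c$, so $x_1x_2=c(1-\delta\lambda)\ge0$, $x_1+x_2=1+c-q\lambda\ge0$; and the monic characteristic polynomial satisfies $\phi(c)=(c-x_1)(c-x_2)=c\lambda(q-\delta)\ge0$, so $c$ never lies strictly between the roots. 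A short continuity argument (at $\lambda=0$, $\{x_1,x_2\}=\{c,1\}$, while $\phi(c)>0$ for $0<\lambda\le\lambda^{\dag}$), together with Lemma~\ref{lemma:radius-bound}, then pins down $c\le x_1\le x_2\le1$ on $I_1$ and $0\le x_1\le x_2\le c\delta/q\le c$ on $I_3$. The case $k=0$ ($r_0=1$) is trivial.

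On $I_1$ the $a_k$ are nonnegative and $r_k=x_1^k+(x_2-c)a_k\ge x_1^k\ge0$, so the telescoping identity gives $r_k-r_{k-1}=-\lambda\bigl(qr_{k-1}+c(q-\delta)a_{k-1}\bigr)\le0$: $r_k$ is nonincreasing, hence $0\le r_k\le r_0=1$. On $I_3$, from $r_k=x_2^k-(c-x_1)a_k$ and $0\le x_1\le x_2\le c$ we get $r_k\le x_2^k\le1$, and $r_k\ge-(c-x_1)a_k\ge-c^k\ge-1$ using $(c-x_1)a_k\le(c-x_1)c^{k-1}\sum_{i=0}^{k-1}(x_1/c)^i=c^k\bigl(1-(x_1/c)^k\bigr)\le c^k$. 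So $|r_k|\le1$ on $I_1\cup I_3$.

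On $I_2$ the roots are $x_{1,2}=\rho e^{\pm i\theta}$ with $\rho=\sqrt{c(1-\delta\lambda)}\le\sqrt c<1$ and $\rho\cos\theta=\tfrac{1+c-q\lambda}{2}$, and the closed form becomes $r_k=\rho^{k-1}\bigl[(\rho\cos\theta-c)\tfrac{\sin k\theta}{\sin\theta}+\rho\cos k\theta\bigr]$ with $\rho\cos\theta-c=\tfrac{1-c-q\lambda}{2}$. Since $\bigl|\tfrac{\sin k\theta}{\sin\theta}\bigr|=|U_{k-1}(\cos\theta)|\le k$ and $|\cos k\theta|\le1$, it suffices to prove $\tfrac{|1-c-q\lambda|}{2}\,k\,\rho^{k-1}\le1$, for then $|r_k|\le 1+\rho^k\le2$. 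The ingredients are: the elementary bound $k\rho^{k-1}\le 2/(1-\rho)$; the identity $1-\rho^2=1-c+c\delta\lambda$, whence $1-\rho\ge\tfrac{1-c}{2}$; and the defining inequality of $I_2$, which after the algebraic simplification $(1+c-q\lambda)^2=(1-c-q\lambda)^2+4c(1-q\lambda)$ reads $(1-c-q\lambda)^2<4c(q-\delta)\lambda$. When $1-c-q\lambda\ge0$ one has $|1-c-q\lambda|=1-c-q\lambda\le1-c\le2(1-\rho)$, closing the estimate. When $1-c-q\lambda<0$ (the part of $I_2$ adjacent to $\lambda^{\ddag}$, where the confluent root $x^{\ast}$ at $\lambda^{\ddag}$ satisfies $x^{\ast}\le c$ by the $I_3$ analysis), the behavior mimics $I_3$ and one even recovers $|r_k|\le1$. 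The delicate point is $\lambda$ near $\lambda^{\dag}$: there $\theta\to0$, the roots coalesce at $x^{\ast}$ with $c\le x^{\ast}\le\tfrac{1+c}{2}$ (inherited from $I_1$), and $r_k\to\bigl(1+\tfrac{x^{\ast}-c}{x^{\ast}}k\bigr)(x^{\ast})^k$; here $1-x^{\ast}\ge\tfrac{1-c}{2}$ and $(x^{\ast}-c)\,k\,(x^{\ast})^{k-1}\le\sup_{0\le t\le 1}(t-c)\,k\,t^{k-1}\le1$, so $|r_k|\le(x^{\ast})^k+1\le2$.

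The main obstacle is region $I_2$. On $I_1$ and $I_3$ the bound $|r_k|\le1$ comes out of monotonicity and sign considerations essentially for free, but on $I_2$ the iterate genuinely oscillates, the Chebyshev factor $\sin k\theta/\sin\theta$ can be as large as $k$, and $|r_k|$ stays bounded only through a cancellation — $\rho^{k-1}$ decays fast precisely when $c$ is small, whereas $|1-c-q\lambda|$ is small precisely when $c$ is close to $1$. Making this trade-off quantitative uniformly in $k$ and $\lambda\in I_2$, and in particular controlling the coalescing-root limit at $\lambda^{\dag}$ where the constant $2$ is actually attained, is where the real work lies.
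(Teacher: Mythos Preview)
Your arguments for $I_1$ and $I_3$ are correct and are clean variants of the paper's; in particular the monotonicity proof on $I_1$ via the telescoping identity $r_k-r_{k-1}=-\lambda\bigl(qr_{k-1}+c(q-\delta)a_{k-1}\bigr)$ is arguably nicer than the paper's direct sum manipulation, and your $I_3$ bound $(c-x_1)a_k\le c^k$ is essentially the same computation repackaged.

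The $I_2$ case, however, has a genuine gap. The minor issue is quantitative: your ``elementary bound'' $k\rho^{k-1}\le 2/(1-\rho)$ is too loose by a factor of two --- the sharp form $k\rho^{k-1}\le 1/(1-\rho)$ (equivalent to Lemma~\ref{lemma:aux-lemma-r}) is what you actually need to close the subcase $1-c-q\lambda\ge0$. The substantive issue is the subcase $q\lambda>1-c$, which is genuinely nonempty on $I_2$ (take $c$ close to $1$ and $\delta$ small): the eigenvalues are still complex there, $\rho$ can be near $1$, and your assertion that this ``mimics $I_3$ and one even recovers $|r_k|\le1$'' is not a proof. The direct estimate $\tfrac{|1-c-q\lambda|}{2}\,k\rho^{k-1}\le1$ simply fails once $q\lambda>2(1-c)$, and the $I_2$ defining inequality $(1-c-q\lambda)^2<4c(q-\delta)\lambda$ does not by itself control $|1-c-q\lambda|$ against $1-\rho$. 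Treating the confluent limit at $\lambda^{\dag}$ separately does not cover the interior of this subcase either.

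The paper avoids the case split entirely with one rewriting: decompose $\rho\cos\theta-c=(\rho-c)-\rho(1-\cos\theta)$ and use the key inequality $|\rho-c|\le1-\rho$, which follows at once from $\rho^2\le c\le1$ (if $c\ge\rho$ then $c-\rho\le1-\rho$; if $c<\rho$ then $\rho-c\le\rho-\rho^2\le1-\rho$). Since $(1-\cos\theta)/\sin\theta=\tan(\theta/2)\le1$ on $0\le\theta\le\pi/2$, one gets
\[
|r_k|\le\rho^{k-1}\bigl(\rho+k(1-\rho)\bigr)+\rho^k=\rho^{k-1}\bigl(1+(k-1)(1-\rho)\bigr)+\rho^k\le 2
\]
uniformly on $I_2$, by Lemma~\ref{lemma:aux-lemma-r}. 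That single decomposition is the missing ingredient in your $I_2$ argument.
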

\begin{proof}
    From \eqref{eq:explicit-Ak}, we have
    \begin{equation}
        \left\vert\left(\bs A^k(\lambda)\begin{bmatrix}
            1 \\ 1
        \end{bmatrix}\right)_2\right\vert=\left\vert a_{k+1}-ca_k\right\vert.
    \end{equation}
    We bound $\left\vert a_{k+1}-ca_k\right\vert\leq 2$ for $\lambda\in I_1, I_2, I_3$, respectively.
    \begin{enumerate}
        \item If $\lambda\in I_1$, by Lemma~\ref{lemma:radius-bound}, and $\delta\leq q$, we have $a_k\geq 0$, and
        \begin{equation}
            x_1\leq x_2\leq 1-\frac{q-c\delta}{1-c}\lambda\leq 1-\delta\lambda.
        \end{equation}
        Since $x_1x_2=c(1-\delta\lambda)$, we have $c\leq x_1\leq x_2$. Therefore,
        \begin{equation}
        \begin{aligned}
            a_{k+1}-ca_k\geq& a_{k+1}-x_1a_k=x_2^k>0,\\
            a_{k+1}-ca_k\leq& a_{k+1}-x_1x_2a_k=\sum_{i=0}^{k}{x_1^ix_2^{k-i}}-x_1x_2\sum_{i=0}^{k-1}{x_1^ix_2^{k-1-i}} \\
            =&\sum_{i=0}^{k}{x_1^ix_2^{k-i}}-x_2\sum_{i=1}^{k}{x_1^ix_2^{k-i}}=x_2^k+(1-x_2)\sum_{i=1}^{k}{x_1^ix_2^{k-i}} \\
            \leq& x_2^k+k(1-x_2)x_2^k=x_2^k\left[1+k(1-x_2)\right]\stackrel{a}{\leq} 1,
        \end{aligned}
        \end{equation}
        where $\stackrel{a}{\leq}$ applies Lemma~\ref{lemma:aux-lemma-r}.
        \item If $\lambda\in I_2$, by Lemma~\ref{lemma:radius-bound}, $x_1$ and $x_2$ are complex and conjugate. Let $x_{1,2}=r(\cos\theta\pm\mathrm{i}\sin\theta)$, we have $r=\sqrt{c(1-\delta\lambda)}\leq 1$ and $0\leq\theta\leq\pi/2$ where $2r\cos\theta=x_1+x_2=1+c-q\lambda\geq 0$ from Lemma~\ref{lemma:param-prop}. Thus
        \begin{equation}
        \begin{aligned}            
            a_{k+1}-ca_k=&\frac{r^k\sin\left((k+1)\theta\right)}{\sin\theta}-\frac{r^{k-1}\sin\left(k\theta\right)}{\sin\theta} \\
            \stackrel{a}{=}&r^{k-1}\left(r\cos k\theta+\frac{r\cos\theta-c}{\sin\theta}\sin k\theta\right) \\
            =&r^{k-1}\left(r\cos k\theta+\frac{r-c}{\sin\theta}\sin k\theta-\frac{r(1-\cos\theta)}{\sin\theta}\sin k\theta\right) \\
            \stackrel{b}{=}&r^{k-1}\left(r\cos k\theta+\frac{r-c}{\sin\theta}\sin k\theta-r\tan\frac{\theta}{2}\sin k\theta\right),
        \end{aligned}
        \end{equation}
        where $\stackrel{a}{=}$ is from $\sin\left((k+1)\theta\right)=\sin k\theta\cos\theta+\cos k\theta\sin\theta$, and $\stackrel{b}{=}$ is from
        \begin{equation}
            \frac{1-\cos\theta}{\sin\theta}=\frac{2\sin^2\frac{\theta}{2}}{2\sin\frac{\theta}{2}\cos\frac{\theta}{2}}=\tan\frac{\theta}{2}.
        \end{equation}
        By triangular inequality, and $\left\vert\sin k\theta\right\vert\leq 1$, $\left\vert\cos k\theta\right\vert\leq 1$, $\left\vert\tan\frac{\theta}{2}\right\vert\leq 1$
        \begin{equation}
        \begin{aligned}            
            \left\vert a_{k+1}-ca_k\right\vert\leq& r^{k-1}\left(r\left\vert\cos k\theta\right\vert+\left\vert r-c\right\vert\left\vert\frac{\sin k\theta}{\sin\theta}\right\vert\right)+r^k\left\vert\tan\frac{\theta}{2}\right\vert\left\vert\sin k\theta\right\vert \\
            \stackrel{a}{\leq}& r^{k-1}\left(r+k(1-r)\right)+r^k=r^{k-1}\left(1+(k-1)(1-r)\right)+r^k \\
            \stackrel{b}{\leq}&2,
        \end{aligned}
        \end{equation}
        where $\stackrel{a}{\leq}$ holds for $r^2\leq c\leq 1\implies\left\vert r-c\right\vert\leq \max\left\{\left\vert r-r^2\right\vert,\left\vert r-1\right\vert\right\}=1-r$ and $\left\vert\frac{\sin k\theta}{\sin\theta}\right\vert\leq k$ in Lemma~\ref{lemma:aux-lemma-sin}, $\stackrel{b}{\leq}$ is from Lemma~\ref{lemma:aux-lemma-r} and $0\leq r\leq 1$.
        \item If $\lambda\in I_3$, by Lemma~\ref{lemma:radius-bound}, and $\delta\leq q$, we have $a_k\geq 0$, and
        \begin{equation}
            x_1\leq x_2\leq\frac{c\delta}{q}\leq c.
        \end{equation}
        Therefore,
        \begin{equation}
        \begin{aligned}
            a_{k+1}-ca_k\geq& a_{k+1}-a_k=\sum_{i=0}^{k}{x_1^ix_2^{k-i}}-\sum_{i=0}^{k-1}{x_1^ix_2^{k-1-i}} \\
            =&x_1^k-(1-x_2)\sum_{i=0}^{k-1}{x_1^ix_2^{k-1-i}} \\
            \geq&x_1^k-k(1-x_2)x_2^{k-1} \\
            \geq&-x_2^{k-1}\left(1+(k-1)x_2^k\right)\stackrel{a}{\geq} -1, \\
            a_{k+1}-ca_k\leq& a_{k+1}-x_2a_k=x_1^k\leq 1,
        \end{aligned}
        \end{equation}
        where $\stackrel{a}{\geq}$ holds for Lemma~\ref{lemma:aux-lemma-r}.
    \end{enumerate}
\end{proof}

For $\lambda\leq\frac{(1-c)^2}{q-c\delta}$, we define $\bs P$, which diagonalizes $\bs V$:
\begin{equation}
    \bs P=\begin{bmatrix}
    1 & -1 \\
    1 & -c \\
\end{bmatrix}, \quad\bs P^{-1}=\dfrac{1}{1-c}\begin{bmatrix}
    -c & 1 \\
    -1 & 1 \\
\end{bmatrix}.\label{eq:def-P}
\end{equation}The following lemma provides bound of $\bs P^{-1}\bs A(\lambda)\bs P$.
\begin{lemma}\label{lemma:tf-A-bound}
    Let $\bs P$ and $\bs P^{-1}$ defined in \eqref{eq:def-P}. Suppose $\lambda\leq\frac{(1-c)^2}{q-c\delta}$, we have
    \begin{equation}
        \left\Vert\bs P^{-1}\bs A(\lambda)\bs P\right\Vert\leq 1.
    \end{equation}
\end{lemma}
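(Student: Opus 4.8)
The plan is to conjugate $\bs A(\lambda)$ into the eigenbasis of $\bs V$, so that $M:=\bs P^{-1}\bs A(\lambda)\bs P$ becomes an explicit $2\times 2$ matrix, and then to control $\left\Vert M\right\Vert$ with the elementary criterion that, for a real $2\times 2$ matrix, the squared singular values sum to $\left\Vert M\right\Vert_F^2$ and multiply to $(\det M)^2$; hence $\left\Vert M\right\Vert\leq 1$ holds exactly when $\left\vert\det M\right\vert\leq 1$ and $\left\Vert M\right\Vert_F^2\leq 1+(\det M)^2$.

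First I would observe that the columns $(1,1)^\top$ and $(-1,-c)^\top$ of $\bs P$ are eigenvectors of $\bs V$ with eigenvalues $1$ and $c$, so $\bs P^{-1}\bs V\bs P=\diag\{1,c\}$. Writing $\bs A(\lambda)=\bs V-\bs G(\lambda)$ with $\bs G(\lambda)=\begin{bmatrix}0&\delta\lambda\\0&q\lambda\end{bmatrix}$ and substituting the identities $\tfrac{q-c\delta}{1-c}=\tfrac{\gamma+\delta}{2}=:A$ and $\tfrac{q-\delta}{1-c}=\tfrac{\gamma-\delta}{2}=:B$ from Lemma~\ref{lemma:param-prop}, a direct computation yields
\begin{equation}
\begin{aligned}
    M&=\begin{bmatrix}1-\lambda A & c\lambda A\\ -\lambda B & c(1+\lambda B)\end{bmatrix},\qquad \det M=c\left(1-\delta\lambda\right),\\
    \left\Vert M\right\Vert_F^2&=(1+c^2)-2\lambda\left(A-c^2B\right)+\lambda^2(1+c^2)\left(A^2+B^2\right),
\end{aligned}
\end{equation}
where $A>0$, $B\geq 0$ (since $\gamma\geq\delta$), and $0<c<1$.

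Next I would dispose of the determinant bound: since $\gamma\geq\delta$ we have $q\geq\delta$, hence $q-c\delta\geq\delta(1-c)$, so the hypothesis $\lambda\leq\frac{(1-c)^2}{q-c\delta}$ forces $\delta\lambda\leq 1-c$, giving $\det M=c(1-\delta\lambda)\in[0,1)$. For the Frobenius bound I would form $1+(\det M)^2-\left\Vert M\right\Vert_F^2$; the key simplification is the identity $B+\delta=A$ (i.e. $\tfrac{\gamma-\delta}{2}+\delta=\tfrac{\gamma+\delta}{2}$), which cancels the $\lambda^2 AB$ cross-terms and collapses the difference to the quadratic $2\lambda A(1-c^2)-\lambda^2\big[(1+c^2)(A^2+B^2)-c^2\delta^2\big]$. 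This vanishes at $\lambda=0$ (where $M=\diag\{1,c\}$), and for $\lambda>0$ its nonnegativity is equivalent to $\lambda\leq\frac{2A(1-c^2)}{(1+c^2)(A^2+B^2)-c^2\delta^2}$, the denominator being $\tfrac12\big[\gamma^2(1+c^2)+\delta^2(1-c^2)\big]>0$. Since the hypothesis reads $\lambda\leq\frac{(1-c)^2}{q-c\delta}=\frac{1-c}{A}=\frac{2(1-c)}{\gamma+\delta}$, the statement reduces to the scalar inequality $\gamma^2(1+c^2)+\delta^2(1-c^2)\leq(\gamma+\delta)^2(1+c)$, which holds because the right side minus the left side equals $\gamma^2c(1-c)+2\gamma\delta(1+c)+\delta^2c(1+c)\geq 0$ for $0\leq c\leq 1$ and $\gamma,\delta\geq 0$.

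The main obstacle is purely the algebraic bookkeeping in the last step rather than any conceptual issue: one must see that the $\lambda^2$ terms collapse through $B+\delta=A$ into a genuine quadratic in $\lambda$, so that $\left\Vert M\right\Vert\leq 1$ turns into an explicit interval constraint on $\lambda$, and then verify that the prescribed bound $\lambda\leq(1-c)^2/(q-c\delta)$ is exactly calibrated to lie inside that interval — the final comparison being an elementary term-by-term inequality in $\gamma,\delta,c$. Everything before that is routine once $\bs V$ is diagonalized.
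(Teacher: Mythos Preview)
Your proposal is correct and follows essentially the same route as the paper. Both proofs compute $M=\bs P^{-1}\bs A(\lambda)\bs P$ explicitly in the eigenbasis of $\bs V$ and then verify $\left\Vert M\right\Vert\leq 1$ via a $2\times 2$ positive semidefiniteness check; the paper phrases this as $\bs I-M^\top M\succeq 0$ via Sylvester's criterion, while you use the equivalent criterion $\left\vert\det M\right\vert\leq 1$ and $\left\Vert M\right\Vert_F^2\leq 1+(\det M)^2$, noting that $\det(\bs I-M^\top M)=1+(\det M)^2-\left\Vert M\right\Vert_F^2$, so the decisive scalar inequality is literally the same in both arguments.
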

\begin{proof}
    Let
    \begin{equation}
        \bs M=\bs P^{-1}\bs A(\lambda)\bs P=\begin{bmatrix}            
             1-\xi\lambda  & c\xi\lambda  \\
             -\eta  \lambda  & c+c\eta\lambda \\
        \end{bmatrix},
    \end{equation}
    we will show that $\bs I-\bs M^\top\bs M$ is a PSD matrix. Let $\xi=\frac{q-c\delta}{1-c}$ and $\eta=\frac{q-\delta}{1-c}$, so $\xi\lambda<1-c$. Direct calculation shows that
    \begin{equation}
        \bs M^\top\bs M=\begin{bmatrix}    
            (1-\xi\lambda)^2+\eta^2\lambda^2 & c\lambda\left(\xi-\eta-\left(\xi^2+\eta^2\right)\lambda\right) \\
            c\lambda\left(\xi-\eta-\left(\xi^2+\eta^2\right)\lambda\right) & c^2\xi^2\lambda^2+c^2(1+\eta\lambda)^2 \\
        \end{bmatrix}.
    \end{equation}
    Furthermore, 
    \begin{equation}
    \begin{aligned}        
        \left(\bs I-\bs M^\top\bs M\right)_{11}=&1-\left[(1-\xi\lambda)^2+\eta^2\lambda^2\right]=2\xi\lambda-\xi^2\lambda^2-\eta^2\lambda^2 \\
        \stackrel{a}{\geq}&2\xi\lambda-2\xi^2\lambda^2=2\xi\lambda(1-\xi\lambda) \\
        \stackrel{b}{\geq}&0, \\
        \det\left(\bs I-\bs M^\top\bs M\right)=&\lambda\left[2(1-c^2)\xi-(\xi^2+\eta^2+2c^2\xi\eta)\lambda\right] \\
        \stackrel{c}{\geq}&\lambda\left[2(1-c^2)\xi-2(1+c^2)\xi^2\lambda\right] \\
        \stackrel{d}{\geq}&\lambda\left[2(1-c^2)\xi-2(1+c^2)(1-c)\xi\right] \\
        =&\left[2(1-c^2)-2(1+c^2)(1-c)\right]\xi\lambda \\
        =&2c(1-c)^2\xi\lambda\geq0
    \end{aligned}
    \end{equation}
    where $\stackrel{a}{\geq}$ and $\stackrel{c}{\geq}$ uses $\eta\leq\xi$, $\stackrel{b}{\geq}$ and $\stackrel{d}{\geq}$ a uses $\xi\lambda\leq 1-c\leq 1$. Therefore, by Sylvester's criterion, $\bs I-\bs M^\top\bs M$ is a PSD matrix. From the definition of $\bs M$, we have
    \begin{equation}
        \left\Vert\bs P^{-1}\bs A(\lambda)\bs P\right\Vert=\left\Vert\bs M\right\Vert=\sup_{\bs x}\frac{\left\Vert\bs M\bs x\right\Vert}{\left\Vert\bs x\right\Vert}=\sup_{\bs x}\frac{\bs x^\top\bs M\bs M\bs x}{\bs x^\top\bs x}\leq 1.
    \end{equation}
\end{proof}

The following lemma provides upper bound of the product of momentum matrices.
\begin{lemma}\label{lemma:tf-prod-A-bound}
    For $\mu_1,\mu_2,\ldots,\mu_k\leq\frac{(1-c)^2}{q-c\delta}$, we have
    \begin{equation}
        \left\Vert\prod_{i=1}^{k}{\bs A(\mu_i)}\right\Vert\leq\frac{4}{1-c}.
    \end{equation}
\end{lemma}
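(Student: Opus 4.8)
The plan is to conjugate the product of momentum matrices by the fixed matrix $\bs P$ from \eqref{eq:def-P} so that Lemma~\ref{lemma:tf-A-bound} applies factor by factor, and then to control the resulting condition-number quantity $\left\Vert\bs P\right\Vert\left\Vert\bs P^{-1}\right\Vert$ by a direct $2\times 2$ computation.

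First I would telescope: since $\bs P$ does not depend on $\lambda$, inserting $\bs P\bs P^{-1}=\bs I$ between consecutive factors gives
\begin{equation}
    \prod_{i=1}^{k}\bs A(\mu_i)=\bs P\left(\prod_{i=1}^{k}\bs P^{-1}\bs A(\mu_i)\bs P\right)\bs P^{-1}.
\end{equation}
Each $\mu_i\leq\frac{(1-c)^2}{q-c\delta}$ by hypothesis, so Lemma~\ref{lemma:tf-A-bound} gives $\left\Vert\bs P^{-1}\bs A(\mu_i)\bs P\right\Vert\leq 1$ for every $i$; submultiplicativity of the spectral norm then yields $\left\Vert\prod_{i=1}^{k}\bs A(\mu_i)\right\Vert\leq\left\Vert\bs P\right\Vert\cdot\left\Vert\bs P^{-1}\right\Vert$.

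Next I would compute $\left\Vert\bs P\right\Vert\left\Vert\bs P^{-1}\right\Vert$ exactly. The matrix $\bs P^{\top}\bs P$ is symmetric positive definite with $\tr(\bs P^{\top}\bs P)=\left\Vert\bs P\right\Vert_F^2=3+c^2$ and $\det(\bs P^{\top}\bs P)=(\det\bs P)^2=(1-c)^2$. Writing its eigenvalues as $\lambda_{+}\geq\lambda_{-}>0$, we have $\left\Vert\bs P\right\Vert^2=\lambda_{+}$ and $\left\Vert\bs P^{-1}\right\Vert^2=\lambda_{-}^{-1}=\lambda_{+}/(1-c)^2$, hence $\left\Vert\bs P\right\Vert\left\Vert\bs P^{-1}\right\Vert=\lambda_{+}/(1-c)$. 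Finally, since $0<c<1$ (recall $c=\alpha(1-\beta)$ with $\alpha=1/(1+\beta)$), I would bound $\lambda_{+}\leq\lambda_{+}+\lambda_{-}=3+c^2\leq 4$, so that $\left\Vert\prod_{i=1}^{k}\bs A(\mu_i)\right\Vert\leq 4/(1-c)$.

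There is no deep obstacle here: the only step requiring a moment's thought is recovering $\left\Vert\bs P^{-1}\right\Vert$ from $\left\Vert\bs P\right\Vert$ via the identity $\lambda_{+}\lambda_{-}=(1-c)^2$, and the telescoping step crucially exploits that $\bs P$ is independent of the step-size/eigenvalue products $\mu_i$, so that the inner $\bs P\bs P^{-1}$ cancellations go through for arbitrarily many distinct $\mu_i$.
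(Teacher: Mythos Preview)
Your proposal is correct and follows essentially the same approach as the paper: conjugate by $\bs P$, apply Lemma~\ref{lemma:tf-A-bound} to each factor, and then bound $\left\Vert\bs P\right\Vert\left\Vert\bs P^{-1}\right\Vert$. The only cosmetic difference is that the paper bounds each norm separately via the Frobenius norm, $\left\Vert\bs P\right\Vert\leq\left\Vert\bs P\right\Vert_F\leq 2$ and $\left\Vert\bs P^{-1}\right\Vert\leq\left\Vert\bs P^{-1}\right\Vert_F\leq\frac{2}{1-c}$, whereas you compute $\left\Vert\bs P\right\Vert\left\Vert\bs P^{-1}\right\Vert=\lambda_+/(1-c)$ directly from the trace and determinant of $\bs P^\top\bs P$; both routes give the same $4/(1-c)$.
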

\begin{proof}
    Note that
    \begin{equation}
    \begin{aligned}        
        \left\Vert\prod_{i=1}^{k}{\bs A(\mu_i)}\right\Vert=&\left\Vert\bs P\left(\prod_{i=1}^{k}{\bs P^{-1}\bs A(\mu_i)\bs P}\right)\bs P^{-1}\right\Vert \\
        \leq&\left\Vert\bs P\right\Vert\prod_{i=1}^{k}{\left\Vert\bs P^{-1}\bs A(\mu_i)\bs P\right\Vert}\left\Vert\bs P^{-1}\right\Vert \\
        \stackrel{a}{\leq}&2\cdot1\cdot\frac{2}{1-c}=\frac{4}{1-c},
    \end{aligned}
    \end{equation}
    where $\stackrel{a}{\leq}$ applies $\left\Vert\bs P\right\Vert\leq\left\Vert\bs P\right\Vert_F\leq2$, $\left\Vert\bs P^{-1}\right\Vert\leq\left\Vert\bs P^{-1}\right\Vert_F\leq\frac{2}{1-c}$ and Lemma~\ref{lemma:tf-A-bound}.
\end{proof}

The following lemma provides an upper bound of the product of momentum matrices applied to noise vector $\begin{bmatrix}
    \delta & q
\end{bmatrix}^\top$.
\begin{lemma}\label{lemma:tf-prod-A-noise-bound}
    For $\mu_1,\mu_2,\ldots,\mu_k\leq\frac{(1-c)^2}{q-c\delta}$, we have
    \begin{equation}
        \left\Vert\prod_{i=1}^{k}{\bs A(\mu_i)}\begin{bmatrix}
            \delta \\
            q
        \end{bmatrix}\right\Vert\leq\frac{2\sqrt 2 (q-c\delta)}{1-c}.
    \end{equation}
\end{lemma}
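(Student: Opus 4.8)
The plan is to conjugate the product by $\bs P$ exactly as in the proof of Lemma~\ref{lemma:tf-prod-A-bound}, but to track the noise vector $\begin{bmatrix}\delta & q\end{bmatrix}^\top$ through $\bs P^{-1}$ explicitly instead of crudely bounding it by $\|\bs P^{-1}\|$. First I would write
\[
\prod_{i=1}^{k}\bs A(\mu_i)\begin{bmatrix}\delta\\q\end{bmatrix}
=\bs P\left(\prod_{i=1}^{k}\bs P^{-1}\bs A(\mu_i)\bs P\right)\bs P^{-1}\begin{bmatrix}\delta\\q\end{bmatrix}
\]
and apply submultiplicativity of the operator norm. Each factor $\bs P^{-1}\bs A(\mu_i)\bs P$ has norm at most $1$ by Lemma~\ref{lemma:tf-A-bound}, since $\mu_i\leq\frac{(1-c)^2}{q-c\delta}$, and $\|\bs P\|\leq\|\bs P\|_F=\sqrt{3+c^2}\leq 2$. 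So it remains only to estimate $\bigl\|\bs P^{-1}\begin{bmatrix}\delta & q\end{bmatrix}^\top\bigr\|$.

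The key computation is
\[
\bs P^{-1}\begin{bmatrix}\delta\\q\end{bmatrix}
=\frac{1}{1-c}\begin{bmatrix}-c & 1\\-1 & 1\end{bmatrix}\begin{bmatrix}\delta\\q\end{bmatrix}
=\begin{bmatrix}\dfrac{q-c\delta}{1-c}\\ \dfrac{q-\delta}{1-c}\end{bmatrix}.
\]
Both entries are nonnegative and the second is no larger than the first: $0\leq q-\delta\leq q-c\delta$, because $c\leq1$ gives $c\delta\leq\delta$ and $q\geq\delta$ holds from the parameter choice ($\gamma\geq\delta$ together with $q=\alpha\delta+(1-\alpha)\gamma$). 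Hence $\bigl\|\bs P^{-1}\begin{bmatrix}\delta & q\end{bmatrix}^\top\bigr\|\leq\sqrt{2}\,\frac{q-c\delta}{1-c}$, and multiplying the three estimates yields
\[
\left\|\prod_{i=1}^{k}\bs A(\mu_i)\begin{bmatrix}\delta\\q\end{bmatrix}\right\|\leq 2\cdot 1\cdot\sqrt{2}\,\frac{q-c\delta}{1-c}=\frac{2\sqrt{2}\,(q-c\delta)}{1-c},
\]
as claimed.

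There is no genuine obstacle here: the lemma is just a sharpening of Lemma~\ref{lemma:tf-prod-A-bound} in which the generic factor $\|\bs P^{-1}\|\leq\frac{2}{1-c}$ is replaced by the (at most $\sqrt2$-times-larger-than-$\frac{q-c\delta}{1-c}$) norm of the specific image $\bs P^{-1}\begin{bmatrix}\delta & q\end{bmatrix}^\top$. The only point needing a moment's attention is the elementary chain $0\le q-\delta\le q-c\delta$, which is exactly where the parameter constraints $\gamma\ge\delta$ and $c\le 1$ (equivalently $\beta\ge 0$) enter.
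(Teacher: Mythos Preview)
Your proof is correct and essentially identical to the paper's: both conjugate by $\bs P$, use Lemma~\ref{lemma:tf-A-bound} to bound each $\bs P^{-1}\bs A(\mu_i)\bs P$ by $1$, compute $\bs P^{-1}\begin{bmatrix}\delta\\q\end{bmatrix}=\begin{bmatrix}\frac{q-c\delta}{1-c}\\\frac{q-\delta}{1-c}\end{bmatrix}$, and bound its norm by $\sqrt{2}\,\frac{q-c\delta}{1-c}$ via $q-\delta\le q-c\delta$. The only cosmetic difference is that you spell out the justification $q\ge\delta$ (from $\gamma\ge\delta$) and $c\le 1$ a bit more explicitly than the paper does.
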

\begin{proof}
    Note that
    \begin{equation}
    \begin{aligned}        
        \left\Vert\prod_{i=1}^{k}{\bs A(\mu_i)}\right\Vert=&\left\Vert\bs P\left(\prod_{i=1}^{k}{\bs P^{-1}\bs A(\mu_i)\bs P}\right)\bs P^{-1}\begin{bmatrix}
            \delta \\
            q
        \end{bmatrix}\right\Vert \\=&\left\Vert\bs P\left(\prod_{i=1}^{k}{\bs P^{-1}\bs A(\mu_i)\bs P}\right)\begin{bmatrix}
            \frac{q-c\delta}{1-c} \\
            \frac{q-\delta}{1-c}
        \end{bmatrix}\right\Vert \\
        \leq&\left\Vert\bs P\right\Vert\prod_{i=1}^{k}{\left\Vert\bs P^{-1}\bs A(\mu_i)\bs P\right\Vert}\left\Vert\begin{bmatrix}
            \frac{q-c\delta}{1-c} \\
            \frac{q-\delta}{1-c}
        \end{bmatrix}\right\Vert \\
        \stackrel{a}{\leq}&2\cdot1\cdot\frac{\sqrt 2 (q-c\delta)}{1-c}=\frac{2\sqrt 2 (q-c\delta)}{1-c},
    \end{aligned}
    \end{equation}
    where $\stackrel{a}{\leq}$ applies $\left\Vert\bs P\right\Vert\leq2$, $\frac{q-\delta}{1-c}\leq\frac{q-c\delta}{1-c}$ and Lemma~\ref{lemma:tf-A-bound}.
\end{proof}

The following lemma provides an upper bound of the product of momentum matrices applied to bias vector $\begin{bmatrix}
    1 & 1
\end{bmatrix}^\top$.
\begin{lemma}\label{lemma:tf-prod-A-bias-bound}
    For $\mu_1,\mu_2,\ldots,\mu_k\leq\frac{(1-c)^2}{q-c\delta}$, we have
    \begin{equation}
        \left\Vert\prod_{i=1}^{k}{\bs A(\mu_i)}\begin{bmatrix}
            1 \\
            1
        \end{bmatrix}\right\Vert\leq 2.
    \end{equation}
\end{lemma}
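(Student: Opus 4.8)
The plan is to reduce the estimate to the single-stage contraction bound of Lemma~\ref{lemma:tf-A-bound}, following the same conjugation trick used in the proofs of Lemma~\ref{lemma:tf-prod-A-bound} and Lemma~\ref{lemma:tf-prod-A-noise-bound}, but exploiting the special form of the bias vector $\begin{bmatrix}1 & 1\end{bmatrix}^\top$ to avoid paying the $\left\Vert\bs P^{-1}\right\Vert\le 2/(1-c)$ factor. First I would conjugate by $\bs P$ from \eqref{eq:def-P} and write
\[
  \prod_{i=1}^{k}\bs A(\mu_i)\begin{bmatrix}1\\1\end{bmatrix}
  =\bs P\left(\prod_{i=1}^{k}\bs P^{-1}\bs A(\mu_i)\bs P\right)\bs P^{-1}\begin{bmatrix}1\\1\end{bmatrix}.
\]

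The key observation is that $\bs P^{-1}\begin{bmatrix}1\\1\end{bmatrix}=\tfrac{1}{1-c}\begin{bmatrix}1-c\\0\end{bmatrix}=\begin{bmatrix}1\\0\end{bmatrix}$, a unit vector. This is exactly why the $1/(1-c)$ blow-up that appears in Lemma~\ref{lemma:tf-prod-A-bound} is absent here: $\begin{bmatrix}1 & 1\end{bmatrix}^\top$ is the eigenvector of $\bs V$ corresponding to eigenvalue $1$, i.e.\ the first column of $\bs P$, so $\bs P^{-1}$ maps it to the standard basis vector rather than amplifying it.

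Next, since each $\mu_i\le (1-c)^2/(q-c\delta)$, Lemma~\ref{lemma:tf-A-bound} gives $\left\Vert\bs P^{-1}\bs A(\mu_i)\bs P\right\Vert\le 1$, so by submultiplicativity
\[
  \left\Vert\left(\prod_{i=1}^{k}\bs P^{-1}\bs A(\mu_i)\bs P\right)\begin{bmatrix}1\\0\end{bmatrix}\right\Vert
  \le\prod_{i=1}^{k}\left\Vert\bs P^{-1}\bs A(\mu_i)\bs P\right\Vert\le 1.
\]
Finally, bounding $\left\Vert\bs P\right\Vert\le\left\Vert\bs P\right\Vert_F=\sqrt{1+1+1+c^2}\le 2$ and chaining the three estimates yields $\left\Vert\prod_{i=1}^{k}\bs A(\mu_i)\begin{bmatrix}1\\1\end{bmatrix}\right\Vert\le 2$, as claimed.

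There is essentially no obstacle: the only point that requires care is spotting that $\bs P^{-1}$ sends $\begin{bmatrix}1 & 1\end{bmatrix}^\top$ to $\begin{bmatrix}1 & 0\end{bmatrix}^\top$; once that is noted, the argument is a verbatim specialization of the chain of norm inequalities already appearing in Lemma~\ref{lemma:tf-prod-A-bound} and Lemma~\ref{lemma:tf-prod-A-noise-bound}.
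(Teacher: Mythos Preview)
Your proposal is correct and takes essentially the same approach as the paper's proof: both conjugate by $\bs P$, observe that $\bs P^{-1}\begin{bmatrix}1\\1\end{bmatrix}=\begin{bmatrix}1\\0\end{bmatrix}$, apply Lemma~\ref{lemma:tf-A-bound} to each factor, and finish with $\left\Vert\bs P\right\Vert\le 2$. Your added remark that $\begin{bmatrix}1 & 1\end{bmatrix}^\top$ is the first column of $\bs P$ (the eigenvector of $\bs V$ for eigenvalue $1$) nicely explains why the $1/(1-c)$ factor from $\left\Vert\bs P^{-1}\right\Vert$ disappears here, which the paper leaves implicit.
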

\begin{proof}
    Note that
    \begin{equation}
    \begin{aligned}        
        \left\Vert\prod_{i=1}^{k}{\bs A(\mu_i)}\right\Vert=&\left\Vert\bs P\left(\prod_{i=1}^{k}{\bs P^{-1}\bs A(\mu_i)\bs P}\right)\bs P^{-1}\begin{bmatrix}
            1 \\
            1
        \end{bmatrix}\right\Vert \\=&\left\Vert\bs P\left(\prod_{i=1}^{k}{\bs P^{-1}\bs A(\mu_i)\bs P}\right)\begin{bmatrix}
            1 \\
            0
        \end{bmatrix}\right\Vert \\
        \leq&\left\Vert\bs P\right\Vert\prod_{i=1}^{k}{\left\Vert\bs P^{-1}\bs A(\mu_i)\bs P\right\Vert}\left\Vert\begin{bmatrix}
            1 \\
            0
        \end{bmatrix}\right\Vert \\
        \stackrel{a}{\leq}&2\cdot1\cdot1=2,
    \end{aligned}
    \end{equation}
    where $\stackrel{a}{\leq}$ applies $\left\Vert\bs P\right\Vert\leq2$ and Lemma~\ref{lemma:tf-A-bound}.
\end{proof}

\subsection{Variance Upper Bound}\label{sec:asgd-variance-ub}
This section analyzes $\tilde{\bs C}_t$ which defined in \eqref{eq:def-tilde-var}. We first provide a characterization of the stationary state, and then prove Lemma~\ref{lemma:tilde-var-ub} and~\ref{lemma:tilde-var-uniform-ub}.

\subsubsection{Analysis of Stationary State}
We introduce the stationary state matrix at $\ell$-th stage:
\begin{equation}\label{eq:def-Q}
    \tilde{\bs Q}_{(\ell)}=\sum_{k=1}^{\infty}\tilde{\cB}_{(\ell)}^k\circ\begin{bmatrix}
    \delta_{(\ell)}^2\bs S & \delta_{(\ell)} q_{(\ell)}\bs S \\
    \delta_{(\ell)} q_{(\ell)}\bs S & q_{(\ell)}^2\bs S
    \end{bmatrix}.
\end{equation}
Lemma F.4 in \cite{lirisk} shows $\tilde{\bs Q}_{(\ell)}$ exists and finite. Note that since $\tilde{\cB}_t=\bs A_{(\ell)}\otimes\bs A_{(\ell)}$ and $\bs A_{(\ell)}$ is block-diagonal, each $\tilde{\cB}_{(\ell)}^k\circ\begin{bmatrix}
    \delta_{(\ell)}^2\bs S & \delta_{(\ell)} q_{(\ell)}\bs S \\
    \delta_{(\ell)} q_{(\ell)}\bs S & q_{(\ell)}^2\bs S
\end{bmatrix}$ is also block-diagonal. Thus, $\tilde{\bs Q}_{(\ell)}$ is block-diagonal, and we denote the $i$-th block as $\tilde{\bs Q}_{(\ell),i}\in\bbR^{2\times2}$. Furthermore, we define
\begin{equation}
    \tilde{\cB}_{t,i}=\bs A_{t,i}\otimes\bs A_{t,i}, \quad\tilde{\cB}_{(\ell),i}=\bs A_{(\ell),i}\otimes\bs A_{(\ell),i},\label{eq:def-tilde-B-op}
\end{equation}
Then $\tilde{\bs Q}_{(\ell),i}$ can be represented as
\begin{equation}\label{eq:def-Qi}
    \tilde{\bs Q}_{(\ell),i}=\sum_{k=1}^{\infty}\tilde{\cB}_{(\ell),i}^k\circ\begin{bmatrix}
    \delta_{(\ell)}^2\lambda_i & \delta_{(\ell)} q_{(\ell)}\lambda_i \\
    \delta_{(\ell)} q_{(\ell)}\lambda_i & q_{(\ell)}^2\lambda_i
    \end{bmatrix}.
\end{equation}
    
Define an operator $\cT_{(\ell)}=\cI-\tilde{\cB}_{(\ell)}+\bs G_{(\ell)}\otimes\bs G_{(\ell)}=\cI-\bs V\otimes\bs V+\bs V\otimes\bs G_{(\ell)}+\bs G_{(\ell)}\otimes\bs V$, and
\begin{equation}
    \bs U_{(\ell)}=\cT_{(\ell)}^{-1}\circ\begin{bmatrix}
    \delta_{(\ell)}^2\bs S & \delta_{(\ell)} q_{(\ell)}\bs S \\
    \delta_{(\ell)} q_{(\ell)}\bs S & q_{(\ell)}^2\bs S
    \end{bmatrix}.\label{eq:def-U}
\end{equation}
The same argument holds for $\bs U_{(\ell)}$ to be block-diagonal, and $i$-th block of $\bs U_{(\ell)}$ is denoted as $\bs U_{(\ell),i}\in\bbR^{2\times2}$. The following lemma characterize $\tilde{\bs Q}_{(\ell)}$ using $\bs U_{(\ell),i}$.
\begin{lemma}\label{lemma:calc-S}
    Let $\tilde{\bs Q}_{(\ell)}$ defined in \eqref{eq:def-Q}. Then we have
    \begin{equation}
        \tilde{\bs Q}_{(\ell),i}=\frac{1}{1-\left(\bs U_{(\ell),i}\right)_{22}\lambda_i}\bs U_{(\ell),i}.\label{eq:calc-S}
    \end{equation}
\end{lemma}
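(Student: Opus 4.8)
The plan is to reduce the identity to a single eigen-block, recognize that the series \eqref{eq:def-Q} solves a linear (Lyapunov-type) fixed point equation, and then exploit the rank-one structure of the momentum matrix $\bs G_{(\ell),i}$ — only its second column is nonzero — to re-express that equation through $\cT_{(\ell),i}$ with a right-hand side lying on the ray spanned by the noise block. Applying $\cT_{(\ell),i}^{-1}$ then forces $\tilde{\bs Q}_{(\ell),i}$ to be a scalar multiple of $\bs U_{(\ell),i}$, and reading off one matrix entry pins the scalar down.

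Since $\bs S$ is diagonal, $\bs A_{(\ell)}$, $\bs V$, $\bs G_{(\ell)}$ — hence $\tilde{\cB}_{(\ell)}$ and $\cT_{(\ell)}$ — are block diagonal in the eigenbasis of $\bs S$, so \eqref{eq:def-Q}, $\bs U_{(\ell)}$, and \eqref{eq:calc-S} all decouple across blocks; I would fix an index $i$ and work on the space of $2\times2$ matrices. Write $\bs a_{(\ell)}=(\delta_{(\ell)},q_{(\ell)})^{\top}$, so the $i$-th noise block is $\bs N_{(\ell),i}=\lambda_i\bs a_{(\ell)}\bs a_{(\ell)}^{\top}$, and recall $\cT_{(\ell),i}=\cI_2-\tilde{\cB}_{(\ell),i}+\bs G_{(\ell),i}\otimes\bs G_{(\ell),i}$ and $\bs U_{(\ell),i}=\cT_{(\ell),i}^{-1}\circ\bs N_{(\ell),i}$, where $\cI_2$ is the identity operator. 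Because $\rho(\bs A_{(\ell),i})<1$ (Lemma~\ref{lemma:radius-bound}), $\cI_2-\tilde{\cB}_{(\ell),i}$ is invertible, the series in \eqref{eq:def-Q} converges (Lemma F.4 of \citet{lirisk}), and summing it gives the fixed point equation
\[
    (\cI_2-\tilde{\cB}_{(\ell),i})\circ\tilde{\bs Q}_{(\ell),i}=\bs N_{(\ell),i}.
\]

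The key observation — and the only genuinely non-mechanical step — is that $\bs G_{(\ell),i}=\begin{bmatrix}0&\delta_{(\ell)}\lambda_i\\0&q_{(\ell)}\lambda_i\end{bmatrix}=\lambda_i\,\bs a_{(\ell)}\bs e_2^{\top}$ has only its second column nonzero, so for \emph{every} $2\times2$ matrix $\bs M$,
\[
    (\bs G_{(\ell),i}\otimes\bs G_{(\ell),i})\circ\bs M=\bs G_{(\ell),i}\bs M\bs G_{(\ell),i}^{\top}=\lambda_i^{2}(\bs M)_{22}\,\bs a_{(\ell)}\bs a_{(\ell)}^{\top}=\lambda_i(\bs M)_{22}\,\bs N_{(\ell),i}.
\]
Adding $(\bs G_{(\ell),i}\otimes\bs G_{(\ell),i})\circ\tilde{\bs Q}_{(\ell),i}$ to both sides of the fixed point equation and using $\cT_{(\ell),i}=(\cI_2-\tilde{\cB}_{(\ell),i})+\bs G_{(\ell),i}\otimes\bs G_{(\ell),i}$ collapses the right-hand side onto the ray $\mathbb{R}\,\bs N_{(\ell),i}$:
\[
    \cT_{(\ell),i}\circ\tilde{\bs Q}_{(\ell),i}=\bigl(1+\lambda_i(\tilde{\bs Q}_{(\ell),i})_{22}\bigr)\bs N_{(\ell),i},
\]
so applying $\cT_{(\ell),i}^{-1}$ yields $\tilde{\bs Q}_{(\ell),i}=c\,\bs U_{(\ell),i}$ with $c:=1+\lambda_i(\tilde{\bs Q}_{(\ell),i})_{22}$.

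It then remains to evaluate $c$. Reading the $(2,2)$ entry of $\tilde{\bs Q}_{(\ell),i}=c\,\bs U_{(\ell),i}$ gives $(\tilde{\bs Q}_{(\ell),i})_{22}=c\,(\bs U_{(\ell),i})_{22}$, which together with the definition of $c$ gives $c=1+\lambda_i c\,(\bs U_{(\ell),i})_{22}$, i.e. $c\bigl(1-\lambda_i(\bs U_{(\ell),i})_{22}\bigr)=1$. As a convergent sum of PSD matrices, $\tilde{\bs Q}_{(\ell),i}$ is PSD, so $(\tilde{\bs Q}_{(\ell),i})_{22}\ge0$ and $c\ge1>0$; hence $1-\lambda_i(\bs U_{(\ell),i})_{22}=1/c\in(0,1]$ is positive and $c=1/\bigl(1-\lambda_i(\bs U_{(\ell),i})_{22}\bigr)$, which is exactly \eqref{eq:calc-S}. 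Everything besides the rank-one identity for $\bs G_{(\ell),i}\otimes\bs G_{(\ell),i}$ — block diagonalization, telescoping the series, solving the scalar fixed point — is bookkeeping; the only mild subtlety is verifying that the denominator $1-\lambda_i(\bs U_{(\ell),i})_{22}$ does not vanish, which the positivity of $c$ takes care of.
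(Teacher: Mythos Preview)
Your proof is correct and follows essentially the same approach as the paper: both reduce to a single $2\times2$ block, exploit the rank-one structure $\bs G_{(\ell),i}\bs M\bs G_{(\ell),i}^{\top}=\lambda_i(\bs M)_{22}\bs N_{(\ell),i}$, and solve the resulting scalar fixed-point equation. The only cosmetic difference is that the paper reaches the recursion $\tilde{\bs Q}_{(\ell),i}=\bs U_{(\ell),i}+\lambda_i(\bs U_{(\ell),i})_{22}\tilde{\bs Q}_{(\ell),i}$ by expanding $(\cI-\tilde{\cB}_{(\ell),i})^{-1}$ as a Neumann series in $\cT_{(\ell),i}^{-1}\circ(\bs G_{(\ell),i}\otimes\bs G_{(\ell),i})$, whereas you add $(\bs G_{(\ell),i}\otimes\bs G_{(\ell),i})\circ\tilde{\bs Q}_{(\ell),i}$ directly to both sides of the Lyapunov equation; your packaging is slightly more direct but the substance is identical.
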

\begin{proof}
    Note that
    \begin{equation}
    \begin{aligned}
        \sum_{k=0}^{\infty}\tilde{\cB}_{(\ell)}^k=&\left(\cI-\tilde{\cB}_{(\ell)}\right)^{-1}=\left(\cT_{(\ell)}-\bs G_{(\ell)}\otimes\bs G_{(\ell)}\right)^{-1} \\
        =&\left[\cT_{(\ell)}\circ\left(\cI-\cT_{(\ell)}^{-1}\circ\left(\bs G_{(\ell)}\otimes\bs G_{(\ell)}\right)\right)\right]^{-1} \\
        =&\left(\cI-\cT_{(\ell)}^{-1}\circ\left(\bs G_{(\ell)}\otimes\bs G_{(\ell)}\right)\right)^{-1}\circ\cT_{(\ell)}^{-1} \\
        =&\sum_{k=0}^{\infty}{\left(\cT_{(\ell)}^{-1}\circ\left(\bs G_{(\ell)}\otimes\bs G_{(\ell)}\right)\right)^k\circ\cT_{(\ell)}^{-1}}.
    \end{aligned}
    \end{equation}
    We introduce $\cT_{(\ell),i}=\cI-\bs V_i\otimes\bs V_i+\bs V_i\otimes\bs G_{(\ell),i}+\bs G_{(\ell),i}\otimes\bs V_i$, which operates on $\bbR^{2\times2}$ matrix. Therefore, we can calculate the $i$-th block of $\tilde{\bs Q}_{(\ell)}$ as follows:
    \begin{equation}\label{eq:recursion-S}
    \begin{aligned}
        \tilde{\bs Q}_{(\ell),i}=&\sum_{k=0}^{\infty}{\left(\cT_{(\ell),i}^{-1}\circ\left(\bs G_{(\ell),i}\otimes\bs G_{(\ell),i}\right)\right)^k\circ\cT_{(\ell),i}^{-1}}\circ\begin{bmatrix}
            \delta_{(\ell)}^2\lambda_i & \delta_{(\ell)}q_{(\ell)}\lambda_i \\
            \delta_{(\ell)}q_{(\ell)}\lambda_i & q_{(\ell)}^2\lambda_i
        \end{bmatrix} \\
        =&\sum_{k=0}^{\infty}{\left(\cT_{(\ell),i}^{-1}\circ\left(\bs G_{(\ell),i}\otimes\bs G_{(\ell),i}\right)\right)^k\circ\bs U_{(\ell),i}} \\
        =&\bs U_{(\ell),i}+\sum_{k=1}^{\infty}{\left(\cT_{(\ell),i}^{-1}\circ\left(\bs G_{(\ell),i}\otimes\bs G_{(\ell),i}\right)\right)^k\circ\bs U_{(\ell),i}} \\
        =&\bs U_{(\ell),i}+\sum_{k=0}^{\infty}{\left(\cT_{(\ell),i}^{-1}\circ\left(\bs G_{(\ell),i}\otimes\bs G_{(\ell),i}\right)\right)^k\circ\cT_{(\ell),i}^{-1}\circ\left(\bs G_{(\ell),i}\otimes\bs G_{(\ell),i}\right)\circ\bs U_{(\ell),i}} \\
        \stackrel{a}{=}&\bs U_{(\ell),i}+\left(\bs U_{(\ell),i}\right)_{22}\lambda_i\sum_{k=0}^{\infty}{\left(\cT_{(\ell),i}^{-1}\circ\left(\bs G_{(\ell),i}\otimes\bs G_{(\ell),i}\right)\right)^k\circ\bs U_{(\ell),i}} \\
        =&\bs U_{(\ell),i}+\left(\bs U_{(\ell),i}\right)_{22}\lambda_i\tilde{\bs Q}_{(\ell),i},
    \end{aligned}
    \end{equation}
    where $\stackrel{a}{=}$ uses
    \begin{equation}
    \begin{aligned}
        \cT_{(\ell),i}^{-1}\circ\left(\bs G_{(\ell),i}\otimes\bs G_{(\ell),i}\right)\circ\bs U_{(\ell),i}=&\cT_{(\ell),i}^{-1}\circ\left(\bs G_{(\ell),i}\bs U_{(\ell),i}\bs G_{(\ell),i}^\top\right) \\
        =&\cT_{(\ell),i}^{-1}\circ\left(\left(\bs U_{(\ell),i}\right)_{22}\lambda_i\begin{bmatrix}
            \delta_{(\ell)}^2\lambda_i & \delta_{(\ell)}q_{(\ell)}\lambda_i \\
            \delta_{(\ell)}q_{(\ell)}\lambda_i & q_{(\ell)}^2\lambda_i
        \end{bmatrix}\right) \\
        =&\left(\bs U_{(\ell),i}\right)_{22}\lambda_i\cdot\cT_{(\ell),i}^{-1}\circ\begin{bmatrix}
            \delta_{(\ell)}^2\lambda_i & \delta_{(\ell)}q_{(\ell)}\lambda_i \\
            \delta_{(\ell)}q_{(\ell)}\lambda_i & q_{(\ell)}^2\lambda_i
        \end{bmatrix} \\
        =&\left(\bs U_{(\ell),i}\right)_{22}\lambda_i\bs U_{(\ell),i}.
    \end{aligned}
    \end{equation}
    Solving the recursion \eqref{eq:recursion-S} yields the desired result.
\end{proof}

The following lemma characterizes $\bs U_{(\ell),i}$ and $\bs Q_{(\ell),i}$.
\begin{lemma}\label{lemma:U-S-prop}
    With $\bs U_{(\ell),i}$ defined in \eqref{eq:def-U}, we have
    \begin{enumerate}
        \item By Equation (F.9) of \cite{lirisk}, we have
        \begin{equation}
            \left(\bs U_{(\ell),i}\right)_{22}=\frac{\delta_{(\ell)}}{2}+\frac{(1+c)(q_{(\ell)}-\delta_{(\ell)})}{2\left(1-c^2+c\lambda_i(q_{(\ell)}+c\delta_{(\ell)})\right)};
        \end{equation}
        \item We have
        \begin{equation}
            \left(\bs U_{(\ell),i}\right)_{22}\leq\frac{\delta}{2}+\frac{c_3}{2\psi\tilde{\kappa}\lambda_i}
        \end{equation}
        \item By Equation (44) of \cite{jain2018accelerating}, we have $\left(\bs U_{(\ell),i}\right)_{11}=(1-2\delta_{(\ell)}\lambda_i)\left(\bs U_{(\ell),i}\right)_{22}+\delta_{(\ell)}^2\lambda_i$;
        \item We have $\left(\bs U_{(\ell),i}\right)_{11}\leq\left(\bs U_{(\ell),i}\right)_{22}$, and $\bs U_{(\ell),i}\preceq 2\left(\bs U_{(\ell),i}\right)_{22}\bs I$.
        \item $\bs U_{(\ell),i}\preceq\bs Q_{(\ell),i}\preceq\frac{4}{3}\bs U_{(\ell),i}$
        \item By Equation (56), (61) and (63) of \cite{jain2018accelerating}, we have
        \begin{equation}
        \begin{aligned}            
            \left(\bs U_{(\ell),i}\right)_{11}=&\frac{(1+c-c\delta_i\lambda_i)(q_{(\ell)}-c\delta_{(\ell)})-2\delta_{(\ell)}\lambda_i(q_{(\ell)}-c\delta_{(\ell)})+2\delta_{(\ell)}^2\lambda_i}{2(1-c^2+c\lambda_i(q_{(\ell)}+c\delta_{(\ell)}))}, \\
            \left(\bs U_{(\ell),i}\right)_{12}=&\frac{\left(1+c-\lambda_i(q_{(\ell)}+c\delta_{(\ell)})\right)(q_{(\ell)}-c\delta_{(\ell)})+\delta_{(\ell)}\lambda_i(q_{(\ell)}+c\delta_{(\ell)})}{2(1-c^2+c\lambda_i(q_{(\ell)}+c\delta_{(\ell)}))}, \\
            \left(\bs U_{(\ell),i}\right)_{22}=&\frac{(1+c-c\delta_i\lambda_i)(q_{(\ell)}-c\delta_{(\ell)})+2cq_{(\ell)}\delta_{(\ell)}\lambda_i}{2(1-c^2+c\lambda_i(q_{(\ell)}+c\delta_{(\ell)}))}.      
        \end{aligned}
        \end{equation}
        \item We have $\bs U_{(\ell),i}\preceq 16\bs U_{(\ell+1),i}$.
        \item We have $\bs Q_{(\ell),i}\preceq 20\bs Q_{(\ell+1),i}$.
    \end{enumerate}
\end{lemma}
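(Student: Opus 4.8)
The eight assertions split into two groups: the first, third and sixth are quoted verbatim from \cite{lirisk} and \cite{jain2018accelerating}, so all the work lies in the remaining five, which are consequences of these closed forms together with the parameter choices of Section~\ref{sec:param-choice} and the elementary estimates of Lemma~\ref{lemma:param-prop}. I would prove them in logical (not numerical) order: first the bound on $\left(\bs U_{(\ell),i}\right)_{22}$ (second item); then the derived inequality $\left(\bs U_{(\ell),i}\right)_{22}\lambda_i\le\tfrac14$, which simultaneously delivers the fifth item, the positivity of the scalar in Lemma~\ref{lemma:calc-S}, and the PSD-ness of $\bs U_{(\ell),i}$; then the fourth item; and finally the two stage-comparison items.

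For the second item I would start from the closed form for $\left(\bs U_{(\ell),i}\right)_{22}$ in the first item. Its first summand $\delta_{(\ell)}/2$ is at most $\delta/2$ since step sizes are non-increasing in $\ell$. For the second summand I would discard the nonnegative $1-c^2$ and bound the denominator below by $c\lambda_i(q_{(\ell)}+c\delta_{(\ell)})\ge c\lambda_i q_{(\ell)}$, so that
\[
\frac{(1+c)(q_{(\ell)}-\delta_{(\ell)})}{2\bigl(1-c^2+c\lambda_i(q_{(\ell)}+c\delta_{(\ell)})\bigr)}\;\le\;\frac{1+c}{2c\lambda_i}\cdot\frac{q_{(\ell)}-\delta_{(\ell)}}{q_{(\ell)}} .
\]
Using $q_{(\ell)}-\delta_{(\ell)}=\alpha\beta(\gamma_{(\ell)}-\delta_{(\ell)})$, $q_{(\ell)}\ge\alpha\delta_{(\ell)}$, $\gamma_{(\ell)}/\delta_{(\ell)}=\gamma/\delta$, and $\beta=\delta/(4376\psi\tilde\kappa\gamma\ln n)$ gives $(q_{(\ell)}-\delta_{(\ell)})/q_{(\ell)}\le\beta\gamma/\delta=1/(4376\psi\tilde\kappa\ln n)$; since the parameter choice forces $\tfrac12\le c\le1$ (so $1+c\le2$, $1/c\le2$), the second summand is at most $c_3/(2\psi\tilde\kappa\lambda_i)$ for a small absolute constant $c_3$, proving the second item.

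By Lemma~\ref{lemma:calc-S}, $\bs Q_{(\ell),i}=\bigl(1-\left(\bs U_{(\ell),i}\right)_{22}\lambda_i\bigr)^{-1}\bs U_{(\ell),i}$, so the fifth item reduces to $0\le\left(\bs U_{(\ell),i}\right)_{22}\lambda_i\le\tfrac14$; the upper bound follows from the second item and Lemma~\ref{lemma:param-prop} ($\delta\lambda_i$ tiny, $\psi\tilde\kappa\ge1$), since $\left(\bs U_{(\ell),i}\right)_{22}\lambda_i\le\tfrac12\delta\lambda_i+\tfrac{c_3}{2\psi\tilde\kappa}\le\tfrac14$. The same inequality makes $\bigl(1-\left(\bs U_{(\ell),i}\right)_{22}\lambda_i\bigr)^{-1}$ positive, so $\bs U_{(\ell),i}$ is a positive multiple of the manifestly PSD matrix $\tilde{\bs Q}_{(\ell),i}=\sum_{k\ge1}\tilde\cB_{(\ell),i}^k\circ\bigl(\lambda_i\bs b_{(\ell)}\bs b_{(\ell)}^\top\bigr)$, hence PSD. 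For the fourth item, subtracting the identity of the third item gives $\left(\bs U_{(\ell),i}\right)_{22}-\left(\bs U_{(\ell),i}\right)_{11}=\delta_{(\ell)}\lambda_i\bigl(2\left(\bs U_{(\ell),i}\right)_{22}-\delta_{(\ell)}\bigr)\ge0$, because the first item (with $\gamma\ge\delta$, so $q_{(\ell)}\ge\delta_{(\ell)}$) gives $\left(\bs U_{(\ell),i}\right)_{22}\ge\delta_{(\ell)}/2$; and since $\bs U_{(\ell),i}$ is PSD, its off-diagonal $b$ satisfies $b^2\le\left(\bs U_{(\ell),i}\right)_{11}\left(\bs U_{(\ell),i}\right)_{22}$, so the top eigenvalue is $\tfrac{a+d}{2}+\sqrt{(\tfrac{a-d}{2})^2+b^2}\le\tfrac{a+d}{2}+\sqrt{(\tfrac{a-d}{2})^2+ad}=a+d\le2\left(\bs U_{(\ell),i}\right)_{22}$, which is the operator bound.

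For the seventh and eighth items I would substitute $\delta_{(\ell)}=4\delta_{(\ell+1)}$, $q_{(\ell)}=4q_{(\ell+1)}$ into the closed forms of the sixth item (and, via Lemma~\ref{lemma:calc-S}, into the corresponding formulas for $\bs Q_{(\ell),i}$) and compare entrywise. After the substitution each numerator of $\bs U_{(\ell),i}$ is a sum of nonnegative terms: a term carrying two step-size factors picks up $4^2=16$, a term carrying only one step-size factor (from $q_{(\ell)}-c\delta_{(\ell)}$) picks up only $4$, and a factor of the form $1+c-c\delta_{(\ell)}\lambda_i$ only shrinks, so each numerator is at most $16$ times that of $\bs U_{(\ell+1),i}$, while the denominator $1-c^2+c\lambda_i(q_{(\ell)}+c\delta_{(\ell)})$ only increases relative to stage $\ell+1$. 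Checking the two diagonal entries and the determinant of $16\bs U_{(\ell+1),i}-\bs U_{(\ell),i}$ then gives the seventh item by the $2\times2$ PSD criterion, and the eighth follows the same way (or from the seventh together with Lemma~\ref{lemma:calc-S} and the fifth item), the constant $20$ emerging once the $\bigl(1-\left(\bs U\right)_{22}\lambda_i\bigr)^{-1}$ prefactors are accounted for. I expect this last step to be the main obstacle: the term $c\lambda_i(q_{(\ell)}+c\delta_{(\ell)})$ in the denominators is \emph{not} negligible against $1-c^2$ — indeed $1-c$ is only of order $(\log_2 n)(\ln n)/n$ under the parameter choice — so one cannot simply replace the denominator by $1-c^2$, and one must track numerator and denominator simultaneously through the rescaling and verify the determinant inequality, not merely the entrywise ones.
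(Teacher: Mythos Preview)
Your treatment of items 2, 4, 5, and 8 is essentially correct and matches the paper's (the paper uses the slightly slicker $\bs U_{(\ell),i}\preceq(\tr\bs U_{(\ell),i})\bs I$ for the operator bound in item 4, but your eigenvalue computation is fine). The gap is in item 7.

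Your entrywise claim ``each numerator of $\bs U_{(\ell),i}$ is a sum of nonnegative terms'' is false: for instance the $(1,1)$ numerator in item 6 contains $-2\delta_{(\ell)}\lambda_i(q_{(\ell)}-c\delta_{(\ell)})<0$. More to the point, even if the entrywise comparison $\mathrm{num}(\bs U_{(\ell),i})_{jk}\le 16\,\mathrm{num}(\bs U_{(\ell+1),i})_{jk}$ held (and it does, once you group terms correctly), entrywise domination of symmetric matrices does \emph{not} imply the Loewner ordering, so this line does not deliver $\bs U_{(\ell),i}\preceq16\bs U_{(\ell+1),i}$. You recognize this and fall back on the Sylvester determinant check, but you do not carry it out and correctly flag it as the main obstacle.

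The paper avoids the determinant entirely by a matrix decomposition you missed. Writing the numerator matrix of $\bs U_{(\ell),i}$ (item 6) as $\bs M+\bs N$, where
\[
\bs M=(1+c)(q_{(\ell)}-c\delta_{(\ell)})\begin{bmatrix}1&1\\1&1\end{bmatrix}\succeq\bs O
\]
collects the terms with a single step-size factor (and is rank-one PSD), and $\bs N$ collects the $\lambda_i$-terms with two step-size factors, the rescaling $\ell\to\ell+1$ sends $\bs M\to\bs M/4$ and $\bs N\to\bs N/16$. Hence the stage-$(\ell+1)$ numerator is $\bs M/4+\bs N/16$, and since this is PSD (it equals a positive scalar times the PSD matrix $\bs U_{(\ell+1),i}$) and the denominator $1-c^2+c\lambda_i(q_{(\ell)}+c\delta_{(\ell)})$ only decreases with $\ell$, one gets
\[
\bs U_{(\ell+1),i}=\frac{\bs M/4+\bs N/16}{D_{\ell+1}}\;\succeq\;\frac{\bs M/4+\bs N/16}{D_{\ell}}\;\succeq\;\frac{\bs M/16+\bs N/16}{D_{\ell}}=\frac{1}{16}\bs U_{(\ell),i},
\]
the second step using only $\bs M\succeq\bs O$. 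This is a two-line matrix argument with no determinant computation; once you have it, item 8 follows exactly as you outlined (via Lemma~\ref{lemma:calc-S}, item 5, and the easy scalar bound $(\bs U_{(\ell+1),i})_{22}\ge(\bs U_{(\ell),i})_{22}/4$ from item 1), and the ratio $16\cdot\frac{1-(\bs U_{(\ell),i})_{22}\lambda_i/4}{1-(\bs U_{(\ell),i})_{22}\lambda_i}\le 16\cdot\frac{15/16}{3/4}=20$ gives the constant.
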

\begin{proof}
    For Item 2,
    
    For Item 4, from Item 1, we know $\left(\bs U_{(\ell),i}\right)_{22}\geq\delta/2$. And from Item 3,
        \begin{equation}
        \begin{aligned}        
            \left(\bs U_{(\ell),i}\right)_{11}=&\left(\bs U_{(\ell),i}\right)_{22}-2\delta_{(\ell)}\lambda_i\left(\bs U_{(\ell),i}\right)_{22}+\delta_{(\ell)}^2\lambda_i \\
            \leq&\left(\bs U_{(\ell),i}\right)_{22}-2\delta_{(\ell)}\lambda_i\cdot\frac{\delta_{(\ell)}}{2}+\delta_{(\ell)}^2\lambda_i=\left(\bs U_{(\ell),i}\right)_{22}.
        \end{aligned}
        \end{equation}
    Thus, we have
        \begin{equation}
            \bs U_{(\ell),i}\preceq\left(\tr\bs U_{(\ell),i}\right)\bs I\leq 2\left(\bs U_{(\ell),i}\right)_{22}\bs I.
        \end{equation}

    For Item 5, since parameter choice procedure implies that $\left(\bs U_{(\ell),i}\right)_{22}\lambda_i\leq\frac{1}{4}$, we have
    \begin{equation}
        1\leq\frac{1}{1-\left(\bs U_{(\ell),i}\right)_{22}\lambda_i}\leq\frac{4}{3}.
    \end{equation}
    Plugging this into \eqref{eq:calc-S} completes the proof.
        
    For Item 7, from Item 6, we split the numerator of $\bs U_{(\ell),i}$ into two parts, based on whether the term contains $\lambda_i$,
    \begin{equation}
    \begin{aligned}            
        \mathop{\mathrm{num}}\left(\bs U_{(\ell),i}\right)_{11}=&\underbrace{(1+c)(q_{(\ell)}-c\delta_{(\ell)})}_{\bs M_{11}}+\underbrace{\left[-c\delta_i(q_{(\ell)}-c\delta_{(\ell)})-2\delta_{(\ell)}(q_{(\ell)}-c\delta_{(\ell)})+2\delta_{(\ell)}^2\lambda_i\right]\lambda_i}_{\bs N_{11}}, \\
        \mathop{\mathrm{num}}\left(\bs U_{(\ell),i}\right)_{12}=&\underbrace{(1+c)(q_{(\ell)}-c\delta_{(\ell)})}_{\bs M_{12}}+\underbrace{\left[-(q_{(\ell)}+c\delta_{(\ell)})(q_{(\ell)}-c\delta_{(\ell)})+\delta_{(\ell)}(q_{(\ell)}+c\delta_{(\ell)})\right]\lambda_i}_{\bs N_{12}}, \\
        \mathop{\mathrm{num}}\left(\bs U_{(\ell),i}\right)_{22}=&\underbrace{(1+c)(q_{(\ell)}-c\delta_{(\ell)})}_{\bs M_{22}}+\underbrace{\left[-c\delta_i(q_{(\ell)}-c\delta_{(\ell)})+2cq\delta_{(\ell)}\right]\lambda_i}_{\bs N_{22}},
    \end{aligned}
    \end{equation}
    where $\mathop{\mathrm{num}}$ represents the numerator. Note that $\bs M=(1+c)(q_{(\ell)}-c\delta_{(\ell)})\begin{bmatrix}
            1 & 1 \\
            1 & 1
        \end{bmatrix}\succeq0$. Therefore, 
        \begin{equation}
        \begin{aligned}
            \bs U_{(\ell+1),i}=&\frac{\bs M/4+\bs N/16}{2(1-c^2+c\lambda_i(q_{(\ell)}/4+c\delta_{(\ell)}/4))} \\
            \succeq&\frac{\bs M/16+\bs N/16}{2(1-c^2+c\lambda_i(q_{(\ell)}+c\delta_{(\ell)}))}=\frac{1}{16}\bs U_{(\ell),i}.
        \end{aligned}
        \end{equation}
    Thus, $\bs U_{(\ell),i}\preceq16\bs U_{(\ell+1),i}$.
    
    For Item 8, parameter choice procedure implies that $\left(\bs U_{(\ell),i}\right)_{22}\lambda_i\leq\frac{1}{4}$. Thus, from Lemma~\ref{lemma:calc-S} and Item 7, we have
    \begin{equation}
    \begin{aligned}            
        \tilde{\bs Q}_{(\ell),i}=&\frac{1}{1-\left(\bs U_{(\ell),i}\right)_{22}\lambda_i}\bs U_{(\ell),i} \\
        \preceq&\frac{16}{1-\left(\bs U_{(\ell),i}\right)_{22}\lambda_i}\bs U_{(\ell),i}=\frac{16(1-\left(\bs U_{(\ell+1),i}\right)_{22}\lambda_i)}{1-\left(\bs U_{(\ell),i}\right)_{22}\lambda_i}\bs Q_{(\ell+1),i} \\
        \preceq&\frac{16(1-\left(\bs U_{(\ell),i}\right)_{22}\lambda_i/4)}{1-\left(\bs U_{(\ell),i}\right)_{22}\lambda_i}\bs Q_{(\ell+1),i}\preceq 20\bs Q_{(\ell+1),i},
    \end{aligned}
    \end{equation}
    where we uses that $\bs U_{(\ell),i}$ is PSD matrix and $\left(\bs U_{(\ell+1),i}\right)_{22}\geq\left(\bs U_{(\ell),i}\right)_{22}/4$.
\end{proof}

\subsubsection{Proof of Lemma~\ref{lemma:tilde-var-ub}}
\begin{proof}[Proof of Lemma~\ref{lemma:tilde-var-ub}]
    We aim to bound $\left\langle\tilde{\bs T},\tilde{\bs C}_n\right\rangle$ from above. By unrolling recursive definition of $\tilde{\bs C}_{t-1}$ in \eqref{eq:def-tilde-var}, we obtain
    \begin{equation}
    \begin{aligned}
        \tilde{\bs C}_n=&\tilde\cB_n\circ\tilde{\bs C}_{n-1}+\sigma^2\begin{bmatrix}
        \delta_n^2\bs S & \delta_n q_n\bs S \\
        \delta_n q_n\bs S & q_n^2\bs S
        \end{bmatrix} \\
        =&\sigma^2\sum_{s=1}^{n}{\tilde{\cB}_n\circ\cdots\circ\tilde{\cB}_{s+1}\circ\begin{bmatrix}
        \delta_s^2\bs S & \delta_s q_s\bs S \\
        \delta_s q_s\bs S & q_s^2\bs S
        \end{bmatrix}}.
    \end{aligned}
    \end{equation}
    Therefore, taking the inner product with $\tilde{\bs T}$ and using that $\tilde\cB_{s,i}=\bs A_{s,i}\otimes\bs A_{s,i}$, we get
    \begin{equation}
    \begin{aligned}
        \left\langle\tilde{\bs T},\tilde{\bs C}_n\right\rangle=\sigma^2\sum_{i=1}^{d}{t_{ii}\sum_{s=1}^{n}\left(\tilde{\cB}_{n,i}\circ\cdots\circ\tilde{\cB}_{s+1,i}\circ\begin{bmatrix}
        \delta_s^2\lambda_i & \delta_s q_s\lambda_i \\
        \delta_s q_s\lambda_i & q_s^2\lambda_i
        \end{bmatrix}\right)_{11}},
    \end{aligned}
    \end{equation}
    where $t_{ii}$ denotes the $i$-th diagonal element of $\bs T$. In the following, we will bound each term of the sum $\sum_{i=1}^{d}$ separately.
    
    Let $k^*=\max\left\{k:\lambda_k>\frac{16(1-c)\ln n}{(q-c\delta)K}\right\}$. For each $i$, define $\ell_i^*=\max\left\{\ell:\frac{\lambda_i}{4^{\ell-1}}>\frac{16(1-c)\ln n}{(q-c\delta)K}\right\}$. Note that $i\leq k^*$ implies $\ell_i^*\geq 1$.

    If $i\leq k^*$, we bound $\sum_{s=1}^n=\sum_{s=1}^{K\ell_i^*}+\sum_{s=K\ell_i^*+1}^n$, respectively.   
    \begin{equation}
    \begin{aligned}
        &\sum_{s=1}^{K\ell_i^*}{\left(\tilde{\cB}_{n,i}\circ\cdots\circ\tilde{\cB}_{s+1,i}\circ\begin{bmatrix}
            \delta_s^2\lambda_i & \delta_s q_s\lambda_i \\
            \delta_s q_s\lambda_i & q_s^2\lambda_i
        \end{bmatrix}\right)_{11}} \\
        =&\sum_{m=1}^{\ell_i^*}{\left(\tilde{\cB}_{n,i}\circ\cdots\circ\tilde{\cB}_{K(\ell_i^*+1)+1,i}\circ\tilde{\cB}_{(\ell_i^*+1),i}^K\circ\cdots\circ\tilde{\cB}_{(m+1),i}^K\circ\sum_{s=1}^{K}\tilde{\cB}_{(m),i}^{K-s}\circ\begin{bmatrix}
            \delta_{(m)}^2\lambda_i & \delta_{(m)} q_{(m)}\lambda_i \\
            \delta_{(m)} q_{(m)}\lambda_i & q_{(m)}^2\lambda_i
        \end{bmatrix}\right)_{11}} \\
        \stackrel{a}{\leq}&\sigma^2\sum_{m=1}^{\ell_i^*}{\left(\tilde{\cB}_{n,i}\circ\cdots\circ\tilde{\cB}_{K(\ell_i^*+1)+1,i}\circ\tilde{\cB}_{(\ell_i^*+1),i}^K\circ\cdots\circ\tilde{\cB}_{(m+1),i}^K\circ\bs Q_{(m),i}\right)_{11}} \\
        \stackrel{b}{\leq}&\sigma^2\sum_{m=1}^{\ell_i^*}{\left(\tilde{\cB}_{n,i}\circ\cdots\circ\tilde{\cB}_{K(\ell_i^*+1)+1,i}\circ\tilde{\cB}_{(\ell_i^*+1),i}^K\circ\cdots\circ\tilde{\cB}_{(m+1),i}^K\circ\left[\frac{8}{3}\left(\bs U_{(m),i}\right)_{22}\bs I\right]\right)_{11}} \\
        \leq&\frac{8\left(\bs U_{(1),i}\right)_{22}}{3}\sum_{m=1}^{\ell-1}\left[\bs A_{n,i}\cdots\bs A_{K(\ell_i^*+1)+1,i}\bs A_{\ell_i^*,i}^K\cdots\bs A_{(m+1),i}^K\right. \\
        &\hspace{4cm}\left.\left(\bs A_{(m+1),i}^K\right)^\top\cdots\left(\bs A_{(\ell-1),i}^K\right)^\top\bs A_{K(\ell_i^*+1)+1,i}^\top\cdots\bs A_{n,i}^\top\right]_{22} \\
        \leq&\frac{8\left(\bs U_{(1),i}\right)_{22}}{3}\sum_{m=1}^{\ell-1}{\underbrace{\left\Vert\bs A_{n,i}\cdots\bs A_{K(\ell_i^*+1)+1,i}\right\Vert^2}_{\text{Lemma~\ref{lemma:tf-prod-A-bound}}}\underbrace{\left\Vert\bs A_{(\ell_i^*),i}^K\right\Vert^2}_{\text{Lemma~\ref{lemma:2-norm-bound}}}\cdots\underbrace{\left\Vert\bs A_{(m+1),i}^K\right\Vert^2}_{\text{Lemma~\ref{lemma:2-norm-bound}}}} \\
        \leq&\frac{8\left(\bs U_{(1),i}\right)_{22}}{3}\cdot\frac{16}{(1-c)^2}\cdot\frac{6}{n^4(\log_2 n)^2}\cdot\log_2 n \\
        \stackrel{c}{\leq}&\frac{\left(\bs U_{(1),i}\right)_{22}}{256n^2},\label{eq:tilde-var-uniform-term-1}
    \end{aligned}
    \end{equation}
    where $\stackrel{a}{\leq}$ uses the definition of $\bs Q_{(m),i}$, $\stackrel{b}{\leq}$ uses Lemma~\ref{lemma:U-S-prop}, and $\stackrel{c}{\leq}$ uses $n\geq 16$. For the second term, we have $\lambda_i/4^{\ell_i^*}\leq\frac{16(1-c)\ln n}{(q-c\delta)K}\leq\frac{(1-c)^2}{q-c\delta}$. Thus, we apply Lemma~\ref{lemma:tf-prod-A-noise-bound}:
    \begin{equation}
    \begin{aligned}
        &\sum_{s=K\ell_i^*+1}^n{\left(\tilde{\cB}_{n,i}\circ\cdots\circ\tilde{\cB}_{s+1,i}\circ\begin{bmatrix}
            \delta_s^2\lambda_i & \delta_s q_s\lambda_i \\
            \delta_s q_s\lambda_i & q_s^2\lambda_i
        \end{bmatrix}\right)_{22}} \\
        \leq&\sum_{s=K\ell_i^*+1}^n{\lambda_i\underbrace{\left\Vert\bs A_{n,i}\cdots\bs A_{s+1,i}\begin{bmatrix}
            \delta_s \\ q_s
        \end{bmatrix}\right\Vert^2}_{\text{Lemma~\ref{lemma:tf-prod-A-noise-bound}}}}\leq 8\sigma^2\sum_{s=K\ell_i^*+1}^n{\lambda_i\left(\frac{q_s-c\delta_s}{1-c}\right)^2} \\
        =&\frac{128\sigma^2}{15}\lambda_iK\left(\frac{q_{(\ell_i^*+1)}-c\delta_{(\ell_i^*+1)}}{1-c}\right)^2 \\
        =&\frac{128\sigma^2}{15}\left(\frac{K\lambda_i}{4^{\ell_i^*}}\cdot\frac{q-c\delta}{1-c}\right)\left(\frac{q_{(\ell_i^*+1)}-c\delta_{(\ell_i^*+1)}}{1-c}\right) \\
        \stackrel{a}{\leq}&\frac{128\sigma^2}{15}\cdot\frac{16\ln n}{K}\cdot4\left(\bs U_{(\ell_i^*+1),i}\right)_{22}\leq\frac{8192\sigma^2\ln n}{15K}\left(\bs U_{(1),i}\right)_{22},
    \end{aligned}
    \end{equation}
    where $\stackrel{a}{\leq}$ uses $\lambda_i/4^{\ell_i^*}\leq\frac{16(1-c)\ln n}{(q-c\delta)K}$ and from Lemma~\ref{lemma:U-S-prop},
    \begin{equation}
    \begin{aligned}        
        \left(\bs U_{(\ell_i^*+1),i}\right)_{22}=&\frac{\delta_{(\ell_i^*+1)}}{2}+\frac{(1+c)(q_{(\ell_i^*+1)}-\delta_{(\ell_i^*+1)})}{2\left(1-c^2+c\lambda_i(q_{(\ell_i^*+1)}+c\delta_{(\ell_i^*+1)})\right)} \\
        \geq&\frac{\delta_{(\ell_i^*+1)}}{2}+\frac{(1+c)(q_{(\ell_i^*+1)}-\delta_{(\ell_i^*+1)})}{2\left(1-c^2+\frac{c\lambda_i}{4^{\ell_i^*}}(q+c\delta)\right)} \\
        \geq&\frac{\delta_{(\ell_i^*+1)}}{2}+\frac{(1+c)(q_{(\ell_i^*+1)}-\delta_{(\ell_i^*+1)})}{2\left(1-c^2+\frac{c(1-c)^2(q+c\delta)}{(q-c\delta)}\right)} \\
        \geq&\frac{\delta_{(\ell_i^*+1)}}{2}+\frac{(1+c)(q_{(\ell_i^*+1)}-\delta_{(\ell_i^*+1)})}{2\left(1-c^2+\frac{c(1-c)^2(q+cq)}{(q-cq)}\right)} \\
        =&\frac{\delta_{(\ell_i^*+1)}}{2}+\frac{q_{(\ell_i^*+1)}-\delta_{(\ell_i^*+1)}}{2(1+c)(1-c)} \\
        \geq&\frac{\delta_{(\ell_i^*+1)}}{4}+\frac{q_{(\ell_i^*+1)}-\delta_{(\ell_i^*+1)}}{4(1-c)}=\frac{q_{(\ell_i^*+1)}-c\delta_{(\ell_i^*+1)}}{4(1-c)}. \\
    \end{aligned}
    \end{equation}
    
    If $i>k^*$, we have    
    \begin{equation}
    \begin{aligned}
        &\sum_{s=1}^n{\left(\tilde{\cB}_{n,i}\circ\cdots\circ\tilde{\cB}_{s+1,i}\circ\begin{bmatrix}
            \delta_s^2\lambda_i & \delta_s q_s\lambda_i \\
            \delta_s q_s\lambda_i & q_s^2\lambda_i
        \end{bmatrix}\right)_{22}} \\
        \stackrel{a}{\leq}&\sum_{s=1}^n{\lambda_i\underbrace{\left\Vert\bs A_{n,i}\cdots\bs A_{s+1,i}\begin{bmatrix}
            \delta_s \\ q_s
        \end{bmatrix}\right\Vert^2}_{\text{Lemma~\ref{lemma:tf-prod-A-noise-bound}}}}\leq 8\sigma^2\sum_{s=1}^n{\lambda_i\left(\frac{q_s-c\delta_s}{1-c}\right)^2} \\
        =&\frac{128}{15}\lambda_iK\left(\frac{q-c\delta}{1-c}\right)^2.
    \end{aligned}
    \end{equation}

    Finally, we have
    \begin{equation}
    \begin{aligned}        
        \left\langle\tilde{\bs T},\tilde{\bs C}_n\right\rangle=&\sigma^2\sum_{i=1}^{k^*}{t_{ii}\left(\frac{\left(\bs U_{(1),i}\right)_{22}}{256N^2}+\frac{8192\ln n}{15K}\left(\bs U_{(1),i}\right)_{22}\right)}+\sigma^2\sum_{i=k^*+1}^{d}t_{ii}\cdot\frac{128}{15}\lambda_i K\left(\frac{q-c\delta}{1-c}\right)^2 \\
        \leq&\sigma^2\left[\sum_{i=1}^{k^*}{\frac{547t_{ii}\ln n}{K\lambda_i}\left(\bs U_{(1),i}\right)_{22}\lambda_i}+\frac{128}{15}K\left(\frac{q-c\delta}{1-c}\right)^2\sum_{i=k^*+1}^{d}\lambda_i t_{ii}\right] \\
        \leq&\sigma^2\left[\left(\sum_{i=1}^{k^*}{\frac{547t_{ii}\ln n}{K\lambda_i}}\right)\left(\sum_{j=1}^{k^*}{\left(\bs U_{(1),j}\right)_{22}\lambda_j}\right)+\frac{128}{15}K\left(\frac{q-c\delta}{1-c}\right)^2\sum_{i=k^*+1}^{d}\lambda_i t_{ii}\right] \\
        \stackrel{a}{\leq}&\sigma^2\left[\sum_{i=1}^{k^*}{\frac{t_{ii}}{2K\lambda_i}}+\frac{128}{15}K\left(\frac{q-c\delta}{1-c}\right)^2\sum_{i=k^*+1}^{d}\lambda_i t_{ii}\right],
    \end{aligned}
    \end{equation}
    where $\stackrel{a}{\leq}$ uses $\sum_i x_iy_i\leq\sum x_i\sum_j y_j$ if $x_i,y_i\geq 0$, and from the parameter choice procedure, we have $\sum_{j=1}^{k^*}{\left(\bs U_{(1),j}\right)_{22}\lambda_j}\leq\frac{1}{1094\ln n}$.
\end{proof}
\subsubsection{Proof of Lemma~\ref{lemma:tilde-var-uniform-ub}}
We bound the noise of $\tilde{\bs C}_t$ of two consecutive stages.
\begin{lemma}\label{lemma:tilde-var-noise-bound-two-stages}
    Let $\ell\geq 2$. If $K(\ell-1)+1\leq t\leq K(\ell+1)$, we have
    \begin{equation}
        \sum_{s=K(\ell-1)+1}^{t}\tilde{\cB}_{t-1,i}\circ\cdots\circ\tilde{\cB}_{s+1,i}\circ\begin{bmatrix}
        \delta_s^2\lambda_i & \delta_s q_s\lambda_i \\
        \delta_s q_s\lambda_i & q_s^2\lambda_i
        \end{bmatrix}\preceq 20\bs Q_{(\ell+1),i}.\label{eq:tilde-var-noise-bound-two-stages}
    \end{equation}
\end{lemma}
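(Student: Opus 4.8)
Write $\bs D_{t,i}$ for the left‑hand side of \eqref{eq:tilde-var-noise-bound-two-stages}, and abbreviate the $i$‑th block of the stage‑$m$ noise matrix as $\bs N_{(m),i}=\begin{bmatrix}\delta_{(m)}^2\lambda_i&\delta_{(m)}q_{(m)}\lambda_i\\\delta_{(m)}q_{(m)}\lambda_i&q_{(m)}^2\lambda_i\end{bmatrix}$, so that the summand of $\bs D_{t,i}$ with injection time $s$ is $\bigl(\prod_{v=s+1}^{t-1}\tilde{\cB}_{v,i}\bigr)\circ\bs N_{(m(s)),i}$ (empty product $=$ identity), with $m(s)\in\{\ell,\ell+1\}$ the stage of $s$. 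The plan is to expand this finite sum of PSD matrices, group the terms by stage, and dominate each group by a stationary matrix $\bs Q_{(m),i}$ of \eqref{eq:def-Qi}, using: (i) the fixed‑point identity $\tilde{\cB}_{(m),i}\circ(\bs N_{(m),i}+\bs Q_{(m),i})=\bs Q_{(m),i}$ and the contraction $\tilde{\cB}_{(m),i}\circ\bs Q_{(m),i}\preceq\bs Q_{(m),i}$, both immediate from the $\sum_{k\ge1}$ definition; (ii) $\bs N_{(m),i}\preceq\bs Q_{(m),i}$, which follows from $\cT_{(m),i}\circ\bs U_{(m),i}=\bs N_{(m),i}$, the identity $(\bs G_{(m),i}\otimes\bs G_{(m),i})\circ\bs U_{(m),i}=(\bs U_{(m),i})_{22}\lambda_i\,\bs N_{(m),i}$ used in the proof of Lemma~\ref{lemma:calc-S}, and Lemma~\ref{lemma:calc-S} itself; (iii) $\bs N_{(\ell),i}=16\,\bs N_{(\ell+1),i}$ (since $\delta,q$ each shrink by $4$ per stage) and $\bs Q_{(\ell),i}\preceq20\,\bs Q_{(\ell+1),i}$ (Lemma~\ref{lemma:U-S-prop}, item 8); and (iv) the estimates Lemma~\ref{lemma:radius-bound} and Lemma~\ref{lemma:2-norm-bound}.

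First I would split on whether $t\le K\ell$ or $t>K\ell$. For $t\le K\ell$ every summand lives in stage $\ell$; separating the two boundary indices $s=t-1,t$ (empty operator window, bare $\bs N_{(\ell),i}$ each), the rest forms a truncated geometric series $\sum_{j=0}^{t-K(\ell-1)-2}\tilde{\cB}_{(\ell),i}^{\,j}\circ\bs N_{(\ell),i}\preceq\bs N_{(\ell),i}+\bs Q_{(\ell),i}$, so $\bs D_{t,i}\preceq2\bs N_{(\ell),i}+\bs Q_{(\ell),i}$. For $t>K\ell$ I split the sum at $s=K\ell$: the $s>K\ell$ part is the previous case at stage $\ell+1$, giving $\preceq2\bs N_{(\ell+1),i}+\bs Q_{(\ell+1),i}$; the $s\le K\ell$ part factors through the common block of $t-1-K\ell$ stage‑$(\ell+1)$ operators as $\tilde{\cB}_{(\ell+1),i}^{\,t-1-K\ell}\circ\bigl(\sum_{j=0}^{K-1}\tilde{\cB}_{(\ell),i}^{\,j}\circ\bs N_{(\ell),i}\bigr)\preceq\tilde{\cB}_{(\ell+1),i}^{\,t-1-K\ell}\circ\bigl(\bs N_{(\ell),i}+\bs Q_{(\ell),i}\bigr)$, and by the contraction in (i) this collapses once at least one stage‑$(\ell+1)$ operator is applied. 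So in every case $\bs D_{t,i}$ is bounded by a fixed multiple of $\bs N_{(\ell),i}+\bs Q_{(\ell),i}$, possibly hit by a power of $\tilde{\cB}_{(\ell+1),i}$, plus $2\bs N_{(\ell+1),i}+\bs Q_{(\ell+1),i}$.

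It remains to convert $\bs N_{(\ell),i}+\bs Q_{(\ell),i}$ into the advertised multiple of $\bs Q_{(\ell+1),i}$, and here a crude bound using only (ii)–(iii) overshoots (it would leave a constant of order $50$), so one must split on the size of $\lambda_i$. When the stage‑$\ell$ effective eigenvalue $\lambda_i/4^{\ell-1}$ exceeds $\tfrac{16(1-c)\ln n}{(q-c\delta)K}$, Lemma~\ref{lemma:radius-bound} gives $\rho(\bs A_{(\ell),i})<1-\tfrac{16\ln n}{K}$, so the geometric series $\bs Q_{(\ell),i}$ is ``short'' and is itself $\preceq$ an absolute multiple of $\bs N_{(\ell),i}=16\bs N_{(\ell+1),i}\preceq16\bs Q_{(\ell+1),i}$, while Lemma~\ref{lemma:2-norm-bound} makes the carried‑over block $\tilde{\cB}_{(\ell),i}^{K}\circ(\cdot)$ superpolynomially small (so the $t=K\ell+1$ edge, where no stage‑$(\ell+1)$ operator acts on it, is harmless); when the effective eigenvalue is below the threshold one checks, from the explicit entrywise formulas for $\bs U_{(\ell),i}$ (Lemma~\ref{lemma:U-S-prop}, item 6–7), that $\bs U_{(\ell),i}$ is at most a small absolute multiple of $\bs U_{(\ell+1),i}$ ($16$ in the worst case, $\to4$ as the $\lambda_i$‑terms vanish) and that $\bs N_{(\ell),i}$ is then negligible against $\bs U_{(\ell+1),i}$ (because $q_{(\ell)}\lambda_i\lesssim\ln n/K$ in that regime). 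Collecting the contributions — and keeping track so that the factor‑$20$ slack in item 8 is not squandered — gives $\bs D_{t,i}\preceq20\,\bs Q_{(\ell+1),i}$, which is the claim (at the level of PSD matrices; no inner product with $\tilde{\bs T}$ is taken here).

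The main obstacle is precisely this constant: PSD‑order arithmetic with the off‑the‑shelf comparisons (ii)–(iii) alone overshoots $20$, so the argument has to be organized around the $\lambda_i$‑regime split, invoking the sharp spectral‑radius bound (Lemma~\ref{lemma:radius-bound}) and the decay of $\|\bs A_{(\ell),i}^{K}\|$ (Lemma~\ref{lemma:2-norm-bound}) above the effective‑dimension threshold, and the exact formulas for $\bs U_{(\ell),i}$ below it, so that the two boundary noise terms and the within‑stage accumulation fit inside the $20\,\bs Q_{(\ell+1),i}$ budget. A secondary technical nuisance is the off‑by‑one between the operator index $t-1$ and the noise index $t$ at a stage boundary ($t=K\ell+1$), which must be tracked so that each summand is paired with the correct stage's step sizes when the telescoping is carried out.
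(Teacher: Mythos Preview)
Your ingredients (i)--(iii) are exactly what the paper uses, but you take a wrong turn at the point where you say the constants overshoot and therefore ``one must split on the size of $\lambda_i$''. No such split is needed. The paper's argument is a two-line induction that stays entirely at the PSD level and never looks at $\lambda_i$: for the base case (all indices in stage~$\ell$) the partial sum is dominated by the full geometric series, giving $\preceq\bs Q_{(\ell),i}\preceq 20\,\bs Q_{(\ell+1),i}$ by item~8 of Lemma~\ref{lemma:U-S-prop}; then for $t$ in stage~$\ell+1$ one simply inducts on $t$, using that the recursion $\bs D_{t+1,i}=\tilde{\cB}_{(\ell+1),i}\circ\bs D_{t,i}+\bs N_{(\ell+1),i}$ together with the fixed-point identity yields
\[
\bs D_{t+1,i}\preceq 20\,\tilde{\cB}_{(\ell+1),i}\circ\bs Q_{(\ell+1),i}+\bs N_{(\ell+1),i}=20\,\bs Q_{(\ell+1),i}-19\,\bs N_{(\ell+1),i}\preceq 20\,\bs Q_{(\ell+1),i}.
\]
The constant $20$ is preserved exactly, with slack $19\,\bs N_{(\ell+1),i}$ to spare at every step.

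The reason your split-at-the-boundary approach overshoots is that you bound the stage-$\ell$ carryover and the stage-$(\ell+1)$ fresh noise \emph{separately} and then add, discarding the cancellation that the induction captures: each application of $\tilde{\cB}_{(\ell+1),i}$ to the carryover subtracts precisely the noise terms that your Part~A is adding back. Your proposed repair via an eigenvalue-regime split (invoking Lemmas~\ref{lemma:radius-bound} and~\ref{lemma:2-norm-bound}) is a detour: those lemmas are nowhere used in the paper's proof of this statement, and your sketch of how they would close the gap to $20$ is not carried through. Drop the regime analysis entirely and replace your second case with the one-step induction above; the ``two boundary indices'' issue you flag is also just an artifact of miscounting the number of operators in the product (there is one bare noise term, not two).
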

\begin{proof}
    For $K(\ell-1)+1\leq t\leq K\ell+1$, we have $t$ belongs to the $\ell-1$-th stage. From the definition of $\bs Q_{(\ell)}$, we have
    \begin{equation}
    \begin{aligned}
        &\sum_{s=K(\ell-1)+1}^{t}\tilde{\cB}_{t-1,i}\circ\cdots\circ\tilde{\cB}_{s+1,i}\circ\begin{bmatrix}
        \delta_s^2\lambda_i & \delta_s q_s\lambda_i \\
        \delta_s q_s\lambda_i & q_s^2\lambda_i        \end{bmatrix} \\
        =&\sum_{s=K(\ell-1)+1}^{t}\tilde{\cB}_{(\ell)),i}^{t-K(\ell-1)}\circ\begin{bmatrix}
        \delta_s^2\lambda_i & \delta_s q_s\lambda_i \\
        \delta_s q_s\lambda_i & q_s^2\lambda_i
        \end{bmatrix} \\
        \preceq&\sum_{s=K(\ell-1)+1}^{\infty}\tilde{\cB}_{(\ell)),i}^{t-K(\ell-1)}\circ\begin{bmatrix}
        \delta_s^2\lambda_i & \delta_s q_s\lambda_i \\
        \delta_s q_s\lambda_i & q_s^2\lambda_i
        \end{bmatrix}\stackrel{a}{=}\bs Q_{(\ell-1),i} \\
        \stackrel{b}{\preceq}&20\bs Q_{(\ell),i},
    \end{aligned}
    \end{equation}
    where $\stackrel{a}{=}$ uses the definition of $\bs Q_{(\ell)}$ and $\stackrel{b}{\preceq}$ uses Lemma~\ref{lemma:U-S-prop}.

    For $K\ell+1\leq t\leq K(\ell+1)$, we prove by induction. The case where $t=K(\ell+1)$ has been proven. We suppose \eqref{eq:tilde-var-noise-bound-two-stages} holds. Note that by the definition of $\bs Q_{(\ell),i}$, we have
    \begin{equation}
        \bs Q_{(\ell),i}=(\cI-\tilde{\cB}_{(\ell),i})^{-1}\circ\begin{bmatrix}
            \delta_{(\ell)}^2\lambda_i & \delta_{(\ell)} q_{(\ell)}\lambda_i \\
            \delta_{(\ell)} q_{(\ell)}\lambda_i & q_{(\ell)}^2\lambda_i
        \end{bmatrix}\implies\tilde{\cB}_{(\ell),i}\circ\bs Q_{(\ell),i}=\bs Q_{(\ell),i}-\begin{bmatrix}
            \delta_{(\ell)}^2\lambda_i & \delta_{(\ell)} q_{(\ell)}\lambda_i \\
            \delta_{(\ell)} q_{(\ell)}\lambda_i & q_{(\ell)}^2\lambda_i
        \end{bmatrix}.
    \end{equation}
    Therefore, for $t+1$, we have
    \begin{equation}
    \begin{aligned}
        &\sum_{s=K(\ell-1)+1}^{t+1}\tilde{\cB}_{t,i}\circ\cdots\circ\tilde{\cB}_{s+1,i}\circ\begin{bmatrix}
        \delta_s^2\lambda_i & \delta_s q_s\lambda_i \\
        \delta_s q_s\lambda_i & q_s^2\lambda_i        \end{bmatrix} \\
        =&\tilde{\cB}_{(\ell),i}\circ\sum_{s=K(\ell-1)+1}^{t}\tilde{\cB}_{t,i}\circ\cdots\circ\tilde{\cB}_{s+1,i}\circ\begin{bmatrix}
        \delta_s^2\lambda_i & \delta_s q_s\lambda_i \\
        \delta_s q_s\lambda_i & q_s^2\lambda_i        \end{bmatrix}+\begin{bmatrix}
        \delta_s^2\lambda_i & \delta_s q_s\lambda_i \\
        \delta_s q_s\lambda_i & q_s^2\lambda_i        \end{bmatrix} \\
        \stackrel{a}{\preceq}&\tilde{\cB}_{(\ell),i}\circ\left(20\bs Q_{(\ell),i}\right)+\begin{bmatrix}
        \delta_s^2\lambda_i & \delta_s q_s\lambda_i \\
        \delta_s q_s\lambda_i & q_s^2\lambda_i        \end{bmatrix} \\
        =&20\bs Q_{(\ell),i}-19\begin{bmatrix}
        \delta_s^2\lambda_i & \delta_s q_s\lambda_i \\
        \delta_s q_s\lambda_i & q_s^2\lambda_i        \end{bmatrix}\preceq 20\bs Q_{(\ell),i}.
    \end{aligned}
    \end{equation}
    By induction, the lemma holds.
\end{proof}

Now, we are ready for the proof.
\begin{proof}[Proof of Lemma~\ref{lemma:tilde-var-uniform-ub}]
    Our goal is to show that for $1\leq t\leq n$, we have
    \begin{equation}        
        \bbE\left[\widehat{\bs G}_t\otimes\widehat{\bs G}_t\right]\circ\tilde{\bs C}_{t-1}\preceq\frac{1}{2}\sigma^2\begin{bmatrix}
        \delta_t^2\bs S & \delta_t q_t\bs S \\
        \delta_t q_t\bs S & q_t^2\bs S
        \end{bmatrix}.
    \end{equation}
    Note that by Lemma~\ref{lemma:op-prop}, we have $\bbE\left[\widehat{\bs G}_t\otimes\widehat{\bs G}_t\right]\circ\tilde{\bs C}_{t-1}\preceq\psi\left\langle\begin{bmatrix}
                \bs O & \bs O \\
                \bs O & \bs S
            \end{bmatrix},\tilde{\bs C}_{t-1}\right\rangle\begin{bmatrix}
            \delta_t^2\bs S & \delta_t q_t\bs S \\
            \delta_t q_t\bs S & q_t^2\bs S
            \end{bmatrix}$. Therefore, we only have to show that for all $1\leq i\leq d$,
    \begin{equation}
        \psi\left\langle\begin{bmatrix}
            \bs O & \bs O \\
            \bs O & \bs S
        \end{bmatrix},\tilde{\bs C}_{t-1}\right\rangle\leq \frac{1}{2}\sigma^2.
    \end{equation}
    From the recursive definition of $\tilde{\bs C}_{t-1}$ in \eqref{eq:def-tilde-var}, we have:
    \begin{equation}
    \begin{aligned}
        \tilde{\bs C}_{t-1}=&\tilde\cB_{t-1}\circ\tilde{\bs C}_{t-2}+\sigma^2\begin{bmatrix}
        \delta_{t-1}^2\bs S & \delta_{t-1} q_{t-1}\bs S \\
        \delta_{t-1} q_{t-1}\bs S & q_{t-1}^2\bs S
        \end{bmatrix} \\
        =&\sigma^2\sum_{s=1}^{t-1}{\tilde{\cB}_{t-1}\circ\cdots\circ\tilde{\cB}_{s+1}\circ\begin{bmatrix}
        \delta_s^2\bs S & \delta_s q_s\bs S \\
        \delta_s q_s\bs S & q_s^2\bs S
        \end{bmatrix}}.
    \end{aligned}
    \end{equation}
    Therefore, taking the inner product with $\begin{bmatrix}
        \bs O & \bs O \\
        \bs O & \bs S
    \end{bmatrix}$ and using that $\tilde\cB_{s,i}=\bs A_{s,i}\otimes\bs A_{s,i}$, we get
    \begin{equation}
    \begin{aligned}
        \left\langle\begin{bmatrix}
            \bs O & \bs O \\
            \bs O & \bs S
        \end{bmatrix},\tilde{\bs C}_{t-1}\right\rangle=\sigma^2\sum_{i=1}^{d}{\lambda_i\sum_{s=1}^{t-1}\left(\tilde{\cB}_{t-1,i}\circ\cdots\circ\tilde{\cB}_{s+1,i}\circ\begin{bmatrix}
        \delta_s^2\lambda_i & \delta_s q_s\lambda_i \\
        \delta_s q_s\lambda_i & q_s^2\lambda_i
        \end{bmatrix}\right)_{22}}.
    \end{aligned}
    \end{equation}
    
    Suppose $t-1$ belongs to the $\ell$-th stage, namely, $K(\ell-1)+1\leq t-1\leq K\ell$. For each $i$, define $\ell_i^*=\max\left\{\ell:\frac{\lambda_i}{4^{\ell-1}}>\frac{16(1-c)\ln n}{(q-c\delta)K}\right\}$. 
    
    If $\ell\leq\ell_i^*+1$, we bound $\sum_{s=1}^{t-1}=\sum_{s=1}^{K(\ell-1)}+\sum_{s=K(\ell-1)+1}^{t-1}$, respectively. For the first term, we have
    \begin{equation}
    \begin{aligned}
        &\sum_{s=1}^{K(\ell-1)}{\left(\tilde{\cB}_{t-1,i}\circ\cdots\circ\tilde{\cB}_{s+1,i}\circ\begin{bmatrix}
            \delta_s^2\lambda_i & \delta_s q_s\lambda_i \\
            \delta_s q_s\lambda_i & q_s^2\lambda_i
        \end{bmatrix}\right)_{22}} \\
        =&\sum_{m=1}^{\ell-1}{\left(\tilde{\cB}_{(\ell),i}^{t-1-K(\ell-1)}\circ\tilde{\cB}_{(\ell-1),i}^K\circ\cdots\circ\tilde{\cB}_{(m+1),i}^K\circ\sum_{s=1}^{K}\tilde{\cB}_{(m),i}^{K-s}\circ\begin{bmatrix}
            \delta_{(m)}^2\lambda_i & \delta_{(m)} q_{(m)}\lambda_i \\
            \delta_{(m)} q_{(m)}\lambda_i & q_{(m)}^2\lambda_i
        \end{bmatrix}\right)_{22}} \\
        \stackrel{a}{\preceq}&\sum_{m=1}^{\ell-1}{\left(\tilde{\cB}_{(\ell),i}^{t-1-K(\ell-1)}\circ\tilde{\cB}_{(\ell-1),i}^K\circ\cdots\circ\tilde{\cB}_{(m+1),i}\circ\bs Q_{(m),i}\right)_{22}} \\
        \stackrel{b}{\preceq}&\sum_{m=1}^{\ell-1}{\left(\tilde{\cB}_{(\ell),i}^{t-1-K(\ell-1)}\circ\tilde{\cB}_{(\ell-1),i}^K\circ\cdots\circ\tilde{\cB}_{(m+1),i}\circ\left[\frac{8}{3}\left(\bs U_{(m),i}\right)_{22}\bs I\right]\right)_{22}} \\
        \leq&\frac{8\left(\bs U_{(1),i}\right)_{22}}{3}\sum_{m=1}^{\ell-1}{\left(\bs A_{(\ell),i}^{t-1-K(\ell-1)}\bs A_{(\ell-1),i}^K\cdots\bs A_{(m+1),i}^K\left(\bs A_{(m+1),i}^K\right)^\top\cdots\left(\bs A_{(\ell-1),i}^K\right)^\top\left(\bs A_{(\ell),i}^{t-1-K(\ell-1)}\right)^\top\right)_{22}} \\
        \leq&\frac{8\left(\bs U_{(1),i}\right)_{22}}{3}\sum_{m=1}^{\ell-1}{\underbrace{\left\Vert\bs A_{(\ell),i}^{t-1-K(\ell-1)}\right\Vert^2}_{\text{Lemma~\ref{lemma:2-norm-uniform-bound}}}\underbrace{\left\Vert\bs A_{(\ell-1),i}^K\right\Vert^2}_{\text{Lemma~\ref{lemma:2-norm-bound}}}\cdots\underbrace{\left\Vert\bs A_{(m+1),i}^K\right\Vert^2}_{\text{Lemma~\ref{lemma:2-norm-bound}}}} \\
        \leq&\frac{8\left(\bs U_{(1),i}\right)_{22}}{3}\cdot6K^2\cdot\frac{6}{n^4(\log_2 n)^2}\cdot\log_2 n \\
        \stackrel{c}{\leq}&\frac{3\left(\bs U_{(1),i}\right)_{22}}{2N^2},
    \end{aligned}
    \end{equation}
    where $\stackrel{a}{\preceq}$ uses the definition of $\bs Q_{(m),i}$, $\stackrel{b}{\preceq}$ uses $\bs Q_{(m),i}\preceq\frac{4}{3}\bs U_{(m),i}\preceq\frac{8}{3}\left(\bs U_{(m),i}\right)_{22}\bs I$ from Lemma~\ref{lemma:U-S-prop}, and $\stackrel{c}{\leq}$ uses $n\geq 16$. For the second term, we apply Lemma~\ref{lemma:tilde-var-noise-bound-two-stages},
    \begin{equation}
    \begin{aligned}
        &\sum_{s=K(\ell-1)+1}^{t-1}{\left(\tilde{\cB}_{t-1,i}\circ\cdots\circ\tilde{\cB}_{s+1,i}\circ\begin{bmatrix}
            \delta_s^2\lambda_i & \delta_s q_s\lambda_i \\
            \delta_s q_s\lambda_i & q_s^2\lambda_i
        \end{bmatrix}\right)_{22}} \\
        \leq&\left(20\bs Q_{(\ell),i}\right)_{22}\leq\frac{80}{3}\left(\bs U_{(\ell),i}\right)_{22}\leq\frac{80}{3}\left(\bs U_{(1),i}\right)_{22}. \\
    \end{aligned}
    \end{equation}
    Thus, we have
    \begin{equation}        
        \left\langle\begin{bmatrix}
            \bs O & \bs O \\
            \bs O & \bs S
        \end{bmatrix},\tilde{\bs C}_{t-1}\right\rangle\leq\sigma^2\sum_{i=1}^{d}\left(\frac{3}{2N^2}+\frac{80}{3}\right)\left(\bs U_{(1),i}\right)_{22}\lambda_i\leq\frac{1}{2}\sigma^2.
    \end{equation}

    If $\ell>\ell_i^*+1$, we have $\lambda_i/4^{\ell-1}\in I_1$. We bound $\sum_{s=1}^{t-1}=\sum_{s=1}^{K\ell_i^*}+\sum_{s=K\ell_i^*+1}^{t-1}$, respectively. The bound of the first term parallels \eqref{eq:tilde-var-uniform-term-1}:    
    \begin{equation}
    \begin{aligned}
        &\sigma^2\sum_{s=1}^{K\ell_i^*}{\left(\tilde{\cB}_{t-1,i}\circ\cdots\circ\tilde{\cB}_{s+1,i}\circ\begin{bmatrix}
            \delta_s^2\lambda_i & \delta_s q_s\lambda_i \\
            \delta_s q_s\lambda_i & q_s^2\lambda_i
        \end{bmatrix}\right)_{22}} \\
        =&\sigma^2\sum_{m=1}^{\ell_i^*}{\left(\tilde{\cB}_{t-1,i}\circ\cdots\circ\tilde{\cB}_{K(\ell_i^*+1)+1,i}\circ\tilde{\cB}_{(\ell_i^*+1),i}^K\circ\cdots\circ\tilde{\cB}_{(m+1),i}^K\circ\sum_{s=1}^{K}\tilde{\cB}_{(m),i}^{K-s}\circ\begin{bmatrix}
            \delta_{(m)}^2\lambda_i & \delta_{(m)} q_{(m)}\lambda_i \\
            \delta_{(m)} q_{(m)}\lambda_i & q_{(m)}^2\lambda_i
        \end{bmatrix}\right)_{22}} \\
        \stackrel{a}{\preceq}&\sigma^2\sum_{m=1}^{\ell_i^*}{\left(\tilde{\cB}_{t-1,i}\circ\cdots\circ\tilde{\cB}_{K(\ell_i^*+1)+1,i}\circ\tilde{\cB}_{(\ell_i^*+1),i}^K\circ\cdots\circ\tilde{\cB}_{(m+1),i}^K\circ\bs Q_{(m),i}\right)_{22}} \\
        \stackrel{b}{\preceq}&\sigma^2\sum_{m=1}^{\ell_i^*}{\left(\tilde{\cB}_{t-1,i}\circ\cdots\circ\tilde{\cB}_{K(\ell_i^*+1)+1,i}\circ\tilde{\cB}_{(\ell_i^*+1),i}^K\circ\cdots\circ\tilde{\cB}_{(m+1),i}^K\circ\left[\frac{8}{3}\left(\bs U_{(m),i}\right)_{22}\bs I\right]\right)_{22}} \\
        \leq&\frac{8\sigma^2\left(\bs U_{(1),i}\right)_{22}}{3}\sum_{m=1}^{\ell-1}\left[\bs A_{t-1,i}\cdots\bs A_{K(\ell_i^*+1)+1,i}\bs A_{\ell_i^*,i}^K\cdots\bs A_{(m+1),i}^K\right. \\
        &\hspace{4cm}\left.\left(\bs A_{(m+1),i}^K\right)^\top\cdots\left(\bs A_{(\ell-1),i}^K\right)^\top\bs A_{K(\ell_i^*+1)+1,i}^\top\cdots\bs A_{t-1,i}^\top\right]_{22} \\
        \leq&\frac{8\sigma^2\left(\bs U_{(1),i}\right)_{22}}{3}\sum_{m=1}^{\ell-1}{\underbrace{\left\Vert\bs A_{t-1,i}\cdots\bs A_{K(\ell_i^*+1)+1,i}\right\Vert^2}_{\text{Lemma~\ref{lemma:tf-prod-A-bound}}}\underbrace{\left\Vert\bs A_{(\ell_i^*),i}^K\right\Vert^2}_{\text{Lemma~\ref{lemma:2-norm-bound}}}\cdots\underbrace{\left\Vert\bs A_{(m+1),i}^K\right\Vert^2}_{\text{Lemma~\ref{lemma:2-norm-bound}}}} \\
        \leq&\frac{8\sigma^2\left(\bs U_{(1),i}\right)_{22}}{3}\cdot\frac{16}{(1-c)^2}\cdot\frac{6}{n^4(\log_2 n)^2}\cdot\log_2 n \\
        \stackrel{c}{\leq}&\frac{\sigma^2\left(\bs U_{(1),i}\right)_{22}}{256n^2},\label{eq:tilde-var-uniform-case-2}
    \end{aligned}
    \end{equation}
    where $\stackrel{a}{\preceq}$ uses the definition of $\bs Q_{(m),i}$, $\stackrel{b}{\preceq}$ uses Lemma~\ref{lemma:U-S-prop}, and $\stackrel{c}{\leq}$ uses $n\geq 16$. For the second term, we have $\lambda_i/4^{\ell_i^*}\leq\frac{16(1-c)\ln n}{(q-c\delta)K}\leq\frac{(1-c)^2}{q-c\delta}$. Thus, we apply Lemma~\ref{lemma:tf-prod-A-noise-bound}:
    \begin{equation}
    \begin{aligned}
        &\sigma^2\sum_{s=K\ell_i^*+1}^{t-1}{\left(\tilde{\cB}_{t-1,i}\circ\cdots\circ\tilde{\cB}_{s+1,i}\circ\begin{bmatrix}
            \delta_s^2\lambda_i & \delta_s q_s\lambda_i \\
            \delta_s q_s\lambda_i & q_s^2\lambda_i
        \end{bmatrix}\right)_{22}} \\
        \leq&\sigma^2\sum_{s=K\ell_i^*+1}^{t-1}{\lambda_i\underbrace{\left\Vert\bs A_{t-1,i}\cdots\bs A_{s+1,i}\begin{bmatrix}
            \delta_s \\ q_s
        \end{bmatrix}\right\Vert^2}_{\text{Lemma~\ref{lemma:tf-prod-A-noise-bound}}}}\leq 8\sigma^2\sum_{s=K\ell_i^*+1}^{t-1}{\lambda_i\left(\frac{q_s-c\delta_s}{1-c}\right)^2} \\
        =&\frac{128\sigma^2}{15}\lambda_iK\left(\frac{q_{(\ell_i^*+1)}-c\delta_{(\ell_i^*+1)}}{1-c}\right)^2 \\
        =&\frac{128\sigma^2}{15}\left(\frac{K\lambda_i}{4^{\ell_i^*}}\cdot\frac{q-c\delta}{1-c}\right)\left(\frac{q_{(\ell_i^*+1)}-c\delta_{(\ell_i^*+1)}}{1-c}\right) \\
        \stackrel{a}{\leq}&\frac{128\sigma^2}{15}\cdot 16\ln n\cdot4\left(\bs U_{(\ell_i^*+1),i}\right)_{22}\leq\frac{8192\sigma^2\ln n}{15}\left(\bs U_{(1),i}\right)_{22},
    \end{aligned}
    \end{equation}
    where $\stackrel{a}{\leq}$ uses $\lambda_i/4^{\ell_i^*}\leq\frac{16(1-c)\ln n}{(q-c\delta)K}$ and from Lemma~\ref{lemma:U-S-prop},
    \begin{equation}
    \begin{aligned}        
        \left(\bs U_{(\ell_i^*+1),i}\right)_{22}=&\frac{\delta_{(\ell_i^*+1)}}{2}+\frac{(1+c)(q_{(\ell_i^*+1)}-\delta_{(\ell_i^*+1)})}{2\left(1-c^2+c\lambda_i(q_{(\ell_i^*+1)}+c\delta_{(\ell_i^*+1)})\right)} \\
        \geq&\frac{\delta_{(\ell_i^*+1)}}{2}+\frac{(1+c)(q_{(\ell_i^*+1)}-\delta_{(\ell_i^*+1)})}{2\left(1-c^2+\frac{c\lambda_i}{4^{\ell_i^*}}(q+c\delta)\right)} \\
        \geq&\frac{\delta_{(\ell_i^*+1)}}{2}+\frac{(1+c)(q_{(\ell_i^*+1)}-\delta_{(\ell_i^*+1)})}{2\left(1-c^2+\frac{c(1-c)^2(q+c\delta)}{(q-c\delta)}\right)} \\
        \geq&\frac{\delta_{(\ell_i^*+1)}}{2}+\frac{(1+c)(q_{(\ell_i^*+1)}-\delta_{(\ell_i^*+1)})}{2\left(1-c^2+\frac{c(1-c)^2(q+cq)}{(q-cq)}\right)} \\
        =&\frac{\delta_{(\ell_i^*+1)}}{2}+\frac{q_{(\ell_i^*+1)}-\delta_{(\ell_i^*+1)}}{2(1+c)(1-c)} \\
        \geq&\frac{\delta_{(\ell_i^*+1)}}{4}+\frac{q_{(\ell_i^*+1)}-\delta_{(\ell_i^*+1)}}{4(1-c)}=\frac{q_{(\ell_i^*+1)}-c\delta_{(\ell_i^*+1)}}{4(1-c)}. \\
    \end{aligned}
    \end{equation}
    
    Thus, we have
    \begin{equation}
    \begin{aligned}        
        \left\langle\begin{bmatrix}
            \bs O & \bs O \\
            \bs O & \bs S
        \end{bmatrix},\tilde{\bs C}_{t-1}\right\rangle\leq&\sigma^2\sum_{i=1}^{d}\left(\frac{1}{256N^2}+\frac{8192\ln n}{15}\right)\left(\bs U_{(1),i}\right)_{22}\lambda_i \\
        \leq& 547\sigma^2\ln n\sum_{i=1}^{d}\left(\bs U_{(1),i}\right)_{22}\lambda_i\leq\frac{1}{2}\sigma^2.
    \end{aligned}
    \end{equation}
\end{proof}

\subsection{Bias Upper Bound}\label{sec:asgd-bias-ub}
\subsubsection{Proof of Lemma~\ref{lemma:tilde-bias-ub}}
\begin{proof}[Proof of Lemma~\ref{lemma:tilde-bias-ub}]
    Recall the definition of $\tilde{\bs B}_n$ and $\tilde{\cB}_t$, we have
    \begin{equation}
        \tilde{\bs B}_n=\tilde{\cB}_n\circ\tilde{\cB}_{n-1}\circ\cdots\circ\tilde{\cB}_1\circ\bs B_0=\left(\prod_{t=1}^n\bs A_t\begin{bmatrix}
            \bs w_0-\bs w^* \\\bs w_0-\bs w^*
        \end{bmatrix}\right)\left(\prod_{t=1}^n\bs A_t\begin{bmatrix}
            \bs w_0-\bs w^* \\\bs w_0-\bs w^*
        \end{bmatrix}\right)^\top.
    \end{equation}
    Note that $\bs A_t$ is block-diagonal, we have
    \begin{equation}
        \left(\prod_{t=1}^n\bs A_t\begin{bmatrix}
            \bs w_0-\bs w^* \\ \bs w_0-\bs w^*
        \end{bmatrix}\right)_i=\left(\bs w_{0,i}-\bs w_i^*\right)\prod_{t=1}^n\bs A_{t,i}\begin{bmatrix}
            1 \\ 1
        \end{bmatrix}.
    \end{equation}
    For $i\leq k^*$, we have $\lambda_i\in I_1$. Let $\ell^*=\max\left\{\ell:\frac{\lambda_i}{4^{\ell-1}}>\frac{16(1-c)\ln n}{(q-c\delta)K}\right\}$, and note that for $\ell\geq\ell^*$, $\lambda_i/4^{\ell-1}\leq\frac{(1-c)^2}{q-c\delta}$. Therefore, we have
    \begin{equation}
    \begin{aligned}
        \left(\prod_{t=1}^n\bs A_{t,i}\begin{bmatrix}
            1 \\ 1
        \end{bmatrix}\right)_1^2\leq&\underbrace{\left\Vert\prod_{t=K\ell^*+1}^n\bs A_{t}\right\Vert^2}_{\text{Lemma~\ref{lemma:tf-prod-A-bound}}}\underbrace{\left\Vert\bs A_{(\ell-1)}^K\right\Vert^2\cdots\left\Vert\bs A_{(1)}^K\right\Vert^2}_{\text{Lemma~\ref{lemma:2-norm-bound}}}\left\Vert\begin{bmatrix}
            1 \\ 1
        \end{bmatrix}\right\Vert^2 \\
        \leq&\frac{16}{(1-c)^2}\cdot\frac{6}{n^4(\log_2 n)^2}\cdot 2 \\
        \stackrel{a}{\leq}&\frac{1}{8n^2(\log_2 n)^4}.\label{eq:tilde-bias-upper-bound-term-1}
    \end{aligned}
    \end{equation}
    where $\stackrel{a}{\leq}$ uses $K(1-c)\geq 16\ln n$. For $i>k^*$, from Lemma~\ref{lemma:tf-prod-A-bias-bound} we have
    \begin{equation}
    \begin{aligned}
        \left(\prod_{t=1}^n\bs A_{t,i}\begin{bmatrix}
            1 \\ 1
        \end{bmatrix}\right)_1^2\leq\left\Vert\prod_{t=1}^n\bs A_{t,i}\begin{bmatrix}
            1 \\ 1
        \end{bmatrix}\right\Vert^2\leq 4.\label{eq:tilde-bias-upper-bound-term-2}
    \end{aligned}
    \end{equation}
    Consider the following decomposition:
    \begin{equation}
        \prod_{t=1}^n\bs A_t\begin{bmatrix}
            \bs w_0-\bs w^* \\ \bs w_0-\bs w^*
        \end{bmatrix}=\begin{bmatrix}
            \bs\xi_1 \\ \bs O
        \end{bmatrix}+\begin{bmatrix}
            \bs O \\ \bs\xi_2
        \end{bmatrix},
    \end{equation}
    where $\bs\xi_1\in\bbR^{k^*}$ and $\bs\xi_2\in\bbR^{d-k^*}$. Then \eqref{eq:tilde-bias-upper-bound-term-1} and \eqref{eq:tilde-bias-upper-bound-term-1} implies that
    \begin{equation}
        \left(\begin{bmatrix}
            \bs\xi_1 \\ \bs O
        \end{bmatrix}\right)_i^2\leq\frac{(\bs w_0-\bs w^*)_i^2}{8n^2(\log_2 n)^4}, \quad\left(\begin{bmatrix}
            \bs O \\ \bs\xi_2
        \end{bmatrix}\right)_i^2\leq4(\bs w_0-\bs w^*)_i^2.
    \end{equation}
    Note that $\bs T\preceq 2\bs T_{0:k^*}+2\bs T_{k^*:\infty}$. Then we have
    \begin{equation}
    \begin{aligned}
        \left\langle\tilde{\bs T},\tilde{\bs B}_n\right\rangle\leq& 2\left\langle\bs T_{0:k^*},\tilde{\bs B}_n\right\rangle+2\left\langle\bs T_{k^*:\infty},\tilde{\bs B}_n\right\rangle \\
        =&2\left\Vert\begin{bmatrix}
            \bs\xi_1 \\ \bs O
        \end{bmatrix}\right\Vert_{\bs T_{0:k^*}}^2+2\left\Vert\begin{bmatrix}
            \bs O \\ \bs\xi_2
        \end{bmatrix}\right\Vert_{\bs T_{k^*:\infty}}^2 \\
        \leq&\max_{\bs w\in S(\bs w_0-\bs w^*)}\frac{\left\Vert\bs w\right\Vert_{\bs T_{0:k^*}}^2}{8n^2(\log_2 n)^4}+4\left\Vert\bs w\right\Vert_{\bs T_{k^*:\infty}}^2.
    \end{aligned}
    \end{equation}
    This completes the proof.
\end{proof}
\subsubsection{Proof of Lemma~\ref{lemma:bias-1-ub}}
We first analyze $\left\langle\begin{bmatrix}
    \bs O & \bs O \\
    \bs O & \bs S
\end{bmatrix},\tilde{\bs B}_t\right\rangle$.
\begin{lemma}\label{lemma:tilde-bias-uniform-ub}
    For $t\leq K$, we have
    \begin{equation}
        \left\langle\begin{bmatrix}
            \bs O & \bs O \\
            \bs O & \bs S
        \end{bmatrix},\tilde{\bs B}_t\right\rangle\leq 4\sum_{i=1}^{d}\lambda_i\left(\bs w_i^*\right)^2.
    \end{equation}
    For $t>K$, we have
    \begin{equation}
        \left\langle\begin{bmatrix}
            \bs O & \bs O \\
            \bs O & \bs S
        \end{bmatrix},\tilde{\bs B}_t\right\rangle\leq\frac{36}{n^2\left(\log_2 n\right)^4}\sum_{i=1}^{k^*}\lambda_i\left(\bs w_i^*\right)^2+4\sum_{i=k^*+1}^d\lambda_i\left(\bs w_i^*\right)^2.
    \end{equation}
\end{lemma}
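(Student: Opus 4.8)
The plan is to reduce the matrix inner product to a sum of per‑coordinate scalar bounds and then feed in the momentum‑matrix estimates of Section~\ref{sec:prop-momentum-matrix}. Recall from \eqref{eq:def-tilde-bias} and the unrolling used in the proof of Lemma~\ref{lemma:tilde-bias-ub} that $\tilde{\bs B}_t=\left(\prod_{s=1}^t\bs A_s\bs\eta_0\right)\left(\prod_{s=1}^t\bs A_s\bs\eta_0\right)^\top$ with $\bs\eta_0=\begin{bmatrix}\bs w_0-\bs w^*\\\bs w_0-\bs w^*\end{bmatrix}$, and that Algorithm~\ref{alg:asgd} is run with $\bs w_0=\bs 0$, so $\bs\eta_0=\begin{bmatrix}-\bs w^*\\-\bs w^*\end{bmatrix}$. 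Since each $\bs A_s$ is block‑diagonal with $2\times2$ blocks $\bs A_{s,i}$ in the eigenbasis of $\bs S$, the $i$‑th block of $\prod_{s=1}^t\bs A_s\bs\eta_0$ equals $\left(-\bs w_i^*\right)\prod_{s=1}^t\bs A_{s,i}\begin{bmatrix}1\\1\end{bmatrix}$, and pairing against $\begin{bmatrix}\bs O&\bs O\\\bs O&\bs S\end{bmatrix}$ extracts only the lower‑right entry of each block, giving
\begin{equation}
    \left\langle\begin{bmatrix}\bs O & \bs O\\ \bs O & \bs S\end{bmatrix},\tilde{\bs B}_t\right\rangle=\sum_{i=1}^d\lambda_i\left(\bs w_i^*\right)^2 b_{t,i}^2,\qquad b_{t,i}:=\left(\prod_{s=1}^t\bs A_{s,i}\begin{bmatrix}1\\1\end{bmatrix}\right)_2 .
\end{equation}
The lemma thus reduces to bounding the scalar $b_{t,i}$ in each regime.

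For $t\le K$ every iteration $s\le t$ lies in the first stage, so $\prod_{s=1}^t\bs A_{s,i}=\bs A^t(\lambda_i)$ with $\bs A(\cdot)$ as in \eqref{eq:def-A-lambda}; since $\lambda_i\le\lambda_1$, Lemma~\ref{lemma:prod-A-bias-bound} gives $\left\vert b_{t,i}\right\vert\le 2$, hence $b_{t,i}^2\le 4$, and summing over $i$ yields the first inequality. For $t>K$ I would split the sum at $k^*$. If $i>k^*$ then $\lambda_i\le\frac{16(1-c)\ln n}{(q-c\delta)K}\le\frac{(1-c)^2}{q-c\delta}$, where the last step uses $K(1-c)\ge 16\ln n$ from \eqref{eq:para-choice-req}; consequently every stagewise argument $\lambda_i/4^{\ell-1}$ stays $\le\frac{(1-c)^2}{q-c\delta}$, so applying Lemma~\ref{lemma:tf-prod-A-bias-bound} to the whole product gives $\left\vert b_{t,i}\right\vert\le\left\Vert\prod_s\bs A(\mu_s)\begin{bmatrix}1\\1\end{bmatrix}\right\Vert\le 2$, i.e.\ $b_{t,i}^2\le 4$; this produces the $\sum_{i>k^*}$ term of the second inequality.

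The substantive case is $i\le k^*$ with $t>K$. Here I would set $\ell_i^*:=\max\left\{\ell:\lambda_i/4^{\ell-1}>\frac{16(1-c)\ln n}{(q-c\delta)K}\right\}$, which is $\ge 1$ since $i\le k^*$, and group $\prod_{s=1}^t\bs A_{s,i}$ by stages. Each completed stage $m\le\ell_i^*$ contributes a factor $\bs A_{(m),i}^K$ with $\left\Vert\bs A_{(m),i}^K\right\Vert\le\frac{\sqrt 6}{n^2\log_2 n}$ by Lemma~\ref{lemma:2-norm-bound} (its hypothesis $\lambda_i/4^{m-1}\ge\frac{4(1-c)\ln n}{(q-c\delta)K}$ holds), so these $\ell_i^*\ge1$ factors already contribute $\frac{\sqrt 6}{n^2\log_2 n}$ or less; the remaining factor --- the stages beyond $\ell_i^*$ together with the at most one incomplete terminal stage --- is controlled either by $\left\Vert\prod_s\bs A(\mu_s)\right\Vert\le\frac{4}{1-c}$ via Lemma~\ref{lemma:tf-prod-A-bound}, once all of its arguments have dropped below $\frac{(1-c)^2}{q-c\delta}$, or, in the boundary subcase where the running stage still has a large argument, by $\left\Vert\bs A_{(\ell),i}^{t-K(\ell-1)}\right\Vert\le\sqrt 6 K$ via Lemma~\ref{lemma:2-norm-uniform-bound}. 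Composing the at‑least‑one contracting factor with the bounded residual, applying the result to $\begin{bmatrix}1\\1\end{bmatrix}$, and using $K=n/\log_2 n$ together with $K(1-c)\ge16\ln n$ to absorb the $(1-c)^{-2}$, one gets $b_{t,i}^2\le\frac{36}{n^2(\log_2 n)^4}$; combining with the $i>k^*$ estimate gives the second inequality. I expect this last case to be the only real obstacle: one has to keep track of which stages remain ``contracting'' (argument still above the threshold) versus which have decayed past $\frac{(1-c)^2}{q-c\delta}$, handle the partially‑run terminal stage on its own, and carry the constants through the subcase where $\ell_i^*$ coincides with the current stage --- but all of this is bookkeeping over the estimates already proved in Section~\ref{sec:prop-momentum-matrix}, and no new idea is needed.
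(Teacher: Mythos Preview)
Your proposal is correct and follows essentially the same approach as the paper: block-diagonalize to reduce the inner product to $\sum_i\lambda_i(\bs w_i^*)^2 b_{t,i}^2$, then for $t\le K$ invoke Lemma~\ref{lemma:prod-A-bias-bound}, for $t>K$ and $i>k^*$ invoke Lemma~\ref{lemma:tf-prod-A-bias-bound}, and for $t>K$ and $i\le k^*$ split according to whether the current stage index is below or at/above $\ell_i^*$, combining Lemma~\ref{lemma:2-norm-bound} on the completed ``contracting'' stages with either Lemma~\ref{lemma:2-norm-uniform-bound} or Lemma~\ref{lemma:tf-prod-A-bound} on the remainder. The paper's case split is exactly $\ell<\ell_i^*$ versus $\ell\ge\ell_i^*$, and the constant $36/(n^2(\log_2 n)^4)$ arises from the first subcase via $6K^2\cdot 6/(n^4(\log_2 n)^2)$, so your bookkeeping will go through as written.
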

\begin{proof}
    Note that $\tilde{\bs B}_t$ is block-diagonal, we have
    \begin{equation}
        \left\langle\begin{bmatrix}
            \bs O & \bs O \\
            \bs O & \bs S
        \end{bmatrix},\tilde{\bs B}_t\right\rangle=\sum_{i=1}^d{\lambda_i\left(\bs w_i^*\right)^2\left(\prod_{s=1}^t\bs A_{s,i}\begin{bmatrix}
            1 \\ 1
        \end{bmatrix}\right)_1^2}.
    \end{equation}
    For $t\leq K$, $s\leq t$ implies $s$ belongs to the first stage. Thus, $\bs A_{s,i}=\bs A_{(\ell).i}=\bs A(\lambda_i)$. By Lemma~\ref{lemma:prod-A-bias-bound},
    \begin{equation}
        \left\vert\left(\prod_{s=1}^t\bs A_{s,i}\begin{bmatrix}
            1 \\ 1
        \end{bmatrix}\right)_1\right\vert=\left\vert\left(\bs A^t(\lambda_i)\begin{bmatrix}
            1 \\ 1
        \end{bmatrix}\right)_2\right\vert\leq2.
    \end{equation}
    Therefore,
    \begin{equation}
    \begin{aligned}        
        \left\langle\begin{bmatrix}
            \bs O & \bs O \\
            \bs O & \bs S
        \end{bmatrix},\tilde{\bs B}_t\right\rangle=&\sum_{i=1}^{d}{\lambda_i\left(\bs w_i^*\right)^2\left(\prod_{s=1}^t\bs A_{s,i}\begin{bmatrix}
            1 \\ 1
        \end{bmatrix}\right)_1^2} \\
        \leq&4\sum_{i=1}^{d}{\lambda_i\left(\bs w_i^*\right)^2}
    \end{aligned}
    \end{equation}
    
    For $t>K$, suppose $t$ belongs to the $\ell$-th stage, we have $\ell\geq2$. Since $i>k^*$ implies that $\lambda_i\in I_1$. Let $\ell_i^*=\max\left\{\ell:\frac{\lambda_i}{4^{\ell-1}}>\frac{16(1-c)\ln n}{(q-c\delta)K}\right\}$. If $\ell<\ell_i^*$, by applying Lemma~\ref{lemma:2-norm-uniform-bound} and Lemma~\ref{lemma:2-norm-bound}, we have
    \begin{equation}
    \begin{aligned}        
        \left(\prod_{s=1}^t\bs A_{s,i}\begin{bmatrix}
            1 \\ 1
        \end{bmatrix}\right)_1^2\leq&\underbrace{\left\Vert\bs A_{(\ell)}^{t-K(\ell-1)}\right\Vert^2}_{\text{Lemma~\ref{lemma:2-norm-uniform-bound}}}\underbrace{\left\Vert\bs A_{(\ell-1)}^K\right\Vert^2\cdots\left\Vert\bs A_{(1)}^K\right\Vert^2}_{\text{Lemma~\ref{lemma:2-norm-bound}}}\left\Vert\begin{bmatrix}
            1 \\ 1
        \end{bmatrix}\right\Vert^2 \\
        \leq& 6K^2\cdot\left(\frac{\sqrt 6}{n^2\log_2 n}\right)^{2(\ell-1)}\leq\frac{36}{n^2\left(\log_2 n\right)^4}.
    \end{aligned}
    \end{equation}
    If $\ell\geq\ell_i^*$, by applying Lemma~\ref{lemma:tf-prod-A-bound} and Lemma~\ref{lemma:2-norm-bound}, we have
    \begin{equation}
    \begin{aligned}
        \left(\prod_{s=1}^t\bs A_{s,i}\begin{bmatrix}
            1 \\ 1
        \end{bmatrix}\right)_1^2\leq&\underbrace{\left\Vert\prod_{t=K\ell_i^*+1}^n\bs A_{t}\right\Vert^2}_{\text{Lemma~\ref{lemma:tf-prod-A-bound}}}\underbrace{\left\Vert\bs A_{(\ell_i^*-1)}^K\right\Vert^2\cdots\left\Vert\bs A_{(1)}^K\right\Vert^2}_{\text{Lemma~\ref{lemma:2-norm-bound}}}\left\Vert\begin{bmatrix}
            1 \\ 1
        \end{bmatrix}\right\Vert^2 \\
        \leq&\frac{16}{(1-c)^2}\cdot\frac{6}{n^4(\log_2 n)^2}\cdot 2 \\
        \stackrel{a}{\leq}&\frac{1}{8n^2(\log_2 n)^4}.
    \end{aligned}
    \end{equation}    
    We apply the above bound of $\sum_{i=1}^{k^*}$, and use Lemma~\ref{lemma:tf-prod-A-bias-bound} to bound $\sum_{i=k^*+1}^{d}$:
    \begin{equation}
    \begin{aligned}        
        &\left\langle\begin{bmatrix}
            \bs O & \bs O \\
            \bs O & \bs S
        \end{bmatrix},\tilde{\bs B}_t\right\rangle \\
        =&\sum_{i=1}^{k^*}{\lambda_i\left(\bs w_i^*\right)^2\left(\prod_{s=1}^t\bs A_{s,i}\begin{bmatrix}
            1 \\ 1
        \end{bmatrix}\right)_1^2}+\sum_{i=k^*+1}^{d}{\lambda_i\left(\bs w_i^*\right)^2\left(\prod_{s=1}^t\bs A_{s,i}\begin{bmatrix}
            1 \\ 1
        \end{bmatrix}\right)_1^2} \\
        \leq&\frac{36}{n^2\left(\log_2 n\right)^4}\sum_{i=1}^{k^*}{\lambda_i\left(\bs w_i^*\right)^2}+4\sum_{i=k^*+1}^{d}{\lambda_i\left(\bs w_i^*\right)^2}.
    \end{aligned}
    \end{equation}
    This completes the proof.
\end{proof}
\begin{proof}[Proof of Lemma~\ref{lemma:bias-1-ub}]
    From the recursive definition of $\tilde{\bs B}_{t}^{(1)}$ in \eqref{eq:def-bias-1} and Lemma~\ref{lemma:tilde-bias-uniform-ub}, we have:
    \begin{equation}
    \begin{aligned}
        \bs B_{t}^{(1)}=&\cB_{t}\circ\bs B_{t-1}^{(1)}+\bbE\left[\widehat{\bs G}_t\otimes\widehat{\bs G}_t\right]\circ\tilde{\bs B}_{t-1} \\
        \preceq&\cB_{t}\circ\bs B_{t-1}^{(1)}+4\left\Vert\bs w^*\right\Vert_{\bs S}^2\cdot\begin{bmatrix}
        \delta_t^2\bs S & \delta_t q_t\bs S \\
        \delta_t q_t\bs S & q_t^2\bs S
        \end{bmatrix}.
    \end{aligned}
    \end{equation}
    This form is identical to the recursion of $\tilde{\bs C}_t$ if we replace $4\left\Vert\bs w^*\right\Vert_{\bs S}^2$ by $\sigma^2$. Therefore, we apply Lemma~\ref{lemma:tilde-var-ub} to obtain
    \begin{equation}
        \left\langle\tilde{\bs T},\bs B_n^{(1)}\right\rangle\leq\left\Vert\bs w_0-\bs w^*\right\Vert_{\bs S}^2\cdot\left[\sum_{i=1}^{k^*}{\frac{2t_{ii}}{K\lambda_i}}+\frac{512}{15}K\left(\frac{q-c\delta}{1-c}\right)^2\sum_{i=k^*+1}^{d}\lambda_i t_{ii}\right].
    \end{equation}

\end{proof}

\subsection{Auxiliary Lemmas}
\begin{lemma}\label{lemma:aux-lemma-r}
    For $k\geq 0$ and $0\leq x\leq 1$, we have
    \begin{equation}
        x^k\left[1+k(1-x)\right]\leq 1.
    \end{equation}
\end{lemma}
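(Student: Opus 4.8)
The plan is to treat this as an elementary one-variable calculus problem. Set $f(x) = x^k\left[1+k(1-x)\right] = (k+1)x^k - kx^{k+1}$ on the interval $[0,1]$, and show that $f$ attains its maximum at the right endpoint, with $f(1) = 1^k\cdot[1+0] = 1$.

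First I would dispose of the degenerate case $k=0$, where $f(x) = x^0\cdot 1 \equiv 1$, so the inequality holds with equality. For $k\ge 1$, differentiate:
\[
    f'(x) = (k+1)k\, x^{k-1} - k(k+1)x^{k} = k(k+1)x^{k-1}(1-x).
\]
On $[0,1]$ every factor $k(k+1)$, $x^{k-1}$, and $1-x$ is nonnegative, so $f'(x)\ge 0$; hence $f$ is nondecreasing on $[0,1]$, and therefore $f(x)\le f(1) = 1$ for all $x\in[0,1]$. That is exactly the claim. (The same computation in fact covers all real $k\ge 0$, since for $0<k<1$ one still has $f'\ge 0$ on $(0,1]$ with $f$ continuous on $[0,1]$.)

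There is essentially no obstacle here — the only points to watch are the behaviour at $x=0$ (handled by continuity) and the degenerate $k=0$ case (handled separately above). As an alternative, when $k$ is a positive integer one can invoke the AM–GM inequality on the $k+1$ nonnegative numbers consisting of $k$ copies of $x$ together with one copy of $1+k(1-x) = (k+1)-kx$, which is nonnegative because $x\le 1$: their arithmetic mean equals $\tfrac{kx + (k+1) - kx}{k+1} = 1$, so their geometric mean $\bigl(x^k[1+k(1-x)]\bigr)^{1/(k+1)}$ is at most $1$, giving the bound immediately. I would include the monotonicity argument in the paper since it is self-contained and needs no integrality assumption.
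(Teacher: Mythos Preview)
Your proposal is correct and matches the paper's proof essentially verbatim: the paper also sets $f(x)=x^k[1+k(1-x)]$, computes $f'(x)=k(k+1)x^{k-1}(1-x)\ge 0$, and concludes $f(x)\le f(1)=1$. Your extra handling of $k=0$ and the AM--GM alternative are nice but unnecessary additions.
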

\begin{proof}
    Let $f(x)=x^k\left[1+k(1-x)\right]$ and its derivative $f'(x)=k(k+1)x^{k-1}(1-x)\geq 0$. Thus, $f(x)\leq f(1)=1$.
\end{proof}

\begin{lemma}\label{lemma:aux-lemma-sin}
    For $k\in\bbN$ and $\sin \theta\neq 0$, we have
    \begin{equation}
        \left\vert\frac{\sin k\theta}{\sin\theta}\right\vert\leq k.
    \end{equation}
\end{lemma}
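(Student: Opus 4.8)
The plan is to prove this by a straightforward induction on $k$. For the base case $k=1$ the quantity $\sin k\theta/\sin\theta$ equals $1$, and $k=0$ gives $0$; both satisfy the claimed bound. For the inductive step, assume $\left|\sin k\theta/\sin\theta\right|\leq k$. The key ingredient is the angle-addition formula $\sin((k+1)\theta)=\sin(k\theta)\cos\theta+\cos(k\theta)\sin\theta$. Taking absolute values and using the triangle inequality together with $|\cos\theta|\leq 1$ and $|\cos k\theta|\leq 1$ gives $|\sin((k+1)\theta)|\leq|\sin(k\theta)|+|\sin\theta|$. Dividing through by $|\sin\theta|>0$ — which is exactly where the hypothesis $\sin\theta\neq 0$ is used — and invoking the inductive hypothesis yields $\left|\sin((k+1)\theta)/\sin\theta\right|\leq k+1$, closing the induction.

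There is essentially no obstacle here; the argument is two lines once the recursion is written down, so I expect the only care needed is making the base case and the role of $\sin\theta\neq 0$ explicit. If a non-inductive phrasing were preferred, one could instead observe $\sin k\theta/\sin\theta=(e^{ik\theta}-e^{-ik\theta})/(e^{i\theta}-e^{-i\theta})=\sum_{j=0}^{k-1}e^{i(2j-k+1)\theta}$, which is real and hence equals $\sum_{j=0}^{k-1}\cos((2j-k+1)\theta)$; bounding each of the $k$ cosine terms by $1$ gives the same estimate. I would include the inductive version in the paper for brevity.
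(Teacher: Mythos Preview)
Your proposal is correct and matches the paper's proof essentially line for line: the paper also proceeds by induction, using the angle-addition formula $\sin k\theta=\sin((k-1)\theta)\cos\theta+\cos((k-1)\theta)\sin\theta$, the triangle inequality, and the bounds $|\cos\theta|,|\cos(k-1)\theta|\leq 1$. Your alternative geometric-series remark is a nice aside but indeed unnecessary here.
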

\begin{proof}
    By induction, for $k=0$, the conclusion is trivial. Assume \begin{equation}
        \left\vert\frac{\sin (k-1)\theta}{\sin\theta}\right\vert\leq k-1.
    \end{equation}
    Then we have
    \begin{equation}
    \begin{aligned}        
        \left\vert\frac{\sin k\theta}{\sin\theta}\right\vert=&\left\vert\frac{\sin(k-1)\theta\cos\theta+\cos(k-1)\theta\sin\theta}{\sin\theta}\right\vert \\
        \leq&\left\vert\cos\theta\right\vert\left\vert\frac{\sin (k-1)\theta}{\sin\theta}\right\vert+\left\vert\cos(k-1)\theta\right\vert\leq k.
    \end{aligned}
    \end{equation}
\end{proof}

\section{Proof of Optimality of ASGD in Section~\ref{sec: sub asgd optiamlity}}
\subsection{Convergence Rate of ASGD}
We first present a general Assumption~\ref{assumption:poly-decay-M} and establish the upper bound of ASGD in Lemma~\ref{lemma:poly-decay-asgd-upper-bound} and Lemma~\ref{lemma:poly-decay-sgd-upper-bound}. The two lemmas will be used in prove Theorems~\ref{thm:asgd-poly-class-opt} and~\ref{thm:asgd-poly-decay-opt-complete}.
\begin{assumption}\label{assumption:poly-decay-M}
    Let $a,b,\kappa,\nu\in\bbR$ be constants, where $b=sa$ and $\kappa=-ra$. Given target distribution $Q$, let $\bs T=\bbE_{\bs x\sim Q}\bs x\bs x^\top$.
    \begin{enumerate}
        \item Let $t_{ii}$ denote the $i$-th diagonal element of $\bs T$. We assume $\lambda_i\eqsim i^{-a}$, $\lambda_i/m_i\eqsim i^{-b}$ and $t_{ii}\lesssim i^{-a+\kappa}$. Therefore, $t_{ii}/m_i\lesssim i^{-b+
        \kappa}$
        \item We assume that $\left\Vert\bs M^{-1/2}\bs T_{i:\infty}\bs M^{-1/2}\right\Vert\lesssim i^{-b+\kappa+\nu}$, where $0\leq\nu\leq1$.
    \end{enumerate}
\end{assumption}
Note that distributions in $r$-smooth class satisfies the above assumption with $\nu=0$.

The following lemmas provide the convergence rate of ASGD.
\begin{lemma}\label{lemma:poly-decay-asgd-upper-bound}
    Suppose Assumptions~\ref{assumption:fourth-moment},~\ref{assumption:noise} and~\ref{assumption:poly-decay-M} hold, and $\kappa<b-\nu$. For the region $\frac{(a-1)^2}{2a-1}<b-\nu<a-1$, we choose the parameters as follows:
    \begin{equation}
        \tilde{\kappa}=\Theta\left(n^{\frac{a-b+\nu-1}{(a-1)(b+1)}}\right), \quad\delta=\Theta(1/\ln n), \quad\gamma=\Theta\left(n^{\frac{a-b+\nu-1}{b-\nu+1}}/\ln n\right).
    \end{equation}
    Then, the upper bound of ASGD is
    \begin{equation*}
        \sup_{\tilde{P}\in\mathcal{P}(W,\bs S,\bs T)}\bbE_{\tilde{P}^{\otimes n}}\left\Vert\bs w_n^\sgd-\bs w^*\right\Vert_{\bs T}^2\lesssim\begin{cases}
            (1+\sigma^2)\ln n/n,&\kappa<-1; \\
            (1+\sigma^2)(\ln n)^3/n,&\kappa=-1; \\
            (1+\sigma^2)(1/n)^\frac{b-\nu-\kappa}{b-\nu+1}(\ln n)^\frac{3(a-\kappa-1)}{a},&-1<\kappa<b-\nu. \\
        \end{cases}
    \end{equation*}
\end{lemma}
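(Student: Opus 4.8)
The plan is to derive this bound entirely from the instance-based estimate of Theorem~\ref{thm:asgd-upper-bound}: one substitutes the polynomial profiles of Assumption~\ref{assumption:poly-decay-M} and the stated parameter choices into \eqref{eq: asgdupper}, estimates each of its four pieces — the two ``effective variance'' sums $\sum_{i\le k^*} 2t_{ii}/(K\lambda_i)$ and $K(\gamma+\delta)^2\sum_{i>k^*}\lambda_i t_{ii}$, and the two ``effective bias'' terms $\|\bs T'_{0:k^*}\|/(8n^2(\log_2 n)^4)$ and $4\|\bs T'_{k^*:\infty}\|$ — as a power of $n$ times polylogarithmic factors, and then takes the maximum. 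With $b=sa$ and $\kappa=-ra$, the target rate $n^{-(b-\nu-\kappa)/(b-\nu+1)}$ should come out as the common order of three of the four pieces, the fourth being negligible; the threshold at $\kappa=-1$ is where the large-eigenvalue variance sum $\sum_{i\le k^*} i^{\kappa}$ passes from growing like $(k^*)^{\kappa+1}$, to logarithmic, to convergent.

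\textbf{Feasibility and the effective dimension.} The first step is to check that $\tilde\kappa\eqsim n^{(a-b+\nu-1)/((a-1)(b+1))}$, $\delta\eqsim 1/\ln n$, $\gamma\eqsim n^{(a-b+\nu-1)/(b-\nu+1)}/\ln n$ meet the constraints of Section~\ref{sec:param-choice}. Since $a>1$, $\tr\bs S\eqsim\sum_i i^{-a}=\cO(1)$, so $\delta$ is admissible for a suitable constant; $\sum_{i>\tilde\kappa}\lambda_i\eqsim\tilde\kappa^{1-a}$, which reduces $\gamma'\le 1/(\psi\sum_{i>\tilde\kappa}\lambda_i)$ to a polynomial-in-$n$ inequality that is valid in the stated parameter region; the rest pin down $\alpha,\beta$ and hold for large $n$. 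Because $a-b+\nu-1>0$ in this regime, $\gamma\gg\delta$, so $\gamma+\delta\eqsim\gamma$; substituting into $k^*=\max\{k:\lambda_k>32\ln n/((\gamma+\delta)K)\}$ with $K=n/\log_2 n$ gives $(k^*)^{-a}\eqsim\ln n/(\gamma K)$, i.e. $\gamma K\eqsim n^{a/(b-\nu+1)}$ and $k^*\eqsim n^{1/(b-\nu+1)}$, both up to polylogarithmic factors.

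\textbf{Bounding the four pieces.} Using $t_{ii}\lesssim i^{-a+\kappa}$ and $\lambda_i\eqsim i^{-a}$: the large-eigenvalue variance sum is $\lesssim\frac1K\sum_{i\le k^*}i^{\kappa}$, which up to polylog equals $1/n$ if $\kappa<-1$, $\ln n/n$ if $\kappa=-1$, and $(k^*)^{\kappa+1}/K\eqsim n^{-(b-\nu-\kappa)/(b-\nu+1)}$ if $\kappa>-1$. The small-eigenvalue variance sum is $\lesssim K\gamma^2\sum_{i>k^*}i^{-2a+\kappa}\eqsim K\gamma^2(k^*)^{1-2a+\kappa}$ (the series converges since $\kappa<a-1<2a-1$), and inserting $K$, $\gamma$, $k^*$ and cancelling gives again $n^{-(b-\nu-\kappa)/(b-\nu+1)}$ — and $\cO(1/n)$ once $\kappa\le-1$ — which is precisely why $\gamma$ is chosen as stated. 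The large-eigenvalue bias term is $\cO(1/n^2)$ since $\|\bs T'_{0:k^*}\|\le\|\bs M^{-1/2}\bs T\bs M^{-1/2}\|=\cO(1)$ by Assumption~\ref{assumption:poly-decay-M} at $i=1$. The small-eigenvalue bias term is $4\|\bs T'_{k^*:\infty}\|\lesssim(k^*)^{-b+\kappa+\nu}\eqsim n^{-(b-\nu-\kappa)/(b-\nu+1)}$ up to polylog (and $\cO(1/n)$ when $\kappa\le-1$). Multiplying the variance pieces by $\sigma^2+2c=\cO(1+\sigma^2)$ and taking the maximum of the four contributions yields the three cases; carrying the $\ln n$ factors hidden in $\delta$, $\gamma$, $k^*$ — together with the $\ln k^*\eqsim\ln n$ appearing when $\kappa=-1$ — gives the exact exponents $(\ln n)^3$ and $(\ln n)^{3(a-\kappa-1)/a}$.

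\textbf{Main obstacle.} The hard part will be the simultaneous bookkeeping rather than any isolated estimate: showing that $\gamma+\delta\eqsim\gamma$, that the parameter-feasibility inequality holds, and that the chosen $\gamma$ (equivalently $k^*$) equalizes the small-eigenvalue variance term $K\gamma^2(k^*)^{1-2a+\kappa}$ with the bias term $(k^*)^{-b+\kappa+\nu}$ up to polylog, all pull in the same direction only within the admissible interval $\frac{(a-1)^2}{2a-1}<b-\nu<a-1$; and recovering the precise powers of $\ln n$ requires tracking the logarithmic factors through every substitution instead of absorbing them into $\eqsim$.
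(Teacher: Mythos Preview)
Your proposal is correct and follows essentially the same route as the paper's proof: both plug the polynomial profiles of Assumption~\ref{assumption:poly-decay-M} and the stated parameter choices into the instance bound of Theorem~\ref{thm:asgd-upper-bound}, compute $k^*\eqsim n^{1/(b-\nu+1)}(\ln n)^{-3/a}$, and then estimate each of the four pieces of~\eqref{eq: asgdupper} separately, with the case split at $\kappa=-1$ coming from the sum $\sum_{i\le k^*}i^\kappa$. The only procedural difference is that the paper also explicitly computes $\beta=\Theta\bigl(n^{-a(a-b+\nu-1)/((a-1)(b-\nu+1))}/\ln n\bigr)$ and uses the lower endpoint $\frac{(a-1)^2}{2a-1}<b-\nu$ to verify the feasibility condition $\frac{n[1-\alpha(1-\beta)]}{\log_2 n\,\ln n}\ge 16$ of~\eqref{eq:para-choice-req}; you allude to this step but will need that exponent calculation to complete it.
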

\begin{proof}
    Following the parameter choice procedure, we have 
    \begin{equation}
        \beta=\Theta\left(\frac{\delta}{\psi\tilde{\kappa}\gamma\ln n}\right)=\Theta\left(n^{-\frac{a(a-b+\nu-1)}{(a-1)(b-\nu+1)}}/\ln n\right), \quad\alpha=\frac{1}{1+\beta}.
    \end{equation}
    Note that from $\frac{(a-1)^2}{2a-1}<b$, we have
    \begin{equation}
        -\frac{a(a-b+\nu-1)}{(a-1)(b-\nu+1)}=-\frac{a}{a-1}\left(\frac{a}{b-\nu+1}-1\right)>-\frac{a}{a-1}\left(\frac{a}{\frac{(a-1)^2}{2a-1}+1}-1\right)=-1.
    \end{equation}
    Therefore,
    \begin{equation}
        \frac{n\left[1-\alpha(1-\beta)\right]}{\log_2 n\ln n}=\Theta\left(\frac{n^{1-\frac{a(a-b+\nu-1)}{(a-1)(b-\nu+1)}}}{\ln^3 n}\right)\stackrel{a}{\geq} 16,
    \end{equation}
    where $\stackrel{a}{\geq}$ follows from $1-\frac{a(a-b+\nu-1)}{(a-1)(b-\nu+1)}>0$. Recall that $k^*=\max\left\{k:\lambda_k>\frac{16\ln n}{(\gamma+\delta)K}\right\}$, we have
    \begin{equation}
        k^*=\Theta\left(n^\frac{1}{b-\nu+1}\ln^{-\frac{3}{a}}n\right).
    \end{equation}
    From Theorem~\ref{thm:asgd-upper-bound}, for the variance term, we have
    \begin{equation}
    \begin{aligned}        
        \mathrm{Variance}\lesssim&\sigma^2\left[\sum_{i=1}^{k^*}{\frac{t_{ii}}{K\lambda_i}}+K\left(\gamma+\delta\right)^2\sum_{i=k^*+1}^{d}\lambda_i t_{ii}\right] \\
        \stackrel{a}{\lesssim}&\sigma^2\left[\sum_{i=1}^{k^*}{\frac{i^\kappa}{K}}+K\left(\gamma+\delta\right)^2\sum_{i=k^*+1}^{d}i^{-2a+\kappa}\right]. \\
    \end{aligned}        
    \end{equation}
    where $\stackrel{a}{\lesssim}$ uses Lemma~\ref{lemma:decay-prop}. By direct calculation, we have
    \begin{equation}
        \sum_{i=1}^{k^*}{\frac{i^\kappa}{K}}=\begin{cases}
            \ln n/n,&\kappa<-1; \\
            (\ln n)^2/n,&\kappa=-1; \\
            (1/n)^\frac{b-\nu-\kappa}{b-\nu+1}(\ln n)^{-\frac{3(\kappa+1)}{a}},&\kappa>-1, \\
        \end{cases}
    \end{equation}
    \begin{equation}
        K\left(\gamma+\delta\right)^2\sum_{i=k^*+1}^{d}i^{-2a+\kappa}\stackrel{a}{\eqsim}(1/n)^\frac{b-\nu-\kappa}{b-\nu+1}(\ln n)^{\frac{3(a-\kappa-1)}{a}},
    \end{equation}
    where $\stackrel{a}{\eqsim}$ uses $-2a+\kappa\leq-a-(b-\nu+1)+\kappa\leq-b+\nu+\kappa-1<-1$. For the bias term, we have
    \begin{equation}
        \mathrm{Bias}\lesssim\left(k^*\right)^{-b+\kappa}+\frac{\left\Vert\bs S'\right\Vert\cdot\mathrm{Variance}}{\sigma^2}=\Theta\left(\left(1/n\right)^\frac{b-\kappa}{b+1}(\ln n)^{\frac{3(b-\kappa)}{a}}\right)+\Theta\left(\frac{\mathrm{Variance}}{\sigma^2}\right).
    \end{equation}
    Combining the above three results, we get
    \begin{equation}
        \sup_{\tilde{P}\in\mathcal{P}(W,\bs S,\bs T)}\bbE_{\tilde{P}^{\otimes n}}\left\Vert\bs w_n^\sgd-\bs w^*\right\Vert_{\bs T}^2\lesssim\begin{cases}
            (1+\sigma^2)\ln n/n,&\kappa<-1; \\
            (1+\sigma^2)(\ln n)^3/n,&\kappa=-1; \\
            (1+\sigma^2)(1/n)^\frac{b-\nu-\kappa}{b-\nu+1}(\ln n)^{\frac{3(a-\kappa-1)}{a}},&-1<\kappa<b-\nu. \\
        \end{cases}
    \end{equation}
    This completes the proof.
\end{proof}

\begin{lemma}\label{lemma:poly-decay-sgd-upper-bound}
    Suppose $\kappa<\min\{2a-1,b-\nu\}$. For the region $a-1\leq b-\nu$, we set
    \begin{equation}
        \tilde{\kappa}=\Theta(1), \quad\delta=\gamma=\Theta\left(n^{\frac{a-b+\nu-1}{b-\nu+1}}/\ln n\right).
    \end{equation}
    Then ASGD reduces to SGD with step-size $\gamma$, and the upper bound is
    \begin{equation}
    \begin{aligned}        
        &\sup_{\tilde{P}\in\mathcal{P}(W,\bs S,\bs T)}\bbE_{\tilde{P}^{\otimes n}}\left\Vert\bs w_n^\sgd-\bs w^*\right\Vert_{\bs T}^2 \\
        \lesssim&\begin{cases}
            (1+\sigma^2)\ln n/n,&\kappa<-1; \\
            (1+\sigma^2)(\ln n)^3/n,&\kappa=-1; \\
            (1+\sigma^2)(1/n)^\frac{b-\nu-\kappa}{b-\nu+1}(\ln n)^\frac{3(a-\kappa-1)}{a},&-1<\kappa<\min\{2a-1,b-\nu\}. \\
        \end{cases}
    \end{aligned}
    \end{equation}
\end{lemma}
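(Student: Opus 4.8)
The plan is to instantiate the instance-based bound of Theorem~\ref{thm:asgd-upper-bound} at the stated parameter choice and simplify, mirroring the computation in the proof of Lemma~\ref{lemma:poly-decay-asgd-upper-bound}, with the one difference that $\tilde\kappa$ is held constant so that Algorithm~\ref{alg:asgd} collapses to the vanilla SGD recursion ($\gamma=\delta$). Throughout, Assumptions~\ref{assumption:fourth-moment},~\ref{assumption:noise} and~\ref{assumption:poly-decay-M} are in force.

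\emph{Admissibility.} Since $a>1$, $\tr\bs S=\sum_i\lambda_i=\Theta(1)$ and $\sum_{i>\tilde\kappa}\lambda_i=\Theta(1)$ for any constant $\tilde\kappa$, so the auxiliary step sizes $\delta'=\gamma'=\Theta\bigl(n^{(a-b+\nu-1)/(b-\nu+1)}\bigr)$ satisfy the ranges required in Section~\ref{sec:param-choice}; this is exactly where the region $a-1\le b-\nu$ enters — it forces $a-b+\nu-1\le0$, hence $\delta',\gamma'$ non-increasing in $n$ and bounded by $\Theta(1)$. With $\gamma=\delta$ and $\tilde\kappa=\Theta(1)$ one obtains $\beta=\Theta(1/\ln n)$, $\alpha=(1+\beta)^{-1}$, $c=\alpha(1-\beta)=(1-\beta)/(1+\beta)$, $1-c=\Theta(1/\ln n)$ and $q=\alpha\delta+(1-\alpha)\gamma=\delta$, so that $n[1-\alpha(1-\beta)]/(\log_2 n\ln n)\ge16$ reduces to $n/\ln^3 n\gtrsim1$, and $\gamma+\delta=2\delta$, $(q-c\delta)/(1-c)=\delta$.

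\emph{Plugging into Theorem~\ref{thm:asgd-upper-bound}.} From $\lambda_i\eqsim i^{-a}$ and $\delta K\eqsim n^{a/(b-\nu+1)}/\ln^2 n$ (with $K=n/\log_2 n$), the threshold $\Theta\bigl(\ln n/((\gamma+\delta)K)\bigr)=\Theta\bigl(\ln^3 n/n^{a/(b-\nu+1)}\bigr)$ gives $k^*=\Theta\bigl(n^{1/(b-\nu+1)}\ln^{-3/a}n\bigr)$. For the effective variance, Assumption~\ref{assumption:poly-decay-M}(1) gives $t_{ii}/\lambda_i\lesssim i^{\kappa}$ and $\lambda_i t_{ii}\lesssim i^{-2a+\kappa}$, whence it is $\lesssim(1+\sigma^2)\bigl[\tfrac1K\sum_{i\le k^*}i^{\kappa}+K\delta^2\sum_{i>k^*}i^{-2a+\kappa}\bigr]$; the hypothesis $\kappa<2a-1$ makes $-2a+\kappa<-1$, so the tail sum equals $\Theta\bigl((k^*)^{-2a+\kappa+1}\bigr)$, while the head sum is $\Theta(1)$, $\Theta(\ln n)$ or $\Theta\bigl((k^*)^{\kappa+1}\bigr)$ according as $\kappa<-1$, $\kappa=-1$ or $\kappa>-1$. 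For the effective bias, $\|\bs T'_{0:k^*}\|\le\|\bs T'\|=\Theta(1)$ makes the $n^{-2}$-term negligible, and Assumption~\ref{assumption:poly-decay-M}(2) gives $\|\bs T'_{k^*:\infty}\|\lesssim(k^*)^{-b+\kappa+\nu}$. Substituting $k^*=\Theta(n^{1/(b-\nu+1)}\ln^{-3/a}n)$ into these contributions and keeping the dominant term in each regime of $\kappa$ yields the claimed three-case bound; this last step is purely the asymptotic bookkeeping already carried out in the proof of Lemma~\ref{lemma:poly-decay-asgd-upper-bound}.

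\emph{Main difficulty.} The obstacle is conceptual rather than computational: momentum cannot be exploited here to enlarge the effective dimension, so the single step size $\gamma=\delta$ must be tuned solely to balance the effective bias against the effective variance, and one must verify that the balancing step size — namely $\Theta\bigl(n^{(a-b+\nu-1)/(b-\nu+1)}/\ln n\bigr)$ — remains admissible with $\tilde\kappa=\Theta(1)$. This is precisely what the region $a-1\le b-\nu$ buys, since it makes the exponent $a-b+\nu-1$ non-positive; outside that region (where the ASGD lemma applies) this step size would be too large to be feasible without acceleration. The remaining work is the routine estimation of the head and tail sums above and the case split according to whether $\kappa<-1$, $\kappa=-1$, or $-1<\kappa<\min\{2a-1,b-\nu\}$.
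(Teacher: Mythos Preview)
Your proposal is correct and follows essentially the same route as the paper: verify that the parameter choice of Section~\ref{sec:param-choice} is admissible (yielding $\beta=\Theta(1/\ln n)$ and hence $n[1-\alpha(1-\beta)]/(\log_2 n\,\ln n)=\Theta(n/\ln^3 n)\ge16$), observe that $\kappa<2a-1$ forces $-2a+\kappa<-1$ so the tail sum converges, and then reuse verbatim the variance/bias estimates from the proof of Lemma~\ref{lemma:poly-decay-asgd-upper-bound}. The paper's proof is in fact even terser than yours --- it states only the $\beta$ computation and the $-2a+\kappa<-1$ observation and then declares the rest ``identical'' to the ASGD lemma --- so your write-up is a faithful, slightly expanded version of the same argument.
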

\begin{proof}
    Following the parameter choice, we have $\beta=\Theta(1/\ln n)$. Therefore,
    \begin{equation}        
        \frac{n\left[1-\alpha(1-\beta)\right]}{\log_2 n\ln n}=\Theta\left(\frac{n}{\ln^3 n}\right)\geq 16.
    \end{equation}
    Note that $\kappa<\min\{2a-1,b-\nu\}$ implies $-2a+\kappa<-1$. The rest of the proof is identical to the proof of Lemma~\ref{lemma:poly-decay-asgd-upper-bound}.
\end{proof}

\subsection{Proof of Theorem~\ref{thm:asgd-poly-class-opt}}
Suppose $\cQ$ is a $r$-smooth class. Let $Q_{\bs x_0}$ be defined in Definition~\ref{def:r-smooth-class}. Let $\bs T_0=\bbE_{Q_{\bs x_0}}\left[\bs x\bs x^\top\right]$, and $\bs T_0'=\bs M^{-1/2}\bs T_0\bs M^{-1/2}$, and recall that $\bs M$ is diagonal, so we have
\begin{equation}
    \left(\bs S'\right)^{-1/2}\bs T_0'\left(\bs S'\right)^{-1/2}=\bs S^{-1/2}\bs T_0\bs S^{-1/2}\succeq c\bs S^r.
\end{equation}
Thus, $\bs T_0\succeq c\bs S^{1+r}
\defeq\bs R$, and $\bs T_0'\succeq c\left(\bs S'\right)^{1/2}\bs S^r\left(\bs S'\right)^{1/2}=\bs M^{-1/2}\bs R\bs M^{-1/2}\triangleq\bs R'$. From Theorem~\ref{thm:lower-upper-match}, for $Q_{\bs x_0}$, we have the following lower bound:
\begin{equation}
\begin{aligned}    
    &\inf_{\hat{\bs w}}\sup_{\tilde{P}\in\mathcal{P}(W,\bs S,\bs T)}\bbE_{\tilde{P}^{\otimes n}\times P_{\xi}}\left\Vert\hat{\bs w}-\bs w^*\right\Vert_{\bs T_0}^2 \\
    \geq&\min_{\bs A\in\bbR^{d\times d}}\frac{1}{\pi^2}\left\Vert(\bs I-\bs A)^\top\bs R'(\bs I-\bs A)\right\Vert+\frac{\sigma^2}{n}\left\langle\bs R',\bs A\left(\bs S'\right)^{-1}\bs A^\top\right\rangle.
\end{aligned}
\end{equation}
Note that $\bs R$ and $\bs R'$ are diagonal, the following lemma simplifies the lower bound of $Q_{\bs x_0}$.
\begin{lemma}\label{lemma:lower-bound-diagonal}
    Let $\bs R=\diag\left\{r_i\right\}_{i=1}^d$ and $\bs M=\diag\left\{m_i\right\}_{i=1}^d$, we have
    \begin{equation}
    \begin{aligned}    
        &\min_{\bs A\in\bbR^{d\times d}}\frac{1}{\
        \pi^2}\left\Vert(\bs I-\bs A)^\top\bs R'(\bs I-\bs A)\right\Vert+\frac{\sigma^2}{n}\left\langle\bs R',\bs A\left(\bs S'\right)^{-1}\bs A^\top\right\rangle \\
        =&\min_{\tau\geq 0}\frac{\tau^2}{\pi^2}+\sum_{i\in\bbK_\tau}\left(1-\tau\sqrt{\frac{m_i}{r_i}}\right)^2\frac{\sigma^2r_i}{n\lambda_i}\geq\frac{1}{4}\min_{\tau\geq 0}\frac{\tau^2}{\pi^2}+\sum_{i\in\bbK_\tau}\frac{\sigma^2r_i}{n\lambda_i},\label{eq:lower-bound-diagonal}
    \end{aligned}
    \end{equation}
    where $\bbK_\tau=\left\{k:r_k/m_k>\tau^2\right\}$.
\end{lemma}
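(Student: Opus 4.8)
The plan is to use that $\bs R'=\bs M^{-1/2}\bs R\bs M^{-1/2}=\diag\{r_i/m_i\}$ and $\bs S'=\diag\{\lambda_i/m_i\}$ are both diagonal to collapse the matrix program over $\bs A\in\bbR^{d\times d}$ to a scalar program, and then to analyze the latter directly. The first step is to show the optimal $\bs A$ may be taken diagonal. For a sign vector $\bs z\in\{\pm1\}^d$ set $\bs D_{\bs z}=\diag(\bs z)$; since $\bs R'$ and $\bs S'$ commute with $\bs D_{\bs z}$ and $\bs D_{\bs z}$ is orthogonal, both the spectral-norm term $\left\Vert(\bs I-\bs A)^\top\bs R'(\bs I-\bs A)\right\Vert$ and the quadratic term $\left\langle\bs R',\bs A(\bs S')^{-1}\bs A^\top\right\rangle$ are unchanged under $\bs A\mapsto\bs D_{\bs z}\bs A\bs D_{\bs z}$. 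The objective is convex in $\bs A$ (the spectral-norm term is the square of the seminorm $\left\Vert(\bs R')^{1/2}(\bs I-\bs A)\right\Vert$ of an affine function, and the quadratic term is $\left\Vert(\bs R')^{1/2}\bs A(\bs S')^{-1/2}\right\Vert_F^2$), and $\bbE_{\bs z}[\bs D_{\bs z}\bs A\bs D_{\bs z}]=\diag(\bs A)$ when the entries of $\bs z$ are i.i.d.\ uniform signs; Jensen's inequality therefore shows that replacing $\bs A$ by its diagonal part does not increase the objective.

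With $\bs A=\diag\{a_i\}$ the objective becomes $\frac{1}{\pi^2}\max_i(1-a_i)^2\frac{r_i}{m_i}+\frac{\sigma^2}{n}\sum_i a_i^2\frac{r_i}{\lambda_i}$. I would then introduce $\tau\ge0$ as the value of $\left(\max_i(1-a_i)^2 r_i/m_i\right)^{1/2}$: for fixed $\tau$ the constraint decouples into $|1-a_i|\le\tau\sqrt{m_i/r_i}$, i.e.\ $a_i$ in an interval, and we minimize $a_i^2$ on that interval separately for each $i$. When $\tau^2\ge r_i/m_i$ (i.e.\ $i\notin\bbK_\tau$) the interval contains $0$ and the minimum is $0$; when $i\in\bbK_\tau$ the interval lies in the positive reals and the minimizer is the left endpoint $a_i=1-\tau\sqrt{m_i/r_i}$, contributing $(1-\tau\sqrt{m_i/r_i})^2 r_i/\lambda_i$. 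This gives exactly the claimed identity $\min_{\tau\ge0}\frac{\tau^2}{\pi^2}+\sum_{i\in\bbK_\tau}(1-\tau\sqrt{m_i/r_i})^2\frac{\sigma^2 r_i}{n\lambda_i}$; the minimum over $\tau$ is attained since this function of $\tau$ is continuous, equals $\sum_{r_i>0}\frac{\sigma^2 r_i}{n\lambda_i}$ at $\tau=0$, and tends to $+\infty$ as $\tau\to\infty$.

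For the lower bound, let $\tau^*$ be a minimizer of the scalar program and set $\tau'=2\tau^*$. Then $\bbK_{\tau'}\subseteq\bbK_{\tau^*}$, and for every $i\in\bbK_{\tau'}$ we have $r_i/m_i>4(\tau^*)^2$, hence $\tau^*\sqrt{m_i/r_i}<\tfrac12$ and $(1-\tau^*\sqrt{m_i/r_i})^2>\tfrac14$. Discarding the indices of $\bbK_{\tau^*}\setminus\bbK_{\tau'}$ and using this bound,
\[
\min_{\bs A}(\text{obj})=\frac{(\tau^*)^2}{\pi^2}+\sum_{i\in\bbK_{\tau^*}}\Big(1-\tau^*\sqrt{\tfrac{m_i}{r_i}}\Big)^2\frac{\sigma^2 r_i}{n\lambda_i}\ \ge\ \frac{(\tau^*)^2}{\pi^2}+\frac14\sum_{i\in\bbK_{\tau'}}\frac{\sigma^2 r_i}{n\lambda_i}=\frac14\Big(\frac{(\tau')^2}{\pi^2}+\sum_{i\in\bbK_{\tau'}}\frac{\sigma^2 r_i}{n\lambda_i}\Big),
\]
which is at least $\frac14\min_{\tau\ge0}\big(\frac{\tau^2}{\pi^2}+\sum_{i\in\bbK_\tau}\frac{\sigma^2 r_i}{n\lambda_i}\big)$. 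The factor $4$ is exactly the price of dilating $\tau^*$ by $2$, chosen so that the $\tau^2/\pi^2$ part shrinks by $\tfrac14$ while the surviving $(1-\cdot)^2$ coefficients stay above $\tfrac14$.

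The main obstacle is the reduction to diagonal $\bs A$: the spectral norm is not monotone under zeroing off-diagonal entries, so the naive ``keep only the diagonal'' move is not self-evidently valid, and the sign-symmetrization together with convexity of the objective (equivalently, the identity $\diag(\bs M)=\bbE_{\bs z}[\bs D_{\bs z}\bs M\bs D_{\bs z}]$ combined with convexity of the spectral norm) is the essential ingredient; everything afterward is elementary one-dimensional optimization. A minor technical point is that $\bs R'$ could be singular, in which case one simply works on the range of $\bs R'$ (indices with $r_i=0$ drop out of both terms); in the intended application $\bs R=c\bs S^{1+r}$ with $\bs S\succ\bs O$, so $\bs R'\succ\bs O$ and this does not arise.
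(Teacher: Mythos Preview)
Your proposal is correct and follows essentially the same approach as the paper: the sign-symmetrization plus convexity argument to reduce to diagonal $\bs A$, the introduction of the auxiliary variable $\tau$ with coordinatewise minimization of $a_i^2$ on the interval $[1-\tau\sqrt{m_i/r_i},\,1+\tau\sqrt{m_i/r_i}]$, and the doubling trick $\tau'=2\tau^*$ for the final inequality are exactly the steps the paper uses. Your write-up is in fact a bit more explicit than the paper's on why the convexity/symmetrization step works and on the boundary cases.
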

\begin{proof}
    A key observation is that when $\bs R'$ is diagonal, the minimum of the LHS of \eqref{eq:lower-bound-diagonal} is attained when $\bs A$ is diagonal. Note that the LHS of \eqref{eq:lower-bound-diagonal} is a convex optimization. Let $\bs A_0$ denote a minimizer. Consider $2^d$ reflection matrices $\bs U_i=\diag\left\{\pm 1,\pm 1,\ldots,\pm 1\right\}$, then for all $i\in \left[2^d\right]$, $\bs U_i\bs A_0\bs U_i$ is also a minimizer. From the convexity, we have that
    \begin{equation}
        \bs A^*=\frac{1}{2^d}\sum_{i=1}^{2^d}\bs U_i\bs A_0\bs U_i
    \end{equation}
    is also a minimizer, and $\bs A^*$ is diagonal. Thus, we can restrict $\bs A$ to be diagonal when minimizing the LHS of \eqref{eq:lower-bound-diagonal}. Therefore, let $\bs A=\diag\left\{a_i\right\}_{i=1}^d$ and note that $\bs R'=\diag\left\{r_i/m_i\right\}_{i=1}^d$, then the LHS of \eqref{eq:lower-bound-diagonal} is equivalent to
    \begin{equation}
        \min_{a_i}\max_{k\in[d]}\frac{(1-a_k)^2r_k}{\pi^2 m_k}+\sum_{i=1}^{d}\frac{\sigma^2a_i^2r_i}{n\lambda_i}.
    \end{equation}
    We can write out the following equivalent form:
    \begin{equation}
    \begin{aligned}        
        &\min_{a_i,\tau\geq 0}\frac{\tau^2}{\pi^2}+\sum_{i=1}^{d}\frac{\sigma^2a_i^2r_i}{n\lambda_i}, \\
        \text{s.t. } &\forall i\in[d],\frac{(1-a_i)^2r_i}{m_i}\leq\tau^2.
    \end{aligned}
    \end{equation}
    We first minimize the above program with respect to $a_i$ to get
    \begin{equation}
        a_i=\begin{cases}
            0,&r_i/m_i<\tau^2;\\
            1-\tau\sqrt{m_i/r_i},&r_i/m_i\geq\tau^2.
        \end{cases}
    \end{equation}
    Plugging the value of $a_i$ into left hand side of \eqref{eq:lower-bound-diagonal}, we obtain the first equality in \eqref{eq:lower-bound-diagonal}:
    \begin{equation}
        \min_{\tau\geq 0}\frac{\tau^2}{\pi^2}+\sum_{i\in\bbK_\tau}\left(1-\tau\sqrt{\frac{m_i}{r_i}}\right)^2\frac{\sigma^2r_i}{n\lambda_i}.\label{eq:lower-bound-diagonal-form}
    \end{equation}

    Let $\tau^*\geq 0$ denote the minimizer of \eqref{eq:lower-bound-diagonal-form}, we have
    \begin{equation}
    \begin{aligned}        
        &\left(\frac{\tau^*}{\pi}\right)^2+\sum_{i\in\bbK_{\tau^*}}\left(1-\tau^*\sqrt{\frac{m_i}{r_i}}\right)^2\frac{\sigma^2r_i}{n\lambda_i}\stackrel{a}{\geq}\left(\frac{\tau^*}{\pi}\right)^2+\sum_{i\in\bbK_{2\tau^*}}\left(1-\tau^*\sqrt{\frac{m_i}{r_i}}\right)^2\frac{\sigma^2r_i}{n\lambda_i} \\
        \stackrel{b}{\geq}&\frac{1}{4}\left(\frac{2\tau^*}{\pi}\right)^2+\sum_{i\in\bbK_{2\tau^*}}\frac{\sigma^2r_i}{4n\lambda_i}\stackrel{c}{\geq}\frac{1}{4}\min_{\tau\geq 0}\frac{\tau^2}{\pi^2}+\sum_{i\in\bbK_{\tau}}\frac{\sigma^2r_i}{n\lambda_i},
    \end{aligned}
    \end{equation}
    where $\stackrel{a}{\geq}$ is from $\bbK_{2\tau^*}\subset\bbK_{\tau^*}$, $\stackrel{b}{\geq}$ uses that $1-\tau^*\sqrt{\frac{m_i}{r_i}}\geq\frac{1}{2}$ for all $i\in\bbK_{2\tau^*}$, and $\stackrel{c}{\geq}$ replaces $2\tau^*$ by $\tau$ and minimizes with respect to $\tau$. This completes the proof of the inequality in \eqref{eq:lower-bound-diagonal}. 
\end{proof}

Given the polynomial decaying conditions, the following lemma provides some useful properties.
\begin{lemma}\label{lemma:decay-prop}
    Suppose $\lambda_i\eqsim i^{-a}$ and $\lambda_i/m_i\eqsim i^{-b}$, where $b=sa$. Let $\cQ$ be a $r$-smooth class and $\kappa=-ra$. We denote the hard instance in Definition~\ref{def:r-smooth-class} by $Q_0$, and let $\bs T_0=\bbE_{\bs x\sim Q}\left[\bs x\bs x^\top\right]$. Then we have
    \begin{enumerate}
        \item Recall that $\bs R\defeq c\bs S^{1/2}\bs D\bs S^{1/2}$ is diagonal. Then $r_i\eqsim i^{-a+\kappa}$ and $r_i/m_i\eqsim i^{-b+\kappa}$.
        \item For any $Q_{\bs x}\in\cQ$ and let $\bs T=\bbE_{Q_{\bs x}}\left[\bs x\bs x^\top\right]$. Then we have $t_{ii}\lesssim i^{-a+\kappa}$ and $t_{ii}/\lambda_i\lesssim i^{\kappa}$ for any integer $i\in[d]$, where $t_{ii}$ denotes the $i$-th diagonal element of $\bs T$.
        \item For any $Q_{\bs x}\in\cQ$ and let $\bs T'=\bs M^{-1/2}\bbE_{Q_{\bs x}}\left[\bs x\bs x^\top\right]\bs M^{-1/2}$. Then we have $\left\Vert\bs T'_{i:\infty,i:\infty}\right\Vert\lesssim i^{-b+\kappa}$ for any integer $i\in[d]$.
    \end{enumerate}
\end{lemma}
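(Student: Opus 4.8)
The plan is to prove the three items by reducing each matrix statement to a scalar statement about the $i$-th coordinate, which is legitimate because under the standing assumptions of this section $\bs S$ is diagonal (with $\bs U$ the standard basis) and $\bs M$ commutes with $\bs S$, hence is also diagonal. Throughout I will use the equivalent forms $m_i\eqsim\lambda_i^{1-s}\eqsim i^{-a(1-s)}$ of the hypotheses $\lambda_i\eqsim i^{-a}$ and $\lambda_i/m_i\eqsim i^{-b}$ with $b=sa$, together with $\kappa=-ra$. Item 1 is then an immediate substitution: recalling from the computation preceding the lemma that $\bs R=c\bs S^{1+r}$, we get $r_i=c\lambda_i^{1+r}\eqsim i^{-a(1+r)}=i^{-a+\kappa}$, and dividing by $m_i\eqsim\lambda_i^{1-s}$ gives $r_i/m_i\eqsim\lambda_i^{r+s}\eqsim i^{-a(r+s)}=i^{-b+\kappa}$.

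For item 2, I start from the first condition of Definition~\ref{def:r-smooth-class}, namely $\bs S^{-1/2}\bs T\bs S^{-1/2}\preceq C\bs S^r$ with $\bs T=\bbE_{\bs x\sim Q_{\bs x}}[\bs x\bs x^\top]$. Comparing the $(i,i)$ diagonal entries of the two PSD matrices (diagonal entries are monotone under the Loewner order) yields $t_{ii}/\lambda_i\le C\lambda_i^r\eqsim i^{-ar}=i^{\kappa}$, which is exactly the second bound of item 2; multiplying through by $\lambda_i\eqsim i^{-a}$ then gives $t_{ii}\lesssim i^{-a+\kappa}$.

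For item 3, I conjugate the same inequality by $\bs S^{1/2}\bs M^{-1/2}$: since $\bs S$ and $\bs M$ commute this produces $\bs T'=\bs M^{-1/2}\bs T\bs M^{-1/2}\preceq C\bs S^{1+r}\bs M^{-1}$, a diagonal matrix whose $k$-th entry is $\eqsim\lambda_k^{r+s}\eqsim k^{-b+\kappa}$. The tail block $\bs T'_{i:\infty,i:\infty}$ is therefore diagonal and its spectral norm equals $\max_{k\ge i} t_{kk}/m_k\lesssim\max_{k\ge i}k^{-b+\kappa}$; because the exponent satisfies $-b+\kappa<0$ (equivalently $r>-s$, which holds in every regime where this lemma is invoked, cf. the constraint on $r$ in Theorem~\ref{thm:asgd-poly-class-opt}) the maximum is attained at $k=i$, giving $\|\bs T'_{i:\infty,i:\infty}\|\lesssim i^{-b+\kappa}$.

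There is no substantial obstacle: the argument is bookkeeping with power laws. The two points that need a moment of care are that passing from a Loewner inequality to the corresponding inequality on diagonal entries is valid (used in items 2 and 3), and that the conjugation in item 3 keeps all matrices diagonal, which relies on the commutativity of $\bs M$ and $\bs S$ from Assumption~\ref{assumption:poly-decay-source}; this commutativity is precisely what reduces the tail spectral norm to the maximum of a monotone scalar sequence and hence to its value at $k=i$.
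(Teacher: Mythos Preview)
Your approach is the same as the paper's and items 1 and 2 are fine. There is, however, a slip in item 3: from $\bs T'\preceq C\bs S^{1+r}\bs M^{-1}$ you cannot conclude that $\bs T'_{i:\infty,i:\infty}$ is itself diagonal, nor that its spectral norm equals $\max_{k\ge i}t_{kk}/m_k$. The target covariance $\bs T$ (and hence $\bs T'$) is not assumed diagonal for a general $Q_{\bs x}\in\cQ$; only the upper bound $C\bs S^{1+r}\bs M^{-1}$ is diagonal. The correct step is to use that taking a principal submatrix and then the operator norm are both monotone under the Loewner order for PSD matrices, so
\[
\left\Vert\bs T'_{i:\infty,i:\infty}\right\Vert\le C\left\Vert\bigl(\bs S^{1+r}\bs M^{-1}\bigr)_{i:\infty,i:\infty}\right\Vert=C\max_{k\ge i}\lambda_k^{1+r}/m_k\eqsim\max_{k\ge i}k^{-b+\kappa}=i^{-b+\kappa},
\]
the last equality again using $-b+\kappa<0$. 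This is exactly how the paper argues; once you replace your ``$\bs T'_{i:\infty,i:\infty}$ is diagonal'' sentence with this monotonicity step, the proof is complete.
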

\begin{proof}
    \begin{enumerate}
        \item Note that $r_i=\lambda_i\cdot i^\kappa\eqsim i^{-a+\kappa}$ and $r_i/m_i\eqsim(\lambda_i/m_i)\cdot i^\kappa\eqsim i^{-b+\kappa}$.    
        \item From Definition~\ref{def:r-smooth-class}, we have $\bs T\preceq C\bs S^{1/2}\bs D\bs S^{1/2}\preceq C\diag\left\{\lambda_i\cdot i^\kappa\right\}_{i=1}^d$, Therefore,
        \begin{equation}
            t_{ii}\leq C\lambda_i\cdot i^\kappa\lesssim i^{-a+\kappa}.
        \end{equation}    
        \item Note that $\bs T'\preceq C\bs M^{-1/2}\bs S^{1/2}\bs D\bs S^{1/2}\bs M^{-1/2}=C\diag\left\{(\lambda_i/m_i)\cdot i^\kappa\right\}_{i=1}^d$. From Item 1, we have
        \begin{equation}
            \left\Vert\bs T'_{i:\infty,i:\infty}\right\Vert\leq C\left\Vert\diag\left\{(\lambda_{i+1}/m_{i+1})\cdot (i+1)^\kappa,\ldots,(\lambda_d/m_d)\cdot d^\kappa\right\}\right\Vert\leq i^{-b+\kappa}.
        \end{equation}
    \end{enumerate}
\end{proof}

The following lemma explicitly calculates the lower bound in terms of $\sigma^2$ and $n$ given polynomial decaying conditions.

\begin{lemma}\label{lemma:poly-class-lower-bound}
    Suppose $\lambda_i\eqsim i^{-a}$ and $\lambda_i/m_i\eqsim i^{-b}$, and $\kappa<b$. If we treat $a$, $b$, $\kappa$, and $c$ in the Definition~\ref{def:r-smooth-class} as constants, then the lower bound
    \begin{equation}
        \inf_{\hat{\bs w}}\sup_{\tilde{P}\in\mathcal{P}(W,\bs S,\bs T)}\bbE_{\tilde{P}^{\otimes n}\times P_{\xi}}\left\Vert\hat{\bs w}-\bs w^*\right\Vert_{\bs T_0}^2\gtrsim\begin{cases}
            \sigma^2/n,&\kappa<-1; \\
            (\sigma^2/n)\ln(n/\sigma^2),&\kappa=-1; \\
            \left(\sigma^2/n\right)^\frac{b-\kappa}{b+1},&-1<\kappa<b.
        \end{cases}
    \end{equation}
\end{lemma}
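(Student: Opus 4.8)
The plan is to start from the reduction already assembled just above the statement: applying Theorem~\ref{thm:lower-upper-match} to the dominating diagonal matrix $\bs R=c\bs S^{1+r}$ of the hard instance $Q_0$, and then Lemma~\ref{lemma:lower-bound-diagonal} (applicable since $\bs R$ and $\bs M$ are diagonal), yields
\begin{equation*}
\inf_{\hat{\bs w}}\sup_{\tilde P}\bbE_{\tilde{P}^{\otimes n}\times P_{\xi}}\left\Vert\hat{\bs w}-\bs w^*\right\Vert_{\bs T_0}^2\ \ge\ \frac14\min_{\tau\ge0}\ \frac{\tau^2}{\pi^2}+\sum_{i\in\bbK_\tau}\frac{\sigma^2 r_i}{n\lambda_i},\qquad \bbK_\tau=\{i:\ r_i/m_i>\tau^2\}.
\end{equation*}
By Lemma~\ref{lemma:decay-prop} we have $r_i\eqsim i^{-a+\kappa}$, $r_i/m_i\eqsim i^{-(b-\kappa)}$ and hence $r_i/\lambda_i\eqsim i^{\kappa}$; since $b-\kappa>0$ the sequence $r_i/m_i$ is decreasing up to constants, so $\bbK_\tau$ agrees, up to a constant factor on the threshold, with a prefix $\{1,\dots,N(\tau)\}$ where $N(\tau)\eqsim\min\{\tau^{-2/(b-\kappa)},\,d\}$. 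The whole problem therefore reduces to lower bounding, uniformly over $\tau\ge0$, the scalar quantity $f(\tau):=\tau^2+\frac{\sigma^2}{n}\sum_{i\le N(\tau)}i^{\kappa}$.

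The core is then an elementary ``balance-or-dominate'' estimate, split by the sign of $\kappa+1$. Using $\sum_{i\le N}i^\kappa\gtrsim 1$ when $\kappa<-1$, $\gtrsim\ln N$ when $\kappa=-1$, and $\gtrsim N^{\kappa+1}$ when $\kappa>-1$, I take $B$ to be a small constant multiple of the claimed rate and argue: for any $\tau$ with $\tau^2\ge B$ the first term alone gives $f(\tau)\ge\tau^2/\pi^2\gtrsim B$; for any $\tau$ with $\tau^2<B$ we have $N(\tau)\gtrsim\tau^{-2/(b-\kappa)}\gtrsim B^{-1/(b-\kappa)}$, so the sum term is $\gtrsim\frac{\sigma^2}{n}B^{-(\kappa+1)/(b-\kappa)}$ (resp.\ $\gtrsim\frac{\sigma^2}{n}\ln(1/B)$, resp.\ $\gtrsim\sigma^2/n$), which is again $\gtrsim B$ exactly when $B\eqsim(\sigma^2/n)^{(b-\kappa)/(b+1)}$ (resp.\ $(\sigma^2/n)\ln(n/\sigma^2)$, resp.\ $\sigma^2/n$) --- the values balancing the two terms of $f$. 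The algebra $\frac{1}{p+1}=\frac{b-\kappa}{b+1}$ with $p=\frac{\kappa+1}{b-\kappa}$ produces exactly the exponent in the statement; equivalently one may minimize the smooth relaxation $u+\frac{\sigma^2}{n}u^{-p}$ (or its logarithmic analogue) by setting the derivative to zero.

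The one point requiring genuine care --- and the main obstacle --- is the clipping of $\bbK_\tau$ at the ambient dimension $d$. At the balancing point one has $N(\tau^\ast)\eqsim(n/\sigma^2)^{1/(b+1)}$, so in the regime $-1<\kappa<b$ the bound is valid only when $d\gtrsim(n/\sigma^2)^{1/(b+1)}$, i.e.\ in the overparameterized regime the paper works in; I would record this as a standing hypothesis and then only need to check that $\{1,\dots,N(\tau^\ast)\}\subseteq[d]$ so that the prefix lower bound on $\bbK_\tau$ is not vacuous. In the cases $\kappa\le-1$ no dimension restriction is needed: once $d$ exceeds an absolute constant, index $1$ (with $r_1/\lambda_1\eqsim1$) lies in $\bbK_\tau$ whenever $\tau^2$ is below an absolute constant, contributing $\eqsim\sigma^2/n$ to the sum, and otherwise $\tau^2\gtrsim1\ge\sigma^2/n$ since $\sigma^2$ is a fixed constant and $n$ is large; for $\kappa=-1$ the difference between $\ln(1/B)$ and $\ln(n/\sigma^2)$ is an additive $O(\ln\ln n)$ and is absorbed. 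Everything else is bookkeeping with the $\eqsim$ constants from Lemma~\ref{lemma:decay-prop}.
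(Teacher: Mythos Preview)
Your proposal is correct and follows essentially the same route as the paper: reduce via the chain Theorem~\ref{thm:lower-upper-match} $\to$ Lemma~\ref{lemma:lower-bound-diagonal} to the scalar problem $\min_{\tau\ge0}\tau^2/\pi^2+\sum_{i\in\bbK_\tau}\sigma^2 r_i/(n\lambda_i)$, use Lemma~\ref{lemma:decay-prop} to identify $\bbK_\tau$ with a prefix of length $\eqsim\tau^{-2/(b-\kappa)}$, then balance the two terms in the three regimes of $\kappa$. Your balance-or-dominate argument is equivalent to the paper's direct computation of the optimal $\tau$; you are in fact more careful than the paper about the clipping at $d$, which the paper leaves implicit (it is working in the overparameterized regime throughout).
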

\begin{proof}
    From Lemma~\ref{lemma:decay-prop}, we have $r_i=\lambda_i\cdot i^\kappa\eqsim i^{-a+\kappa}$ and $r_i/m_i\eqsim(\lambda_i/m_i)\cdot i^\kappa\eqsim i^{-b+\kappa}$. Therefore, $\bbK_\tau=\left[k^*\right]$, where $k^*\eqsim \tau^{-2/(b-\kappa)}$. Therefore, we have
    \begin{equation}
        \frac{\tau^2}{\pi^2}+\sum_{i\in\bbK_\tau}\frac{\sigma^2r_i}{n\lambda_i}\eqsim\tau^2+\frac{\sigma^2}{n}\sum_{i=1}^{k^*}i^{\kappa}\eqsim\begin{cases}
            \tau^2+\sigma^2/n,&\kappa<-1; \\
            \tau^2-(\sigma^2/n)\ln\tau,&\kappa=-1; \\
            \tau^2+(\sigma^2/n)\tau^
            {-\frac{2(\kappa+1)}{b-\kappa}},&\kappa>-1.
        \end{cases}\label{eq:lower-bound-value}
    \end{equation}
    Minimize the RHS with respect to $\tau$. The optimal value of $\tau$ is
    \begin{equation}
        \tau=\begin{cases}
            0,&\kappa<-1; \\
            \sqrt{\sigma^2/(2n)},&\kappa=-1; \\
            \left(\frac{\kappa+1}{b-\kappa}\frac{\sigma^2}{n}\right)^\frac{b-\kappa}{2(b+1)},&-1<\kappa<b.
        \end{cases}
    \end{equation}
    Substitute the value of $\tau$ into \eqref{eq:lower-bound-value}, we have
    \begin{equation}
        \min_{\tau>0}\frac{\tau^2}{\pi^2}+\sum_{i\in\bbK_\tau}\frac{\sigma^2r_i}{n\lambda_i}\eqsim\begin{cases}
            \sigma^2/n,&\kappa<-1; \\
            (\sigma^2/n)\ln(n/\sigma^2),&\kappa=-1; \\
            \left(\sigma^2/n\right)^\frac{b-\kappa}{b+1},&-1<\kappa<b.
        \end{cases}
    \end{equation}
This completes the proof.
\end{proof}

We combine the above results to prove Theorem~\ref{thm:asgd-poly-class-opt}.
\begin{proof}[Proof of Theorem~\ref{thm:asgd-poly-class-opt}]
    We set $b=sa$ and $\kappa=-ra$. The lower bound is from Lemma~\ref{lemma:poly-class-lower-bound}. For the upper bound, note that any $Q\in\cQ$ satisfies Assumption~\ref{assumption:poly-decay-M} with $\nu=0$. For the region $s\geq1-\frac{1}{a}$, Lemma~\ref{lemma:poly-decay-sgd-upper-bound} shows that vanilla SGD achieves optimality up to logarithmic factors. For the region $s\geq1-\frac{1}{a}$, Lemma~\ref{lemma:poly-decay-asgd-upper-bound} for the region $1-\frac{1}{a}>s>\frac{(a-1)^2}{2a-1}$ shows that SGD with momentum achieves optimality up to logarithmic factors. If $\sigma^2$ is regarded as constant, then the upper bound matches the lower bound up to logarithmic factors.
\end{proof}

\subsection{Analysis of Optimality Beyond the Diagonal Dominated Condition}\label{sec:general-version-poly-decay}
We provide a more general analysis under Assumption~\ref{assumption:poly-decay} in the following Theorem. This theorem is not limited to cases where the target covariance matrix is diagonal dominant relative to the source covariance matrix; instead, it provides a general condition for directly calculating the ASGD upper bound and the minimax lower bound concerning the convergence rate with respect to the sample size $n$.
\begin{theorem}\label{thm:asgd-poly-decay-opt-complete}
    Suppose Assumptions~\ref{assumption:fourth-moment},~\ref{assumption:noise} and~\ref{assumption:poly-decay} hold.
    \begin{itemize}
        \item For the ASGD upper bound, we assume that $s>\frac{(a-1)^2}{a(2a-1)}+\frac{\nu}{a}$, $a>1$ and $r>\max\{1/a-2,-s+\nu/a\}$. With appropriately chosen parameters, ASGD has the following upper bound:
        \begin{equation}
        \begin{aligned}            
            &\sup_{\bs w^*\in W}\bbE_{P_{\bs w^*}}\left\Vert\bs w_n^\sgd-\bs w^*\right\Vert_{\bs T}^2\\
            \lesssim&\begin{cases}
            (1+\sigma^2)\ln n/n,&r>1/a; \\
            (1+\sigma^2)(\ln n)^3/n,&r=1/a; \\(1+\sigma^2)(1/n)^\frac{(r+s)a-\nu}{sa-\nu+1}(\ln n)^\frac{3(1+r)a-3}{a},&1/a>r>\max\{1/a-2,-s+\nu/a\}. \\
        \end{cases}
        \end{aligned}
        \end{equation}
        \item For the lower bound, if $r\geq 1/a$, we have        
        \begin{equation}
            \inf_{\hat{\bs w}}\sup_{\bs w^*\in W}\bbE_{P_{\bs w^*}}\left\Vert\hat{\bs w}-\bs w^*\right\Vert_{\bs T}^2\gtrsim\begin{cases}
            \sigma^2/n,&r>1/a; \\
            (\sigma^2/n)\ln (n/\sigma^2),&r=1/a,
            \end{cases}
        \end{equation}
        which matches the upper bound up to logarithmic factors. If $r<1/a$, let $k^*\eqsim(\sigma^2/n)^{-\frac{1}{sa-\nu+1}}$ and $\Delta k=\left\lfloor\frac{n}{\pi^2\sigma^2}\left\Vert\bs S'_{k^*:\infty}\right\Vert\right\rfloor$. We further assume that
        \begin{equation}
            \frac{1}{\Delta k}\sum_{i=1}^{\Delta k}\sigma_i\left(\bs M^{-1/2}\bs T_{k^*:\infty}\bs M^{-1/2}\right)\gtrsim\left\Vert\bs M^{-1/2}\bs T_{k^*:\infty}\bs M^{-1/2}\right\Vert.\label{eq:eigenvalue-mean-lower-bound}
        \end{equation}
        where $\sigma_i(\bs A)$ denotes the $i$-singular value of $\bs A$. Then, the minimax lower bound is
        \begin{equation}
            \inf_{\hat{\bs w}}\sup_{\bs w^*\in W}\bbE_{P_{\bs w^*}}\left\Vert\hat{\bs w}-\bs w^*\right\Vert_{\bs T}^2\gtrsim\left(\sigma^2/n\right)^\frac{(r+s)a-\nu}{sa-\nu+1}.
        \end{equation}
        This result matches the upper bound up to logarithmic factors, indicating no other algorithm outperforms ASGD.
    \end{itemize}
\end{theorem}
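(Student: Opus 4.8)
The plan is to get the upper bound as a corollary of the convergence-rate lemmas already proved, and to prove the lower bound by plugging well-chosen PSD matrices $\bs F$ into the bound of Theorem~\ref{thm:lower-bound}. For the upper bound, I would first observe that under Assumption~\ref{assumption:poly-decay} together with the standing structure $m_i\eqsim\lambda_i^{1-s}$ (so $\bs M$, $\bs S'$ commute with $\bs S$), one has $\lambda_i/m_i\eqsim i^{-sa}$ and $t_{ii}/m_i\eqsim i^{-(r+s)a}$, hence Assumption~\ref{assumption:poly-decay-M} holds with $b=sa$ and $\kappa=-ra$. The hypotheses translate as $r>1/a-2\iff\kappa<2a-1$, $r>-s+\nu/a\iff\kappa<b-\nu$, and $s>\frac{(a-1)^2}{a(2a-1)}+\frac\nu a\iff b-\nu>\frac{(a-1)^2}{2a-1}$. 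So when $b-\nu\ge a-1$ I invoke Lemma~\ref{lemma:poly-decay-sgd-upper-bound}, and when $\frac{(a-1)^2}{2a-1}<b-\nu<a-1$ I invoke Lemma~\ref{lemma:poly-decay-asgd-upper-bound}; substituting $b=sa,\kappa=-ra$ turns $\frac{b-\nu-\kappa}{b-\nu+1}$ into $\frac{(r+s)a-\nu}{sa-\nu+1}$ and $\frac{3(a-\kappa-1)}{a}$ into $\frac{3(1+r)a-3}{a}$, producing the three stated regimes. This part is routine bookkeeping.

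\textbf{Lower bound, preliminaries and the regime $r\ge1/a$.} Write $N=n/\sigma^2$. By Theorem~\ref{thm:lower-bound} any feasible $\bs F$ ($\bs F\succeq\bs O$, $\|\bs F\|_*\le1/\pi^2$) yields the lower bound $\langle\bs T',(\bs F^{-1}+N\bs S')^{-1}\rangle$; a singular $\bs F$ supported on a subspace $V$ is admissible as a limit of the priors constructed in the proof of Theorem~\ref{thm:lower-bound}, and because $\bs S'$ is diagonal — hence block-diagonal across any split into coordinate subspaces — taking $\bs F=\frac{1}{\pi^2k}P_V$ with $V\subseteq\operatorname{span}(\bs e_{>k^*})$ gives in the limit $(\bs F^{-1}+N\bs S')^{-1}=P_V\big((\bs F|_V)^{-1}+(N\bs S')|_V\big)^{-1}P_V$ with no $\bs S'$ cross-terms. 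For $r\ge1/a$ I would take $k_0\eqsim N^{1/(sa+1)}$ and $\bs F=\frac1{\pi^2k_0}\sum_{i\le k_0}\bs e_i\bs e_i^\top$; since $N\lambda_i/m_i\gtrsim Nk_0^{-sa}\asymp k_0$ for $i\le k_0$, one gets $(\pi^2k_0+N\lambda_i/m_i)^{-1}\gtrsim i^{sa}/N$, so $\langle\bs T',(\bs F^{-1}+N\bs S')^{-1}\rangle\gtrsim\frac1N\sum_{i\le k_0}i^{-ra}$, which is $\asymp1/N$ for $r>1/a$ and $\asymp\ln N/N$ for $r=1/a$ — the claimed $\sigma^2/n$ and $(\sigma^2/n)\ln(n/\sigma^2)$ bounds.

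\textbf{Lower bound, regime $r<1/a$.} Here I would set $k^*\eqsim N^{1/(sa-\nu+1)}$ and $\Delta k=\big\lfloor\frac{N}{\pi^2}\|\bs S'_{k^*:\infty}\|\big\rfloor$; using $\|\bs S'_{k^*:\infty}\|\eqsim(k^*)^{-sa}$ one checks $1\le\Delta k\le k^*$ for $N$ large. Let $\mu_1\ge\cdots$ and $\bs u_1,\dots,\bs u_{\Delta k}$ be the top eigenvalues/eigenvectors of the tail block $\bs T'_{k^*:\infty}=P_{>k^*}\bs T'P_{>k^*}$, put $W=\operatorname{span}(\bs u_1,\dots,\bs u_{\Delta k})$ and $\bs F=\frac1{\pi^2\Delta k}P_W$, so $\|\bs F\|_*=1/\pi^2$. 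Since $W\subseteq\operatorname{span}(\bs e_{>k^*})$, $(N\bs S')|_W\preceq N\|\bs S'_{k^*:\infty}\|\,\bs I_W\preceq2\pi^2\Delta k\,\bs I_W$, whence $\big((\bs F|_W)^{-1}+(N\bs S')|_W\big)^{-1}\succeq\frac1{3\pi^2\Delta k}\bs I_W$, and $\bs u_j^\top\bs T'\bs u_j=\mu_j$ as each $\bs u_j$ lies in $\operatorname{range}(P_{>k^*})$. Combining with $\|\bs T'_{k^*:\infty}\|\eqsim(k^*)^{-(r+s)a+\nu}$ (Assumption~\ref{assumption:poly-decay}) and hypothesis~\eqref{eq:eigenvalue-mean-lower-bound},
\begin{equation}
\langle\bs T',(\bs F^{-1}+N\bs S')^{-1}\rangle\ \ge\ \frac1{3\pi^2\Delta k}\sum_{j=1}^{\Delta k}\mu_j\ \gtrsim\ \|\bs T'_{k^*:\infty}\|\ \eqsim\ (k^*)^{-(r+s)a+\nu}\ \eqsim\ N^{-\frac{(r+s)a-\nu}{sa-\nu+1}},
\end{equation}
i.e.\ the claimed $\gtrsim(\sigma^2/n)^{\frac{(r+s)a-\nu}{sa-\nu+1}}$, which matches the upper bound up to logarithmic factors.

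\textbf{Main obstacle.} The delicate point is the $r<1/a$ lower bound. A coordinate-aligned $\bs F$ only sees the diagonal $t_{ii}/m_i\eqsim i^{-(r+s)a}$ and loses the factor $N^{\nu/(sa-\nu+1)}$; to capture the full operator norm $(k^*)^{-(r+s)a+\nu}$ of the tail block one must align $\bs F$ with its leading eigenvectors, which is compatible with the nuclear-norm budget only if that norm is spread over $\asymp\Delta k$ eigendirections — precisely what~\eqref{eq:eigenvalue-mean-lower-bound} guarantees and what makes the domination $(N\bs S')|_W\preceq2\pi^2\Delta k\,\bs I_W$ hold. The only other care needed is to justify that a singular $\bs F$ supported on a non-coordinate subspace $W$ is a valid limit of admissible priors and introduces no $\bs S'$-coupling between $W$ and its complement, which follows because $\bs S'$ is block-diagonal across the split $\{i\le k^*\}\sqcup\{i>k^*\}$ and $W$ lies entirely in the second block.
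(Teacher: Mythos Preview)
Your proposal is correct and closely tracks the paper's argument. The upper bound is obtained exactly as in the paper, by translating $b=sa$, $\kappa=-ra$ and invoking Lemmas~\ref{lemma:poly-decay-asgd-upper-bound}--\ref{lemma:poly-decay-sgd-upper-bound}. For the $r<1/a$ lower bound your construction is the paper's Lemma~\ref{lemma:poly-decay-lower-bound-hard}: the paper writes the same projector $\bs F$ onto the top $\Delta k$ eigenvectors of $\bs T'_{k^*:\infty}$ but with an explicit $\epsilon$-regularization on the complement and then lets $\epsilon\to0^+$, which is exactly the limiting justification you flag as the ``main obstacle''; your Schur-complement reasoning for why the limit kills the $\bs S'$-coupling between $W$ and $W^\perp$ is the right way to make this rigorous.

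The one genuine methodological difference is the $r\ge1/a$ case. You plug in a concrete diagonal $\bs F=\tfrac{1}{\pi^2k_0}\sum_{i\le k_0}\bs e_i\bs e_i^\top$ and read off $\tfrac1N\sum_{i\le k_0}i^{-ra}$ directly from the diagonal of $\bs T'$ (which suffices since $t_{ii}\eqsim i^{-(1+r)a}$ gives a lower bound on those entries). The paper instead goes through a symmetrization step (Lemma~\ref{lemma:poly-decay-lower-bound-easy}): it averages the lower-bound functional over all sign-flip matrices $\bs U_j=\diag\{\pm1\}$ to replace $\bs T'$ by its diagonal, then invokes the duality of Theorem~\ref{thm:lower-upper-match} together with the explicit diagonal solution of Lemma~\ref{lemma:lower-bound-diagonal}. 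Your route is shorter and avoids the duality theorem entirely; the paper's route is more systematic (it reduces any $\bs T'$ with prescribed diagonal to the already-solved diagonal case) but is overkill here since a single feasible $\bs F$ already certifies the rate.
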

\begin{remark}
    For the proof Theorem~\ref{thm:rank-one}, we have Assumption~\ref{assumption:poly-decay} holds for $\nu=1$. Therefore, we have $k^*\eqsim(\sigma^2/n)^{-\frac{1}{sa-\nu+1}}=(\sigma^2/n)^{-\frac{1}{sa}}$, and
    \begin{equation}
        \Delta k=\left\lfloor\frac{n}{\pi^2\sigma^2}\left\Vert\bs S'_{k^*:\infty}\right\Vert\right\rfloor\eqsim\frac{n}{\pi^2\sigma^2}\left(k^*\right)^{-sa}\eqsim\frac{1}{\sigma^2}.
    \end{equation}
    We can choose appropriate $k^*$ such that $\Delta k\geq 1$. Therefore, $\Delta k=\cO(1)$, and by Theorem~\ref{thm:asgd-poly-decay-opt-complete}, we have ASGD achieves optimality.
\end{remark}
\begin{proof}[Proof of Theorem~\ref{thm:asgd-poly-decay-opt-complete}]
    We set $b=sa$ and $\kappa=-ra$. Note that Assumption~\ref{assumption:poly-decay} implies Assumption~\ref{assumption:poly-decay-M}. Therefore, the upper bound part follows from Lemma~\ref{lemma:poly-decay-asgd-upper-bound} and Lemma~\ref{lemma:poly-decay-sgd-upper-bound}.

    For the lower bound, Lemma~\ref{lemma:poly-decay-lower-bound-easy} provides the case where $\kappa\leq -1$. For $\kappa>-1$, Lemma~\ref{lemma:poly-decay-lower-bound-hard} shows that
    \begin{equation}        
        \inf_{\hat{\bs w}}\sup_{\tilde{P}\in\mathcal{P}(W,\bs S,\bs T)}\bbE_{\tilde{P}^{\otimes n}\times P_{\xi}}\left\Vert\hat{\bs w}-\bs w^*\right\Vert_{\bs T}^2\gtrsim\frac{1}{\Delta k}\sum_{i=1}^{\Delta k}\sigma_i\left(\bs M^{-1/2}\bs T_{k^*:\infty}\bs M^{-1/2}\right).
    \end{equation}
    Note that from \eqref{eq:eigenvalue-mean-lower-bound} and $k^*\eqsim(\sigma^2/n)^{-\frac{1}{sa-\nu+1}}$, we obtain
    \begin{equation}        
            \frac{1}{\Delta k}\sum_{i=1}^{\Delta k}\lambda_i\left(\bs M^{-1/2}\bs T_{k^*:\infty}\bs M^{-1/2}\right)\gtrsim\left\Vert\bs M^{-1/2}\bs T_{k^*:\infty}\bs M^{-1/2}\right\Vert\eqsim(\sigma^2/n)^{-\frac{sa-\nu-\kappa}{sa-\nu+1}},
    \end{equation}
    then we have
    \begin{equation}
        \inf_{\hat{\bs w}}\sup_{\tilde{P}\in\mathcal{P}(W,\bs S,\bs T)}\bbE_{\tilde{P}^{\otimes n}\times P_{\xi}}\left\Vert\hat{\bs w}-\bs w^*\right\Vert_{\bs T}^2\gtrsim(\sigma^2/n)^{-\frac{sa-\nu-\kappa}{sa-\nu+1}},
    \end{equation}
    which matches the ASGD upper bound in Lemma~\ref{lemma:poly-decay-asgd-upper-bound} and Lemma~\ref{lemma:poly-decay-sgd-upper-bound}.
\end{proof}

The following lemma establishes the lower bound for the case $\kappa\leq -1$.
\begin{lemma}\label{lemma:poly-decay-lower-bound-easy}
    Suppose Assumption~\ref{assumption:poly-decay} holds. If $\kappa\leq -1$, we have
    \begin{equation}
        \inf_{\hat{\bs w}}\sup_{\tilde{P}\in\mathcal{P}(W,\bs S,\bs T)}\bbE_{\tilde{P}^{\otimes n}\times P_{\xi}}\left\Vert\hat{\bs w}-\bs w^*\right\Vert_{\bs T_0}^2\gtrsim\begin{cases}
            \sigma^2/n,&\kappa<-1; \\
            (\sigma^2/n)\ln(n/\sigma^2),&\kappa=-1.
        \end{cases}
    \end{equation}
\end{lemma}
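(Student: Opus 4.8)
The plan is to specialize the algorithm-independent bound of Theorem~\ref{thm:lower-bound} to a single, carefully sized prior matrix $\bs F$. Recall that Theorem~\ref{thm:lower-bound} lower-bounds the minimax risk on the target by $\sup_{\bs F\succeq\bs O,\ \|\bs F\|_*\le 1/\pi^2}\langle\bs T_0',(\bs F^{-1}+\tfrac{n}{\sigma^2}\bs S')^{-1}\rangle$, where $\bs T_0'=\bs M^{-1/2}\bs T_0\bs M^{-1/2}$ and $\bs S'=\bs M^{-1/2}\bs S\bs M^{-1/2}$. Under Assumption~\ref{assumption:poly-decay}, which puts us in the setting of Assumption~\ref{assumption:poly-decay-M}, the matrix $\bs M$ commutes with $\bs S$, so $\bs S'$ is diagonal with $\bs S'_{ii}\eqsim i^{-b}$, $b=sa\ge 0$, and the diagonal entries of $\bs T_0'$ obey $(\bs T_0')_{ii}=t_{ii}/m_i\eqsim i^{\kappa-b}$ with $\kappa=-ra\le-1$. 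I would take $\bs F=\tfrac{1}{\pi^2 m}\Pi_m$, where $\Pi_m$ is the coordinate projection onto the top $m$ eigendirections of $\bs S$ and $m$ is a free integer; one obtains a strictly positive definite feasible point by adding an infinitesimal multiple of $\bs I-\Pi_m$ and renormalizing to keep $\|\bs F\|_*=1/\pi^2$, using continuity of the objective in this regularization. Since $\bs F$ and $\bs S'$ are both diagonal, $(\bs F^{-1}+\tfrac{n}{\sigma^2}\bs S')^{-1}$ is diagonal, only the diagonal of $\bs T_0'$ contributes, and keeping the first $m$ (nonnegative) terms yields
\[
\inf_{\hat{\bs w}}\sup_{\tilde P\in\mathcal{P}(W,\bs S,\bs T_0)}\bbE\bigl\|\hat{\bs w}-\bs w^*\bigr\|_{\bs T_0}^2\ \gtrsim\ \sum_{i=1}^{m}\frac{i^{\kappa-b}}{\pi^2 m+(n/\sigma^2)\,i^{-b}}.
\]

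The next step is to evaluate this sum by balancing the two terms in the denominator. When $b>0$ they are comparable at $i^\star\eqsim\bigl(n/(m\sigma^2)\bigr)^{1/b}$ (when $b=0$ there is a single regime); for $i\le\min\{m,i^\star\}$ the denominator is $\eqsim(n/\sigma^2)\,i^{-b}$, so each summand is $\gtrsim(\sigma^2/n)\,i^{\kappa}$, whence the sum is $\gtrsim(\sigma^2/n)\sum_{i\le\min\{m,i^\star\}}i^{\kappa}$. If $\kappa<-1$, the series $\sum_{i\ge1}i^{\kappa}$ converges, so even $m=1$ gives the rate $\gtrsim\sigma^2/n$. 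If $\kappa=-1$, the sum is harmonic, $\eqsim\ln\min\{m,i^\star\}$; choosing $m\eqsim(n/\sigma^2)^{1/(b+1)}$ makes $i^\star\eqsim m$, so $\min\{m,i^\star\}\eqsim(n/\sigma^2)^{1/(b+1)}$ and $\ln\min\{m,i^\star\}\eqsim\ln(n/\sigma^2)$, producing $\gtrsim(\sigma^2/n)\ln(n/\sigma^2)$. Rounding $m$ to an integer, and using that $\sigma^2$ and $c$ are fixed constants so that $n\gtrsim\sigma^2$ and $m\le d$ in the regime of interest, does not affect the rates, and the two estimates are precisely the two cases claimed.

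The only genuinely delicate points are (i) justifying that the rank-deficient $\bs F=\tfrac{1}{\pi^2 m}\Pi_m$ is admissible in Theorem~\ref{thm:lower-bound}, which needs the regularize-and-pass-to-the-limit argument together with continuity of $\bs F\mapsto\langle\bs T_0',(\bs F^{-1}+\tfrac{n}{\sigma^2}\bs S')^{-1}\rangle$ on positive definite $\bs F$; and (ii) tracking the implicit constants while choosing $m$ and carrying out the two-regime split, in particular verifying $i^\star\ge1$ and $m\le d$ for $n$ large relative to $\sigma^2$. Everything else reduces to the elementary estimate $\sum_{i\le N}i^{\kappa}\eqsim1$ if $\kappa<-1$ and $\eqsim\ln N$ if $\kappa=-1$.
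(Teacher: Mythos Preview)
Your approach is correct and is genuinely more direct than the paper's. The paper does not plug a concrete $\bs F$ into Theorem~\ref{thm:lower-bound}. Instead, it first symmetrizes over the $2^d$ sign-reflection matrices $\bs U_j=\diag\{\pm1,\ldots,\pm1\}$ to replace $\bs T'$ by its diagonal $\diag\{t'_{ii}\}$, then invokes the duality of Theorem~\ref{thm:lower-upper-match} to convert the $\sup_{\bs F}$ expression into the $\min_{\bs A}$ form, and finally applies Lemma~\ref{lemma:lower-bound-diagonal} to reduce to a one-parameter problem $\min_{\tau\ge0}\bigl[\tau^2/\pi^2+\sum_{i\in\bbK_\tau}\sigma^2 r_i/(n\lambda_i)\bigr]$, whose value is then computed. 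Your route bypasses both the symmetrization and the duality: because you choose $\bs F$ diagonal, the resolvent $(\bs F^{-1}+\tfrac{n}{\sigma^2}\bs S')^{-1}$ is diagonal automatically, so only the diagonal of $\bs T_0'$ enters, and the single-$\bs F$ evaluation already delivers the right rates. The paper's detour buys an exact identification of the minimax constant (via the equality in Theorem~\ref{thm:lower-upper-match}), which is not needed here since the lemma asks only for $\gtrsim$; your argument is shorter and uses only Theorem~\ref{thm:lower-bound} plus elementary sums. The two ``delicate points'' you flag---the $\epsilon$-regularization of rank-deficient $\bs F$, and $m\le d$ with $i^\star\ge1$ for large $n$---are both routine in this setting and handled exactly as you describe.
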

\begin{proof}
    From Theorem~\ref{thm:lower-bound}, we have
    \begin{equation}
        \inf_{\hat{\bs w}}\sup_{\tilde{P}\in\mathcal{P}(W,\bs S,\bs T)}\bbE_{\tilde{P}^{\otimes n}\times P_{\xi}}\left\Vert\hat{\bs w}-\bs w^*\right\Vert_{\bs T}^2\geq\sup_{\substack{\bs F\succeq\bs O\\ \left\Vert\bs F\right\Vert_*\leq 1/\pi^2}}\left\langle\bs T',\left(\bs F^{-1}+\frac{n\bs S'}{\sigma^2}\right)^{-1}\right\rangle.
    \end{equation}
    Consider $2^d$ reflection matrices $\bs U_j=\diag\left\{\pm 1,\pm 1,\ldots,\pm 1\right\}$. Note that $\bs U_j\bs S'\bs U_j=\bs S'$ since $\bs S'$ is diagonal, we have
    \begin{equation}
    \begin{aligned}
        \sup_{\substack{\bs F\succeq\bs O\\ \left\Vert\bs F\right\Vert_*\leq 1/\pi^2}}\left\langle\bs T',\left(\bs F^{-1}+\frac{n\bs S'}{\sigma^2}\right)^{-1}\right\rangle=&
        \sup_{\substack{\bs F\succeq\bs O\\ \left\Vert\bs F\right\Vert_*\leq 1/\pi^2}}\left\langle\bs U_j\bs T'\bs U_j,\left(\left(\bs U_i\bs F\bs U_i\right)^{-1}+\frac{n\bs S'}{\sigma^2}\right)^{-1}\right\rangle \\
        =&\sup_{\substack{\bs F\succeq\bs O\\ \left\Vert\bs F\right\Vert_*\leq 1/\pi^2}}\left\langle\bs U_j\bs T'\bs U_j,\left(\bs F^{-1}+\frac{n\bs S'}{\sigma^2}\right)^{-1}\right\rangle.
    \end{aligned}
    \end{equation}
    Therefore,
    \begin{equation}
    \begin{aligned}        
        \sup_{\substack{\bs F\succeq\bs O\\ \left\Vert\bs F\right\Vert_*\leq 1/\pi^2}}\left\langle\bs T',\left(\bs F^{-1}+\frac{n\bs S'}{\sigma^2}\right)^{-1}\right\rangle=&\frac{1}{2^d}\sum_{j=1}^{2^d}\sup_{\substack{\bs F\succeq\bs O\\ \left\Vert\bs F\right\Vert_*\leq 1/\pi^2}}\left\langle\bs U_j\bs T'\bs U_j,\left(\bs F^{-1}+\frac{n\bs S'}{\sigma^2}\right)^{-1}\right\rangle \\
        \geq&\sup_{\substack{\bs F\succeq\bs O\\ \left\Vert\bs F\right\Vert_*\leq 1/\pi^2}}\frac{1}{2^d}\sum_{j=1}^{2^d}\left\langle\bs U_j\bs T'\bs U_j,\left(\bs F^{-1}+\frac{n\bs S'}{\sigma^2}\right)^{-1}\right\rangle \\
        \stackrel{a}{=}&\sup_{\substack{\bs F\succeq\bs O\\ \left\Vert\bs F\right\Vert_*\leq 1/\pi^2}}\left\langle\diag\left\{t'_{ii}\right\}_{i=1}^d,\left(\bs F^{-1}+\frac{n\bs S'}{\sigma^2}\right)^{-1}\right\rangle,
    \end{aligned}
    \end{equation}
    where $\stackrel{a}{=}$ holds because $\frac{1}{2^d}\sum_{j=1}^{2^d}\bs U_j\bs T'\bs U_j=\diag\left\{t'_{ii}\right\}_{i=1}^d$, and $t'_{ii}$ denotes the $i$-th diagonal entry of $\bs T'$. Since $\bs T'=\bs M^{-1/2}\bs T\bs M^{-1/2}$, we can apply Theorem $\ref{thm:lower-upper-match}$ and Lemma~\ref{lemma:lower-bound-diagonal}:
    \begin{equation}
    \begin{aligned}        
        &\sup_{\substack{\bs F\succeq\bs O\\ \left\Vert\bs F\right\Vert_*\leq 1/\pi^2}}\left\langle\diag\left\{t'_{ii}\right\}_{i=1}^d,\left(\bs F^{-1}+\frac{n\bs S'}{\sigma^2}\right)^{-1}\right\rangle \\
        =&\min_{\bs A\in\bbR^{d\times d}}\frac{1}{\pi^2}\left\Vert(\bs I-\bs A)^\top\bs T'(\bs I-\bs A)\right\Vert+\frac{\sigma^2}{n}\left\langle\bs T',\bs A\left(\bs S'\right)^{-1}\bs A^\top\right\rangle \\
        \geq&\frac{1}{4}\min_{\tau\geq 0}\frac{\tau^2}{\pi^2}+\sum_{i\in\bbK_\tau}\frac{\sigma^2r_i}{n\lambda_i}\eqsim\min_{\tau\geq 0}\begin{cases}
            \tau^2+\sigma^2/n,&\kappa<-1; \\
            \tau^2-(\sigma^2/n)\ln\tau,&\kappa=-1;
        \end{cases}
    \end{aligned}
    \end{equation}
    where $\bbK_\tau=\left\{k:t_{kk}/m_k\geq\tau^2\right\}$. Setting $\tau=0$ for $\kappa<-1$ or $\tau=\sqrt{\sigma^2/(2n)}$ for $\kappa=-1$ yields the desired result.
\end{proof}

The following lemma provides a lower bound for the case $\kappa>-1$.
\begin{lemma}\label{lemma:poly-decay-lower-bound-hard}
    Let $k^*\in\bbN$ and $\Delta k=\left\lfloor\frac{n}{\pi^2\sigma^2}\left\Vert\bs S'_{k^*:\infty}\right\Vert\right\rfloor$. If $\Delta k\geq 1$, then we have
    \begin{equation}
        \inf_{\hat{\bs w}}\sup_{\tilde{P}\in\mathcal{P}(W,\bs S,\bs T)}\bbE_{\tilde{P}^{\otimes n}\times P_{\xi}}\left\Vert\hat{\bs w}-\bs w^*\right\Vert_{\bs T}^2\geq\frac{1}{4\pi^2\Delta k}\sum_{i=1}^{\Delta k}\sigma_i\left(\bs T'_{k^*:\infty}\right),
    \end{equation}
    where $\bs T'=\bs M^{-1/2}\bs T\bs M^{-1/2}$.
\end{lemma}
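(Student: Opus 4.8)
The plan is to feed one carefully chosen feasible $\bs F$ into the lower bound of Theorem~\ref{thm:lower-bound}, which already gives $\inf_{\hat{\bs w}}\sup_{\tilde P}\bbE_{\tilde{P}^{\otimes n}\times P_\xi}\Vert\hat{\bs w}-\bs w^*\Vert_{\bs T}^2\geq\sup_{\bs F\succeq\bs O,\,\Vert\bs F\Vert_*\leq1/\pi^2}\langle\bs T',(\bs F^{-1}+\tfrac{n}{\sigma^2}\bs S')^{-1}\rangle$, so it suffices to exhibit one good $\bs F$. The idea is to spend the entire ``prior information budget'' $\Vert\bs F\Vert_*\le 1/\pi^2$ inside the tail coordinate subspace $V$ indexed by $\{k^*+1,\dots,d\}$, on which $\bs S'\preceq\Vert\bs S'_{k^*:\infty}\Vert\bs I$ (as $\bs S'$ is diagonal), spreading it uniformly over exactly $\Delta k$ directions and aligning those with the leading eigenspace of the target restricted to the tail. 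Concretely, let $\bs v_1,\dots,\bs v_{\Delta k}\in V$ be orthonormal eigenvectors of $\bs T'_{k^*:\infty}$ (viewed inside $\bbR^d$) for its $\Delta k$ largest eigenvalues $\sigma_1(\bs T'_{k^*:\infty})\ge\cdots\ge\sigma_{\Delta k}(\bs T'_{k^*:\infty})$ (if $\Delta k>\dim V$, take all of them), let $\bs Q=\sum_{i=1}^{\Delta k}\bs v_i\bs v_i^\top$, and set $\bs F=\frac{1}{\pi^2\Delta k}\bs Q$. Then $\bs F\succeq\bs O$ and $\Vert\bs F\Vert_*=\frac{1}{\pi^2\Delta k}\,\mathrm{rank}(\bs Q)\le\frac{1}{\pi^2}$, so $\bs F$ is feasible.

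Since $\bs F$ is rank-deficient I would interpret and compute the resolvent via the identity $(\bs F^{-1}+\bs A)^{-1}=\bs F^{1/2}(\bs I+\bs F^{1/2}\bs A\bs F^{1/2})^{-1}\bs F^{1/2}$ with $\bs A=\tfrac{n}{\sigma^2}\bs S'$ — equivalently, apply Theorem~\ref{thm:lower-bound} to the invertible perturbations $(1-\epsilon)\bs F+\tfrac{\epsilon}{\pi^2 d}\bs I$ (still of nuclear norm $1/\pi^2$) and let $\epsilon\to0^+$. Here $\bs F^{1/2}=\frac{1}{\sqrt{\pi^2\Delta k}}\bs Q$, so $\bs F^{1/2}\bs A\bs F^{1/2}=\frac{n}{\pi^2\sigma^2\Delta k}\bs Q\bs S'\bs Q$. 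Because $\mathrm{range}(\bs Q)\subseteq V$, where $\bs S'\preceq\Vert\bs S'_{k^*:\infty}\Vert\bs I$, we get $\bs Q\bs S'\bs Q\preceq\Vert\bs S'_{k^*:\infty}\Vert\,\bs Q$, and then, using $\Delta k=\lfloor\frac{n}{\pi^2\sigma^2}\Vert\bs S'_{k^*:\infty}\Vert\rfloor\ge\frac12\cdot\frac{n}{\pi^2\sigma^2}\Vert\bs S'_{k^*:\infty}\Vert$ (valid since $\lfloor x\rfloor\ge x/2$ whenever $x\ge1$, and $\Delta k\ge1$ by hypothesis), $\bs F^{1/2}\bs A\bs F^{1/2}\preceq 2\bs Q$. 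Hence $\bs I+\bs F^{1/2}\bs A\bs F^{1/2}\preceq\bs I+2\bs Q$, whose inverse equals the identity off $\mathrm{range}(\bs Q)$ and is $\preceq3\bs I$ on $\mathrm{range}(\bs Q)$, so it is $\succeq\frac13\bs Q$; operator monotonicity of the inverse then yields $(\bs I+\bs F^{1/2}\bs A\bs F^{1/2})^{-1}\succeq\frac13\bs Q$, and therefore $(\bs F^{-1}+\bs A)^{-1}\succeq\frac13\bs F^{1/2}\bs Q\bs F^{1/2}=\frac13\bs F=\frac{1}{3\pi^2\Delta k}\bs Q$.

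Taking the inner product with $\bs T'\succeq\bs O$ finishes it: $\langle\bs T',(\bs F^{-1}+\bs A)^{-1}\rangle\ge\frac{1}{3\pi^2\Delta k}\langle\bs T',\bs Q\rangle=\frac{1}{3\pi^2\Delta k}\sum_{i=1}^{\Delta k}\bs v_i^\top\bs T'\bs v_i$, and since each $\bs v_i$ has zero components outside $V$ the cross-block entries of $\bs T'$ drop out, so $\bs v_i^\top\bs T'\bs v_i=\bs v_i^\top\bs T'_{k^*:\infty}\bs v_i=\sigma_i(\bs T'_{k^*:\infty})$; combining with Theorem~\ref{thm:lower-bound} gives $\inf_{\hat{\bs w}}\sup_{\tilde P}\bbE\Vert\hat{\bs w}-\bs w^*\Vert_{\bs T}^2\ge\frac{1}{3\pi^2\Delta k}\sum_{i=1}^{\Delta k}\sigma_i(\bs T'_{k^*:\infty})\ge\frac{1}{4\pi^2\Delta k}\sum_{i=1}^{\Delta k}\sigma_i(\bs T'_{k^*:\infty})$, which is the claim. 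The only genuinely delicate point is the rank-deficiency of $\bs F$: one must justify that the supremum in Theorem~\ref{thm:lower-bound} is approached by such degenerate priors (the perturbation-and-limit step above), after which everything is a one-line PSD computation; that the eigenvector alignment is the best possible choice inside $V$ is just Ky~Fan's principle, and balancing $\Vert\bs F\Vert_*$ against the sample Fisher information $\tfrac{n}{\sigma^2}\bs S'$ is exactly what pins down $\Delta k$.
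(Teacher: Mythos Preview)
Your proof is correct and follows essentially the same route as the paper's: feed into Theorem~\ref{thm:lower-bound} a single $\bs F$ supported on the tail eigenspace $V$, aligned with the top $\Delta k$ eigenvectors of $\bs T'_{k^*:\infty}$, and scaled so that $\bs F^{-1}$ is comparable to $\tfrac{n}{\sigma^2}\bs S'$ on $V$. The paper carries an explicit $\epsilon$-perturbation throughout and computes the resolvent in block form before letting $\epsilon\to0$, whereas you use the identity $(\bs F^{-1}+\bs A)^{-1}=\bs F^{1/2}(\bs I+\bs F^{1/2}\bs A\bs F^{1/2})^{-1}\bs F^{1/2}$ together with operator monotonicity---a slightly cleaner execution that in fact delivers the sharper constant $\tfrac{1}{3\pi^2}$ in place of the paper's $\tfrac{1}{4\pi^2}$.
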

\begin{proof}    
    Let $\epsilon>0$ be sufficiently small. Recall that $\bs T'_{k^*:\infty}\in\bbR^{d\times d}$ has the following structure:
    \begin{equation}
        \bs T_{k^*:\infty}=\begin{bmatrix}
            \bs O & \bs O \\
            \bs O & \bs T_{22}
        \end{bmatrix}, \quad\bs T'_{22}=\sum_{i=1}^{d-k^*}\sigma_i\left(\bs T'_{k^*:\infty}\right)\bs u_i\bs u_i^\top,
    \end{equation}
    where $\bs T_{22}\in\bbR^{(d-k^*)\times(d-k^*)}$, $\bs u_i\in\bbR^{d-k^*}$. We define $\bs F$ as follows:
    \begin{equation}
        \bs F=\frac{\sigma^2}{n\left\Vert\bs S'_{k^*:\infty}\right\Vert}\begin{bmatrix}
            \epsilon\bs I/\Delta k & \bs O \\
            \bs O & \bs C
        \end{bmatrix}, \quad\bs C=\sum_{i=1}^{\Delta k}(1-2\epsilon)\bs u_i\bs u_i^\top+\sum_{i=\Delta k+1}^{d}\frac{\epsilon}{2^{i-\Delta k}}\bs u_i\bs u_i^\top.
    \end{equation}
    Therefore,
    \begin{equation}
    \begin{aligned}
        \left\langle\bs T',\left(\bs F^{-1}+\frac{n\bs S'}{\sigma^2}\right)^{-1}\right\rangle\geq&\left\langle\bs T',\left(\bs F^{-1}+\frac{n
        \left(\bs S'_{0:k^*}+\left\Vert\bs S'_{k^*:\infty}\right\Vert\bs I_{k^*:\infty}\right)}{\sigma^2}\right)^{-1}\right\rangle \\        
        \geq&\left\langle\bs T_{22},\left(\frac{n\left\Vert\bs S'_{k^*:\infty}\right\Vert}{\sigma^2}\bs C^{-1}+\frac{n\left\Vert\bs S'_{k^*:\infty}\right\Vert\bs I}{\sigma^2}\right)^{-1}\right\rangle \\
        =&\frac{\sigma^2}{n\left\Vert\bs S'_{k^*:\infty}\right\Vert}\left[\sum_{i=1}^{\Delta k}\frac{\sigma_i\left(\bs T'_{k^*:\infty}\right)}{1/(1-2\epsilon)+1}+\sum_{i=\Delta k+1}^{d}\frac{\sigma_i\left(\bs T'_{k^*:\infty}\right)}{2^{i-\Delta k}/\epsilon+1}\right] \\
        \geq&\frac{\sigma^2}{n\left\Vert\bs S'_{k^*:\infty}\right\Vert}\sum_{i=1}^{\Delta k}\frac{\sigma_i\left(\bs T'_{k^*:\infty}\right)}{1/(1-2\epsilon)+1} \\
        \stackrel{a}{\geq}&\frac{1}{2\pi^2\Delta k}\sum_{i=1}^{\Delta k}\frac{\sigma_i\left(\bs T'_{k^*:\infty}\right)}{1/(1-2\epsilon)+1},
    \end{aligned}
    \end{equation}
    where $\stackrel{a}{\geq}$ uses $\frac{n}{\pi^2\sigma^2}\left\Vert\bs S'_{k^*:\infty}\right\Vert\leq\Delta k+1\leq 2\Delta k$. We apply Theorem~\ref{thm:lower-bound}, and let $\epsilon\to 0^+$, we have
    \begin{equation}
    \begin{aligned}        
        \inf_{\hat{\bs w}}\sup_{\tilde{P}\in\mathcal{P}(W,\bs S,\bs T)}\bbE_{\tilde{P}^{\otimes n}\times P_{\xi}}\bbE_{P_{\bs w^*}}\left\Vert\hat{\bs w}-\bs w^*\right\Vert_{\bs T}^2\geq&\lim_{\epsilon\to 0^+}\frac{1}{2\pi^2\Delta k}\sum_{i=1}^{\Delta k}\frac{\sigma_i\left(\bs T'_{k^*:\infty}\right)}{1/(1-2\epsilon)+1} \\
        =&\frac{1}{4\pi^2\Delta k}\sum_{i=1}^{\Delta k}\sigma_i\left(\bs T'_{k^*:\infty}\right).
    \end{aligned}
    \end{equation}
    This completes the proof.
\end{proof}

\subsection{Proof of Examples in Section~\ref{sec:asgd-optimality}}
\begin{proof}[Proof of Example~\ref{example:gaussian}]
    Let $\bs S$ and $\bs T$ denote the source and target covariance matrix. Since
    \begin{equation}
        D_\mathrm{KL}(Q_{\bs x }\Vert P_{\bs x })=\frac{1}{2}\left(\tr(\bs S^{-1/2}\bs T\bs S^{-1/2})-d-\ln\det(\bs S^{-1/2}\bs T\bs S^{-1/2})\right)
    \end{equation}
    Let $\rho_i$ denote the eigenvalues of $\bs S^{-1/2}\bs T\bs S^{-1/2}$, 
    we have
    \begin{equation}
        D_\mathrm{KL}(Q_{\bs x }\Vert P_{\bs x })=\frac{1}{2}\sum_{i=1}^d\rho_i-1-\ln\rho_i<\epsilon.
    \end{equation}
    Since $x-1-\ln x\geq 0$ for any $x>0$, we have $\rho_i-1-\ln\rho_i<\epsilon$ for all $i\in[d]$. By solving the inequality, we obtain that $\rho_i$ are bounded by a constant depending on $\epsilon$
\end{proof}
\begin{proof}[Proof of Example~\ref{example:emergence}]
    For the lower bound, we adopt Lemma~\ref{lemma:lower-bound-diagonal} by replacing $\bs R$ by $\bs T$, then we have
    \begin{equation}
        \inf_{\hat{\bs w}}\sup_{\tilde{P} \in \mathcal{P}(W,\bs S,\bs T)}\bbE_{\tilde{P}^{\otimes n}\times P_{\xi }}\left\Vert\hat{\bs w}-\bs w^*\right\Vert_{\bs T}^2\geq\min_{\tau\geq 0}\frac{\tau^2}{\pi^2}+\sum_{i\in\bbK_\tau}\frac{\sigma^2t_{ii}}{n\lambda_i},
    \end{equation}
    where $\bbK_\tau=\left\{k:r_k/m_k>\tau^2\right\}$. Since $r<1/a$, we have
    \begin{equation}
        \frac{\tau^2}{\pi^2}+\sum_{i\in\bbK_\tau}\frac{\sigma^2t_{ii}}{n\lambda_i}\eqsim\begin{cases}
            \tau^2,&\tau\geq d_0^{-(1+r)a/2}; \\
            \tau^2+\left(\sigma^2/n\right)\tau^{\frac{2ra-2}{(1+r)a}},&\tau<d_0^{-(1+r)a/2}.
        \end{cases}
    \end{equation}
    For $n\leq\sigma^2 d_0^{-a-1}$, we set $\tau=d_0^{-(1+r)a/2}$, and for $n>\sigma^2 d_0^{-a-1}$, we set $\tau=\left(\sigma^2/n\right)^\frac{(1+r)a}{2(1+a)}$, so
    \begin{equation}
        \min_{\tau\geq 0}\frac{\tau^2}{\pi^2}+\sum_{i\in\bbK_\tau}\frac{\sigma^2t_{ii}}{n\lambda_i}\eqsim\begin{cases}
            d_0^{-(1+r)a},&n\leq\sigma^2 d_0^{a+1}; \\
            \left(\sigma^2/n\right)^{\frac{(1+r)a}{1+a}},&n>\sigma^2 d_0^{a+1}.
        \end{cases}
    \end{equation}

    For the upper bound, we set
    \begin{equation}
        \delta=\gamma=\Theta\left(n^{-\frac{1}{a+1}}/\ln n\right),
    \end{equation}
    which is equivalent to vanilla SGD with step size $\gamma$. Following the proof of Lemma~\ref{lemma:poly-decay-sgd-upper-bound}, we have $k^*=\Theta\left(n^\frac{1}{a+1}\ln^{-\frac{3}{a}}n\right)$. If $n\lesssim d_0^{a+1}$, we have $k^*\leq d_0$. Thus, for the bias term, we have
    \begin{equation}
    \begin{aligned}        
        \mathrm{Variance}\lesssim&d_0^{-(1+r)a}\left[\sum_{i=1}^{k^*}{\frac{1}{Ki^{-a}}}+K\left(\gamma+\delta\right)^2\sum_{i=k^*+1}^{d_0}i^{-a}\right] \\
        &+K\left(\gamma+\delta\right)^2\sum_{i=d_0+1}^{d}i^{-2a-ra} \\
        \lesssim&d_0^{-(1+r)a}\left(1+(\ln n)^{-\frac{3}{a}}\right)+d_0^{-(2+r)a+1}n^\frac{a-1}{a+1}(\ln n)^{-3}\lesssim d_0^{-(1+r)a},
    \end{aligned}
    \end{equation}
    and for the variance term, we have
    \begin{equation}
        \mathrm{Bias}\lesssim \left(1+\sigma^2\right)d_0^{-(1+r)a}.
    \end{equation}
    Combine the above two results, we obtain
    \begin{equation}
        \bbE_{\tilde{P}^{\otimes n}}\left\Vert\bs w_n^\sgd-\bs w^*\right\Vert_{\bs T}^2\lesssim
            (1+\sigma^2)n^{-\frac{(1+r)a}{a+1}}(\ln n)^{\frac{3(1+r)a-3}{a}}.\label{eq:emergence-n-small}
    \end{equation}
    If $n\gtrsim d_0^{a+1}$, we have
    \begin{equation}
        \mathrm{Variance}\lesssim\sum_{i=1}^{k^*}{\frac{i^{-ra}}{K}}+K\left(\gamma+\delta\right)^2\sum_{i=k^*+1}^{d}i^{-2a-ra}\eqsim(1/n)^\frac{(1+r)a}{a+1}(\ln n)^{\frac{3(1+r)a-3}{a}},
    \end{equation}
    \begin{equation}
        \mathrm{Bias}\lesssim\left(\max\{k^*,d_0\}\right)^{-(1+r)a}+\frac{\left\Vert\bs S'\right\Vert\cdot\mathrm{Variance}}{\sigma^2}.
    \end{equation}
    Thus, we have
    \begin{equation}
        \bbE_{\tilde{P}^{\otimes n}}\left\Vert\bs w_n^\sgd-\bs w^*\right\Vert_{\bs T}^2\lesssim
            (1+\sigma^2)n^{-\frac{(1+r)a}{a+1}}(\ln n)^{\frac{3(1+r)a-3}{a}}.\label{eq:emergence-n-large}
    \end{equation}
    Combining~\eqref{eq:emergence-n-small} and~\eqref{eq:emergence-n-large} yields the desired result.
\end{proof}

\end{document}